\documentclass[11pt]{article}    

\usepackage[margin=0.98in]{geometry} 

\usepackage[utf8]{inputenc} %
\usepackage{url}            %
\usepackage{kpfonts}
\usepackage{times,comment}
{\title{\LARGE Regret Minimization with Adaptive Opponents \\in Repeated Games  
}  
}

\author{
  Mingyang Liu\thanks{Alphabetical Order}$^1$~~~~~~~~~~Asuman Ozdaglar$^1$~~~~~~~~~~Tiancheng Yu$^2$~~~~~~~~~~Kaiqing Zhang$^3$ \\\\
$^1$ Massachusetts Institute of Technology\\ 
$^2$ OpenAI\\ 
$^3$ University of Maryland, College Park
\\
\texttt{\{liumy19,asuman\}@mit.edu\qquad yutc14@gmail.com \qquad kaiqing@umd.edu}
}

\usepackage{times}
\usepackage{breakcites}
\usepackage{inconsolata}
\usepackage{enumitem}
\usepackage{notation}
\usepackage{arxiv}
\usepackage{algorithm}
\usepackage{algpseudocode}

\crefname{theorem}{theorem}{theorems}
\crefname{lemma}{lemma}{lemmas}
\crefname{proposition}{proposition}{propositions}
\crefname{remark}{remark}{remarks}
\crefname{corollary}{corollary}{corollaries}
\crefname{condition}{condition}{conditions}
\crefname{equation}{}{}
\Crefname{equation}{}{}
\Crefname{condition}{Condition}{Conditions}
\crefname{enumi}{}{}
\Crefname{enumi}{}{}

\begin{document}

\maketitle

\begin{abstract}
In this paper, we study regret minimization in repeated games with \emph{adaptive} opponents who can respond based on histories of play. The standard metric of \emph{external regret} in online learning is known to fail to capture such adaptivity. To account for players' counterfactual reasoning, we introduce {\tt Repeated Policy Regret (RP-Regret)}, a game-theoretic metric that measures the difference between the \emph{realized} and the \emph{best-in-hindsight} accumulated utility when all players can \emph{respond} to the history of play. Compared to existing regret notions in this setting, ours is native to repeated game playing, enabling stronger comparators and opponents with fewer constraints, while maintaining the possibility of finding better equilibria when all players minimize it. We first identify necessary conditions for obtaining {\tt RP-Regret} sublinear in time, on the variation of the player's comparator strategies in the regret definition and on the memories of both the comparator and opponents' strategies. We then study additional conditions and provable algorithms to minimize {\tt RP-Regret}, which is by definition \emph{non-convex} in the strategy space. To address this challenge, we propose three algorithms: (i) one based on an optimization oracle, as assumed in some prior work in online non-convex learning; (ii) one that minimizes a convex and \emph{linearized} surrogate of {\tt RP-Regret} at each iteration; (iii) one that directly minimizes {\tt RP-Regret} when opponents change strategies slowly. Furthermore, when all players can run algorithms to minimize the {\tt RP-Regret} (or its linearized variant), certain subgame perfect equilibria of the repeated game can be learned. We also provide experiments showing that minimizing our regret notions can lead to more cooperative solutions with higher utility in games such as Stag-Hunt.\footnote{Accepted for presentation at the Conference on Learning Theory (COLT) 2026.}
\end{abstract}

\section{Introduction}
\label{sec:introduction}

{Solving for equilibria
has long been one of the core problems in algorithmic game theory. However, an equilibrium may not correspond to a solution that yields a high utility\footnote{For the classical examples of (Iterated) Prisoner's Dilemma and Stag-Hunt, we follow the convention of using the term \emph{utility}; later when defining regret notions, we will follow the convention from online learning and use the term \emph{loss} throughout.} for all the players in the game. For example,}
in the well-known game of Prisoner's Dilemma (PD),
the only Nash equilibrium (NE)
is \emph{defect-defect},
which yields a relatively low utility
for both players. {Interestingly, however,} when it comes to Iterated Prisoner's Dilemma (IPD) \citep{axelrod1981evolution-prisoner}, {a \emph{repeated} version of PD,} there exists an NE with a much higher utility
for both players ({cf.} Example \ref{example:IPD}).
Therefore, solving for equilibria in repeated games may have advantages over that in one-shot matrix games, in terms of possibly achieving higher utilities.

More importantly, repeated games can also be employed to compute equilibria for one-shot games. In particular, one of the most efficient and scalable approaches for equilibrium computation is through \emph{no-regret learning/regret-minimization},
which has been both theoretically \citep{cesa2006prediction,roughgarden2010algorithmic-game-theory} and empirically \citep{brown2019superhuman-pluribus,brown2018superhuman-libratus} supported. It is known that when all players in the game run a no-regret algorithm, their average
strategy over time will converge to certain equilibria determined by the regret they
minimize. At the heart of these approaches is a proper definition of \emph{regret}, which by default usually means \emph{external regret} \citep{hannan1957approximation,hart2000simple}, and measures the difference between the loss incurred by an \emph{online decision-maker} (or a \emph{player}
in repeated games) and certain comparator decisions that they would have made, knowing the time-varying environments in hindsight.

However, in this classical regret definition, the loss sequences are implicitly assumed to be
only functions of \emph{timestep}   \citep{de2003combine-flexible}. This is general enough and perfectly sensible in the \emph{online learning} setting, but not in a \emph{game-theoretic} one, where, by definition, the opponent players should be \emph{responsive}  to the player's behavior during learning \citep{schlag2012impossibility}. Hence, the classical regret cannot capture the \emph{adaptivity} of the opponents, who can make decisions based on the \emph{history} of the play, and the fact that a player's action may affect the opponents' decisions later. Such an inability may lead to sub-optimal equilibrium solutions (\emph{e.g.,} always ``defect'' for IPD, as in Example \ref{example:IPD}). In fact, it has been shown that when players are responsive, achieving no-(external-)regret is impossible  \citep{schlag2012impossibility}.

{To account for the responsiveness and adaptivity of the opponents/adversaries, several new notions of {regret} have been developed for both \emph{online learning} and \emph{repeated game} settings. However, they are either not directly applicable to the repeated-game setting, or computationally intractable. For example, in the online learning setting, \cite{merhav2002sequential-memory-regret-init,arora2012online} introduced the notion of \emph{Policy Regret} to model environments with memory, in which the loss at round $t$ may depend on a history of past actions rather than only the current action. Their guarantees typically rely on an $m$-bounded-memory assumption, meaning that the loss at time $t$ depends only on the most recent $m$ actions. This assumption can be too restrictive in repeated games: a deviation in an early round (\emph{e.g.}, at $t=1$) may alter the opponents' observations later, and thereby change their subsequent play well beyond $m$ rounds, even when each player uses a finite-memory strategy. In the repeated-game setting, \cite{zinkevich2005response-regret} proposed the notion of \emph{Response Regret}, which considered the \emph{mixed} strategy space that depends on the whole history of the play, making the minimization of Response Regret \emph{convex}. However, this is at the cost of exponentially large space in the history length to store the strategies, and thus computationally intractable. Subsequently, \cite{arora2018policy} extended the Policy Regret notion in  \cite{arora2012online} to the game setting. However, the comparator in the regret considered there is restricted to \emph{constant} actions and thus neither adaptive nor dynamic. Moreover, to achieve sublinear policy regret, \cite{arora2018policy} assumed that the opponents' strategies are insensitive to recent histories, which further restricted the adaptability and thus the power of opponents. For a more detailed related work discussion, we refer to Appendix \ref{sec:related-work}.}%

{Motivated by these prior studies, our paper focuses on the repeated-game setting, and aims at developing a regret notion that is natural and native to such a setting, while retaining as much generality as possible, \emph{e.g.,} allowing time-varying/dynamic comparator strategies and few constraints on all players' strategy spaces, while maintaining computational tractability and the possibility of finding better equilibrium solutions by minimizing such a notion. We summarize our contributions as follows.}

\paragraph{Our Contributions.}
Our main contributions are four-fold:  (i) For repeated games, we advocate  {a new and natural} notion of   {\tt Repeated Policy Regret (RP-Regret)}, {as a performance metric that {measures the difference between the \emph{realized} and the \emph{best-in-hindsight} accumulated utility %
when all players can respond to the history of play.}
It allows the opponents to be adaptive, and the comparator strategies to be time-varying, \emph{i.e.}, dynamic;}
(ii) We prove a series of necessary conditions for obtaining a sublinear {\tt RP-Regret}, on the variation of the player's comparator strategies in the regret definition, as well as the memories of both the comparator and the opponents' strategies (cf. Table \ref{table:hardness-result});
(iii) In light of the \emph{non-convexity} in minimizing {\tt RP-Regret}, we
propose three algorithms: (1) one that is based on a certain optimization oracle, as assumed in some prior work in  online non-convex learning; (2) one that minimizes a \emph{linearized} surrogate of {\tt RP-Regret} at each timestep, {\tt Local Repeated Policy Regret (LRP-Regret)}, that is convex; (3) one that directly minimizes {\tt RP-Regret} when the opponents change strategies slowly;
(iv) We establish the relationship between the minimization of our regret notions and the computation of certain equilibria of the repeated game. Additionally, we also provide experimental results to demonstrate the advantage of our new regret notions in finding better cooperative solutions with
higher utility for the players in games like the
Stag-Hunt.

\paragraph{Challenges \& Our Techniques.} Our notion of {\tt  RP-Regret} is general and particularly devised for the repeated-game setting, but at the cost of having a \emph{non-convex} objective to minimize due to the memory of players' strategies. To address the non-convexity issue, we develop several approaches for  {\tt RP-Regret} minimization: (i) we resort to certain non-convex optimization oracles as in \cite{suggala2020online-ONCO-oracle}; (ii) we linearize the expected loss at each timestep to \emph{convexify} it; (iii) we lift the variable dimension by reformulating the repeated game as a \emph{Markov game}, yielding a convex objective in the \emph{occupancy measure space} for the regret-minimizer.
In particular, for (iii) we have developed new techniques to address the following challenges:  Firstly,
the occupancy measure by nature incorporates the strategies of both the regret-minimizer and the opponents in the game, which does not align with our setting where the opponents are \emph{not} controlled by the regret-minimizer. Therefore, we need to optimize the occupancy measure while keeping the opponents' strategies extracted from the occupancy measure close to the actual ones. Due to the online nature of game-playing, we \emph{cannot} know the opponents' strategies at timestep $t$ before we propose the occupancy measure at timestep $t$. Naively projecting to the occupancy measure space corresponding to the opponents' strategies at the previous timestep $t-1$ will cause the error to accumulate and eventually blow up. As a result, we carefully design constraints for the occupancy measure, which can provably keep the first-order difference between the extracted opponents' strategies and the actual ones bounded, and then resort to the framework of \emph{online learning with time-varying constraints} to solve it;  Secondly, we convert the violation of constraints during online learning to the {\tt RP-Regret} upper bound; Thirdly, the original constraint on the ``forgetfulness'' of the behavioral-form strategies is \emph{non-convex} with respect to the occupancy measure, addressing which requires us to redesign the constraint for the behavioral-form strategies.

\paragraph{Motivating Examples.} Now we present
 two examples to illustrate the advantages of our {\tt RP-Regret}
 and {\tt LRP-Regret}
 over the classical notion of external regret in playing repeated games. Due to space constraints, {Example 2} can be found in \S\ref{appendix:motivating_example}. %

\begin{example}[Existence of A Better Strategy that is  No-{\tt RP-Regret}]
\label{example:IPD}
    In Iterated Prisoner's Dilemma \citep{axelrod1981evolution-prisoner} (cf. utility matrix in  \Cref{fig:stag-hunt-utility}), when both players are minimizing the \emph{external regret}  (\emph{i.e.,} obtaining regret sublinear in time),  {the only strategy that the time-average strategies converge to} is \emph{defect-defect} (the only CCE of PD, since defect is a strictly dominant strategy for both players), with a utility of 0.2 for each player.
    Although the well-known \emph{tit-for-tat} strategy (starting with Cooperate and mimicking the opponent's action in the previous round) is an NE of the IPD with infinite rounds and enjoys a higher  time-average utility of $0.6$, it, however, suffers \emph{linear}  external regret (cf. Appendix \ref{appendix:proof-example-IPD}).
    In contrast, when both players always play tit-for-tat, they will enjoy sublinear regret in terms of our new regret notion, {\tt RP-Regret} (to be formally defined in \S\ref{sec:problem-setup}), with the higher time-average utility (of $0.6$) (cf. Appendix \ref{appendix:proof-example-IPD}). In other words, our {\tt RP-Regret} can better capture this effective and cooperation-promoting strategy of tit-for-tat in this case, than the classical external regret. %
\end{example}

\section{Preliminaries}\label{sec:prelim}

\paragraph{Notation.} {We use $\RR^n$, $\RR^n_{\geq 0}$, and $\RR^n_{>0}$ to denote the space of $n$-dimensional real vectors, real vectors with non-negative elements, and those with positive elements, respectively.} For any integer $n$, let $[n]\coloneqq\cbr{1,2,\dots, n}$.
For any vector $\bx\in\RR^n$, we use $x_i$ to denote the \textit{i}-th element in $\bx=(x_1,\cdots,x_n)^\top$  and $\nbr{\bx}_p$ to denote the \textit{p}-norm. By default, we use $\nbr{\bx}$ to denote the $\ell_2$-norm of $\bx$. We define  $\Delta_m \coloneqq \{\bx\in[0,1]^m:\sum_{i=1}^m x_i=1\}$ as  the $(m-1)$-dimensional probability simplex. For a set $S$, we will use $S^n$ to denote the Cartesian product of $S$ for $n$ times.
For any convex {and differentiable} function $\psi^S\colon S\to \RR$, we can define the associated Bregman divergence as $D_{\psi^S}(\bx,\by)=\psi^S(\bx)-\psi^S(\by)-\inner{\nabla \psi^S(\by)}{\bx-\by}$. When $D_{\psi^S}(\bx,\by)\geq \frac{k}{2}\nbr{\bx-\by}^2$, we will call $\psi^S$ $k$-strongly convex. For any vector $\bx$ and positive integers $i<j$, we will use $\bx_{i:j}$ to denote the slicing $(x_i,x_{i+1},...,x_{j-1},x_j)$ of $\bx$. For any {finite} set $S$, we will use $|S|$ to denote its cardinality. We use $\emptyset$ to denote an empty set, and use  $\NN$ {and  $\NN_{>0}$} to denote the set of non-negative and positive integers, respectively. For any argument, $\ind(\text{argument})=1$ when the argument holds, and equals $\ind(\text{argument})=0$ otherwise. For any two integers $a\in\NN,b\in\NN_{>0}$, we use $a\% b$ to denote the remainder of $a$ divided by $b$.

\paragraph{Repeated (Matrix) Games.} Let $\cN=\{1,2,...,N\}$  denote the set of all players with  $N\geq 2$. Then, we use $\cA_i$ to denote the action set of player $i\in \cN$, and $\cA$ to denote the joint action set of all players, \emph{i.e.}, $\cA=\prod_{i=1}^N \cA_i$. For any joint action $\ba\in\cA$, we also call it an \emph{action profile}.
We use $\cA_{-i}$ to denote the joint action set of all the players except player $i$. {The game is repeated across timesteps $t\geq 1$.}
At every timestep $t$, each player $i$ chooses an action $a_{t,i}\in\cA_i$ individually, and incurs a {loss\footnote{To follow the convention of online learning, we use \emph{loss} instead of \emph{utility/payoff} in the rest of the paper.} $\cL_i(\ba_t)\in[0,1]$,} where $\ba_t=(a_{t,1},a_{t,2},...,a_{t,N})$. {Note that $\{\cL_i\}_{i\in\cN}$ is referred to as the set of loss \emph{matrices} (which explains the name of repeated \emph{matrix}  games) when $N=2$, and as the set of loss \emph{tensors} in general when $N>2$.}

\paragraph{History and Strategy.}
{Throughout the paper, we use $\bh$ to denote a vector of {history} actions, of which every element is a joint action of the $N$ players. We may also refer to  $\bh$ as \emph{history} for short.}
We use $L(\bh)$ to denote the length of $\bh$, and $\bh_k=(a_{k,1},a_{k,2},...,a_{k,N})\in\cA$ to denote the $k^{\rm th}$ element of $\bh$. Note that $k\in\cbr{1,2,...,L(\bh)}$.
Moreover, we use $\bh_{s:k}=(\bh_s,\bh_{s+1},...,\bh_k)$ to denote the slice of $\bh$ for $s\leq k$, which is $\emptyset$ if $s>k$. For notational convenience, we use $(\bh,\ba)$ or $(\bh,\bh')$ to denote the \emph{concatenation} of a vector and an action profile/a vector. We use $\cH_m\coloneqq \cbr{\bh=(\bh_1,\bh_2,...,\bh_m)\mid \forall k=1,2,...,m, \bh_k\in\cA}$ to denote the set of all histories of length $m$. For convenience, we define $\cH \coloneqq \bigcup_{i=0}^\infty \cH_i$, where $\cH_0=\cbr{\emptyset}$ contains the unique history $\bh=\emptyset$ with length $L(\bh)=0$.

We use $\bpi\ui\colon \cH\to \Delta_{|\cA_i|}$ to denote the (history-dependent) strategy of player $i$.
 $\pi\ui(a_i\given \bh)$ is the probability of choosing action $a_i\in\cA_i$ conditioned on observing the history $\bh$. Let $\cX\ui \coloneqq \cbr{\bpi\ui\given \bpi\ui\colon \cH\to \Delta_{|\cA_i|}}$ be the space $\bpi\ui$ lies in and {$\cX\coloneqq \cbr{\bpi\given \bpi\colon \cH\to \Delta_{|\cA|}}$} be the space of the joint strategy profile $\bpi$ lies in. Unless we specify $\bpi$ to be a \emph{correlated} strategy explicitly, we use the notation of  $\pi(\ba\given\bh)$ to denote $\pi(\ba\given\bh)=\prod_{i=1}^N \pi\ui(a_i\given\bh)$, \emph{i.e.}, as a \emph{product} strategy. For convenience, we also use $\bpi^{(-i)}$ to denote the strategy profile of all players except player $i$, \emph{i.e.}, $\pi\uni(\ba_{-i}\given\bh)=\prod_{j\not=i}\pi^{(j)}(a_j\given\bh)$. In  repeated  games, at each timestep $t$, {based on some history $\bh$,} each player $i$ will draw their action $a_{t,i}$ from their strategy at timestep $t$, {denoted as $\bpi\ui_t$}, \emph{i.e.,} {sampling from} $\pi_t\ui(\cdot\given\bh)$. {Note that $\bh$ here could be either the \emph{full} history from time $1$ to $t-1$, or some \emph{truncated} portion of it.} Lastly, we define $\nbr{\bpi\ui}_p \coloneqq \rbr{\sum_{a_i\in\cA_i,\bh\in\cH} \abr{\pi\ui(a_i\given \bh)}^p}^{1/p}$. %

\paragraph{Expected Loss in Repeated Games.} Firstly, for any history $\bh\in\cH$ and a vector of strategies $\bpi_{1:L(\bh)}$ of length $L(\bh)$, we define $\Pr(\bh\given \bh';\bpi_{1:L(\bh)}) \coloneqq \prod_{s=1}^{L(\bh)}\prod_{i=1}^N \pi_s\ui(h_{s,i}\given (\bh',\bh_{1:s-1}))$ as the probability that $\bh$ occurs when all players observe $\bh'\in\cH$ initially, and sample action profile $\bh_k\in\cA$ according to $\bpi_k$ for $k\in\cbr{1,2,...,L(\bh)}$. For simplicity, we define $\Pr(\bh;\bpi_{1:L}) \coloneqq \Pr(\bh\given \emptyset;\bpi_{1:L})$ for $\bpi_{1:L}$ of any length $L>0$. Then, we can define the \emph{expected loss} of player $i$ when all the players follow a sequence of strategies in $\bpi_{1:m+1}$ for $m+1$ steps, and when observing some
$\bh'\in\cH$ initially,  as $f_i^m(\bpi_{1:m+1}\given \bh')\coloneqq \sum_{\ba\in \cA} \cL_i(\ba)\sum_{\bh\in\cH_m} \Pr((\bh,\ba)\given \bh';\bpi_{1:m+1})$. Note that the input sequence of the strategy profiles {$\bpi_{1:m+1}$ in $f_i^m(\bpi_{1:m+1}\given \bh')$}
always has the length of $m+1$, the same as the length of $(\bh, \ba)$, where $m=L(\bh)$. {When $\bh=\emptyset$, we define $m=L(\bh)=0$.} Additionally, we define $f_i^m(\bpi_{1:m+1}) \coloneqq f_i^m(\bpi_{1:m+1}\given \emptyset)$ for simplicity. Finally, we may denote $\bpi_{1:m+1}$ simply as $\bpi$ when the subscript is clear from the context. Unlike one-shot normal-form games, the expected loss in repeated games is \emph{non-convex} with respect to the strategies $\bpi_{1:m+1}$.
Therefore, to optimize the expected loss, we need to either assume a non-convex optimization oracle (\Cref{sec:oracle-ONCO}), or linearize and thus convexify the function (\Cref{sec:local-RP-Regret-Minimizer}), or lift the dimension of the variables (\Cref{sec:RP-Regret-Minimizer}).

\section{A New Metric: {\tt Repeated Policy Regret (RP-Regret)}}
\label{sec:problem-setup}

\subsection{{\tt RP-Regret} Definition}

{We consider the case in repeated games where the opponents are \emph{adaptive}, \emph{i.e.}, they can make decisions by responding to histories of play. The player is also \emph{aware} that their deviations in action could trigger responses from the opponents. To handle this setting, we}
advocate a new metric of {\tt Repeated Policy Regret (RP-Regret)} in repeated games.
Without loss of generality, we will consider the player of interest as \emph{player 1}, and we refer to all other players as the \emph{opponents}, as {the definition of {\tt RP-Regret} applies to all players.}
Specifically, for player 1, we first define the \emph{accumulated expected loss} $J_T(\bpi_{1:T})$ of the {joint} strategy $\bpi_{1:T}$ when  playing $T$ rounds of the game:
\vspace{-10pt}
\begin{align}\label{equ:def_J_f}
    &J_T(\bpi_{1:T}) := \sum_{t=1}^T f^{t-1}(\bpi_{1:t}),
\end{align}
\vspace{-8pt}

\noindent where, for notational simplicity, we
omit the subscript $1$ of $f_1^m(\bpi_{1:m+1})$.
 Therefore, an intuitive measure of performance would be comparing $J_T(\bpi_{1:T})$ with the \emph{best-response} expected loss of   $\min_{\hat\bpi\uo_{1:T}\in(\cX\uo)^T} J_T((\hat\bpi\uo_{1:T}, \bpi^{(-1)}_{1:T}))$, where we recall that $\cX\ui$ is the space in which the strategy of player $i$ lies. In the following, we refer to  $\hat\bpi\uo_{1:T}$ as the \emph{comparator}.
Then, we can define the {\tt RP-Regret} as
\begin{align}
    &R_T:= J_T(\bpi_{1:T})-\min_{\hat\bpi\uo_{1:T}\in \cC_T\uo} J_T((\hat\bpi\uo_{1:T}, \bpi^{(-1)}_{1:T}))\label{eq:RP-Regret-def}
\end{align}
\vspace{-8pt}

\noindent where $T\in\NN_{>0}$ and $\cC_T\ui\subseteq (\cX\ui)^T$ is the space that the \emph{comparator} $\hat\bpi_{1:T}\ui$ of player $i$ lies in,
subject to some constraints (to be specified later) that make  $\cC_T\ui$ potentially smaller than $(\cX\ui)^T$, and thus lead to a weaker comparator for potentially better tractability, as we shall see later.

{\tt RP-Regret} captures \emph{how much better we could have done if we had chosen $\hat\bpi_{1:T}\uo$ when all players are adaptive} (\emph{i.e.,} they are aware of the history). In contrast to external regret, the deviation to the comparator strategy $\hat\bpi_t\uo$ at timestep $t$ will not only affect the expected loss at timestep $t$ but also all expected losses afterward. This is because the distribution over histories has changed, and the adaptive opponents may change their strategies correspondingly.
{\tt RP-Regret} coincides with the \emph{adaptive regret}\footnote{We are aware that there is another definition of adaptive regret \citep{hazan2007logarithmic-adaptive-regret,daniely2015strongly-adaptive-regret,zhang2018dynamic-adaptive-regret}, which refers to the maximum external regret over all intervals $[r,s]\subseteq [T]$. However, given the context of this paper, we will only discuss the adaptive regret for repeated games.} defined in \citet{loftin2022impossibility-RP-Regret}, when the  strategies $\bpi_t\uo$ {for all $t$}  have no  restrictions on {the memory used}, \emph{i.e.}, can react arbitrarily {differently} when observing different histories. However, the regret is linear in the worst case, as shown in Lemma \ref{lemma:w1wo2} and Lemma \ref{lemma:w1wo2-bound} (related  {but different} hardness results also appeared in \citet{schlag2012impossibility,loftin2022impossibility-RP-Regret}). Therefore, we will set additional restrictions on the memory, so that players should behave similarly when observing similar histories (to be discussed in detail in \S\ref{sec:nece_cond}). In this case, {\tt RP-Regret} differs from the adaptive regret defined in \citet{loftin2022impossibility-RP-Regret}, where
the strategies of all players may now change over time.

{More relatedly,} the policy regret in \cite{arora2018policy} restricts the comparator $\hat\bpi_{1:T}\uo$ to be a fixed action across all timesteps as in external regret. Moreover, for some $m>0$ and any history $\bh\in\cH,\tilde{\bh},\bar{\bh}\in\cH_m$, they assume $\sum_{t=m}^T \nbr{\pi_t\uno(\cdot| (\bh, \tilde{\bh})) - \pi_t\uno(\cdot| (\bh, \bar{\bh}))}$ is sublinear in $T$. In other words, the opponents' strategies do not depend on recent history. {This assumption further restricts the adaptability and thus the power of the opponents, limiting its applicability in repeated game playing, the focus of our paper.}

\subsection{When is Minimizing {\tt RP-Regret} Possible?}
\label{sec:nece_cond}

Being fairly natural and native to repeated game playing,
the notion of {\tt RP-Regret} can be hard to minimize. To understand its fundamental limits, we first identify two basic conditions that can be proved \emph{necessary} for achieving a {\tt RP-Regret} sublinear in $T$. The first one is a \emph{variation} condition, which restricts the comparator's strategy from changing too fast across different timesteps. The second one {concerns the imperfect memories of all the players and the comparator.}

\begin{condition} [Sublinear Variation]
\label{assumption:1-variation}
For a given player $i\in\cN$, we say that the strategy $\bpi_{1:T}\ui$ of player $i$ \emph{changes slowly} with \emph{sublinear variation}, when there exists a constant $p\in[0,1)$ such  that
    $
        \sum_{t=2}^T \nbr{\big(\pi_{t-1}\ui(a_i\given \bh)-\pi_{t}\ui(a_i\given \bh)\big)_{a_i\in\cA_i,\bh\in\cH}}_\infty\leq O(T^p).$
\end{condition}

Condition \ref{assumption:1-variation} on the comparator is a common assumption in the literature of \emph{dynamic regret}  minimization  \citep{zinkevich2003online-OGD-original-paper,zhang2018adaptive,zhang2018dynamic,zhao2022non-dynamic-policy-regret}, which restricts the power of the comparator in order to achieve  sublinear dynamic regret. Note that the well-known external regret also implicitly adopts such a condition with the variation of the comparator being zero.

\begin{condition}[Imperfect Recall \citep{piccione1997interpretation-imperfect-recall, waugh2009practical-imperfect-recall}]
\label{assumption:imperfect-memory}
    For a given player $i\in\cN$, we say that
    player $i$ {has} \emph{imperfect recall}, when they  cannot remember all their histories perfectly. In particular, player $i$ cannot perfectly differentiate the observed histories, and thus cannot choose arbitrarily distinct strategies of $\bpi_{1:T}\ui$  for each different history encountered.
\end{condition}

{As a surrogate to \emph{quantitatively} characterize}  Condition \ref{assumption:imperfect-memory}, we propose Condition \ref{assumption:2-forgetful} below, {which instantiates the   ``imperfect'' recall as having  \emph{finite-memory} with an \emph{exponential decay} property.}

\begin{condition}[Exponential Decay Memory (EDM)]
\label{assumption:2-forgetful}
    For a given player $i\in\cN$, the strategy $\bpi_{1:T}\ui$ of player $i$ focuses on only the latest histories {due to}
    an exponential decay memory. Formally, for any timestep $t\in [T]$
    \small
    \begin{align}
    \forall \bh,\bar\bh,\tilde\bh\in\cH,a_i\in\cA_i,\qquad 1-\gamma^{L(\bh)+1}\leq\frac{\pi_t\ui(a_i\given(\tilde\bh,\bh))}{\pi_t\ui(a_i\given(\bar\bh,\bh))}\leq \frac{1}{1-\gamma^{L(\bh)+1}}\label{eq:assumption2-forgetful}
    \end{align}
    \normalsize
    for some constant $\gamma\in[0,1)$. Let $\cX\ui_{\gamma}$ denote the space of all $\bpi\ui\in\cX\ui$ that satisfy the condition.%
\end{condition}

Condition \ref{assumption:2-forgetful} quantitatively bounds how sensitive the strategy is to the distant past: once the most recent suffix $\bh$ is fixed, changing the earlier prefix from $\tilde\bh$ to $\bar\bh$ can only change each conditional action probability by a multiplicative factor that approaches $1$ at an exponential rate in the length of $\bh$, as denoted by $L\rbr{\bh}$. Thus, for any target accuracy $\epsilon>0$, it suffices to retain a suffix of length $L=O\rbr{\log\rbr{1/\epsilon}}$ to make the effect of earlier history negligible. We adopt exponential (rather than polynomial) decay precisely to ensure that this effective memory length grows only logarithmically with $1/\epsilon$, which ensures efficient computation. This assumption is satisfied by several regularized mirror-descent-type updates, see, \emph{e.g.}, \cite{cen2021fast,liu2022power-reg,DBLP:conf/iclr/SokotaDKLLMBK23-MMD}.

An alternative {to quantify imperfect recall} is {via} \emph{bounded memory} \citep{chakraborty2014multiagent-bounded-memory}, where strategies
depend only on the last $m$ rounds for a fixed constant $m$.
However, bounded dependence on the last $m$ observations does \emph{not} prevent a player from using its actions as an
information carrier: by choosing actions according to a protocol, the player can encode information about earlier
events into the recent suffix that future decisions can still access.
As shown in the proof of Lemma \ref{lemma:wo1w2}, this phenomenon can preclude sublinear regret even when $m=1$.
Condition \ref{assumption:2-forgetful} rules out such near-perfect ``state passing'' by forcing policies that share the same recent suffix to be nearly indistinguishable regardless of the earlier prefix, with the indistinguishability improving exponentially in the suffix length.

We summarize the hardness results in Table \ref{table:hardness-result} and postpone the full statements to Appendix \ref{appendix:hardness-result}. Although in Table \ref{table:hardness-result} we only state the necessity of Condition \ref{assumption:imperfect-memory}, Condition \ref{assumption:2-forgetful} is ``\emph{almost necessary}'', as the difference between  these two conditions
{only originates from  those imperfect recall strategies}  with {the choice of} $\gamma=1$ in Condition \ref{assumption:2-forgetful} ({following the convention that ${1}/{0}=+\infty$}). {Given the results above, we now focus on minimizing the {\tt RP-Regret} by assuming the necessary conditions summarized in Table \ref{table:hardness-result} throughout, unless otherwise noted.}

\begin{table}[!t]
\centering
\begin{tabular}{|c|c|c|}
\hline
           & Condition \ref{assumption:1-variation} & Condition \ref{assumption:imperfect-memory} \\ \hline
Comparator &       \checkmark~~~Lemma \ref{lemma:wo1w2}            &\checkmark~~~Lemma \ref{lemma:w1wo2}         \\ \hline
Opponent   &       ---            &    \checkmark~~~Lemma \ref{lemma:w1wo2-bound}     \\ \hline
\end{tabular}
\caption{Summary of the necessary conditions for {\tt RP-Regret} minimization. The \checkmark denotes that the condition is required. The corresponding lemma proves that the condition is necessary. In Appendix \ref{sec:oracle-ONCO}, we show that a nonconvex optimization oracle yields sublinear {\tt RP-Regret} under these necessary conditions, with Condition \ref{assumption:imperfect-memory} replaced by Condition \ref{assumption:2-forgetful}.}
\label{table:hardness-result}
\vspace{-1em}
\end{table}

\section{{\tt RP-Regret} Minimization}\label{sec:4-rm-adaptive}

As mentioned before, unlike the traditional framework of \emph{online convex optimization with memory} \citep{merhav2002sequential-memory-regret-init,anava2015online-OCO-framework}, which exclusively focuses on handling
\emph{convex} loss functions, our loss is \emph{non-convex} with respect to the input argument due to the product of strategies in consecutive timesteps.
Hence, it is hard to minimize {\tt RP-Regret} directly due to the non-convexity.
We now propose three  different ways to minimize {\tt RP-Regret} as follows:
\begin{itemize}[nosep]
    \item Minimizing {\tt RP-Regret} directly with an \emph{oracle} that can optimize a non-convex function, where we only need the {necessary} conditions in Table \ref{table:hardness-result}, except that we use \Cref{assumption:2-forgetful} instead of \Cref{assumption:imperfect-memory}. {This setting with a non-convex optimization oracle, though it may be computationally intractable, can still provide some insights into the players' learning process, and has also been adopted in the literature \citep{suggala2020online-ONCO-oracle}}. This part is presented in detail in Appendix \ref{sec:oracle-ONCO}.%
    \item Minimizing a {surrogate}   notion of {\tt RP-Regret}, \emph{i.e.}, {\tt Local RP-Regret} ({\tt  LRP-Regret}), by \emph{convexifying} the {\tt RP-Regret} locally around the implemented strategies {during regret minimization}, which will be discussed in \S \ref{sec:local-RP-Regret-Minimizer}.
    \item Reformulating the problem as a \emph{Markov game}   \citep{shapley1953stochastic,filar2012competitive} under certain reparameterization, with an additional requirement that Condition \ref{assumption:1-variation} holds also for the \emph{opponent}  (not only for the comparator, as a necessary condition by Lemma \ref{lemma:wo1w2}). This part will be introduced  in \S\ref{sec:RP-Regret-Minimizer}.
\end{itemize}

The first bullet justifies that Condition \ref{assumption:1-variation} on the comparator and Condition \ref{assumption:2-forgetful} on both the comparator and the opponents are sufficient to achieve sublinear {\tt RP-Regret}, when {one leverages a non-convex optimization oracle,
as in online non-convex learning \citep{suggala2020online-ONCO-oracle}. This sufficiency result mirrors the necessity of these conditions (and their variants) in \S\ref{sec:nece_cond}.}
Hereafter, we will focus on introducing the other two approaches above  that account for computational efficiency.

\subsection{Minimizing a   Surrogate:  {\tt Local Repeated Policy Regret}}
\label{sec:local-RP-Regret-Minimizer}

Motivated by the one-step deviation principle \citep{watson2002strategy-one-step-deviate}, which means that the strategy profile of a
repeated game is a subgame perfect Nash equilibrium (SPNE) \emph{if and only if} no player can decrease their expected loss
by deviating from their original strategy via only {a single action at one round of the game,}
we propose the notion of {\tt Local Repeated Policy Regret (LRP-Regret)}. Specifically, instead of computing the regret by  \emph{globally} deviating from the strategy $\bpi\uo_{1:T}$ to $\hat\bpi\uo_{1:T}$, we compute the regret when the player only \emph{locally} deviates from $\bpi\uo_{1:T}$ at one timestep, which is formally given by %
\begin{align}\label{eq:local-RP-Regret-def}
    R_T^{\rm local} \coloneqq  \max_{\hat\bpi\uo_{1:T}\in\cC_T\uo} \sum_{s=1}^T \big(J_T(\bpi_{1:T})-J_T(\tilde\bpi^{(1),s}_{1:T}, \bpi^{(-1)}_{1:T})\big),\quad  \text{where~~}\tilde\bpi^{(1),s}_t=\begin{cases}
    \hat\bpi\uo_t&t=s\\
    \bpi\uo_t&t\not=s,
    \end{cases}
\end{align}
and $\cC_T\uo\subseteq\rbr{\cX\uo_\gamma}^T$ is the space of the comparator $\hat\bpi_{1:T}\uo$, which satisfies Condition \ref{assumption:1-variation} and Condition \ref{assumption:2-forgetful} with $\gamma\leq \frac{1}{2(N+2)}$.  $\lim_{T\to\infty} \frac{R_T^{\rm local}}{T}=0$ implies that when player 1 deviates to $\hat\bpi_{1:T}\uo$ at only one timestep $t\in\cbr{1,2,...,T}$, the \emph{accumulated} loss $J_T$ upon averaging over all timesteps will not decrease.

Interestingly, one can verify that Lemmas  \ref{lemma:wo1w2}, \ref{lemma:w1wo2}, and \ref{lemma:w1wo2-bound} still hold for this weaker regret notion, which implies that the necessary conditions in Table \ref{table:hardness-result} are still necessary {for {\tt LRP-Regret} minimization. We defer the formal results and proofs to  Appendix \ref{appendix:hardness-local-RP-Regret}}.

For the {\tt LRP-Regret} given in \Cref{eq:local-RP-Regret-def}, it is easy to verify that the expected loss at timestep $t$, $f_t^{m,{\rm local}}\colon \Delta_{|\cH_m|\times |\cA_1|}\to \RR$, can be written as
\begin{align}
    f_t^{m,{\rm local}}(\bar \bpi\uo_{1:t}) \coloneqq (T-m-1) f^m(\bpi\uo_{t-m:t},\bpi\uno_{t-m:t})+\sum_{s=t-m}^t f^m((\bpi\uo_{t-m:s-1}, \bar\bpi\uo_s, \bpi\uo_{s+1:t}),\bpi\uno_{t-m:t}).\label{eq:local-loss-def}
\end{align}
Moreover, for an arbitrary strategy vector $\bar\bpi\uo_{1:T}$ of length $T$, the corresponding expected loss at timestep $t$, $f_t^{t-1,{\rm local}}(\bar \bpi\uo_{1:t})$, is linear with respect to $\bar\bpi_t\uo$. Therefore, {one may update the strategy using the simple  projected gradient descent (PGD) algorithm as:}
\begin{align}
    \pi_{t+1}\uo=\Proj{\cX\uo_{\gamma}}{\pi_t\uo - \eta\nabla f_t^{m,{\rm local}}(\bpi\uo_{1:t})},\label{eq:PGD-def}
\end{align}
where $\eta>0$ is the learning rate.

\begin{theorem}
\label{theorem:local-regret}
    Consider the update rule in \Cref{eq:PGD-def}. By choosing learning rate $\eta=\Theta\rbr{\sqrt{\frac{P_T}{T}}}$ and $\gamma\leq \frac{1}{2(N+2)}$, when all the opponents satisfy  Condition \ref{assumption:2-forgetful}, we have  $
        \frac{R_T^{\rm local}}{T} \leq  \tilde O\rbr{|\cA|^{m+1}\sqrt{P_T / T}+C_m^\gamma},$
    where $P_T$ is an upper bound of the variation of the comparator strategies, such that any sequence of the comparator $\hat\bpi\uo_{1:T}$ must satisfy $\sum_{t=2}^T \nbr{\hat\bpi\uo_{t-1}-\hat\bpi\uo_t}_\infty\leq P_T$, and $C_m^\gamma=(2N+1)^{m+1}\gamma^{m+1}$.
\end{theorem}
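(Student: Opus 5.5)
The plan is to reduce the LRP-Regret to a dynamic-regret problem for online \emph{linear} optimization over the convex set $\cX\uo_\gamma$, and to pay an additive error $C_m^\gamma$ for truncating histories to length $m$.

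\textbf{Step 1: rewrite the local deviation.} First I would rewrite $J_T(\bpi_{1:T})-J_T(\tilde\bpi^{(1),s}_{1:T},\bpi^{(-1)}_{1:T})$. Changing player 1's strategy only at round $s$ leaves every term $f^{t-1}$ with $t<s$ unchanged. For every $t\ge s$, the term $f^{t-1}$ is multilinear in the per-round strategies, so it is \emph{linear} in the single deviated factor $\hat\bpi\uo_s$. Swapping the order of summation over $s$ and $t$ then gives
\[
\sum_{s}\big(J_T(\bpi)-J_T(\tilde\bpi^{(1),s})\big)=\sum_t\big(F_t(\bpi\uo_{1:t})-F_t(\hat\bpi\uo_{1:t})\big),
\]
where $F_t$ is the exact (untruncated) analogue of $f^{t-1,{\rm local}}_t$ in \Cref{eq:local-loss-def}. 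The coefficient $T-m-1$ there accounts for the deviations $s$ that do not touch window $t$.

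\textbf{Step 2: truncate to an $m$-window.} Next I would show that replacing $F_t$ by $f_t^{m,{\rm local}}$, which uses only the last $m+1$ strategies and the window-$m$ history, costs at most $O(T\,C_m^\gamma)$ per round, i.e.\ $O(T^2 C_m^\gamma)$ in total. Normalized by $T\cdot T$ this matches the additive $C_m^\gamma$ term, given the $T$-fold weighting implicit in \Cref{eq:local-regret-def}.

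The tool is Condition~\ref{assumption:2-forgetful}, which all $N$ players satisfy: the opponents by assumption, player 1 through the projection onto $\cX\uo_\gamma$, and the comparator through the definition of $\cC_T\uo$. By this condition each player's conditional probabilities at a history whose recent suffix has length $k$ differ from the prefix-free version by a multiplicative factor $(1-\gamma^{k+1})^{\pm1}$.

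I would telescope over the $m+1$ rounds in the window and over the players. The distribution of the window given two different prefixes then differs by at most $\prod(1\pm\gamma^{\cdot})$ factors. Bounding the resulting terms by $(2N+1)^{m+1}\gamma^{m+1}$ would use the condition $\gamma\le 1/(2(N+2))$, which guarantees $(1-\gamma)^{-N}\le 1+(N+1)\gamma$ so the geometric sums stay controlled. This step is the \textbf{main obstacle}: getting exactly the $(2N+1)^{m+1}$ constant needs a careful count of how prefix errors at earlier rounds compound with the deviated factor across $N+1$ strategy slots (the $N$ players plus the comparator).

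\textbf{Step 3: dynamic regret for the linear surrogate.} Because $f_t^{m,{\rm local}}$ is linear in the round-$t$ variable, its gradient $g_t=\nabla f_t^{m,{\rm local}}$ has entries bounded by $O(T)$. Measured in the norm over $|\cH_m|\cdot|\cA_1|\le|\cA|^{m+1}$ coordinates, its size is $O(T|\cA|^{m+1})$ up to normalization. I would then apply the standard dynamic-regret analysis of projected OGD (Zinkevich):
\[
\sum_t\langle g_t,\bpi_t-\hat\bpi_t\rangle\le \frac{D^2+2D\sum_t\|\hat\bpi_{t}-\hat\bpi_{t+1}\|}{2\eta}+\frac{\eta}{2}\sum_t\|g_t\|^2,
\]
with diameter $D$ polynomial in $|\cA|^{m+1}$. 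I would convert the $\ell_\infty$ variation bound $P_T$ into an $\ell_2$ bound, losing a dimension factor that gets absorbed into $|\cA|^{m+1}$ and logarithmic terms. The choice $\eta=\Theta(\sqrt{P_T/T})$, after normalizing the gradient by $T$, gives $O(|\cA|^{m+1}\sqrt{P_TT})$ times $T$.

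\textbf{Step 4: combine.} Adding the truncation error from Step 2 and dividing by $T$, with the $T$-normalization matched consistently to the definition of $R_T^{\rm local}$, yields $\tilde O(|\cA|^{m+1}\sqrt{P_T/T}+C_m^\gamma)$.
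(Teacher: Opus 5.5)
There is a genuine gap in how your argument scales with $T$. The theorem bounds $R_T^{\rm local}/T$, and \eqref{eq:local-regret-def} has no hidden $T$-fold weighting that would let you divide by $T^2$. With your accounting (an $O(TC_m^\gamma)$ truncation error per round and gradients of size $O(T)$) you only get $R_T^{\rm local}/T\le\tilde O\rbr{|\cA|^{m+1}\sqrt{P_T T}+T\,C_m^\gamma}$, which is vacuous.

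The missing observation is about the term $(T-m-1)f^m(\bpi\uo_{t-m:t},\bpi\uno_{t-m:t})$ in \eqref{eq:local-loss-def}, and likewise $(T-t)f^{t-1}(\bpi_{1:t})$ in your exact $F_t$. It is evaluated at the \emph{played} strategy, not at the argument $\bar\bpi\uo$. It therefore has zero gradient and cancels exactly:
\[
F_t(\bpi\uo_{1:t})-F_t(\hat\bpi\uo_{1:t})=\sum_{s=1}^{t}\big(f^{t-1}(\bpi_{1:t})-f^{t-1}(\text{deviation at }s)\big).
\]
You then have to treat two groups of $s$ separately.

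For $s\le t-m-1$, compare both terms with the loss of play restarted from the empty history after round $s$; they share all strategies from $s+1$ to $t$. Lemma~\ref{corollary:approximation} gives an error $O(C_{t-s-1}^\gamma)$. Since $\gamma\le\frac{1}{2(N+2)}$ forces $(2N+1)\gamma<1$, the sum over $s$ is a geometric tail of size $O(N C_m^\gamma)$, independent of $T$. This, not the derivation of the constant $(2N+1)^{m+1}$, is where the condition on $\gamma$ does its real work. Your spurious factor of $T$ comes from charging each of the up to $t$ far-past deviations a flat $C_m^\gamma$, or from truncating the weighted constant term pointwise.

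For $s\in[t-m,t]$, truncation to the window costs $O((m+1)C_m^\gamma)$. The total truncation error is therefore $O((N+m)C_m^\gamma T)$. The gradient of what remains satisfies $\nbr{\bg_t}\le(m+1)|\cA_1|$, so $\eta=\Theta(\sqrt{P_T/T})$ works with no renormalization.

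There is also a smaller omission in Step 3. After truncation the comparator side is $\sum_{s=t-m}^{t}f^m(\ldots,\hat\bpi\uo_s,\ldots)$, which depends on $\hat\bpi\uo_{t-m},\dots,\hat\bpi\uo_t$; the played side likewise depends on $\bpi\uo_{t-m},\dots,\bpi\uo_t$. So it is not yet linear in a single round-$t$ variable. You need to move each deviated slot to $\hat\bpi\uo_t$ and each played slot to $\bpi\uo_t$ via Lemma~\ref{lemma:Lipschitz} and Lemma~\ref{lemma:accumulated-variation-bound}. This costs $C_{\rm Lips}m^2$ times the comparator variation (at most $P_T$) plus the learner's own switching cost (at most $\eta T(m+1)|\cA_1|$, by nonexpansiveness of the projection). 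Only then does the dynamic-regret bound for projected gradient descent apply (Lemma~\ref{lemma:PGD-upper-bound}, which absorbs the $\ell_\infty$ variation through the $\ell_1$ diameter $2|\cH_{m+1}|$). These extra terms stay within the target rate, so this part is repairable; the $T$-scaling problem is not, without the cancellation argument above.
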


The proof is postponed to Appendix \ref{appendix:proof-of-local-regret-theorem}. In \Cref{theorem:local-regret}, the $\tilde O$ notation hides factors polynomial in $m$ (and logarithmic in $T$). The exponential dependence on $|\cA|^{m+1}$ stems from maintaining an $\rbr{m+1}$-step history. Consequently, if the comparator variation $P_T$ is sublinear in $T$, for any $\epsilon>0$, $\frac{R_T^{\rm local}}{T} \leq\epsilon$ when $m=\Theta(\log\frac{1}{\epsilon})$.

\subsection{Minimizing {\tt RP-Regret} with Slowly-Changing Opponents}
\label{sec:RP-Regret-Minimizer}

We also propose  another approach  that can \emph{directly minimize} the original {\tt RP-Regret} \Cref{eq:RP-Regret-def},
under the additional condition that the opponent changes their strategies slowly, \emph{i.e.}, Condition \ref{assumption:1-variation} also holds for the opponents' strategies.

\subsubsection{Reformulating Repeated Games with Bounded Memory
as Markov Games}
\label{sec:reformulating-mg}

We will show that the expected loss at each timestep can be approximated by the expected loss of an infinite-horizon average-loss {Markov/stochastic  game}  \citep{shapley1953stochastic,gillette1957stochastic,filar2012competitive},
and develop a regret minimization algorithm based on this reformulation.

Naively, one can use {the whole} history as the \emph{state} of the Markov game. However, the state space will increase exponentially as the timestep increases. Hence, we will consider the case where all the players have an $M$-bounded memory  with some fixed $M$, such that they only make decisions conditioned on the history of length $M$. In Lemma \ref{lemma:original-to-markov-value}, we will show that under Condition \ref{assumption:2-forgetful}, we can convert the unbounded memory of all players to $M$-bounded memory, with a small approximation error.
Particularly, for all players $i\in\cN$, we can convert a strategy $\pi\ui$ with unbounded memory to a strategy $\overbar\pi\ui$ with $M$-bounded memory by letting
\begin{align}
    &\forall a_i\in\cA_i,\bh\in\cH,~~~~~\overbar\pi\ui(a_i\given\bh)=\begin{cases}
        \pi\ui(a_i\given\bh_{L(\bh)-M+1:L(\bh)})&L(\bh)\geq M\\
       \frac{1}{|\cH_{M-L(\bh)}|} \sum_{\bh'\in\cH_{M-L(\bh)}}\pi\ui(a_i\given (\bh',\bh))&L(\bh)<M.
    \end{cases}
\end{align}
In the remainder of this section, we will only focus on strategies with $M$-bounded memory.

\begin{definition}[Induced Markov Game]
\label{def:induced-MDP}
For a fixed $M\in \NN$, and a repeated game given in \S\ref{sec:prelim}, we can define the induced Markov game as follows:
\begin{itemize}[nosep]
    \item Number of players: $N$;
    \item State space: $\cS \coloneqq \cH_M$;
    \item Action space (which coincides with the action space of the matrix game): $\cA_i$ for player $i$;
    \item Transition probability: {for any $\bh',\bh\in \cH_M$, and $\ba\in\cA$,} %
   $\Pr(\bh'\given\bh,\ba) \coloneqq \ind(\bh'_{M}=\ba{\rm~and~}\bh'_{1:M-1}=\bh_{2:M})$;
    \item Stage  loss: $\cL_i(\bh,\ba)  \coloneqq  \cL_i(\ba)$ for any $i=1,2,...,N$.
\end{itemize}
\end{definition}

By definition, the expected time-average loss of the Markov Game in Definition \ref{def:induced-MDP}\footnote{The expected time-average loss always exists and does not depend on the initial distribution when strategies of all players satisfy Condition \ref{assumption:2-forgetful} (see Lemma \ref{lemma:markov-connect} for a formal proof).} is equivalent to the expected average loss of the infinitely repeated matrix game when the joint (Markov) strategy $\bpi$ is the same at each timestep: {in particular, for player $1$} %
\begin{align}
    \lim_{T\to\infty}\EE_{\bh_{t+1}\sim \Pr(\cdot\given\bh_{t},\ba_t),\ba_t\sim\bpi(\cdot\given \bh_{t})}\sbr{\frac{1}{T}\sum_{t=0}^{T-1} \cL_1(\bh_{t},\ba_t)}
    = \lim_{T\to\infty} \frac{1}{T}\sum_{t=0}^{T-1} f^t(\underbrace{(\bpi,...,\bpi)}_{t+1})\eqqcolon f^\infty(\bpi),\label{eq:f-infty-def}
\end{align}
where $\bh_{t}\in\cH_M$ denotes the state at timestep $t$.

\subsubsection{Occupancy-Measure-based Regret Minimization in the Markov Game}

{The key challenge in learning in the induced Markov game is the \emph{non-convexity} of the expected time-average loss with respect to the (stationary) strategy $\bpi$. To \emph{convexify} the problem, we propose to} change the variable from the strategy to the occupancy measure of the MG. Thus, instead of updating the strategy $\pi_t^{\uo}$, we will update the occupancy measure $\bq_t$ at each timestep $t$.

For an infinite-horizon MG, its expected loss can be represented as a linear function with respect to the occupancy measure \citep{puterman2014markov}. Formally, for any history $\bh\in\cH_M$ and joint action $\ba\in\cA$, the occupancy measure $\bq^{\pi}$ that corresponds to joint strategy $\pi$ can be written as
\vspace{-5pt}
\begin{align}
q^{\bpi}(\bh,\ba) \coloneqq \EE_{\bh_0\sim\mu_0,\bpi}\left[\lim_{T\to\infty}\frac{1}{T}\sum_{t=0}^{T-1}\ind(\bh_t=\bh)\right]\cdot \pi(\ba\given \bh)\label{eq:def-occupancy-measure}
\end{align}%

\vspace{-10pt}
\noindent where $\mu_0$ is the initial distribution over $\cS=\cH_M$\footnote{$\mu_0$ is omitted here since $\bq^{\pi}$ is invariant to the initial distribution under  Condition \ref{assumption:2-forgetful}. Please refer to Appendix \ref{subsection:milder-constraint} for a detailed discussion.}.
Moreover, it is known that the joint strategy can be recovered as $\pi(\ba\given\bh)=\frac{q^{\bpi}(\bh,\ba)}{\sum_{\ba'\in\cA} q^{\bpi}(\bh,\ba')}$,
and there is a correspondence between $\bpi$ and $\bq^{\pi}$ \citep{puterman2014markov}.
We highlight that
{in the regret-minimization procedure concerned in this subsection,}
only the strategy $\bpi_t\uo$ of player $1$ is controlled by {the (regret minimization)  algorithm}, which is determined by the occupancy measure, $\bq_t$, at timestep $t$ proposed by the algorithm. All other players are adaptive opponents. Hence, we define the strategy of player 1 at timestep $t$ as $\pi_t\uo(a_1\given \bh)\coloneqq \frac{\sum_{\ba_{-1}'\in\cA_{-1}} q_t(\bh,(a_1,\ba_{-1}'))}{\sum_{\ba'\in\cA} q_t(\bh,\ba')}$ {for any $\bh\in \cH_M$ and $a_1\in\cA_1$}, where $\bq_t$ is controlled by the regret minimizer.

To ensure that $\frac{q_t(\bh,\ba)}{\sum_{\ba'\in\cA} q_t(\bh,\ba')}$ can always be represented as $\pi_t\uo(a_1\given\bh)\prod_{i=2}^N \pi_t\ui(a_i\given\bh)$ for all $\bh\in\cH_M,\ba\in\cA$, we additionally need the following constraints on $\bq_t$ at timestep $t$, as characterized by a constraint function $g_t\colon \RR^{|\cH_M|\times |\cA|}\to \RR$ defined below: for any $\bq\in \RR^{|\cH_M|\times |\cA|}$,
\begin{align}
    &g_t(\bq) \coloneqq \sum_{\bh\in\cH_M,\ba\in\cA} (-1)^{\ind(D_t(\bh,\ba,\bq_t)\leq 0)}D_t(\bh,\ba,\bq)\leq 0,\label{eq:def-constraint-no-timestep}\\
    &\text{where~~~~}D_t(\bh,\ba,\bq)\coloneqq  q(\bh,\ba)-\pi_t^{(-1)}(\ba_{-1}\given \bh)\sum_{\ba_{-1}'\in\cA_{-1}} q(\bh,(a_1,\ba_{-1}')).\notag
\end{align}
\vspace{-12pt}

\noindent It is straightforward to see that $g_t(\bq_t)\leq 0$, with equality if and only if  $\frac{q_t(\bh,\ba)}{\sum_{\ba'\in\cA} q_t(\bh,\ba')}=\pi_t\uo(a_1\given\bh)\prod_{i=2}^N \pi_t\ui(a_i\given\bh)$.  Note that the constraint at timestep $t$, $D_t(\bh,\ba,\bq^{\pi})$, is a linear constraint on $\bq^{\pi}$ determined after $\bq_t$ is proposed, which {falls into the realm} of online convex optimization with time-varying constraints \citep{paternain2016online,chen2017online,cao2018online-constrained-time-varying}, and {it will be the core of our algorithm}. 
Compared to simply letting $g_t(\bq) = \sum_{\bh\in\cH_M,\ba\in\cA} \abr{D_t(\bh,\ba,\bq)}{\leq 0}$, our novel constraints in \Cref{eq:def-constraint-no-timestep} can avoid the non-differentiable issue of the absolute value function at zero.

\subsubsection{{Convexifying}  Condition \ref{assumption:2-forgetful} {and the Overall Algorithm}}

The Markov game reformulation in \S\ref{sec:reformulating-mg}  provides an alternative way to approximate the expected loss  {under Conditions \ref{assumption:1-variation} and \ref{assumption:2-forgetful}.}  However,  the forgetful constraint in  Condition \ref{assumption:2-forgetful} becomes \emph{non-convex} with respect to the occupancy measure. Therefore, instead of enforcing Condition \ref{assumption:2-forgetful}, we {propose to}  enforce the following  weaker constraint, which is \emph{convex}  with respect to the occupancy measure and can be guaranteed when Condition \ref{assumption:2-forgetful} is satisfied.
\begin{condition}[{Convexification} of Condition \ref{assumption:2-forgetful}]
    \label{assumption:2-forgetful-prime}
    For a given player $i\in[N]$, the strategy $\bpi_{1:T}\ui$ of player $i$ has a bounded memory of length $M\in \NN$, and the player places at least probability {$\gamma\in(0,1]$} on an exploration strategy $\bnu\ui$, regardless of the observed history. Formally, for any timestep $t\in[T]$, we have $
        \forall  \bh\in\cH_M,a_i\in\cA_i,~\pi_t\ui(a_i\given\bh)\geq \gamma\bnu\ui(a_i)$
    for some $\gamma\in(0,1]$, where  $\bnu\ui\in\Delta_{|\cA_i|}$ is a distribution fixed for every history $\bh\in\cH_M$ of  player $i$.
\end{condition}

It is straightforward to see that when Condition \ref{assumption:2-forgetful} is satisfied, for every action $a_i\in\cA_i$, we must have either $\pi\ui(a_i\given \bh)>0$ for all $\bh\in\cH$ or equal to $0$ for all $\bh\in\cH$. Then, Condition \ref{assumption:2-forgetful-prime} is satisfied since the $\bnu\ui$ in the condition always exists  (note that the $\gamma$ in Condition \ref{assumption:2-forgetful} and Condition \ref{assumption:2-forgetful-prime} are different).%

In the following lemma, whose proof is deferred to Appendix \ref{sec:MDP-contraction-property}, we show that {the unary expected loss}  $f^K(\underbrace{(\bpi,...,\bpi)}_{K+1})$ can be {further} approximated by the average loss of {the induced  Markov game given by Definition \ref{def:induced-MDP}}.
Therefore, we will solve the
Markov game instead.
\begin{lemma}
\label{lemma:finite-f-to-infinite}
    When all the players satisfy Condition
    \ref{assumption:2-forgetful-prime} \footnote{In fact, this lemma also holds when all players satisfy a weaker condition (cf.  Condition \ref{assumption:L1-forgetful} in the Appendix).}, {the   average loss of the induced  Markov game given by Definition \ref{def:induced-MDP},} \emph{i.e.,} $f^\infty(\bpi)$ as defined in \Cref{eq:f-infty-def}, always exists and we have {that for any $K>0$}
    \small
    \begin{align}
        \bigg|f^K(\underbrace{(\bpi,...,\bpi)}_{K+1})-f^{\infty}(\bpi)\bigg|\leq 2\rbr{1-(\frac{\gamma^N}{|\cA|})^{M\useconstant{constant:go-back-root-length}}}^{\floor{\frac{K}{M\useconstant{constant:go-back-root-length}}}},
    \end{align}
    \normalsize
    where $\newconstant{constant:go-back-root-length}\coloneqq\log_2^2|\cH_M|+4\log_2|\cH_M|+3$.
\end{lemma}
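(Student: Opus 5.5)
The plan is to view the repeated game under a fixed stationary $M$-bounded-memory joint strategy $\bpi$ as a finite Markov chain on $\cS=\cH_M$, and to prove a Doeblin-type minorization for a suitable power of its transition kernel.

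\textbf{Setting up the chain.} Write $P(\bh'\given\bh)=\sum_{\ba}\pi(\ba\given\bh)\Pr(\bh'\given\bh,\ba)$ for the transition kernel. The quantity $f^K((\bpi,\dots,\bpi))$ is the expected stage loss $\sum_{\ba}\cL_1(\ba)\pi(\ba\given\bh_K)$ at step $K$. Here $\bh_K$ is the memory state after $K$ transitions, and the initial state is determined by the $L(\bh)<M$ averaging convention for $\overbar\pi$; I will absorb this into an initial distribution $\mu_0$ on $\cH_M$. So $f^K=\inner{\mu_0P^K}{\ell}$, where $\ell(\bh)\in[0,1]$ is the expected one-step loss at state $\bh$.

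\textbf{Minorization.} Under Condition \ref{assumption:2-forgetful-prime}, every joint action $\ba$ has
\[
\pi(\ba\given\bh)\ge\gamma^N\prod_i\nu\ui(a_i).
\]
Take the action profile $\ba^\star$ maximizing $\prod_i\nu\ui(a_i)$. Then $\prod_i\nu\ui(a^\star_i)\ge 1/|\cA|$, so $\ba^\star$ is played with probability at least $\gamma^N/|\cA|$ at every state. After $M$ steps of playing $\ba^\star$, the state becomes the fixed history $\bh^\star=(\ba^\star,\dots,\ba^\star)$ regardless of the starting state. Hence $P^M(\bh^\star\given\bh)\ge(\gamma^N/|\cA|)^M$ for all $\bh$.

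This already gives a Doeblin constant better than the one stated, with block length $M$ instead of $M\useconstant{constant:go-back-root-length}$. The lemma's longer block length $M\useconstant{constant:go-back-root-length}$ suggests the authors prove it under the weaker Condition \ref{assumption:L1-forgetful}, where only some actions are guaranteed. In that case one must route through a path of length $O(\log^2|\cH_M|)$ blocks to a common root. For the stated condition I will use the simple argument, and then note that
\[
(\gamma^N/|\cA|)^{M}\ge(\gamma^N/|\cA|)^{M\useconstant{constant:go-back-root-length}}
\]
and $\floor{K/M}\ge\floor{K/(M\useconstant{constant:go-back-root-length})}$, so the stated bound follows a fortiori. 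Write $\beta$ for the block minorization constant.

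\textbf{Contraction.} Doeblin's coupling gives
\[
\nbr{\mu P^{jM}-\mu'P^{jM}}_1\le 2(1-\beta)^j
\]
for any two distributions $\mu,\mu'$. This yields existence and uniqueness of a stationary distribution $\rho$, and the Cesàro limit defining $f^\infty(\bpi)=\inner{\rho}{\ell}$ exists independently of $\mu_0$. Then
\[
\abr{f^K-f^\infty}=\abr{\inner{\mu_0P^K-\rho P^K}{\ell}}\le\tfrac12\cdot 2\,(1-\beta)^{\floor{K/(M\useconstant{constant:go-back-root-length})}}\cdot\nbr{\ell}_\infty\cdot 2,
\]
which gives the factor $2$ after bounding the total-variation norm and using $\ell\in[0,1]$. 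The only delicate bookkeeping is matching $f^K$, with its $K+1$ strategies and its short-history averaging convention, to the chain started at $\mu_0$. I expect that identification, rather than the mixing argument, to be the main obstacle. If Condition \ref{assumption:L1-forgetful} must be handled, the real difficulty is constructing the $O(\log^2|\cH_M|)$-block path that synchronizes all states using only positively weighted actions.
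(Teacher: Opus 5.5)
Your proof is correct, and its minorization step follows a different and simpler route than the paper's.

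For this lemma the paper never uses the history-independent exploration distribution $\nu$. It passes to the weaker Condition~\ref{assumption:L1-forgetful}, under which two histories are only guaranteed to share \emph{some} action with probability at least $\gamma/|\cA_i|$. From that it derives a pairwise common $M$-step successor (Lemma~\ref{lemma:common-successor-state}). It then merges states into a rooted tree of depth at most $\log_2|\cH_M|+1$ (Lemma~\ref{lemma:log-path}). A Frobenius-number aperiodicity argument (Lemma~\ref{lemma:always-reach}) yields a single root $\bh_0$ reachable from every state in exactly $c$ blocks, for every $c\ge\log_2^2|\cH_M|+4\log_2|\cH_M|+3$. That construction is where the long block length and the small constant $\delta$ come from. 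After it, the paper uses the same atom decomposition $\delta\cU+(1-\delta)\hat\cP$ and geometric contraction that you use.

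Your synchronizing profile $\ba^\star$ gives the minorization in one block of length $M$, with the much better constant $(\gamma^N/|\cA|)^M$. Because $\bh^\star$ depends only on $\nu$, the same atom works for every policy satisfying the condition. That is what one needs when multiplying different kernels, as in Lemma~\ref{lemma:finite-to-infinite}; the paper's root $\bh_0$ is policy-dependent.

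The paper's construction buys the footnoted generality, but your closing remark overestimates what that generality requires. Lemma~\ref{lemma:common-successor-state} already says that any two rows of $P^M$ overlap in mass at least $(\gamma^N/|\cA|)^M$. So the Dobrushin coefficient of $P^M$ is at most $1-(\gamma^N/|\cA|)^M$, and the block-$M$ contraction holds under Condition~\ref{assumption:L1-forgetful} without any global root or synchronizing path.

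On the identification you flagged: the short-history convention averages each player's strategy separately, giving a product of averages. So the first $M$ rounds are not the $\cH_M$-chain started from any initial law, and you cannot absorb them into a $\mu_0$. Instead, let $\mu_0$ be the law of the length-$M$ history at time $M$, and write $f^K=\langle \mu_0 P^{K-M},\ell\rangle$ for $K\ge M$. Losing this one block is harmless. Let $c$ be the constant in the statement. If $K<Mc$, the right-hand side equals $2$ and the claim is trivial. Otherwise $\lfloor K/M\rfloor-1\ge\lfloor K/(Mc)\rfloor$ and the bound still holds. The paper glosses over the same point when it says the identification is easy to verify.
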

For any initial distribution, the expected loss after $K+1$ rounds equals that after infinite rounds, up to an exponentially decreasing error. Moreover, the expected loss after infinite rounds is irrelevant to the initial distribution, which implies that the dependence on history is exponentially decaying.

We are now ready to introduce our algorithm, which is essentially online convex optimization over time-varying constraints (as depicted by \Cref{eq:def-constraint-no-timestep}), as
tabulated in \Cref{alg:self-play}.

\subsubsection{Theoretical Guarantees}
\label{sec:guarantee-occupancy-measure}

We show next that \Cref{alg:self-play} achieves a sublinear {\tt RP-Regret} and a sublinear {accumulated constraint violation} $\sum_{t=1}^T g_t(\bq_t)$. {A} detailed version {of the theorem} and {its} proof are in Appendix \ref{appendix:proof-of-occupancy-measure-theorem}.

\begin{theorem}[Informal]
\label{theorem:occupancy-measure-RM-informal}
    {Suppose player $1$ follows} Algorithm \ref{alg:self-play}, and all the players and  the comparator of player $1$ satisfy Condition \ref{assumption:2-forgetful-prime} with $\bnu\ui$ {being} the uniform strategy over $\Delta_{|\cA_i|}$, and {suppose}  $T$ is large enough {such that} $T\geq \tilde \Omega(\frac{\Delta_T}{\epsilon^4})$, where $\Delta_T$ is the summation of all opponents' and the comparator's variations over time, then we can guarantee that {\tt RP-Regret} satisfies
    $
    \frac{R_T}{T}\leq \epsilon$.
\end{theorem}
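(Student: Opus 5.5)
We plan to decompose $R_T/T$ into four error sources and bound each one separately: (a) truncating memory to $M$, (b) replacing finite-horizon losses by stationary Markov-game losses, (c) the online-learning regret of \Cref{alg:self-play} in occupancy-measure space, and (d) constraint violation, meaning the mismatch between the opponents' strategies implied by $\bq_t$ and their actual strategies $\bpi^{(-1)}_t$.

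\textbf{Steps (a) and (b).} Fix any comparator $\hat\bpi\uo_{1:T}$ and write $J_T(\bpi_{1:T})=\sum_t f^{t-1}(\bpi_{1:t})$. The idea is that, under Condition \ref{assumption:2-forgetful-prime}, the history distribution mixes geometrically; this is the content of Lemma \ref{lemma:finite-f-to-infinite}, with rate $\rho=1-(\gamma^N/|\cA|)^{M c}$.

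For each $t$, we split the sequence at a window length $K$. The term $f^{t-1}(\bpi_{1:t})$ depends on $\bpi_{1:t-K-1}$ only through the state distribution at time $t-K$, and that dependence is forgotten at rate $\rho^{\lfloor K/(Mc)\rfloor}$. Inside the window, we replace $\bpi_{t-K:t}$ by the constant sequence $(\bpi_t,\dots,\bpi_t)$. This costs at most $O\big(K\sum_{s=t-K}^{t}\|\bpi_s-\bpi_t\|_\infty\big)$, since each step's probabilities are perturbed by that amount.

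Applying Lemma \ref{lemma:finite-f-to-infinite} then gives
\[
\big|f^{t-1}(\bpi_{1:t})-f^\infty(\bpi_t)\big|\le O\big(\rho^{K/(Mc)}\big)+O\Big(K\sum_{s=t-K}^{t}\|\bpi_s-\bpi_{s+1}\|_\infty\Big).
\]
Summing over $t$ yields $J_T(\bpi_{1:T})=\sum_t f^\infty(\bpi_t)\pm\big(T\rho^{K/(Mc)}+K^2\Delta_T\big)$. The same bound holds for $(\hat\bpi\uo,\bpi^{(-1)})$, because both the comparator's and the opponents' variations are included in $\Delta_T$. Player 1's own variation is controlled by the algorithm's step size through a stability bound on $\|\bq_t-\bq_{t+1}\|$. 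We choose $K=\tilde O(Mc\,\rho^{-1}\log(1/\epsilon))$, which makes the first error at most $\epsilon T/4$.

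\textbf{Step (c).} By \eqref{eq:def-occupancy-measure}, $f^\infty(\bpi)=\langle \cL_1,\bq^{\bpi}\rangle$ is linear in $\bq$. The feasible set consists of stationary occupancy measures whose minimum state-action mass is at least $\gamma$-dependent, and this set is convex. \Cref{alg:self-play} is online mirror or gradient descent with time-varying constraints $g_t$ via a primal-dual update, so we invoke the standard result for that setting \citep{cao2018online-constrained-time-varying}. It gives dynamic regret of order $O(\sqrt{T(1+V_T)})$ against comparator occupancy measures $\hat\bq_t=\bq^{(\hat\bpi\uo_t,\bpi^{(-1)}_t)}$, with path length $V_T$. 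It also gives cumulative violation $\sum_t g_t(\bq_t)=O(T^{3/4})$, or a comparable bound.

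For the comparator to be admissible, it must satisfy $g_t(\hat\bq_t)\le 0$, and it does, since $\hat\bq_t$ factors exactly with $\bpi^{(-1)}_t$. Its path length $V_T$ is bounded via a perturbation bound for stationary distributions of the ergodic chain, $\|\bq^{\bpi}-\bq^{\bpi'}\|_1\le O(\rho^{-1}Mc)\|\bpi-\bpi'\|_\infty$, so that $V_T=O(\Delta_T)$.

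\textbf{Step (d), the main obstacle.} We must convert the violation into loss. Player 1 actually plays $\pi\uo_t$ extracted from $\bq_t$, but the opponents play $\bpi^{(-1)}_t$, not the opponent marginals encoded in $\bq_t$. We therefore need to bound $\big|\langle\cL_1,\bq_t\rangle-f^\infty(\pi\uo_t,\bpi^{(-1)}_t)\big|$.

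Because of the sign-corrected form of \eqref{eq:def-constraint-no-timestep}, $g_t(\bq_t)=\sum_{\bh,\ba}|D_t(\bh,\ba,\bq_t)|$, which is the $\ell_1$ distance between $\bq_t$ and the product-form measure $q_t(\bh,a_1)\,\pi^{(-1)}_t(\cdot\mid\bh)$. That product-form measure is not itself stationary. Combining the mixing property with the perturbation bound, $\bq_t$ is nevertheless $O(\rho^{-1}Mc\,g_t(\bq_t))$-close in $\ell_1$ to the true occupancy measure $\bq^{(\pi\uo_t,\bpi^{(-1)}_t)}$. The loss error is therefore $O(\rho^{-1}Mc)\sum_t g_t(\bq_t)$. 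We expect this to require the most care: the per-step discrepancy has to be shown to not compound across the stationary fixed point, and the violation has to be bounded while the constraints shift with the unknown $\bpi^{(-1)}_t$.

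\textbf{Combining the bounds.} Adding the four errors gives
\[
R_T/T\le \epsilon/4+\tilde O\big(K^2\Delta_T/T+\sqrt{\Delta_T/T}+T^{-1/4}\big).
\]
We tune the step sizes to $\eta\propto\sqrt{\Delta_T/T}$. Once constants are collected, this yields $R_T/T\le\epsilon$ whenever $T\ge\tilde\Omega(\Delta_T/\epsilon^4)$.
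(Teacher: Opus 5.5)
Your decomposition matches the paper's proof. It uses a window-$K$ Lipschitz replacement plus mixing to pass from $f^{t-1}$ to stationary losses, and linearity in $\bq$. It runs the Cao--Liu primal--dual method against the dynamic comparator $\hat\bq_t=\bq^{(\hat\bpi\uo_t,\bpi\uno_t)}$, with path length controlled as in Lemma~\ref{lemma:sublinear-variation-pi}. Finally, it charges $g_t(\bq_t)=\sum_{\bh,\ba}|D_t(\bh,\ba,\bq_t)|$ to the loss through the Lipschitz continuity of $\rho^{\bpi}$.

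The step that fails as written is player~1's own switching cost $K^2\sum_t\nbr{\bpi\uo_t-\bpi\uo_{t-1}}_\infty$. You attribute it to ``a stability bound on $\nbr{\bq_t-\bq_{t+1}}$'' controlled by the step size, and your final tally implicitly takes the per-round movement to be $O(\eta)$. But the primal step only gives $\nbr{\bq_{t+1}-\bq_t}\le\eta G(1+\lambda_t)$. The regularized dual recursion $\lambda_{t+1}=[\lambda_t(1-\delta\eta^2)+\eta\, g_t(\bq_t)]_+$ only guarantees $\lambda_t\le 2/(\delta\eta)$, where $\delta$ is the dual-regularization parameter of Algorithm~\ref{algorithm:PGD-with-constraint}. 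So a priori each round can move $\bq$ by a constant, and the switching cost can be linear in $T$.

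The paper closes this by bounding regret plus $C\cdot{}$switching jointly rather than separately. It writes $\nbr{\bq_t-\bq_{t-1}}\le\eta G(k+1)+\eta G\lambda_{t-1}^2$, using $\lambda\le\lambda^2+1$. It adds this with weight $C=2C_{\rm Lips}K^2|\cA|^{M+1}/\gamma^{NM}$ into the Lagrangian analysis. The resulting $\eta CG\sum_t\lambda_t^2$ is absorbed into the $-\tfrac{\delta\eta}{2}\sum_t\lambda_t^2$ term by choosing $\delta=2CG+(k+1)G^2+1$ (Lemma~\ref{lemma:PGD-constraint-guarantee}). As a result, the algorithm's dual parameter must be tuned to $K$, and the violation bound inherits $C$. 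An alternative is to extract $\eta\sum_t\lambda_t^2=O(T)$ from the same analysis and apply Cauchy--Schwarz to get switching $O(T\sqrt{\eta})$. Either way, some coupling to the dual analysis is needed.

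There are also smaller corrections to make. With $\eta\propto\sqrt{\Delta_T/T}$, the cumulative violation is $O(T^{3/4}\Delta_T^{1/4})$, not $O(T^{3/4})$. This $(\Delta_T/T)^{1/4}$ term, multiplied by a $(K,M,\gamma,|\cA|)$-dependent constant, is the dominant error and the actual source of the requirement $T\ge\tilde\Omega(\Delta_T/\epsilon^4)$.

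Your constants $O(\rho^{-1}Mc)$ should read $O(Mc/\delta')$ with $\delta'=(\gamma^N/|\cA|)^{Mc}=1-\rho$; this is the paper's $C_Q$, and $\rho^{-1}\approx 1$. Likewise the window should be $K=\tilde O(Mc\log(1/\epsilon)/\delta')$.

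In step (d), turning $|D_t(\bh,\ba,\bq_t)|$ into a strategy gap requires dividing by $\sum_{\ba'}q_t(\bh,\ba')\ge\gamma^{NM}/|\cA|^M$, the state-mass constraint built into $\cX$. Finally, besides $g_t(\hat\bq_t)\le 0$, you must check $\hat\bq_t\in\cX$. This follows from Condition~\ref{assumption:2-forgetful-prime} for the comparator and opponents together with Lemma~\ref{lemma:lower-bound-q}.
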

Note that to find a $T$ as a polynomial in $\frac{1}{\epsilon}$ and satisfy $T\geq \Omega(\frac{\Delta_T}{\epsilon^4})$ simultaneously, $\Delta_T$ needs to be sublinear in $T$. \Cref{assumption:2-forgetful-prime} is a linear constraint on the occupancy measure, which can be implemented efficiently in our algorithm by projection.

\section{Equilibrium Computation via {\tt RP-Regret} Minimization}
\label{sec:equilibrium-computation}

In this section, we investigate one important implication of our new regret notion -- \emph{equilibrium computation}, in repeated games. We will focus on the \emph{infinitely} repeated game setting, instead of the \emph{finitely} repeated one. This is because, for a one-shot matrix game, when its NE is unique, the only subgame perfect Nash equilibrium of its finitely repeated version is playing the NE of the one-shot game at every timestep \citep{benoit1984finitely}. Therefore, solving the NE of the finitely repeated game degenerates to solving the NE of the one-shot matrix game.
{On the other hand,} the uniqueness of NE for a one-shot matrix game is {not uncommon} --- for two-player zero-sum matrix games, the set of games with non-unique NEs has Lebesgue measure zero \citep{van1991stability-unique, bailey2018multiplicative-unique}.
{Hence, we will hereafter direct our attention to the infinitely repeated general-sum game setting.}

\subsection{Equilibria in Repeated Games}

{In light of our {\tt RP-Regret} definition, we first introduce the following notions of (subgame perfect) equilibria in infinitely repeated games. We start with the coarse correlated equilibrium.}

\begin{definition}
\label{def:approx-SPCCE}
\textnormal{\bf (Approximate Subgame Perfect Coarse Correlated Equilibrium (SPCCE) with Bounded Deviation)} A correlated strategy $\bpi_{1:\infty}$  is called an $\epsilon$-approximate SPCCE with $P_T$-bounded deviation in repeated games if it satisfies
    \begin{align*}
\limsup_{T\to+\infty}\max_{i\in\cN}\sup_{t_0\in \NN_{>0}}\sup_{\bh_0\in\cH_{t_0-1}} \frac{1}{T}\sum_{t=t_0}^{t_0+T-1} \Big(f_i^{t-t_0}(\bpi_{t_0:t}\given\bh_0) -f_i^{t-t_0}((\hat\bpi\ui_{t_0:t},\bpi\uni_{t_0:t})\given\bh_0)\Big)\leq \epsilon
    \end{align*}
    where
    $\hat\bpi_{1:T}\ui\in\cC_T\ui$ is an  arbitrary strategy of player  $i$ satisfying $\sum_{t=2}^T\nbr{\hat\bpi_t\ui-\hat\bpi_{t-1}\ui}_\infty\leq P_T$  for any arbitrary $T>0$, with  $P_T$ being a function of $T$. Recall that $f_i^m(\bpi_{1:m+1}\given \bh_0)$ is the expected loss of player  $i$ at timestep $m+1$ when observing $\bh_0$ initially {and executing $\bpi_{1:m+1}$ afterward}.
\end{definition}
Compared to standard subgame-perfect Nash equilibrium (SPNE), SPCCE admits correlated strategies. Moreover, SPNE (SPCCE) can be viewed as SPNE (SPCCE) with $\Theta(T)$-bounded deviation, since there is no restriction on the comparator.

{Similarly, we can define the corresponding \emph{Nash equilibrium} when the joint strategy $\bpi_{1:\infty}$ is a \emph{product}  strategy conditioned on each history.}

\begin{definition}
\label{def:approx-SPE}
\textnormal{\bf (Approximate Subgame Perfect Nash Equilibrium (SPNE) with Bounded Deviation)} A non-correlated product strategy $\bpi_{1:\infty}$ is called an $\epsilon$-approximate $P_T$-robust SPNE in repeated games when it is an $\epsilon$-approximate SPCCE with $P_T$-bounded deviation and $\pi_t(\ba\given\bh)=\prod_{i=1}^N \pi\ui_t(a_i\given\bh)$ for some $\{\bpi_{1:\infty}\ui\}_{i\in\cN}$ at any timestep $t=1,2,...$, and for any $\ba\in\cA,\bh\in\cH$.
\end{definition}

\begin{remark}[Bounded Deviation]
{Note that \emph{bounded deviation} means that the set of strategies to which each player can \emph{deviate} in the equilibrium definition, which corresponds to the \emph{comparator} in the regret definition, is restricted to satisfy certain variation bound, characterized by $P_T$. When $P_T=\Theta(T)$, the notation is exactly the classical SPNE (or SPCCE) \citep{roughgarden2010algorithmic-game-theory}.}
\end{remark}

\subsection{Relationship between {\tt RP-Regret} and Equilibria}
\label{sec:regret-equilibrium}

We now discuss the relationship between {\tt RP-Regret} and the aforementioned equilibrium notions.

\begin{theorem}[Equilibrium and {\tt RP-Regret}]
\label{lemma:RP-Regret-SPE}
    For a fixed $T_0{\in\NN_{>0}}$, when each player $i\in \cN$ obtains a sublinear {\tt RP-Regret} $R_{T_0}=O\rbr{T_0^p P_{T_0}^{1-p}}$ with strategies $\tilde\bpi\ui_{1:T_0}$ for some $p\in[0,1)$ against the comparator with $\sum_{t=2}^{T_0}\nbr{\hat\bpi_t\ui-\hat\bpi_{t-1}\ui}_\infty\leq P_{T_0}$ {for all $i\in\cN$},  and all {the} players and their comparators satisfy Condition \ref{assumption:2-forgetful-prime} until $T_0$. Then,  when $T\to+\infty$, for any timestep $t\in\NN_{>0}$, we will choose $\bpi_t\ui=\tilde\bpi_{(t-1)\% T_0+1}\ui$ for each $i\in\cN$. Then, $\bpi_{1:\infty}$ is an $O\rbr{\rbr{\frac{P_{T_0}}{T_0}}^{1-p}}$ approximate SPNE with $O(P_T)$-bounded deviation, where $P_T$ is the upper bound of {player $i$'s} comparator variation for any $i\in \cN$: $\sum_{t=2}^{T}\nbr{\hat\bpi_t\ui-\hat\bpi_{t-1}\ui}_\infty\leq P_{T}$.
\end{theorem}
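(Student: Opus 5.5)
The plan is to reduce the infinite-horizon, every-subgame guarantee to the finite-horizon $\mathtt{RP}$-Regret guarantee on one period of length $T_0$, using periodicity of $\bpi_{1:\infty}$ and the forgetting property of \Cref{assumption:2-forgetful-prime}. Fix a player $i$, a start time $t_0$, an initial history $\bh_0\in\cH_{t_0-1}$, a horizon $T$, and a comparator $\hat\bpi\ui_{t_0:t_0+T-1}$ with variation at most $P_T$. I will split the window $[t_0,t_0+T-1]$ into blocks aligned with the period: a prefix up to the next multiple of $T_0$, then $K=\lfloor T/T_0\rfloor$ full blocks, then a remainder. The prefix and remainder have total length at most $2T_0$, so after dividing by $T$ they vanish as $T\to\infty$. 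It therefore suffices to bound the regret on each full block and sum.

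The first key step is decoupling blocks. Since every strategy has $M$-bounded memory with a $\gamma$-exploration floor, the argument behind \Cref{lemma:finite-f-to-infinite} (the mixing/contraction of the induced Markov game) gives the following: the distribution of the state at a time $k$ rounds into a block, conditioned on whatever entered at the block's start, is within $2(1-(\gamma^N/|\cA|)^{M C})^{\lfloor k/(MC)\rfloor}$ in total variation of the distribution obtained from the empty history, which is what the per-block $\mathtt{RP}$-Regret measures. This holds both under the played profile and under the deviation $(\hat\bpi\ui,\bpi\uni)$, since the comparator also satisfies \Cref{assumption:2-forgetful-prime}. Summing this geometric error over the block gives an $O(1)$ additive error per block, independent of $T_0$ and of the entering history. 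Hence block $j$'s contribution is at most $R_{T_0}$ evaluated against the restriction of $\hat\bpi\ui$ to that block, plus $O(1)$.

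The second step handles the comparator's variation budget. The restriction of $\hat\bpi\ui$ to block $j$ has variation $P^{(j)}$ with $\sum_j P^{(j)}\le P_T$. The hypothesis $R_{T_0}=O(T_0^pP_{T_0}^{1-p})$ is stated against comparators of variation at most $P_{T_0}$. I would read it as holding for any variation budget $P\le T_0$ in the form $O(T_0^pP^{1-p})$, as regret bounds like \Cref{theorem:local-regret} do. Summing over blocks and applying concavity (Jensen/Hölder) to $x\mapsto x^{1-p}$ gives
\[
\sum_{j=1}^K T_0^p (P^{(j)})^{1-p}\le T_0^pK^{p}P_T^{1-p}.
\]
Dividing by $T\approx KT_0$ yields $(P_T/T)^{1-p}$. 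Taking $P_T$ proportional so that $P_T/T\asymp P_{T_0}/T_0$, which is the $O(P_T)$-bounded deviation scaling, gives average regret $O((P_{T_0}/T_0)^{1-p})$. The $O(1)$ per-block error contributes $O(1/T_0)$, which is dominated. The bound is uniform in $i$, $t_0$ and $\bh_0$, so the $\limsup$ of the supremum inherits it. The product form holds because each player plays independently. This yields the approximate SPNE.

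The main obstacle is the decoupling step: making precise that the state entering a block, which comes from an arbitrary $\bh_0$ and from possibly different behaviour under the deviation, affects the block's losses only through an exponentially decaying term. This has to hold simultaneously for the real and the deviated profiles, even though the deviated trajectory at the block boundary is not the one seen during learning. My first attempt would be to reuse the coupling/contraction argument from \Cref{lemma:finite-f-to-infinite} for time-inhomogeneous chains whose per-step kernels all have the $\gamma$-exploration minorization. If that fails, I would fall back on a uniform Doeblin-type bound over any $MC$ consecutive steps.
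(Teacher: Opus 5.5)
Your proposal is correct and follows the paper's argument: the paper also cuts the window into period-aligned blocks, and uses the mixing bound of \Cref{lemma:finite-to-infinite} (effectively your fallback uniform Doeblin minorization, which under the exploration floor holds over every $M$ consecutive steps) to replace each block's entering distribution, real or deviated, by a fixed one at an $O(1)$ additive cost per block. It then bounds each block by $R_{T_0}$ evaluated at that block's own comparator variation $P^B_{T_0}$, and sums using concavity of $x\mapsto x^{1-p}$ to get $O\bigl((P_T/T)^{1-p}\bigr)$ plus an $O(1/T_0)$ term. Your explicit reading of the hypothesis as valid for every block budget, and the final identification $P_T/T\asymp P_{T_0}/T_0$, are exactly the steps the paper leaves implicit.
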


The proof of \Cref{lemma:RP-Regret-SPE} is postponed to Appendix \ref{appendix:RP-Regret-to-SPE}. This theorem builds up the relationship between {approximate robust} SPNE (and thus SPCCE) and {\tt RP-Regret} {minimization}. When all the players can obtain a sublinear {\tt RP-Regret} against their comparators whose accumulated variations are bounded by $P_{T_0}$, then we can build an approximate SPNE with $P_T$-bounded deviation.

\begin{theorem}[Equilibrium and {\tt LRP-Regret}]
\label{lemma:LRP-Regret-SPE}
For a fixed $T_0{\in\NN_{>0}}$, when all the players can achieve a sublinear $R_{T_0}^{\rm local}\leq O(T_0^p)$ with strategies $\tilde\bpi\ui_{1:T_0}$ for some $p\in[0,1)$ against any comparator satisfying $\sum_{t=2}^{T_0}\nbr{\hat\bpi_t\ui-\hat\bpi_{t-1}\ui}_\infty\leq P_{T_0}=\Theta(T_0)$ {for all $i\in\cN$}, and all the players (including their comparators) satisfy Condition \ref{assumption:2-forgetful-prime} with $\bnu\ui$ as the uniform distribution over $\Delta_{|\cA_i|}$. Then, when $T\to+\infty$, for any timestep $t$, we will choose $\bpi_t\ui=\tilde\bpi_{(t-1)\% T_0+1}\ui$ for any $i\in \cN$. Therefore, we will obtain an $O(T_0^{p-1})$ approximate SPNE with $O(T)$-bounded deviation.
\end{theorem}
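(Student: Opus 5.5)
The plan follows the proof of \Cref{lemma:RP-Regret-SPE}, with one extra ingredient. Since the comparator class has $P_{T_0}=\Theta(T_0)$, a local (one-step) deviation guarantee must be turned into a guarantee against arbitrary deviations. This is the repeated-game analogue of the one-step deviation principle \citep{watson2002strategy-one-step-deviate}.

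\textbf{Step 1: reduce to one block.} Fix player $i$, a starting time $t_0$, a history $\bh_0$, and a deviation $\hat\bpi\ui$. Because $\bpi_{1:\infty}$ is $T_0$-periodic and every strategy satisfies Condition \ref{assumption:2-forgetful-prime}, the effect of the prefix $\bh_0$ and of the phase offset $t_0$ decays geometrically; this is the content of Lemma \ref{lemma:finite-f-to-infinite}, applied through the induced Markov game of Definition \ref{def:induced-MDP}. The mixing time is a constant $\tau$ depending only on $M,\gamma,|\cA|$. Hence, up to an additive $O(\tau/T_0)$ per block, the loss over any window of length $T_0$ is the loss of the played sequence $\tilde\bpi_{1:T_0}$ started from an essentially arbitrary initial state. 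Averaging over $T\to\infty$ then reduces the claim to a single block of length $T_0$, for every initial history.

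\textbf{Step 2: telescoping (performance-difference) decomposition.} For a full deviation $\hat\bpi\ui_{1:T_0}$ within a block, define hybrid sequences $\bsigma^s$ that play $\hat\bpi\ui$ at times $<s$ and $\tilde\bpi\ui$ at times $\ge s$. Then
\[
J_{T_0}(\tilde\bpi)-J_{T_0}(\hat\bpi\ui,\tilde\bpi\uni)=\sum_{s=1}^{T_0}\big(J_{T_0}(\bsigma^{s+1})-J_{T_0}(\bsigma^{s})\big).
\]
Each summand is a one-step deviation at time $s$, but taken from the history distribution induced by $\hat\bpi\ui$ rather than by $\tilde\bpi\ui$. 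The LRP-Regret controls one-step deviations only under the on-path distribution, but the comparator in $R^{\rm local}_{T_0}$ conditions on the history. Condition \ref{assumption:2-forgetful-prime} with uniform $\bnu\ui$ gives every history of length $M$ probability at least $(\gamma^N/|\cA|)^M$ under the on-path process. So I would bound the expected gain of a one-step deviation under any other history distribution by the on-path gain, per state, times a density ratio of at most $(|\cA|/\gamma^N)^M$, using the Markov-game reduction. Since the comparator set is unrestricted ($P=\Theta(T_0)$), I choose $\hat\bpi$ in $R^{\rm local}_{T_0}$ to be the stepwise best response at each state. Every per-state gain is then nonnegative and is bounded by the sum appearing in $R^{\rm local}_{T_0}$.

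\textbf{Step 3: combine.} Summing over $s$ gives a block deviation gain of at most $C\cdot R^{\rm local}_{T_0}+O(\tau)$, where $C$ is a constant depending on $M,\gamma,|\cA|$. Dividing by $T_0$ gives $O(T_0^{p-1})$. The continuation after time $T_0$ inside each block is handled by the decay from Step 1, and taking $\limsup_T$ over consecutive blocks yields the SPNE bound. The product structure holds by construction. The main obstacle is Step 2: a one-step deviation changes all future losses, and it is evaluated off-path, so the per-state domination argument must be done carefully. I expect to need the $(T-m-1)$ weighting in \eqref{eq:local-loss-def}, which accounts for the future effects, to match the tail of $J_{T_0}$, with the exponential memory decay used to truncate that tail at an additive constant per step.
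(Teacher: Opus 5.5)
This is essentially the paper's proof. Your Step 1 is Lemma~\ref{lemma:finite-to-infinite}, the block-mixing lemma for time-varying strategies; Lemma~\ref{lemma:finite-f-to-infinite} is the stationary version and does not apply directly. Your hybrid telescoping is the finite-horizon performance-difference identity, with off-path history weights and on-path $Q_s$. Your off-path-to-on-path step uses the same two ingredients as Lemma~\ref{lemma:local-regret-to-difference}: the unrestricted comparator and the $(\gamma^N/|\cA|)^M$ lower bound on on-path state probabilities, which give the factor $|\cA|^M/\gamma^{NM}$. The paper deviates to $\hat\bpi^{(1)}_s$ only at the advantage-maximizing state, while you use a constrained stepwise best response at every state; both work.

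Your closing worry is unfounded. That identity is exact, so you need neither the $(T-m-1)$ weighting nor any memory-decay truncation, and an additive per-step truncation error would in fact destroy the $O(T_0^{p-1})$ rate. Also, your telescoping sum as written has its sign flipped.
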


We defer the proof of the theorem to Appendix \ref{appendix:LPR-Regret-SPE}. This theorem establishes the connection between equilibrium in repeated games and {\tt LRP-Regret}. When all the players are running no-{\tt LRP-regret} learning and achieve sublinear {\tt LRP-Regret}, we can obtain an approximate SPNE. Compared to \Cref{lemma:RP-Regret-SPE}, \Cref{lemma:LRP-Regret-SPE} only holds when sublinear regret is obtained with respect to the comparator without variation budget ($P_T=\Theta(T)$).

We note that \Cref{lemma:LRP-Regret-SPE} only states the relationship between {approximate} $O(T)$-robust  SPNE and {\tt LRP-Regret} minimization, but does not provide an algorithm to achieve so. Indeed, directly running our regret minimization algorithm for {\tt LRP-Regret} (\S \ref{sec:local-RP-Regret-Minimizer}) {may not} achieve the equilibrium, since it requires the variation of the comparator $P_{T_0}$ to be sublinear in $T_0$. We defer the development of such no-{\tt LRP-Regret} algorithms as an immediate future work.

\begin{remark}
There are also counterparts of \Cref{lemma:LRP-Regret-SPE} and \Cref{lemma:RP-Regret-SPE} for finitely repeated games. A strategy sequence $\bpi_{1:T}$ (possibly correlated) that achieves sublinear  {\tt RP-Regret} ({\tt LRP-Regret}) is an approximate (coarse correlated) equilibrium (not subgame perfect anymore) of the $T_0$  repeated game, \emph{i.e.}, for any player $i\in\cN$, we have $\frac{1}{T_0}\sum_{t=1}^{T_0} \Big(f_i^{t-1}(\bpi_{1:t}\given\emptyset) -f_i^{t-1}((\hat\bpi\ui_{1:t},\bpi\uni_{1:t})\given\emptyset)\Big)\leq \frac{R\ui}{T_0}$, where $R\ui$ is the upper bound of player $i$'s regret. %
\end{remark}

\subsection{Computing the Equilibria}
\label{sec:CCE-computation}

We now propose an algorithm to find an approximate SPCCE with $O(T)$-bounded deviation  of infinitely repeated matrix games. The algorithm is proposed in \Cref{alg:self-play-finite-MDP}. {Guarantee of the algorithm is provided below.}

\begin{theorem}
\label{theorem:SPCCE-computation}
    For an infinitely repeated game, when all {the} players and their comparators satisfy Condition \ref{assumption:2-forgetful-prime} with $\bnu\ui$ {being} the uniform strategy over $\Delta_{|\cA_i|}$, within $T_0\in\NN_{>0}$ iterations, the output of \Cref{alg:self-play-finite-MDP} converges to an $O\rbr{\frac{1}{T_0^{2/7}}}$-approximate SPCCE with $O(T)$-bounded deviation of the infinitely repeated game.
\end{theorem}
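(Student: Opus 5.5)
We have not seen Algorithm \ref{alg:self-play-finite-MDP}, but from its placement it almost certainly runs, for $T_0$ iterations, a no-regret procedure in the induced Markov game of Definition~\ref{def:induced-MDP}, over a correlated joint occupancy measure. We therefore plan to reduce the statement to three ingredients: (a) a regret guarantee in occupancy-measure space; (b) Lemma~\ref{lemma:finite-f-to-infinite}, which converts between finite-horizon losses from an arbitrary initial history and the stationary average loss; (c) an averaging argument, analogous to the classical ``no-regret implies CCE'' argument, giving the SPCCE property of Definition~\ref{def:approx-SPCCE}.

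\emph{Step 1: reduce the SPCCE condition to a stationary statement.} We will show the output is a stationary $M$-memory correlated strategy $\bar\bpi$, presumably obtained from the averaged occupancy measure $\bar\bq=\frac1{T_0}\sum_t \bq_t$ via $\bar\pi(\ba\given\bh)=\bar q(\bh,\ba)/\sum_{\ba'}\bar q(\bh,\ba')$, with the $\gamma$-exploration floor of Condition~\ref{assumption:2-forgetful-prime}. Fix a player $i$, a start time $t_0$, a history $\bh_0$, and a deviation $\hat\bpi\ui$ satisfying Condition~\ref{assumption:2-forgetful-prime}. Since the deviation budget is $P_T=O(T)$, the deviation may vary arbitrarily in time. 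The plan is to block time into windows of length $K$. Within each window we compare against a stationary (per-window) deviation. The switching cost per window is bounded via Lemma~\ref{lemma:finite-f-to-infinite}: the mixing error after $K$ steps is $2(1-(\gamma^N/|\cA|)^{MC})^{\lfloor K/(MC)\rfloor}$, where $C$ is the constant of that lemma, and this also removes the dependence on $\bh_0$. This reduces the left-hand side of Definition~\ref{def:approx-SPCCE} to
\[
\max_i \sup_{\hat\bpi\ui}\big(f_i^\infty(\bar\bpi)-f_i^\infty(\hat\bpi\ui,\bar\bpi\uni)\big)
\]
plus an $O(\text{mixing}/K)$ term. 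The $\limsup_T$ absorbs boundary effects. For a time-varying deviation we need a per-window best-response bound; the inner sup over stationary deviations dominates each window.

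\emph{Step 2: bound the stationary gap by regret.} Under the correlated joint strategy, player $i$'s deviation loss $f_i^\infty(\hat\bpi\ui,\bar\bpi\uni)$ is linear in the deviation's occupancy measure. The sequence $\bq_t$ produced by the algorithm has sublinear regret against every feasible deviation occupancy measure, where the deviation's marginal constraints are enforced through the time-varying linear constraints $g_t$ of \Cref{eq:def-constraint-no-timestep} with opponents' conditional $\bpi\uni_t$. We will therefore use the machinery behind Theorem~\ref{theorem:occupancy-measure-RM-informal}, namely online learning with time-varying constraints, which yields a regret of order $T_0^{a}$ and a constraint violation of order $T_0^{b}$. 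We will then convert the averaged regret into the stationary gap. Because the map from $\bq$ to $f^\infty$ is linear but the deviation-induced occupancy depends on $\bpi\uni$ (a ratio of $\bq$), we will bound the difference by a Lipschitz argument. This argument uses the $\gamma$-floor to keep the denominators $\sum_{\ba'}q(\bh,\ba')$ bounded below by $(\gamma^N/|\cA|)^{O(MC)}$, since the stationary distribution of a mixing chain has full support, together with a perturbation bound for stationary distributions.

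\emph{Step 3: balance parameters.} The final rate $T_0^{-2/7}$ should come from balancing the step size, the constraint-violation penalty parameter, and the Lipschitz conversion. In time-varying-constraint OCO, the regret is typically $O(T^{1/2+\beta})$ and the violation is $O(T^{1-\beta/2})$, and the conversion of violation to loss error loses another square root. Optimizing the resulting exponents is expected to give $2/7$, and we will choose $\beta$ accordingly. The main obstacle is Step 2: controlling the correlated (non-product) averaged strategy and showing that the deviation comparator's occupancy measure is feasible at every round, despite the time-varying opponent marginals. Our first attempt will be to define the deviation occupancy at round $t$ as the occupancy of $(\hat\bpi\ui,\bpi\uni_t)$ and to bound its drift across rounds via the stationary-distribution perturbation bound, using Lemma~\ref{lemma:finite-f-to-infinite}'s mixing rate.
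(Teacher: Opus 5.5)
Your proposal analyzes the wrong algorithm. Algorithm~\ref{alg:self-play-finite-MDP} is not occupancy-measure OCO with time-varying constraints. It is a full-information V-learning scheme on a \emph{finite-horizon} Markov game of horizon $K$, whose states are level-indexed histories $\bh^k$ with $\bh\in\cH_M$. At each such state, every player does three things: it takes a weighted Euclidean mirror-descent step on the loss vector $\bg_t(\bh^k,\cdot)$, built from the opponents' current policies and the next-level estimates; it mixes toward uniform with weight $1-\lambda_t$; and it updates lower value estimates $\underline{V}_t$ with step size $\alpha_t=\frac{K+1}{K+t}$ and bonus $\beta_t$. The output is the certified mixture $\bar\bpi_t$, which plays $\bpi_s$ with probability $\alpha_t^s$; it is not a stationary policy read off an averaged occupancy measure.

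The paper's argument runs as follows. Backward induction with the per-state weighted regret bound of Lemma~\ref{lemma:OMD-upper-bound-finite-horizon} gives $\underline{V}_t(\bh^k)\le V^{\hat\bpi\uo,\bar\bpi_t\uno}(\bh^k)$ for every deviation (Lemma~\ref{lemma:underline-V-bound}). This yields a horizon-$K$ CCE gap of $8eK^2\sqrt{\max_i|\cA_i|K/T_0}$ (Lemma~\ref{lemma:convergence-CCE-finite-MDP}). A sampled certified policy is then tiled with period $K$ and transferred to the infinite game by Lemma~\ref{lemma:finite-to-infinite}, which also removes the dependence on $(t_0,\bh_0)$ at an $O(1/K)$ cost. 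Finally, $K$ is chosen as a power of $T_0$ to balance the two errors.

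Your Step~1 use of mixing is in the same spirit. However, the horizon-$K$ guarantee holds against deviations that may differ across levels, so it covers arbitrarily time-varying ($O(T)$-bounded) deviations. Your claim that a stationary deviation dominates each window holds only up to a bias-span term.

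The substantive gap, even for the algorithm you imagine, is Step~2. Sublinear regret controls $\frac{1}{T_0}\sum_t\big(f_i^\infty(\bpi_t)-f_i^\infty(\hat\bpi\ui,\bpi\uni_t)\big)$. But the stationary correlated policy obtained from $\bar\bq=\frac{1}{T_0}\sum_t\bq_t$ mixes the opponents' conditionals state by state, with weights $d^{\bpi_t}(\bh)$. The deviation value against your $\bar\bpi\uni$ is therefore not bounded by the average of the deviation values against the individual $\bpi\uni_t$; for instance, punishments that different iterates place in different states get diluted. The Lipschitz conversion you propose would need $\frac{1}{T_0}\sum_t\nbr{\bpi\uni_t-\bar\bpi\uni}_\infty$ to be small, i.e.\ convergence of the iterates, which no-regret dynamics do not give. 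Getting around exactly this obstacle is the role of the certified policy together with the certificate $\underline{V}_t$. Relatedly, even computing approximate stationary Markov CCE is known to be intractable in general stochastic games \citep{daskalakis2022complexity}.

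Two further problems remain. First, Theorem~\ref{theorem:occupancy-measure-RM-formal} cannot simply be reused in self-play. Its bound grows with the opponents' drift $\Delta_T$, which here is produced by the other learners' own updates and can be of order $\eta T$. The $P_T/\eta$ term in Lemma~\ref{lemma:PGD-constraint-guarantee} then becomes linear. Second, the exponent $2/7$ cannot be derived from the heuristic ``regret $T^{1/2+\beta}$, violation $T^{1-\beta/2}$''. In the paper it comes from the $K$-versus-$T_0$ trade-off described above.
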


The proof is deferred to \Cref{appendix:computation-SPCCE}. The proof relies on modeling the repeated game as a Markov game as in \S\ref{sec:reformulating-mg}, and then resorting to the existing result about solving CCE in Markov games  \citep{jin2021v,songcan,mao2022provably-CCE-MDP,daskalakis2022complexity}.

\section{Conclusion}

In this paper, we studied regret-minimization in repeated games with adaptive opponents who can respond based on the histories of play. To this end, we advocated a new metric, {\tt RP-Regret}, which is native to this setting, and identified a series of necessary
conditions for obtaining {\tt RP-Regret} sublinear in time. We then developed additional conditions and provable algorithms to minimize {\tt RP-Regret}, followed by the connection of {\tt RP-Regret} minimization to certain known sub-game perfect equilibria computation. Our work opens new directions for future research, including the development of weaker equilibrium notions induced by {\tt RP-Regret} minimization in \S\ref{sec:4-rm-adaptive}, for which  weaker assumptions may be required to compute. 
It also raises the possibility of obtaining provable equilibrium-selection results via {\tt RP-Regret} minimization in games with particular payoff structures.

\section*{Acknowledgement} 
M.L. was supported by the MathWorks Fellowship. A.O. was supported in part by the ONR grant
N000142512296. K.Z. acknowledges the support from the 
Army Research Office (ARO) grant W911NF-24-1-0085, the NSF CAREER Award 2443704, the AFOSR YIP Award FA9550-25-1-0258, a Cisco Faculty Research Award, and a JP Morgan Faculty Research Award.

\bibliography{main}

\begin{thebibliography}{67}
\providecommand{\natexlab}[1]{#1}
\providecommand{\url}[1]{\texttt{#1}}
\expandafter\ifx\csname urlstyle\endcsname\relax
  \providecommand{\doi}[1]{doi: #1}\else
  \providecommand{\doi}{doi: \begingroup \urlstyle{rm}\Url}\fi

\bibitem[Agarwal et~al.(2019)Agarwal, Bullins, Hazan, Kakade, and Singh]{DBLP:conf/icml/AgarwalBHKS19-control-OCO-memory}
Naman Agarwal, Brian Bullins, Elad Hazan, Sham~M. Kakade, and Karan Singh.
\newblock Online control with adversarial disturbances.
\newblock In \emph{International Conference on Machine Learning (ICML)}, 2019.

\bibitem[Anava et~al.(2015)Anava, Hazan, and Mannor]{anava2015online-OCO-framework}
Oren Anava, Elad Hazan, and Shie Mannor.
\newblock Online learning for adversaries with memory: price of past mistakes.
\newblock In \emph{Neural Information Processing Systems (NeurIPS)}, 2015.

\bibitem[Arora et~al.(2012)Arora, Dekel, and Tewari]{arora2012online}
Raman Arora, Ofer Dekel, and Ambuj Tewari.
\newblock Online bandit learning against an adaptive adversary: from regret to policy regret.
\newblock In \emph{International Conference on Machine Learning (ICML)}, pages 1747--1754, 2012.

\bibitem[Arora et~al.(2018)Arora, Dinitz, Marinov, and Mohri]{arora2018policy}
Raman Arora, Michael Dinitz, Teodor~Vanislavov Marinov, and Mehryar Mohri.
\newblock Policy regret in repeated games.
\newblock In \emph{Neural Information Processing Systems (NeurIPS)}, 2018.

\bibitem[Aumann and Shapley(1976)]{aumann1976long-folk-theorem1}
Robert Aumann and Lloyd Shapley.
\newblock Long term competition: A game theoretic analysis', mimeograph.
\newblock \emph{Hebrew University}, 1976.

\bibitem[Axelrod and Hamilton(1981)]{axelrod1981evolution-prisoner}
Robert Axelrod and William~D Hamilton.
\newblock The evolution of cooperation.
\newblock \emph{science}, 211\penalty0 (4489):\penalty0 1390--1396, 1981.

\bibitem[Bailey and Piliouras(2018)]{bailey2018multiplicative-unique}
James~P Bailey and Georgios Piliouras.
\newblock Multiplicative weights update in zero-sum games.
\newblock In \emph{ACM Conference on Economics and Computation (EC)}, 2018.

\bibitem[Benoit and Krishna(1985)]{benoit1984finitely}
Jean-Pierre Benoit and Vijay Krishna.
\newblock Finitely repeated games.
\newblock \emph{Econometrica}, 53\penalty0 (4):\penalty0 905--22, 1985.
\newblock URL \url{https://EconPapers.repec.org/RePEc:ecm:emetrp:v:53:y:1985:i:4:p:905-22}.

\bibitem[Blocki et~al.(2013)Blocki, Christin, Datta, and Sinha]{blocki2013adaptive}
Jeremiah Blocki, Nicolas Christin, Anupam Datta, and Arunesh Sinha.
\newblock Adaptive regret minimization in bounded-memory games.
\newblock In \emph{Decision and Game Theory for Security (GameSec)}, 2013.

\bibitem[Borgs et~al.(2008)Borgs, Chayes, Immorlica, Kalai, Mirrokni, and Papadimitriou]{borgs2008myth-hardness-NE}
Christian Borgs, Jennifer Chayes, Nicole Immorlica, Adam~Tauman Kalai, Vahab Mirrokni, and Christos Papadimitriou.
\newblock The myth of the folk theorem.
\newblock In \emph{ACM Symposium on Theory of Computing (STOC)}, 2008.

\bibitem[Brown and Sandholm(2018)]{brown2018superhuman-libratus}
Noam Brown and Tuomas Sandholm.
\newblock Superhuman ai for heads-up no-limit poker: Libratus beats top professionals.
\newblock \emph{Science}, 359\penalty0 (6374):\penalty0 418--424, 2018.

\bibitem[Brown and Sandholm(2019)]{brown2019superhuman-pluribus}
Noam Brown and Tuomas Sandholm.
\newblock Superhuman ai for multiplayer poker.
\newblock \emph{Science}, 365\penalty0 (6456):\penalty0 885--890, 2019.

\bibitem[Cao(1999)]{cao1999single-performance-difference-lemma-average-reward}
Xi-Ren Cao.
\newblock Single sample path-based optimization of markov chains.
\newblock \emph{Journal of optimization theory and applications}, 100:\penalty0 527--548, 1999.

\bibitem[Cao and Liu(2018)]{cao2018online-constrained-time-varying}
Xuanyu Cao and KJ~Ray Liu.
\newblock Online convex optimization with time-varying constraints and bandit feedback.
\newblock \emph{IEEE Transactions on Automatic Control}, 64\penalty0 (7):\penalty0 2665--2680, 2018.

\bibitem[Cen et~al.(2021)Cen, Wei, and Chi]{cen2021fast}
Shicong Cen, Yuting Wei, and Yuejie Chi.
\newblock Fast policy extragradient methods for competitive games with entropy regularization.
\newblock In \emph{Neural Information Processing Systems (NeurIPS)}, 2021.

\bibitem[Cesa-Bianchi and Lugosi(2006)]{cesa2006prediction}
Nicolo Cesa-Bianchi and G{\'a}bor Lugosi.
\newblock \emph{Prediction, Learning, and Games}.
\newblock Cambridge University Press, 2006.

\bibitem[Chakraborty and Stone(2014)]{chakraborty2014multiagent-bounded-memory}
Doran Chakraborty and Peter Stone.
\newblock Multiagent learning in the presence of memory-bounded agents.
\newblock \emph{Autonomous agents and multi-agent systems}, 28\penalty0 (2):\penalty0 182--213, 2014.

\bibitem[Chen et~al.(2017)Chen, Ling, and Giannakis]{chen2017online}
Tianyi Chen, Qing Ling, and Georgios~B Giannakis.
\newblock An online convex optimization approach to proactive network resource allocation.
\newblock \emph{IEEE Transactions on Signal Processing}, 65\penalty0 (24):\penalty0 6350--6364, 2017.

\bibitem[Chen et~al.(2006)Chen, Deng, and Teng]{chen2006computing-complexity-NE}
Xi~Chen, Xiaotie Deng, and Shang-Hua Teng.
\newblock Computing nash equilibria: Approximation and smoothed complexity.
\newblock In \emph{Symposium on Foundations of Computer Science (FOCS)}, 2006.

\bibitem[Chen et~al.(2007)Chen, Teng, and Valiant]{chen2007approximation-ppad}
Xi~Chen, Shang-Hua Teng, and Paul Valiant.
\newblock The approximation complexity of win-lose games.
\newblock 2007.

\bibitem[Daniely et~al.(2015)Daniely, Gonen, and Shalev-Shwartz]{daniely2015strongly-adaptive-regret}
Amit Daniely, Alon Gonen, and Shai Shalev-Shwartz.
\newblock Strongly adaptive online learning.
\newblock In \emph{International Conference on Machine Learning (ICML)}, 2015.

\bibitem[Daskalakis et~al.(2009)Daskalakis, Goldberg, and Papadimitriou]{daskalakis2009complexity-NE}
Constantinos Daskalakis, Paul~W Goldberg, and Christos~H Papadimitriou.
\newblock The complexity of computing a nash equilibrium.
\newblock \emph{Communications of the ACM}, 52\penalty0 (2):\penalty0 89--97, 2009.

\bibitem[Daskalakis et~al.(2023)Daskalakis, Golowich, and Zhang]{daskalakis2022complexity}
Constantinos Daskalakis, Noah Golowich, and Kaiqing Zhang.
\newblock The complexity of {M}arkov equilibrium in stochastic games.
\newblock In \emph{Conference on Learning Theory (COLT)}, 2023.

\bibitem[de~Farias and Megiddo(2003)]{de2003combine-flexible}
Daniela de~Farias and Nimrod Megiddo.
\newblock How to combine expert (and novice) advice when actions impact the environment?
\newblock In \emph{Neural Information Processing Systems (NeurIPS)}, volume~16, 2003.

\bibitem[Filar and Vrieze(2012)]{filar2012competitive}
Jerzy Filar and Koos Vrieze.
\newblock \emph{Competitive {M}arkov decision processes}.
\newblock Springer Science \& Business Media, 2012.

\bibitem[Foerster et~al.(2017)Foerster, Chen, Al-Shedivat, Whiteson, Abbeel, and Mordatch]{foerster2017learning-lola}
Jakob~N Foerster, Richard~Y Chen, Maruan Al-Shedivat, Shimon Whiteson, Pieter Abbeel, and Igor Mordatch.
\newblock Learning with opponent-learning awareness.
\newblock \emph{arXiv preprint arXiv:1709.04326}, 2017.

\bibitem[Fudenberg and Maskin(2009)]{fudenberg2009folk-theorem2}
Drew Fudenberg and Eric Maskin.
\newblock The folk theorem in repeated games with discounting or with incomplete information.
\newblock In \emph{A long-run collaboration on long-run games}, pages 209--230. World Scientific, 2009.

\bibitem[Gillette(1957)]{gillette1957stochastic}
Dean Gillette.
\newblock Stochastic games with zero stop probabilities.
\newblock \emph{Contributions to the Theory of Games}, 3\penalty0 (39):\penalty0 179--187, 1957.

\bibitem[Hannan(1957)]{hannan1957approximation}
James Hannan.
\newblock Approximation to bayes risk in repeated play.
\newblock \emph{Contributions to the Theory of Games}, 3:\penalty0 97--139, 1957.

\bibitem[Hart and Mas-Colell(2000)]{hart2000simple}
Sergiu Hart and Andreu Mas-Colell.
\newblock A simple adaptive procedure leading to correlated equilibrium.
\newblock \emph{Econometrica}, 68\penalty0 (5):\penalty0 1127--1150, 2000.

\bibitem[Hazan et~al.(2007)Hazan, Agarwal, and Kale]{hazan2007logarithmic-adaptive-regret}
Elad Hazan, Amit Agarwal, and Satyen Kale.
\newblock Logarithmic regret algorithms for online convex optimization.
\newblock \emph{Machine Learning}, 69\penalty0 (2):\penalty0 169--192, 2007.

\bibitem[Hazan et~al.(2016)]{hazan2016introduction}
Elad Hazan et~al.
\newblock Introduction to online convex optimization.
\newblock \emph{Foundations and Trends{\textregistered} in Optimization}, 2\penalty0 (3-4):\penalty0 157--325, 2016.

\bibitem[Jin et~al.(2018)Jin, Allen-Zhu, Bubeck, and Jordan]{jin2018q-alpha-lemma}
Chi Jin, Zeyuan Allen-Zhu, Sebastien Bubeck, and Michael~I Jordan.
\newblock Is q-learning provably efficient?
\newblock In \emph{Neural Information Processing Systems (NeurIPS)}, 2018.

\bibitem[Jin et~al.(2021)Jin, Liu, Wang, and Yu]{jin2021v}
Chi Jin, Qinghua Liu, Yuanhao Wang, and Tiancheng Yu.
\newblock V-learning -- {A} simple, efficient, decentralized algorithm for multiagent {RL}.
\newblock \emph{arXiv preprint arXiv:2110.14555}, 2021.

\bibitem[Kalai and Stanford(1988)]{kalai1988finite}
Ehud Kalai and William Stanford.
\newblock Finite rationality and interpersonal complexity in repeated games.
\newblock \emph{Econometrica: Journal of the Econometric Society}, pages 397--410, 1988.

\bibitem[Kim et~al.(2021)Kim, Liu, Riemer, Sun, Abdulhai, Habibi, Lopez-Cot, Tesauro, and How]{kim2021policy-shaping}
Dong~Ki Kim, Miao Liu, Matthew~D Riemer, Chuangchuang Sun, Marwa Abdulhai, Golnaz Habibi, Sebastian Lopez-Cot, Gerald Tesauro, and Jonathan How.
\newblock A policy gradient algorithm for learning to learn in multiagent reinforcement learning.
\newblock In \emph{International Conference on Machine Learning (ICML)}, 2021.

\bibitem[Letcher et~al.(2019)Letcher, Foerster, Balduzzi, Rockt{\"{a}}schel, and Whiteson]{DBLP:conf/iclr/LetcherFBRW19-SOS}
Alistair Letcher, Jakob~N. Foerster, David Balduzzi, Tim Rockt{\"{a}}schel, and Shimon Whiteson.
\newblock Stable opponent shaping in differentiable games.
\newblock In \emph{International Conference on Learning Representations (ICLR)}, 2019.

\bibitem[Littman and Stone(2003)]{littman2003polynomial-two-player-folk}
Michael~L Littman and Peter Stone.
\newblock A polynomial-time nash equilibrium algorithm for repeated games.
\newblock In \emph{ACM Conference on Electronic Commerce}, 2003.

\bibitem[Liu et~al.(2022)Liu, Ozdaglar, Yu, and Zhang]{liu2022power-reg}
Mingyang Liu, Asuman~E Ozdaglar, Tiancheng Yu, and Kaiqing Zhang.
\newblock The power of regularization in solving extensive-form games.
\newblock In \emph{International Conference on Learning Representations (ICLR)}, 2022.

\bibitem[Loftin and Oliehoek(2022)]{loftin2022impossibility-RP-Regret}
Robert Loftin and Frans~A Oliehoek.
\newblock On the impossibility of learning to cooperate with adaptive partner strategies in repeated games.
\newblock In \emph{International Conference on Machine Learning (ICML)}, pages 14197--14209. PMLR, 2022.

\bibitem[Lu et~al.(2022)Lu, Willi, De~Witt, and Foerster]{lu2022model-meta-learning}
Christopher Lu, Timon Willi, Christian A~Schroeder De~Witt, and Jakob Foerster.
\newblock Model-free opponent shaping.
\newblock In \emph{International Conference on Machine Learning (ICML)}, pages 14398--14411. PMLR, 2022.

\bibitem[Mao and Ba{\c{s}}ar(2022)]{mao2022provably-CCE-MDP}
Weichao Mao and Tamer Ba{\c{s}}ar.
\newblock Provably efficient reinforcement learning in decentralized general-sum markov games.
\newblock \emph{Dynamic Games and Applications}, pages 1--22, 2022.

\bibitem[Mao et~al.(2021)Mao, Zhang, Zhu, Simchi-Levi, and Basar]{pmlr-v139-mao21b-non-stationary-restart}
Weichao Mao, Kaiqing Zhang, Ruihao Zhu, David Simchi-Levi, and Tamer Basar.
\newblock Near-optimal model-free reinforcement learning in non-stationary episodic {MDPs}.
\newblock In \emph{International Conference on Machine Learning (ICML)}, 2021.

\bibitem[Merhav et~al.(2002)Merhav, Ordentlich, Seroussi, and Weinberger]{merhav2002sequential-memory-regret-init}
Neri Merhav, Erik Ordentlich, Gadiel Seroussi, and Marcelo~J Weinberger.
\newblock On sequential strategies for loss functions with memory.
\newblock \emph{IEEE Transactions on Information Theory}, 48\penalty0 (7):\penalty0 1947--1958, 2002.

\bibitem[Neyman(1985)]{neyman1985bounded}
Abraham Neyman.
\newblock Bounded complexity justifies cooperation in the finitely repeated prisoners' dilemma.
\newblock \emph{Economics Letters}, 19\penalty0 (3):\penalty0 227--229, 1985.

\bibitem[Norris(1998)]{norris1998markov}
James~R Norris.
\newblock \emph{Markov chains}.
\newblock Number~2. Cambridge university press, 1998.

\bibitem[Paternain and Ribeiro(2016)]{paternain2016online}
Santiago Paternain and Alejandro Ribeiro.
\newblock Online learning of feasible strategies in unknown environments.
\newblock \emph{IEEE Transactions on Automatic Control}, 62\penalty0 (6):\penalty0 2807--2822, 2016.

\bibitem[Piccione and Rubinstein(1997)]{piccione1997interpretation-imperfect-recall}
Michele Piccione and Ariel Rubinstein.
\newblock On the interpretation of decision problems with imperfect recall.
\newblock \emph{Games and Economic Behavior}, 20\penalty0 (1):\penalty0 3--24, 1997.

\bibitem[Puterman(2014)]{puterman2014markov}
Martin~L Puterman.
\newblock \emph{Markov decision processes: {D}iscrete stochastic dynamic programming}.
\newblock John Wiley \& Sons, 2014.

\bibitem[Radner(1986)]{radner1986can}
Roy Radner.
\newblock Can bounded rationality resolve the {Prisoner's} dilemma.
\newblock \emph{Essays in honor of Gerard Debreu}, pages 387--399, 1986.

\bibitem[Roughgarden(2010)]{roughgarden2010algorithmic-game-theory}
Tim Roughgarden.
\newblock Algorithmic game theory.
\newblock \emph{Communications of the ACM}, 53\penalty0 (7):\penalty0 78--86, 2010.

\bibitem[Schlag and Zapechelnyuk(2012)]{schlag2012impossibility}
Karl Schlag and Andriy Zapechelnyuk.
\newblock On the impossibility of achieving no regrets in repeated games.
\newblock \emph{Journal of Economic Behavior \& Organization}, 81\penalty0 (1):\penalty0 153--158, 2012.

\bibitem[Shapley(1953)]{shapley1953stochastic}
Lloyd~S Shapley.
\newblock Stochastic games.
\newblock \emph{Proceedings of the National Academy of Sciences}, 39\penalty0 (10):\penalty0 1095--1100, 1953.

\bibitem[Sokota et~al.(2023)Sokota, D'Orazio, Kolter, Loizou, Lanctot, Mitliagkas, Brown, and Kroer]{DBLP:conf/iclr/SokotaDKLLMBK23-MMD}
Samuel Sokota, Ryan D'Orazio, J.~Zico Kolter, Nicolas Loizou, Marc Lanctot, Ioannis Mitliagkas, Noam Brown, and Christian Kroer.
\newblock A unified approach to reinforcement learning, quantal response equilibria, and two-player zero-sum games.
\newblock In \emph{International Conference on Learning Representations (ICLR)}, 2023.

\bibitem[Song et~al.(2022)Song, Mei, and Bai]{songcan}
Ziang Song, Song Mei, and Yu~Bai.
\newblock When can we learn general-sum markov games with a large number of players sample-efficiently?
\newblock In \emph{International Conference on Learning Representations (ICLR)}, 2022.

\bibitem[Suggala and Netrapalli(2020)]{suggala2020online-ONCO-oracle}
Arun~Sai Suggala and Praneeth Netrapalli.
\newblock Online non-convex learning: Following the perturbed leader is optimal.
\newblock In \emph{International Conference on Algorithmic Learning Theory (ALT)}, 2020.

\bibitem[Sylvester(1882)]{sylvester1882subvariants-frobenius-number}
James~J Sylvester.
\newblock On subvariants, ie semi-invariants to binary quantics of an unlimited order.
\newblock \emph{American Journal of Mathematics}, 5\penalty0 (1):\penalty0 79--136, 1882.

\bibitem[Van~Damme(1991)]{van1991stability-unique}
Eric Van~Damme.
\newblock \emph{Stability and perfection of Nash equilibria}, volume 339.
\newblock Springer, 1991.

\bibitem[Watson(2002)]{watson2002strategy-one-step-deviate}
Joel Watson.
\newblock \emph{Strategy: an introduction to game theory}, volume 139.
\newblock WW Norton New York, 2002.

\bibitem[Waugh et~al.(2009)Waugh, Zinkevich, Johanson, Kan, Schnizlein, and Bowling]{waugh2009practical-imperfect-recall}
Kevin Waugh, Martin Zinkevich, Michael Johanson, Morgan Kan, David Schnizlein, and Michael~H Bowling.
\newblock A practical use of imperfect recall.
\newblock In \emph{SARA}, 2009.

\bibitem[Willi et~al.(2022)Willi, Letcher, Treutlein, and Foerster]{willi2022cola}
Timon Willi, Alistair~Hp Letcher, Johannes Treutlein, and Jakob Foerster.
\newblock Cola: consistent learning with opponent-learning awareness.
\newblock In \emph{International Conference on Machine Learning (ICML)}, pages 23804--23831. PMLR, 2022.

\bibitem[Zhang et~al.(2018{\natexlab{a}})Zhang, Lu, and Zhou]{zhang2018adaptive}
Lijun Zhang, Shiyin Lu, and Zhi-Hua Zhou.
\newblock Adaptive online learning in dynamic environments.
\newblock In \emph{Neural Information Processing Systems (NeurIPS)}, 2018{\natexlab{a}}.

\bibitem[Zhang et~al.(2018{\natexlab{b}})Zhang, Yang, Zhou, et~al.]{zhang2018dynamic}
Lijun Zhang, Tianbao Yang, Zhi-Hua Zhou, et~al.
\newblock Dynamic regret of strongly adaptive methods.
\newblock In \emph{International Conference on Machine Learning (ICML)}, pages 5882--5891. PMLR, 2018{\natexlab{b}}.

\bibitem[Zhang et~al.(2018{\natexlab{c}})Zhang, Yang, Zhou, et~al.]{zhang2018dynamic-adaptive-regret}
Lijun Zhang, Tianbao Yang, Zhi-Hua Zhou, et~al.
\newblock Dynamic regret of strongly adaptive methods.
\newblock In \emph{International Conference on Machine Learning (ICML)}, 2018{\natexlab{c}}.

\bibitem[Zhao et~al.(2022)Zhao, Wang, and Zhou]{zhao2022non-dynamic-policy-regret}
Peng Zhao, Yu-Xiang Wang, and Zhi-Hua Zhou.
\newblock Non-stationary online learning with memory and non-stochastic control.
\newblock In \emph{International Conference on Artificial Intelligence and Statistics (AISTATS)}, 2022.

\bibitem[Zinkevich(2003)]{zinkevich2003online-OGD-original-paper}
Martin Zinkevich.
\newblock Online convex programming and generalized infinitesimal gradient ascent.
\newblock In \emph{International Conference on Machine Learning (ICML)}, 2003.

\bibitem[Zinkevich(2005)]{zinkevich2005response-regret}
Martin Zinkevich.
\newblock Response regret.
\newblock In \emph{AAAI Fall Symposium: Coevolutionary and Coadaptive Systems}, page~41, 2005.

\end{thebibliography}
{\bibliographystyle{plainnat}}

\onecolumn

~\\
\centerline{{\fontsize{13.5}{13.5}\selectfont \textbf{Supplementary Materials for}}}

\vspace{6pt}
\centerline{\fontsize{13.5}{13.5}\selectfont \textbf{
	  ``Regret Minimization with Adaptive Opponents in Repeated Games''}}

\tableofcontents
\clearpage

\appendix

\section{Motivating Example Details}\label{appendix:motivating_example}

\begin{figure}[h]

\begin{minipage}[t][][b]{0.45\textwidth}

 \centering
\begin{tabular}{|c|c|c|}
\hline
 Prisoner's Dilemma & Cooperate  ($C$)      & Defect  ($D$)      \\ \hline
Cooperate ($C$) & 0.6, 0.6 & 0.0, 1.0 \\ \hline
Defect ($D$) & 1.0, 0.0 & 0.2, 0.2 \\ \hline
\end{tabular}
\\[2em]

    \begin{tabular}{|c|c|c|}
\hline
 Stag Hunt & Stag        & Hare        \\ \hline
Stag & 1.0, 1.0 & 0.1, 0.8 \\ \hline
Hare & 0.8, 0.1 & 0.5, 0.5 \\ \hline
\end{tabular}
\end{minipage}
\hfill
\begin{minipage}[t][][b]{0.45\textwidth}

\centering
\includegraphics[width=0.95\textwidth]{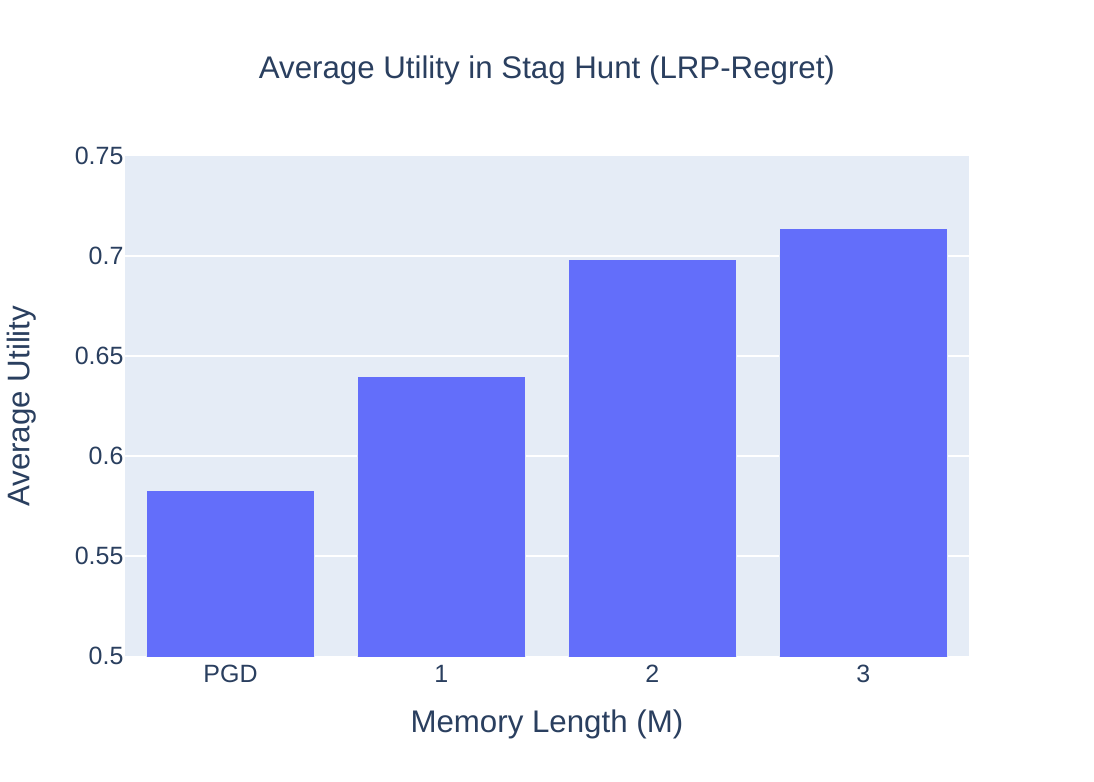}
\end{minipage}

\captionof{figure}{Examples of the utility matrices for Prisoner's  Dilemma (upper left) and Stag Hunt (lower left). In each cell, the number on the left (right) is the utility for the row (column) player.
The figure on the right plots the average utility across $10,000$ runs of the experiments when both players in Stag-Hunt (lower left) are minimizing {\tt LRP-Regret} for $100,000$ iterations. In each run of the experiment, the initial strategy is randomly sampled uniformly. We choose the learning rate $\eta=0.01$ in \Cref{eq:PGD-def} with different memory lengths $M=1,2,3$. %
As shown in the figure, {\tt LRP-Regret} minimization can encourage players to converge to a \emph{better equilibrium} in terms of utility.}
\label{fig:stag-hunt-utility}

\end{figure}

\begin{example}[Finding a Better Equilibrium]
    \label{example:experiment}
     The Stag-Hunt game  (cf. utility matrix in \Cref{fig:stag-hunt-utility}) has two NEs, Stag-Stag and Hare-Hare. In Figure \ref{fig:stag-hunt-utility}, we show that by minimizing {\tt LRP-Regret} (a convexified version of our new regret metric, cf. \S\ref{sec:local-RP-Regret-Minimizer}), the algorithm converges to a better equilibrium more often (Stag-Stag,  with utility $1.0$  for both players). This is because the equilibria induced by minimizing our new notions of regret can potentially constitute a \emph{larger set} than the set of equilibria in the one-shot game. %
\end{example}

\section{Detailed Related Work}
\label{sec:related-work}

\paragraph{{Equilibrium Computation in} Repeated Games.} %
Learning and equilibrium computation  in  repeated normal-form  games is a well-studied  area. One well-known folklore result is that when all the players run no-(external-)regret learning algorithms, the average iterates will converge to the coarse correlated equilibrium of the one-shot matrix game \citep{cesa2006prediction}.
In playing repeated games,
since the players could change their strategies based on past observations, {the equilibrium for the repeated game can be different from that of the one-shot game. In this regard, using the well-known   folk theorem\footnote{Folk theorem gives a way to construct ``trigger" strategies. Specifically, if one of the players deviates, say player 1, then other players will also change their strategy to the \emph{minimax} one,  to guarantee player 1 will suffer at least $\min_{\bpi\uno}\max_{\bpi\uo} \cL_1(\bpi\uo,\bpi\uno)$ loss in every later timestep. Therefore, no player can benefit from deviation if all of them can obtain a lower loss than their minimax loss.} \citep{aumann1976long-folk-theorem1, fudenberg2009folk-theorem2}, one can}
find the NE in infinitely repeated matrix games.
\cite{littman2003polynomial-two-player-folk} gave a polynomial-time algorithm based on the folk theorem to compute the NE of an infinitely repeated two-player general-sum game, which is known to be {\sf PPAD}-hard for the one-shot case \citep{daskalakis2009complexity-NE,chen2006computing-complexity-NE}. However, \cite{borgs2008myth-hardness-NE} showed that finding the minimax value   is {\sf NP}-hard in general-sum games with 3 or more players, which prevented the technique in \cite{littman2003polynomial-two-player-folk} from being extended to the multi-player cases.

\paragraph{Impossibility Results.} \cite{borgs2008myth-hardness-NE} proved that computing an NE for an $N$-player infinitely repeated game with discount factors is as hard as computing an NE of an $(N-1)$-player one-shot game. By \cite{chen2007approximation-ppad,daskalakis2009complexity-NE,chen2006computing-complexity-NE}, computing an NE of a 2-player general-sum one-shot game is {\sf PPAD}-hard and thus computing an NE of a 3-player infinitely repeated game with discount factor is also {\sf PPAD}-hard. The result does not conflict with this paper since in \S \ref{sec:CCE-computation}, we are computing a coarse correlated equilibrium (CCE)  instead of an NE of the repeated game. Also, our result implies that directly minimizing {\tt RP-Regret} without further assumptions may not be computationally tractable. Otherwise, if each player achieves a sublinear {\tt RP-Regret}, by \Cref{lemma:RP-Regret-SPE}, the trajectory converges to a subgame perfect NE of an infinitely repeated game without discount factor. %
\cite{schlag2012impossibility, loftin2022impossibility-RP-Regret} gave  impossibility results on minimizing regret  in repeated games when the regret minimizer cannot observe the full strategy of the opponent. Therefore, to the best of our knowledge, it is still unclear under which conditions we can minimize regret, when the other players deviate themselves as a reaction to the deviation of one player{, the setting our work focuses on}.

\paragraph{Rationalizing  Cooperation.} Our study is also motivated by those of rationalizing  \emph{cooperation} in Iterated Prisoner's Dilemma with bounded rationality and strategy complexity  \citep{axelrod1981evolution-prisoner,neyman1985bounded,radner1986can,kalai1988finite}. These papers investigated the Nash equilibrium of the (finitely) repeated game when the strategies are modeled as  an \emph{automaton} with finite states, and showed the existence of equilibrium with payoff close to the cooperative outcome.  In this paper, we show, in contrast, that when players have \emph{unlimited} memory power (automata with \emph{infinite} states), our regret notion of {\tt RP-Regret} cannot be minimized.%

\paragraph{Opponent Shaping and  Modeling.} {Our work also takes significant inspiration from the recent empirical literature of \emph{opponent shaping} in multi-agent (reinforcement) learning \citep{foerster2017learning-lola, kim2021policy-shaping, willi2022cola, lu2022model-meta-learning}, which developed algorithms to shape the opponents' future strategies, knowing that they will \emph{adapt} to the actions of the learning agent/player.}
However,
most of the algorithms do not enjoy theoretical guarantees, \emph{e.g.,}  on the convergence, and/or on the quality of the strategies they converge to.
\cite{DBLP:conf/iclr/LetcherFBRW19-SOS}
proved that their algorithm called Stable Opponent Shaping will converge. However, the guarantee only applies when assuming the opponents are naive learners\footnote{At each  timestep, the learner takes one step of gradient descent according to the loss at the last timestep.}, and there is no guarantee on the performance against general unknown opponents. %
By contrast, our method{, through the lens of \emph{regret-minimization}, is more general and robust against}
any possible players/opponents satisfying some provably necessary conditions (see \S \ref{sec:nece_cond}).

\paragraph{Online Learning.}
Our work is closely related to the problem of \emph{online convex optimization with memory}, which was first studied in \cite{merhav2002sequential-memory-regret-init} and later generalized to developing the concept of \emph{Policy Regret} and studied in \cite{arora2012online,anava2015online-OCO-framework}.  Recently, \cite{zhao2022non-dynamic-policy-regret} showed how to achieve sublinear policy regret when the comparator is allowed to choose a time-varying strategy with a sublinear variation budget. One of the applications of OCO with memory is \cite{DBLP:conf/icml/AgarwalBHKS19-control-OCO-memory}, which discussed how to fit the online control problem into the framework of OCO with memory. However, existing analyses along these lines typically either assume that the loss function is convex or impose an $m$-bounded-memory condition on the loss, namely that the loss at timestep $t$ depends only on the recent action window $\sbr{\ba_{t-m},\dots,\ba_t}$. In contrast, in repeated games the expected per-round loss induced by strategic interaction need not be convex in the player's decision variables, and it can depend on the entire history.
\cite{blocki2013adaptive} considered regret minimization in bounded-memory games. However, the $k$-adaptive regret considered in \cite{blocki2013adaptive} will restart the game every $k+1$ rounds (\emph{i.e.,} the adversary will forget the history every $k+1$ rounds), which is a different setting from classical repeated games, where the adversary plays repeated games and
will remember the history from the beginning to the end.

Another line of related work in online learning is \emph{dynamic}  regret minimization \citep{zinkevich2003online-OGD-original-paper}. In this setup, the accumulated loss is compared with a comparator that can change over time, but usually with a sublinear variation budget. Therefore, dynamic regret is more suitable when the environment is \emph{non-stationary}. To encourage the player's strategy to adapt to the changes in the opponents' strategies, we also consider dynamic comparators, but in game-theoretic settings, and with different regret notions.

\section{Proof for Example \ref{example:IPD}}
\label{appendix:proof-example-IPD}

In Iterated Prisoner's Dilemma, since the players are symmetric, we will focus on analyzing player 1 without loss of generality.
{Note that tit-for-tat (where both players start with $C$) promotes cooperation, since the players will stick to $C$ and the time-average utility will be $0.6$, higher than that of the one-shot Nash equilibrium $(D,D)$.}

However, tit-for-tat may not be a good strategy in terms of \emph{external regret}: mutually playing it will cause a \emph{linear} external regret, because the regret of player 1 when compared to always playing $D$ is $(1.0-0.6)T$. In fact, for any player that achieves sublinear external regret in IPD for any $T$, the time-average strategy must converge to $D$ as $T\to\infty$, since otherwise the player will suffer a linear regret compared to the fixed action $D$, the strictly dominant strategy.

On the other hand, for the {\tt RP-Regret} defined in \Cref{eq:RP-Regret-def}, consider the first timestep $t$ that player 1 is going to deviate from deterministically choosing $C$ (since the beginning of playing tit-for-tat mutually)
to some comparator strategy. In this case, since the expected utility is a multi-linear function of the comparator strategy, and the comparator is in the Cartesian product of each timestep's strategy space, there exists an optimal deviation strategy {of {\tt RP-Regret}} in the comparator space that is \emph{deterministic} -- choosing either $C$ or $D$ at every timestep.
Furthermore, at this
timestep $t$ when the comparator is different from deterministically choosing $C$ for the first time,
we can restrict the comparator strategy to be an \emph{oblivious} (in addition to deterministic) one, since the history $\bh_{1:t-1}$ has always been   $(C,C)$ before, deterministically. {With a tit-for-tat opponent, their strategy is deterministic at timestep $t+1$ as well. By induction, with such a deterministic history, it does not lose optimality to restrict the comparator at $t+1$ to be oblivious as well.}
Next, we will show that at timestep $t$, the best deviation for the comparator is actually $C$. Then, by induction, the comparator should choose $C$ deterministically at all timesteps except possibly at the last timestep $t=T$.

Suppose the comparator at timestep $t$ is going to play action $C$ with some  probability $w$ and $D$ with probability $1-w$, then they will receive an expected utility of $w\cdot u_1({C,C})+(1-w)\cdot u_1({D,C})=0.6w+(1-w)=1-0.4w$, where $u_1(a_1,a_2)$ denotes the utility of player 1 when the actions of both players are $a_1,a_2$, respectively. When $t<T$, for timestep $t+1$, suppose  the comparator will play action $C$ with probability $w'$ and $D$ with probability $1-w'$. Now, player 2, still following tit-for-tat,  changes from playing $C$ deterministically at timestep $t$, to playing $C$ with probability $w$,  due to the deviation of player 1 at timestep $t$. Therefore, the expected utility of player 1 at timestep $t+1$ now is $ww'\cdot u_1({C,C})+w(1-w')\cdot u_1({D,C})+(1-w) w'\cdot u_1({C,D})+(1-w)(1-w')\cdot u_1({D,D})=0.8w-0.2ww'-0.2w'+0.2$.

The summation of the expected utility of player 1 at timestep $t$ and $t+1$ is thus  $0.4w-0.2ww'-0.2w'+1.2=1.2-0.2w'+w(0.4-0.2w')$. Note that we analyze the deviation at timestep $t$ by only looking at the expected utility at these two timesteps, since $w$ will not affect the utility after $t+1$ (if exists), given player $2$ following tit-for-tat. Noting that $w'\leq 1$, we know $0.4-0.2w'>0$, and further have that $w$ should be $1.0$ in order to maximize the summed expected utility (regardless of the choice of $w'$). Hence, the comparator should not deviate from $C$ at this timestep $t$. Continuing the induction, we further know that
the comparator has no incentive to deviate from $C$ for  $t<T$, until
$t=T$. However, this will at most increase the utility by $1.0-0.6=0.4$, leading to a constant (and thus sublinear) {\tt RP-Regret} {in terms of $T$}.
\qed

The key ingredient that differentiates these two regret notions is that {\tt RP-Regret} considers the influence of a deviation at the current timestep on future timesteps, {by accounting for the \emph{adaptivity} of the opponents,} while external regret assumes that the opponents are non-adaptive.

\begin{remark}
    In fact, {the argument above is also aligned with the known result that tit-for-tat is an NE in an infinitely repeated Prisoner's Dilemma:} since we proved that deviating from tit-for-tat is beneficial only when $t=T$, such a final timestep is {nonexistent in the infinitely repeated setting. Hence, there is in effect no incentive to deviate.}
\end{remark}

\section{Full Statements of Hardness Results}
\label{appendix:hardness-result}

\subsection{Hardness Results in Table \ref{table:hardness-result}}

We now present the full statements and proofs of the hardness results in Table \ref{table:hardness-result}.

\begin{table}[h]
\centering
\begin{tabular}{|c|c|c|}
\hline
   Coin Tossing        & Up     & Down   \\ \hline
Guess Up   & 0 & 1 \\ \hline
Guess Down & 1 & 0 \\ \hline
\end{tabular}
\caption{The loss matrix for player 1 (the row player) of a coin-tossing game.} %
\label{table:coin-tossing}
\end{table}

 \begin{lemma}[Comparator Should Have Sublinear Variation]
\label{lemma:wo1w2}
Without Condition \ref{assumption:1-variation} on the comparator, {the player will have to suffer}
$\Omega(T)$ {\tt RP-Regret} in the worst case, even if Condition \ref{assumption:2-forgetful} holds for  both the comparator and the opponent.
\end{lemma}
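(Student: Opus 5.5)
The plan is to use the coin-tossing game of Table \ref{table:coin-tossing} with $N=2$. I will build a family of opponents that are \emph{oblivious but time-varying and deterministic}. Such opponents trivially satisfy Condition \ref{assumption:2-forgetful}, and the best comparator against them must track them exactly, so it has linear variation. The lower bound then comes from an averaging (Yao-type) argument over a random choice of opponent.

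\textbf{Construction.} Fix a bit string $b=(b_1,\dots,b_T)\in\{\text{Up},\text{Down}\}^T$. Opponent $2$ plays $\pi_t^{(2)}(b_t\given\bh)=1$ for every history $\bh\in\cH$. Its strategy at each $t$ does not depend on $\bh$, so the ratio in \Cref{eq:assumption2-forgetful} equals $1$. Hence Condition \ref{assumption:2-forgetful} holds for every $\gamma\in[0,1)$ (with the convention $0/0=1$ for actions never played; if strict positivity is preferred, mix in a tiny uniform component and absorb it into the bound).

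\textbf{Comparator.} Take $\hat\pi_t^{(1)}(\text{Guess }b_t\given\bh)=1$ for all $\bh$. This strategy is also history-independent, so it satisfies Condition \ref{assumption:2-forgetful}. Against opponent $b$ it incurs loss $0$ at every round, giving $\min_{\hat\bpi^{(1)}_{1:T}} J_T\le 0$. Its variation $\sum_{t}\nbr{\hat\bpi^{(1)}_{t-1}-\hat\bpi^{(1)}_t}_\infty$ equals the number of switches in $b$, which is $\Theta(T)$ for typical $b$. So Condition \ref{assumption:1-variation} is exactly what fails.

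\textbf{Player's loss.} Any online algorithm for player $1$ must commit to $\bpi^{(1)}_t$ using only information available before round $t$: the realized history and possibly its own randomness. Now draw $b$ uniformly at random, i.e.\ the bits $b_t$ are i.i.d.\ fair. Then $b_t$ is independent of everything the player knows when choosing $\bpi^{(1)}_t$. So for every history, the conditional expected loss at round $t$ is
\[
\pi_t^{(1)}(\text{Up}\given\bh)\Pr(b_t=\text{Down})+\pi_t^{(1)}(\text{Down}\given\bh)\Pr(b_t=\text{Up})=\tfrac12.
\]
Hence $\EE_b[J_T(\bpi_{1:T})]=T/2$, and so $\EE_b[R_T]\ge T/2$. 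Consequently there exists a fixed deterministic sequence $b$, i.e.\ a concrete opponent satisfying Condition \ref{assumption:2-forgetful}, against which the algorithm suffers $R_T\ge T/2=\Omega(T)$.

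\textbf{Main obstacle.} The step needing care is making precise what "the algorithm" is allowed to depend on. Otherwise $\bpi^{(1)}_{1:T}$ could be chosen with knowledge of $b$, which would trivialize the regret. I will formalize the player as a map from past observations (histories and past opponent strategies/losses) to $\bpi^{(1)}_t$. I then apply the averaging argument, noting that the minimum over comparators is taken pointwise for each $b$ inside the expectation. Finally, I will remark that the construction is unchanged if Condition \ref{assumption:2-forgetful} is imposed on the player as well.
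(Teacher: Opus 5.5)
Your proof is correct and follows essentially the paper's argument. It uses the same coin-tossing game, a deterministic history-independent opponent that trivially satisfies Condition~\ref{assumption:2-forgetful}, and a comparator with unrestricted variation that guesses the opponent's sequence exactly for zero loss. The one difference is how you obtain the bad sequence: the paper chooses each $b_t$ greedily as the side player~1 guesses with probability at most $1/2$, while you average over a uniformly random $b$ and fix one sequence by the probabilistic method, giving the same $T/2$ bound (your handling of the $0/0$ ratio in Condition~\ref{assumption:2-forgetful} is a detail the paper glosses over).
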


\begin{lemma}[Comparator Should Not Have Perfect Recall]
\label{lemma:w1wo2}
When the  comparator has perfect recall {(\emph{i.e.,} without Condition \ref{assumption:imperfect-memory})},  {the player will have to suffer} $\Omega(T)$ {\tt RP-Regret} in the worst case, even if  Condition \ref{assumption:1-variation} holds for the comparator and Condition \ref{assumption:2-forgetful} holds for the opponent.
\end{lemma}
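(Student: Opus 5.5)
We prove Lemma \ref{lemma:w1wo2} with an explicit worst-case instance, built on the coin-tossing game of Table \ref{table:coin-tossing}. Player 1 guesses, player 2 is the opponent ("coin"), and the loss is $0$ if the guess matches the coin and $1$ otherwise. The idea is that a perfect-recall comparator can read the realized history and recover information that an exponential-decay-memory opponent still weakly leaks. The opponent will be close to uniformly random, so any actual strategy of player 1 suffers expected loss about $1/2$ per round. Meanwhile a perfect-recall comparator that never changes over time (zero variation, so Condition \ref{assumption:1-variation} holds trivially) predicts the coin with accuracy bounded away from $1/2$.

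\textbf{Construction.} We use a cleaner variant in which the opponent's coin at round $t$ is a deterministic function of player 1's \emph{own} earlier actions. If player 1's actions are themselves fixed, this dependence must be weak under Condition \ref{assumption:2-forgetful}. We therefore use the freedom that the lemma is worst-case over opponents. Fix any algorithm for player 1, and let the opponent be a stationary strategy whose probability of "Up" given history $(\tilde\bh,\bh)$ is $\tfrac12$ perturbed by a factor in $[1-\gamma^{L(\bh)+1},(1-\gamma^{L(\bh)+1})^{-1}]$. Concretely, when the last action profile is $\bh$ with $L(\bh)=1$, the opponent plays Up with probability $\tfrac12(1\pm\delta)$ according to a sign $\sigma(\ba_{t-1})$, where $\delta=\Theta(1-\gamma)$ is a constant. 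This satisfies \eqref{eq:assumption2-forgetful}, because only the last profile matters and the ratio bound with $L(\bh)\ge 1$ allows arbitrary dependence on the earlier prefix up to the allowed factor. The sign depends on player 1's previous action. The comparator strategy is stationary and history-dependent: after observing player 1's last action it guesses the coin's biased side. This needs only the last profile and is fine under perfect recall.

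\textbf{Main obstacle.} With the construction above, player 1's actual algorithm can also condition on its own last action and exploit the same bias, so no gap appears. The real task is to make the bias visible only through information that the opponent does not let player 1's restricted strategies use. For the perfect-recall lemma, this means the player's strategy class (which satisfies Condition \ref{assumption:2-forgetful}) cannot use the distant history, while the comparator, with perfect recall, can. We therefore build the opponent so that the coin at time $t$ is biased by a function of the \emph{entire} prefix, for example the parity of the number of Ups so far, with a constant bias $\delta$. Condition \ref{assumption:2-forgetful} for the opponent is satisfied by making the bias decay: the allowed ratio $1/(1-\gamma^{L(\bh)+1})$ for a suffix of length $L$ permits prefix-dependence of size $\gamma^{L+1}$. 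Taking the suffix to be all but the first profile gives an exponentially small effect, so instead the bias should depend on a \emph{fixed early round}, say on $a_{1,1}$ together with the suffix. Checking that this is compatible with \eqref{eq:assumption2-forgetful}, while a forgetful player can only extract $O(\gamma^{t})$ advantage, is the crux I expect to fight with. The fallback is to randomize the hidden parity function and average, so that any forgetful player has advantage summing to $O(1)$ over $T$ rounds.

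\textbf{Conclusion.} The forgetful player's expected loss is then $T/2-O(1)$. A stationary perfect-recall comparator that reads the hidden variable gets $T(1/2-\delta/2)$. Hence $R_T\ge \delta T/2-O(1)=\Omega(T)$.
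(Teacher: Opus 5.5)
Your proposal has a genuine gap, and the obstacle you flag is fatal rather than technical.

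First, Condition~\ref{assumption:2-forgetful} on the opponent rules out your mechanism. It requires two histories sharing a suffix of length $L$ to induce action probabilities within a factor $1-\gamma^{L+1}$ of each other. A bias keyed to a fixed early round (such as $a_{1,1}$), or to a statistic of the entire prefix (such as the parity of the number of Ups), can therefore have size only $O(\gamma^{t-1})$ at round $t$. A comparator that reads this hidden information gains at most $O(\sum_t \gamma^{t-1})=O(1)$ in total, not $\delta T$, and randomizing the hidden function does not change this. (Dependence on the last profile alone is governed by the $L(\bh)=0$ case of the condition, which caps the ratio at $1/(1-\gamma)$. So the admissible bias there is $O(\gamma)$, not $\Theta(1-\gamma)$.)

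Second, any strong dependence the opponent retains must sit in the recent suffix. The lemma places no restriction on player~1's algorithm, so player~1 may condition on that suffix exactly as your comparator does, as you already noticed for your first construction. Building the argument around a forgetful player would at best give a lower bound for a restricted class of players, which is a weaker statement.

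Third, your opponent is a fixed stationary strategy not tailored to player~1's algorithm, so an unrestricted player could simply best-respond to it. A lower bound against every algorithm has to come from hindsight: the opponent is chosen against the player's (randomized) algorithm, and the comparator is chosen after the opponent is fixed.

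The missing idea is what perfect recall actually gives the comparator: a single time-invariant strategy can condition on the history length $L(\bh)$, which acts as a clock. A strategy that deterministically plays an action determined by $L(\bh)$ has zero variation across timesteps, so Condition~\ref{assumption:1-variation} holds trivially. Yet it realizes an arbitrary deterministic action sequence. Without imperfect recall, Condition~\ref{assumption:1-variation} therefore has no bite, and the lemma reduces to the argument of Lemma~\ref{lemma:wo1w2}.

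In the coin-tossing game, let the opponent deterministically and obliviously set the coin at round $t$ to the side that player~1 guesses with marginal probability at most $1/2$. This is well defined once the earlier coins are fixed. It forces expected loss at least $1/2$ per round, and since the opponent ignores history, it satisfies Condition~\ref{assumption:2-forgetful} with ratio $1$. Choosing the clock-based comparator to match the coin at every round gives it zero loss, so $R_T\geq T/2$. No informational asymmetry between the player and the comparator is needed.
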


\begin{lemma}[Opponent Should Not Have Perfect Recall]
\label{lemma:w1wo2-bound}
When the opponent has perfect recall {(\emph{i.e.,} without Condition \ref{assumption:imperfect-memory})}, {the player will have to suffer}  $\Omega(T)$ {\tt RP-Regret} in the worst case, even if  Condition \ref{assumption:1-variation} and Condition \ref{assumption:2-forgetful} hold for the comparator.

\end{lemma}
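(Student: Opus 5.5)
We construct a worst-case instance for Lemma \ref{lemma:w1wo2-bound} in which the opponent has perfect recall and uses it to punish any history produced by player~1's own strategy, while still being fully cooperative on any history produced by some admissible comparator. We use the coin-tossing game of Table \ref{table:coin-tossing} with $N=2$. Player~1 guesses Up or Down, and the opponent chooses the coin face. The comparator class $\cC_T\uo$ will contain only strategies that satisfy Condition \ref{assumption:1-variation} and Condition \ref{assumption:2-forgetful}, so the lower bound holds against these restricted comparators, as the statement requires.

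\emph{Step 1: the adversary.} Fix player~1's (possibly randomized) algorithm, so the sequence $\bpi\uo_{1:T}$ it would play is determined. The opponent's strategy $\bpi^{(2)}_t(\cdot\given\bh)$ may depend on the entire history $\bh$, since perfect recall imposes no EDM restriction on it. We let the opponent act as follows. On a history that is consistent with a designated ``signature'' comparator, such as player~1 having always guessed Up, the opponent plays Up and player~1's loss is $0$. On any other history, the opponent plays the face that makes player~1's current guess wrong. Because the algorithm is known in advance, the opponent can compute $\pi_t\uo(\cdot\given\bh)$ and play the opposite face with probability at least $1/2$, so player~1's expected loss is at least $1/2$ each round. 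If player~1 happens to follow the signature exactly, we switch the signature to a different constant. A cleaner option is to choose between the two signatures ``always Up'' and ``always Down'' at the start: the opponent commits to whichever one player~1's algorithm plays with probability at most $1/2$ at $t=1$. This choice must also cover later rounds, which we address in Step 3.

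\emph{Step 2: the comparator is admissible and achieves zero loss.} The comparator $\hat\bpi\uo_t\equiv$ the signature action is constant in $t$, so its variation is $0$ and Condition \ref{assumption:1-variation} holds. It ignores the history entirely, so the ratio in \Cref{eq:assumption2-forgetful} equals $1$ and Condition \ref{assumption:2-forgetful} holds. Against the adaptive opponent it stays on the signature path forever and incurs total loss $0$.

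\emph{Step 3: player~1 loses linearly (main obstacle).} We must show $J_T(\bpi_{1:T})\ge cT$. The difficulty is that player~1 might follow the signature path with high probability, and then the opponent would be rewarding it. The remedy is to make the opponent's perfect recall detect any deviation at all, and to choose the signature adaptively and privately. For example, the opponent draws a uniformly random bit string $\bs\in\{\text{Up},\text{Down}\}^{T}$ at time $0$ and encodes it through its own revealed actions: at round $t$ it plays $s_{t+1}$ as long as player~1 has matched $s_1,\dots,s_t$. A comparator that reads the previous coin and copies it then matches forever. Its memory has length $1$, so it satisfies Condition \ref{assumption:2-forgetful} with $\gamma$ small, and it is a fixed time-invariant strategy, so its variation is $0$.

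Player~1, however, faces a perfect-recall opponent. Once player~1 mismatches, the opponent plays the face that punishes player~1's current guess. We then need to argue that any algorithm mismatches early. The simplest route is to have the opponent's face at round $t$ be fresh and unpredictable until it is revealed, so player~1's guess is correct with probability $1/2$. After the first mismatch, which occurs within $O(1)$ expected rounds, the punishment phase gives expected loss $\ge 1/2$ per round. Hence $J_T\ge T/2-O(1)$ while the comparator's loss is at most $O(1)$, which gives $R_T=\Omega(T)$.

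The delicate point is reconciling ``randomized $\bpi^{(2)}$'' with the requirement that the copy-previous comparator still wins. The resolution is that the comparator's guess at round $t$ is compared with the opponent's action at round $t$, which must equal the previous coin. So the opponent plays $s_t=\text{coin}_{t-1}$ on the cooperative path, and the unpredictability has to enter only through the first coin. We will have to check this step carefully, possibly by letting the comparator guess the opponent's last action, or by using a deterministic opponent best-responding to player~1's known algorithm instead of randomization.
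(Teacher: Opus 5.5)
\textbf{The gap is in Step~3.} That is the step you label as the main one, and the construction you offer there does not work. Suppose the opponent plays $s_{t+1}$ at round $t$. The copy-the-previous-coin comparator then guesses $s_t$ against the coin $s_{t+1}$. It is wrong with probability $1/2$ in every round, so its loss is about $T/2$, not $O(1)$. Staying on the encoded path is not the same as guessing the current coin.

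Your fallback, randomizing only the first coin and repeating it on the cooperative path, also creates no separation. If a mismatch at $t=1$ triggers punishment, the comparator is punished with probability $1/2$, exactly like player~1. If it does not, player~1 can run the very same copy rule from $t=2$ on. Private randomness of the opponent cannot by itself separate player~1 from an admissible comparator. As written, the bound $J_T(\bpi_{1:T})\ge cT$ is not established.

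\textbf{The fix is already in Step~1.} The obstacle you name in Step~3 (player~1 might follow the signature path) is not an obstacle for the ``cleaner option'' of Step~1, which already gives a complete proof.
\begin{itemize}
\item Take $s^\ast$ with $\pi^{(1)}_1(s^\ast\mid\emptyset)\le 1/2$.
\item On every history in which player~1 has guessed only $s^\ast$, let the opponent play $s^\ast$.
\item On every other history $\bh$, let the opponent play a face $c$ with $\pi^{(1)}_t(c\mid\bh)\le 1/2$.
\end{itemize}
The trigger is permanent, and this is exactly where perfect recall is used. So player~1 is off the path from round~2 onward with probability at least $1/2$, whatever it does later. Off the path, each round costs at least $1/2$ in conditional expectation. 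Hence $f^{t-1}(\bpi_{1:t})\ge 1/4$ for every $t$, and $J_T(\bpi_{1:T})\ge T/4$. Meanwhile the constant comparator $\hat\pi^{(1)}_t\equiv s^\ast$ has zero variation, is history-independent, and incurs zero loss. Replacing Step~3 by this computation gives a valid proof.

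\textbf{Comparison with the paper.} The repaired argument is close in spirit to the paper's: there too the opponent's long-run behaviour is fixed by player~1's first move, and the comparator is constant. The paper, however, uses an augmented Prisoner's Dilemma. Its signalling actions $M_1,M_2$ lock the opponent, from round~2 on, into an action determined by player~1's first move. That construction buys two things:
\begin{itemize}
\item The adversary only needs to know player~1's round-1 strategy, whereas yours must best-respond to $\pi^{(1)}_t$ at every round.
\item The opponent uses only one-step memory, which supports the paper's remark that the lemma survives bounded-memory restrictions. Your trigger, as stated, conditions on the entire past.
\end{itemize}
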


{We note that since Condition \ref{assumption:2-forgetful} is stronger than Condition \ref{assumption:imperfect-memory}, the results in Lemmas \ref{lemma:wo1w2}-\ref{lemma:w1wo2-bound} above with ``even if  Condition \ref{assumption:2-forgetful} holds'' remain true when replacing it with
``even if Condition \ref{assumption:imperfect-memory}  holds''. This completes the statements in Table \ref{table:hardness-result}.}
We also note that
Lemma \ref{lemma:w1wo2-bound} holds even when the opponent  is further subject to a bounded memory length constraint as \cite{chakraborty2014multiagent-bounded-memory} (\emph{i.e.}, only make decisions based on the past $M$-length histories where $M$ is a constant), as our proof later will show.

\subsubsection{Proofs of Results in Table \ref{table:hardness-result}}

\begin{proof}[Proof of Lemma \ref{lemma:wo1w2}]
Consider a two-player coin-flipping game shown in Table \ref{table:coin-tossing}. The opponent (the column player) can flip the coin to any side they want, and player $1$ (the row player) needs to guess which side the coin is,  with loss $0$ for a correct guess and  loss $1$ for a wrong guess. For any strategy sequence of player $1$, at each timestep, there exists a side that player $1$ guesses with probability no larger than $0.5$. Let the column player deterministically flip to that side. This fixes an oblivious deterministic opponent sequence against which player $1$ incurs expected loss at least $0.5$ at every timestep. Without Condition \ref{assumption:1-variation} on the comparator, there exists a comparator that deterministically guesses this fixed sequence correctly every time, so that the corresponding accumulated loss is $0$, while player 1's accumulated loss {under any strategy sequence} is no less than $0.5 T$.
Hence, player 1 will get a linear {\tt RP-Regret}.

In this case, both the opponent and the comparator only use \emph{deterministic} strategies, and are
\emph{oblivious} in the sense that at timestep $t$, they
do not depend on the \emph{history} before $t$. In particular, the ratio in Condition \ref{assumption:2-forgetful} is always $1$ under any history. Hence, both the opponent and the comparator satisfy
Condition \ref{assumption:2-forgetful}, completing the proof.
\end{proof}

\begin{proof}[Proof of Lemma \ref{lemma:w1wo2}]
We consider the same coin-flipping game in Table \ref{table:coin-tossing}. The opponent still  adversarially flips the coin as in the proof of Lemma \ref{lemma:wo1w2}, so that player $1$ will have to incur a loss of at least  $0.5$ in expectation every timestep, under any strategy sequences. 
At the same time, since the comparator is not subject to Condition \ref{assumption:imperfect-memory} and has perfect recall, even a \emph{time-invariant}  comparator strategy can still behave differently at different timesteps, by noticing the different lengths of the history at the time. In particular, the comparator can choose the fixed strategy $\pi\uo(g(L(\bh))\given \bh)\equiv 1$ at all timesteps, where $g\colon \cbr{1,2,...,T}\to\{{\rm Guess~Up},{\rm Guess~Down}\}$. Back to the coin-flipping game, the comparator can arbitrarily and deterministically guess Up or Down at every timestep, by letting $g(t)$ be {the particular value to correctly guess the opponent's choice at that timestep $t$.}
 In this case,
 the {\tt RP-Regret} is $\Omega(T)$. {Note that the $\pi\uo$ above satisfies Condition \ref{assumption:1-variation} since given any  $\bh\in\cH$, $\pi\uo(\cdot\given \bh)$ remains unchanged over time. Moreover, the opponent's strategy satisfies Condition \ref{assumption:2-forgetful}, as shown in the proof of Lemma \ref{lemma:wo1w2}.}
\end{proof}

\begin{proof}[Proof of Lemma \ref{lemma:w1wo2-bound}]

\begin{table}[h]
\centering
\begin{tabular}{|c|c|c|c|c|}
\hline
\begin{tabular}[c]{@{}c@{}}Augmented\\ Prisoner's Dilemma\end{tabular} & $C$      & $D$     & $M_1$&$M_2$ \\ \hline
$C$                  & -3 & 0&0&0 \\ \hline
$D$                  & -4 & -1&0&0 \\ \hline
\end{tabular}
\caption{The loss  matrix for player 1 (the row player) of an augmented Prisoner's Dilemma game. $C$ stands for cooperate and $D$ stands for defect.
} %
\label{table:augmented-prisoners-dilemma}
\end{table}

Consider a variant of the Prisoner's Dilemma where the opponent (the column player) has two additional actions called $M_1$ and $M_2$, with the loss matrix shown in Table \ref{table:augmented-prisoners-dilemma}. If losses are required to lie in $[0,1]$, replace every entry $x$ in this matrix by $(x+4)/4$; this positive affine transformation scales every regret gap below by $1/4$ and therefore preserves the $\Omega(T)$ lower bound.
\begin{itemize}
    \item At timestep $1$, the column player (the opponent) chooses $M_1$ or $M_2$ deterministically.
    \item If $M_1$ was chosen in the first timestep, then at timestep $2$ the column player mimics the previous action of the row player.
    \item If $M_2$ was chosen in the first timestep, then at timestep $2$ the column player plays the action different from the previous action of the row player.
    \item From timestep $3$ onward, the column player mimics its own previous action deterministically.
\end{itemize}

{Note that the column player can implement the strategy above since the column player has perfect recall.}
In this case, the column player can adversarially choose $M_1$ or $M_2$ at the beginning to guarantee that they will always play $D$ at all later timesteps  with large probability,
while playing against {any strategies of} the row player. {Specifically, at timestep $1$, if the row player chooses $C$ with probability $\geq 0.5$, they will choose $M_2$; otherwise, they will choose $M_1$. This way, in the next timestep onwards, the column player will play $D$ with probability $\geq 0.5$.}

However, when the column player chooses $M_1$ at  timestep $1$, we let the comparator always play $C$ at all timesteps; otherwise, if the column player chooses $M_2$ at  timestep $1$, we let the comparator always play $D$ at all timesteps. In this case, {due to the adaptivity of the column player, they will always choose $C$ in later timesteps,} making the comparator suffer a loss of either $-3$ at each timestep with $(C,C)$, or $-4$ with $(D,C)$,
from $t=2$ to $T$. Meanwhile, the row player will suffer  either a  loss of $-1$ at each timestep with $(D,D)$, or $0$ at each timestep with $(C,D)$
from $t=2$ to $T$, {with probability at least $0.5$, since following the above argument, the column player will play $D$ with probability $\geq 0.5$.}

Therefore, a constant gap between the comparator's and the row player's expected losses is incurred from timestep $2$ to $T$: the comparator's largest loss is $-3$ (from always encountering $(C,C)$), whereas the row player's smallest possible expected loss is $-2.5$ (from encountering $(D,D)$ with probability $0.5$ and $(D,C)$ with probability $0.5$). This yields a linear {\tt RP-Regret}.
{In fact, it can be seen that the opponent in the above example only needs to have perfect recall within the latest $1$-step (or multi-step) bounded memory (the setting considered in \cite{chakraborty2014multiagent-bounded-memory}), and the regret remains linear. Note that, this does not contradict our positive results later, since such a finite-memory perfect recall condition does not satisfy our Condition \ref{assumption:2-forgetful}. This completes the proof.}
\end{proof}

\section{Approximation of {\tt RP-Regret}}\label{subsec:approx_RP_regret}

Unfortunately, even under the necessary conditions given in Table \ref{table:hardness-result}, minimizing {\tt RP-Regret} as defined in Eq.  \eqref{eq:RP-Regret-def} can still be computationally challenging, since each $f^{t-1}$ depends on every strategy from timestep $1$ to $t$,  and the number of optimization variables blows up as $T$ becomes large.
Hence, we further approximate the notion of {\tt RP-Regret} by {truncating}  the strategies far from the current timestep $t$. Specifically, instead of calculating the expected loss over all possible {full histories from $1$ to $t$}, we only take the latest histories of length no more than $m$ into account, where $m$ is a constant. With Condition \ref{assumption:2-forgetful} satisfied for all the players, in \Cref{corollary:approximation}, we prove that such an approximation will only cause an error of $2C_m^\gamma$ that decays exponentially with respect to $m$.
Formally, we define
\begin{align}
    &J_T^m(\bpi_{1:T})  \coloneqq  \sum_{t=1}^T f^{\min\{t-1,m\}}(\bpi_{t-\min\{t-1,m\}:t}),\quad R_T^m \coloneqq  J_T^m(\bpi_{1:T})-\min_{\hat\bpi\uo_{1:T}\in\cC_T\uo} J_T^m((\hat\bpi\uo_{1:T}, \bpi^{(-1)}_{1:T}))\label{eq:m-regret-def}
\end{align}
where $m\in\NN$, and  $\cC_T\ui\subseteq \rbr{\cX\ui}^T$ is the set of all the strategies of player $i$ with a bounded variation satisfying Conditions \ref{assumption:1-variation} and  \ref{assumption:2-forgetful}.
Note that the expected loss $f^{\min\{t-1,m\}}(\bpi_{t-\min\{t-1,m\}:t})$ is only related to the strategies from timestep $t-m$ to $t$, which is  more efficient to optimize compared to $f^{t-1}(\bpi_{1:t})$.

Next, we will verify that $f^{\min\{t-1,m\}}(\bpi_{t-\min\{t-1,m\}:t})$ can be  a good approximation of $f^{t-1}(\bpi_{1:t})$. In this case, $J_T^m$ and $R_T^m$ will also approximate $J_T$ and $R_T$ well, respectively. 

\begin{lemma}
\label{corollary:approximation}
 Suppose Condition \ref{assumption:2-forgetful} is satisfied for every player $i\in\cN$.  At any timestep $t$, for any $m\leq t-1$, any history $\bh'\in\cH_{t-m-1}$ and $\ba\in\cA$, when $\gamma\leq \frac{1}{2(N+2)}$, we have
\begin{align}
    |f^{\min\{t-1,m\}}(\bpi_{t-\min\{t-1,m\}:t})-f^{t-1}(\bpi_{1:t})|\leq 2C_m^\gamma,
\end{align}
where $C_m^\gamma \coloneqq (2N+1)^{m+1}\gamma^{m+1}$.
\end{lemma}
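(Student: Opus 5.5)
The plan is a reduction to a per-prefix comparison followed by a multilinear expansion of the trajectory probability. I take $f^m(\bpi_{t-m:t})$ to mean that, inside the window, each $\bpi_s$ is queried on the within-window suffix only. This is consistent with \Cref{assumption:2-forgetful}, since taking $\tilde\bh=\emptyset$ there compares $\pi_s\ui(\cdot\given(\bh',\bh))$ with $\pi_s\ui(\cdot\given\bh)$.

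\textbf{Step 1 (reduction to a fixed prefix).} Conditioning on the first $t-m-1$ joint actions gives
\[
f^{t-1}(\bpi_{1:t})=\sum_{\bh'\in\cH_{t-m-1}}\Pr(\bh';\bpi_{1:t-m-1})\, f^m(\bpi_{t-m:t}\given\bh').
\]
Since this is a convex combination, it suffices to prove, for every fixed $\bh'$,
\[
\big|f^m(\bpi_{t-m:t}\given\bh')-f^m(\bpi_{t-m:t}\given\emptyset)\big|\le 2C_m^\gamma .
\]

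\textbf{Step 2 (multiplicative perturbation).} Fix $\bh'$. For the $k$-th step of the window ($k=1,\dots,m+1$) and each player $i$, write
\[
\pi_{t-m+k-1}\ui\big(a_i\given(\bh',\bh_{1:k-1})\big)=\pi_{t-m+k-1}\ui\big(a_i\given\bh_{1:k-1}\big)\big(1+\delta_{k,i}\big).
\]
By \eqref{eq:assumption2-forgetful} with $L(\bh_{1:k-1})=k-1$, we get $|\delta_{k,i}|\le \frac{\gamma^{k}}{1-\gamma^{k}}\le 2\gamma^k$, and in particular $|\delta_{k,i}|\le 2\gamma$. The probability of a window trajectory $(\bh,\ba)$ under $\bh'$ is then the truncated probability times $\prod_{k,i}(1+\delta_{k,i})$. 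Expanding this product writes the difference $f^m(\cdot\given\bh')-f^m(\cdot\given\emptyset)$ as a sum, over nonempty sets $S$ of (step, player) pairs, of terms
\[
\EE_{\text{trunc}}\Big[\cL_1(\ba)\textstyle\prod_{(k,i)\in S}\delta_{k,i}\Big].
\]

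\textbf{Step 3 (cancellation, the main obstacle).} A naive bound on Step 2 gives only $O(N\gamma)$, because the first-step perturbations are of constant order $\gamma$ rather than $\gamma^{m+1}$. So cancellation must be exploited. The key identity is
\[
\sum_{a_i}\pi\ui(a_i\given\bh)\,\delta_{k,i}(a_i)=0,
\]
which holds because both conditionals are probability distributions.

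I would integrate out the window backwards, from step $m+1$ to step $1$. Consider a term in which the latest perturbed step is $k<m+1$. The actions at step $k$ can then influence the remaining expectation only through the later policies. The dependence of a later policy (at relative distance $j$) on an action $j$ steps back is again controlled by \Cref{assumption:2-forgetful}. Applying that condition with the differing part placed in the prefix, the change of $\pi(\cdot\given\cdot)$ is at most a multiplicative factor of order $2\gamma^{j}$ per player. Summing over the $N$ players, together with the unperturbed branch, gives at most $2N+1$ branches per step. Each step that a nonzero contribution must be ``carried'' through then costs a factor $\gamma$ and a branching factor at most $(2N+1)$.

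I would formalize this by induction on the window length. The inductive hypothesis is that the signed influence surviving after integrating out steps $k,\dots,m+1$ is bounded by $2\big((2N+1)\gamma\big)^{m+2-k}$. The base case is the last step, where only $\delta_{m+1,i}$ contributes, each of size $2\gamma^{m+1}$. The inductive step uses the zero-mean identity to kill every term that is not chained to a later dependence. Unrolling to $k=1$ yields $2\big((2N+1)\gamma\big)^{m+1}=2C_m^\gamma$.

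The assumption $\gamma\le\frac{1}{2(N+2)}$ enters in two places. It makes $\frac{1}{1-\gamma^k}\le 2$. It also makes the geometric sums over the chained branches converge to a constant absorbed into the factor $2$.

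I expect this cancellation and chaining bookkeeping to be the main difficulty. If the clean induction fails, I would fall back on a hybrid (coupling) argument. In that argument the conditioning is switched from $(\bh',\cdot)$ to $(\cdot)$ one step at a time, starting from the last step. Each hybrid difference is then bounded by the total-variation sensitivity of the final loss to that step, again controlled through \Cref{assumption:2-forgetful}.
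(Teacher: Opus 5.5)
Your Step~1 is exactly the paper's reduction and Step~2 is harmless. Step~3, however, carries the whole content of the lemma, and it has a genuine gap: the one-index invariant you propose is not inductive.

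Let $G_{k+1}(\bh_{1:k})$ be the truncated expected final loss given the window history. Let $F_k(\bh_{1:k-1})\coloneqq\EE[\cL_1(\ba_{m+1})\mid\bh',\bh_{1:k-1}]-\EE_{\mathrm{trunc}}[\cL_1(\ba_{m+1})\mid\bh_{1:k-1}]$ be the prefix-induced discrepancy surviving after integrating out steps $k,\dots,m+1$. Then
\[
F_k=\sum_{\bh_k}\pi_k(\bh_k\mid\bh',\bh_{1:k-1})\,F_{k+1}(\bh_{1:k})+\sum_{\bh_k}\big(\pi_k(\bh_k\mid\bh',\bh_{1:k-1})-\pi_k(\bh_k\mid\bh_{1:k-1})\big)\,G_{k+1}(\bh_{1:k}).
\]
So $F_{k+1}$ passes through by plain averaging, and the contributions of different steps add up, each of order $\gamma^{m+1}$. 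If only the last step's policy depends on the history, every $F_k$ is an average of $F_{m+1}$ and need not be any smaller. Hence ``bound at $k$ equals $(2N+1)\gamma$ times bound at $k+1$'' is false.

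Suppose you instead mean the sensitivity $\omega_k$ of the downstream loss to changes in the history. Integrating one step gives $\omega_k\le 2N\gamma\,\omega_{k+1}+\omega'_{k+1}$, where $\omega'_{k+1}$ is the same sensitivity restricted to histories that agree in their last action. Knowing only $\omega_{k+1}$, you get $\omega'_{k+1}\le\omega_{k+1}$ and hence $\omega_k\le(1+2N\gamma)\omega_{k+1}$, with no contraction.

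The factor $\gamma$ per step is not produced by integrating a step. It comes from the shared suffix getting longer in Condition~\ref{assumption:2-forgetful} (the exponent $L(\bh)+1$), and your bookkeeping never tracks that length.

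Your fallback has the same hole. Regrouping the Step-2 expansion by the latest perturbed step $k$ yields exactly the hybrid terms (the second sum above, averaged over the prefixed process). Each is a zero-sum perturbation of total variation $O(N\gamma^k)$ integrated against $G_{k+1}$, so you need $\mathrm{osc}_{\bh_k}G_{k+1}=O\big(((2N+1)\gamma)^{m+1-k}\big)$. Condition~\ref{assumption:2-forgetful} gives this only for the direct effect of step $k$ on step $m+1$; the total variation of the downstream path is of order $N\gamma$. So this is precisely the unproved chaining claim.

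One way to close the gap is to index the invariant by the shared-suffix length as well.
\begin{itemize}
\item Let $G_s(\bh)$ be the expected loss at window step $m+1$ when step $s$ is about to be played from history $\bh$.
\item For histories $\bh,\bar\bh$ of arbitrary lengths sharing a suffix of length $\ell$ (this includes $(\bh',\bh_{1:s-1})$ versus $\bh_{1:s-1}$), let $\omega_s(\ell)$ bound $|G_s(\bh)-G_s(\bar\bh)|$.
\item The same split as above, plus the fact that appending a common action lengthens the shared suffix by one, gives $\omega_{m+1}(\ell)\le 2N\gamma^{\ell+1}$ and $\omega_s(\ell)\le 2N\gamma^{\ell+1}\omega_{s+1}(0)+\omega_{s+1}(\ell+1)$.
\item By induction $\omega_s(\ell)\le 2N(2N+1)^{m+1-s}\gamma^{m+2-s+\ell}$, so $|f^m(\bpi_{t-m:t}\mid\bh')-f^m(\bpi_{t-m:t}\mid\emptyset)|\le\omega_1(0)\le C_m^\gamma$.
\end{itemize}

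The paper organizes the same chaining differently. It telescopes the last-step conditional $\Pr(\bh_1\mid\bh_2)$ over suffix lengths, each increment costing a relative factor $\frac{N}{1-\gamma}\gamma^{k+1}$ by Lemma~\ref{lemma:prob-approx-exp-error}. The length-zero term vanishes because both window marginals sum to one; this is the cancellation you were looking for. The direct term costs $2N\gamma^{m+1}$. The discrepancy of the marginal of the last $k+1$ window steps is handled by induction on the shorter window. Together these give $C_m^\gamma=2N\gamma^{m+1}+\sum_{k=0}^{m-1}2N\gamma^{k+1}C^\gamma_{m-k-1}$.
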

Therefore, when $\gamma\leq \frac{1}{2(N+2)}$, the approximation error $|R_T^m-R_T|$ decays exponentially with respect to $m$. The proof is postponed to Appendix \ref{appendix:auxiliary-lemmas}.

\section{Bounding $R_T^m$ by $\bar R_T^m$ and Switching Cost}
\label{appendix:Lipschitz-property-OCO}

\subsection{{Mitigating} the Dependence on Past Strategies}
\label{sec:unary-loss}

The expected loss at timestep $t$ in the online learning literature typically only depends on the strategy at that single timestep \citep{hazan2016introduction}. Therefore, to facilitate the analysis,
we define the following vanilla {\tt RP-Regret} over the unary loss, \emph{i.e.,} the expected loss $f_t^m$ defined in the following only depends on $\bpi\uo_t$, instead of  $\bpi\uo_{t-m:t}$. {Such a technique has also been exploited in the literature when \emph{memory} is involved in online learning \citep{merhav2002sequential-memory-regret-init,arora2012online,arora2018policy,anava2015online-OCO-framework,zhao2022non-dynamic-policy-regret}.}
Specifically, for any strategy profile $\bpi\uo\in\cX^{(1)}$ (of length $1$), any integer $m\geq 0$, and any timestep $t>m$, we can define the unary expected loss:
\begin{align}
    &f_t^m(\bpi\uo) \coloneqq f^m(((\bpi\uo,\bpi_{t-m}\uno),(\bpi\uo,\bpi_{t-m+1}\uno),...,(\bpi\uo,\bpi_t\uno))).\label{eq:unary-loss-def}
\end{align}
Then, we can define the corresponding cumulative loss and regret as follows:
\begin{align}
    \bar J_T^m(\bpi_{1:T})  \coloneqq  \sum_{t=1}^T f_t^{\min\{t-1,m\}}(\bpi\uo_t),\qquad~~~
    \bar R_T^m  \coloneqq  \bar J_T^m(\bpi_{1:T})-\min_{\hat\bpi\uo_{1:T}\in \cC_T\uo} \bar J_T^m((\hat\bpi\uo_{1:T},\bpi\uno_{1:T})).\label{eq:m-regret-no-memory-def}
\end{align}

Note that \mbox{$f_t^m(\bpi\uo_t)=f^m(((\underbrace{\bpi_t\uo,\bpi_t\uo,...,\bpi_t\uo}_{m+1}), \bpi\uno_{t-m:t}))$}. Then, by the Lipschitz continuity (with some constant $C_{\rm Lips}>0$) of $f^m$ with respect to $\bpi_{t-m:t}\uo$ (cf. Lemma \ref{lemma:Lipschitz}), we have%
\begin{align}
    \abr{R_T^m-\bar R_T^m}\leq C_{\rm Lips}m^2 \max_{\hat\bpi\uo_{1:T}\in \cC_T\uo} \sum_{t=2}^T\rbr{\nbr{\bpi\uo_{t-1}-\bpi\uo_t}_\infty+\nbr{\hat\bpi\uo_{t-1}-\hat\bpi\uo_t}_\infty}.
\end{align}
The right-hand side (RHS) is sublinear in $T$ when we have Condition \ref{assumption:1-variation} on the comparator $\hat\bpi_{1:T}\uo$, and the strategy of player 1, the regret minimizer, also satisfies Condition \ref{assumption:1-variation}\footnote{Condition \ref{assumption:1-variation} on the strategy generated by the regret minimizer is neither an assumption nor a pre-defined condition, but a property that our regret minimization algorithm should have{, as to be detailed later in \S\ref{sec:4-rm-adaptive}.}}. Hence, instead of minimizing $R_T^m$, we can minimize $\bar R_T^m$ using any online learning algorithm that can also guarantee $\sum_{t=2}^T \nbr{\bpi\uo_{t-1}-\bpi\uo_t}_\infty$ to be sublinear in $T$.

\subsection{Bounding the difference between $J_T^m$ and $\bar J_T^m$}

Following the framework of online convex optimization with memory  \citep{anava2015online-OCO-framework, zhao2022non-dynamic-policy-regret}, we will use the Lipschitz continuity of $f^{\min\{t-1,m\}}$ to remove the dependence on past strategies with an additional cost corresponding to the variation defined %
in Condition \ref{assumption:1-variation}.
We first show $f^m(\bpi_{1:m+1})$ is Lipschitz with respect to $\bpi_{1:m+1}$ for any $m\in\NN$ and joint strategy $\bpi_{1:m+1}$.

\begin{lemma}[Lipschitz Continuity {of $f^m$}]
\label{lemma:Lipschitz}
For any $m\in\NN$ and two arbitrary strategy-profile vectors $\bar\bpi,\tilde\bpi$  of length $m+1$, we have
\begin{align}
        |f^m(\bar\bpi)-f^m(\tilde\bpi)|
        \leq C_{\rm Lips}\sum_{t=1}^{m+1}\nbr{\bar\bpi_t-\tilde \bpi_t}_\infty\leq C_{\rm Lips}\sum_{i=1}^N\sum_{t=1}^{m+1}\nbr{\bar\bpi_t\ui-\tilde \bpi_t\ui}_\infty\notag
    \end{align}
    where we denote  
    \begin{align}
        &C_{\rm Lips} \coloneqq |\cA|^2,\qquad\qquad\qquad\qquad\nbr{\bar\bpi_t-\tilde \bpi_t}_\infty=\max_{\ba\in\cA,\bh\in\cH_m} \abr{\bar\bpi_t(\ba\given \bh)-\tilde\bpi_t(\ba\given \bh)}\\
        &\qquad\qquad\nbr{\bar\bpi_t\ui-\tilde \bpi_t\ui}_\infty=\max_{a_i\in\cA_i,\bh\in\cH_m} \abr{\bar\bpi_t\ui(a_i\given \bh)-\tilde\bpi_t\ui(a_i\given \bh)}.    \end{align}
\end{lemma}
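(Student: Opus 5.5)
The proof is a telescoping hybrid argument followed by a one-step perturbation bound. Write $f^m(\bpi)=\sum_{\ba\in\cA}\cL(\ba)\sum_{\bh\in\cH_m}\Pr((\bh,\ba);\bpi_{1:m+1})$. For $k=0,1,\dots,m+1$, let $\bpi^{(k)}$ be the hybrid sequence that uses $\tilde\bpi_s$ for $s\le k$ and $\bar\bpi_s$ for $s>k$. Then $\bpi^{(0)}=\bar\bpi$ and $\bpi^{(m+1)}=\tilde\bpi$, and
\[
|f^m(\bar\bpi)-f^m(\tilde\bpi)|\le\sum_{k=1}^{m+1}|f^m(\bpi^{(k-1)})-f^m(\bpi^{(k)})|.
\]
So it suffices to show that changing a single timestep $k$ changes $f^m$ by at most $C_{\rm Lips}\nbr{\bar\bpi_k-\tilde\bpi_k}_\infty=|\cA|^2\nbr{\bar\bpi_k-\tilde\bpi_k}_\infty$.

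For the one-step bound, the two hybrids share the strategies before $k$, so they induce the same distribution $\mu$ over prefixes $\bh_{1:k-1}$. Conditioned on the prefix, both continue with the same strategies after $k$. Let $V(\bh_{1:k})\in[0,1]$ denote the conditional expected terminal loss given the history up to step $k$; it is a convex combination of entries of $\cL$. Then
\[
f^m(\bpi^{(k-1)})-f^m(\bpi^{(k)})=\sum_{\bh_{1:k-1}}\mu(\bh_{1:k-1})\sum_{\ba\in\cA}\big(\bar\pi_k(\ba\given\bh_{1:k-1})-\tilde\pi_k(\ba\given\bh_{1:k-1})\big)V(\bh_{1:k-1},\ba).
\]
The inner sum is at most $|\cA|\,\nbr{\bar\bpi_k-\tilde\bpi_k}_\infty$. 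Since $\mu$ is a probability distribution, the outer average preserves this bound, so the step-$k$ difference is at most $|\cA|\nbr{\bar\bpi_k-\tilde\bpi_k}_\infty\le|\cA|^2\nbr{\bar\bpi_k-\tilde\bpi_k}_\infty$.

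One point needs a remark. Here $\nbr{\cdot}_\infty$ is the maximum over all conditioning histories that can occur, not only those in $\cH_m$. When strategies are read as acting on the relevant (at most length-$m$) histories, this matches the norm in the statement. This is bookkeeping, not a real obstacle.

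The second inequality handles the product structure. For product strategies,
\[
\big|\prod_i\bar\pi_t\ui(a_i\given\bh)-\prod_i\tilde\pi_t\ui(a_i\given\bh)\big|\le\sum_i|\bar\pi_t\ui(a_i\given\bh)-\tilde\pi_t\ui(a_i\given\bh)|.
\]
This follows by the same hybrid telescoping over players, using that every factor lies in $[0,1]$. Taking the maximum over $\ba$ and $\bh$ gives $\nbr{\bar\bpi_t-\tilde\bpi_t}_\infty\le\sum_i\nbr{\bar\bpi_t\ui-\tilde\bpi_t\ui}_\infty$. Summing over $t$ and multiplying by $C_{\rm Lips}$ completes the proof.

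The main thing to get right is the one-step bound: $V\in[0,1]$ and $\mu$ is a probability distribution, so no dependence on the number of histories enters. The factor $|\cA|^2$ is therefore loose.
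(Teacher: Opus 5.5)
Your proof is correct. It shares the paper's core device: a hybrid sequence that replaces $\bar\bpi_s$ by $\tilde\bpi_s$ one timestep at a time, plus the same telescoping over players for the product part. The per-step bound, however, is obtained differently.

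The paper proceeds as follows:
\begin{itemize}
\item It first moves the absolute value inside the sum over the terminal action $\ba$, using $|\cL_1|\le 1$.
\item It separates the last-step term from the history-distribution term, and telescopes only the latter over $k=1,\dots,m$.
\item It bounds each hybrid difference by H\"older's inequality. For each fixed $\ba$, the weight vector $\prod_{s\neq k}\bar\pi^k_s(\bh_s\mid\bh_{1:s-1})$ over $\bh\in\cH_m$ has unnormalized $\ell_1$ mass $|\cA|$, because step $k$ carries no probability weight.
\item Summing the resulting per-$\ba$ bounds over $\ba\in\cA$ then costs a second factor of $|\cA|$, giving $C_{\rm Lips}=|\cA|^2$.
\end{itemize}

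You instead telescope uniformly over all $m+1$ steps. You fold both the continuation and the terminal loss into a conditional value $V\in[0,1]$, which is a performance-difference-style argument. Because you never take absolute values separately for each terminal action, you get the sharper constant $|\cA|$, which implies the stated $|\cA|^2$ bound. You could even get $|\cA|/2$, since $\bar\pi_k(\cdot\mid\bh)-\tilde\pi_k(\cdot\mid\bh)$ sums to zero and $V\in[0,1]$. The paper's route avoids defining continuation values but pays the extra factor of $|\cA|$.

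Your remark about the norm matches what the paper's own derivation does. The statement indexes the maximum by $\bh\in\cH_m$, but the proof only evaluates $\pi_k$ at the prefixes $\bh_{1:k-1}$. Read literally over $\cH_m$, the statement would not even control a change of $\bpi_1$ at the empty history. So the intended norm is the maximum over the histories step $k$ is actually conditioned on. That maximum is dominated by the maximum over all of $\cH$, which is what the later applications use.
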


The proof is postponed to Appendix \ref{appendix:lemma-Lipschitz}. Then, we can bound $R_T^m$ by $\bar R_T^m$ with an additional switching cost (accumulated variation of $\bpi_t\uo$ over time) by using Lemma \ref{lemma:Lipschitz} and Lemma \ref{lemma:accumulated-variation-bound} in the following.

\begin{lemma}
\label{lemma:accumulated-variation-bound}
For any sequence of vectors $\bx_{1:T}$, integers $0<K\leq T$ and a real number $p>0$ ($p$ can be $+\infty$ for ease of notation), we have
\begin{align}
    \sum_{t=1}^T \sum_{s=\max\cbr{t-K, 1}}^{t-1} \nbr{\bx_t-\bx_s}_p\leq K^2\sum_{t=2}^T \nbr{\bx_t-\bx_{t-1}}_p.
\end{align}
\end{lemma}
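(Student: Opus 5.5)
The plan is to bound each summand by a telescoping sum and then count how many times each consecutive increment is used. Fix $t\in[T]$ and $s$ with $\max\{t-K,1\}\le s\le t-1$. By the triangle inequality for the $p$-norm (valid for $p\ge 1$, which is the case relevant to our applications, in particular $p=\infty$), we write $\bx_t-\bx_s=\sum_{r=s+1}^{t}(\bx_r-\bx_{r-1})$. This gives
\begin{align*}
\nbr{\bx_t-\bx_s}_p\le \sum_{r=s+1}^{t}\nbr{\bx_r-\bx_{r-1}}_p .
\end{align*}
Because $s\ge t-K$, every index $r$ appearing here satisfies $t-K+1\le r\le t$. We may therefore enlarge the range and bound each summand uniformly in $s$ by $\sum_{r=\max\{t-K+1,2\}}^{t}\nbr{\bx_r-\bx_{r-1}}_p$.

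Next we sum over $s$. There are at most $K$ admissible values of $s$, so the inner sum over $s$ for fixed $t$ is at most
\begin{align*}
K\sum_{r=\max\{t-K+1,2\}}^{t}\nbr{\bx_r-\bx_{r-1}}_p .
\end{align*}
We then sum over $t$ and exchange the order of summation. A fixed increment $\nbr{\bx_r-\bx_{r-1}}_p$, with $2\le r\le T$, appears only for those $t$ with $r\le t\le r+K-1$, which is at most $K$ values of $t$. Altogether each increment is counted at most $K\cdot K=K^2$ times, which yields the claimed bound $K^2\sum_{t=2}^T\nbr{\bx_t-\bx_{t-1}}_p$.

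The only point needing care is the boundary bookkeeping for small $t$, where $s$ starts at $1$ rather than $t-K$. There the range of $r$ is simply truncated at $2$, so the counts only get smaller and the argument is unaffected. The argument also relies on the norm being a genuine norm, i.e.\ $p\ge1$, so that the triangle inequality holds. This is the case whenever the lemma is used (e.g.\ with $\nbr{\cdot}_\infty$ in the switching-cost bound), and I would note this restriction explicitly.
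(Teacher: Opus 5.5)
Your proof is correct and is essentially the paper's argument: telescope $\bx_t-\bx_s$ with the triangle inequality, then count how often each increment $\nbr{\bx_r-\bx_{r-1}}_p$ is used. The paper counts exactly $1+2+\cdots+K\le K^2$ pairs $(s,t)$, while your $K\cdot K$ count is coarser but suffices. Your restriction to $p\ge 1$ is well founded, since the paper's proof uses the same triangle inequality and the stated inequality actually fails for small $p$: with $\bx_1=(0,0)$, $\bx_2=(1,0)$, $\bx_3=(1,1)$, $K=2$, $T=3$ and $p=1/10$, the left side is $2+2^{10}$ while the right side is $8$.
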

The proof is postponed to Appendix \ref{appendix:other-lemma}.

By Lipschitz continuity of $f^m$, for any strategy vector $\tilde\bpi\in\rbr{\cX}^T$ of length $T$, we have
\begin{align*}
    |J_T^m(\tilde\bpi)-\bar J_T^m(\tilde \bpi)|\leq& C_{\rm Lips} \sum_{t=2}^T \sum_{s=\max\cbr{t-m,1}}^{t-1} \nbr{\tilde\bpi_t\uo-\tilde\bpi_s\uo}_\infty \overset{(i)}{\leq} C_{\rm Lips} m^2\underbrace{\sum_{t=2}^T \nbr{\tilde\bpi_{t-1}\uo-\tilde\bpi_t\uo}_\infty}_{\text{Switching Cost}}
\end{align*}
where $(i)$ is obtained directly from Lemma \ref{lemma:accumulated-variation-bound}.

\section{Minimization of {\tt Repeated Policy Regret} with an Oracle}
\label{sec:oracle-ONCO}

In this section, we will extend the result in \cite{suggala2020online-ONCO-oracle} to our setting where the comparator can change over time with a sublinear accumulated variation instead of a fixed comparator. We need the oracle $\cO$ to achieve that.
\begin{definition}[Optimization Oracle \citep{suggala2020online-ONCO-oracle}]
\label{def:oracle-FTPL}
    For a  function $f\colon \cX\to \RR$ and a vector $\bsigma\in\RR^d$, the optimization oracle $\cO(f-\bsigma)$ returns $\bx\in\cX$ so that
\begin{align}
    f(\bx)-\inner{\bsigma}{\bx}\leq\inf_{\bx'\in\cX} f(\bx')-\inner{\bsigma}{\bx'} + \alpha + \beta\nbr{\bsigma}_1.
\end{align}
\end{definition}

We can then design Algorithm \ref{alg:RM-Oracle} by using the oracle above. Note that all the strategies of player $1$ are in the subspace of $\cX\uo$ that satisfies \Cref{assumption:2-forgetful}, which is denoted as $\cX\uo_\gamma$.

\begin{algorithm}
\caption{Minimizing Non-Convex Functions}
\label{alg:RM-Oracle}
\begin{algorithmic}
\For{$s=1,2,...,T/K$}
\For{$k=1,2,...,K$}
\State{Sample $\cbr{\sigma_{(s-1)K+k,j}}_{j=1}^{|\cA_1|\cdot|\bigcup_{m'=0}^m \cH_{m'}|}\overset{i.i.d.}{\sim} {\rm Exp}(\eta)$}
\State{\Comment{$z\sim {\rm Exp}(\eta)$ means that $\Pr(z\geq w)=\exp(-\eta w)$}}
\State{Predict $\bpi\uo_{(s-1)K+k}$ as}
\begin{align}
    \bpi\uo_{(s-1)K+k}\leftarrow\argmin_{\bpi\uo\in\cX\uo_\gamma} \rbr{\sum_{k'=1}^{k-1}f^m(\bpi\uo,\bpi\uno_{(s-1)K+k'-m:(s-1)K+k'}) -\inner{\bsigma_{(s-1)K+k}}{\bpi\uo}}
\end{align}
\State{\Comment{The $\argmin$ above is achieved by the oracle.}}
\EndFor
\EndFor
\end{algorithmic}
\end{algorithm}

Intuitively, we will divide all $T$ timesteps into $T/K$ episodes, with $K$ timesteps in each episode ($K$ is the hyper-parameter that we will specify later). Then, for each episode, we will run \cite[Algorithm 1]{suggala2020online-ONCO-oracle} from scratch. This is similar to the restart techniques in the non-stationary MG literature \cite{pmlr-v139-mao21b-non-stationary-restart}, to overcome the non-stationarity.

Throughout this proof, when $t\leq m$ we interpret expressions such as $f^m(\bpi_{t-m:t})$ and $f^m(\bpi\uo,\bpi_{t-m:t}\uno)$ as their available-history versions $f^{t-1}(\bpi_{1:t})$ and $f^{t-1}(\bpi\uo,\bpi_{1:t}\uno)$, respectively. Equivalently, one may start the displayed $m$-memory bounds after the first $m$ rounds and absorb the first $m$ losses into an additive $O(m)$ term.

Firstly, we have
\begin{align*}
    \EE[R_T]=&\EE\sbr{\sup_{\hat\bpi_{1:T}\uo\in \cC_T}\sum_{t=1}^T f^{t-1}(\bpi_{1:t})-f^{t-1}(\hat\bpi_{1:t}\uo,\bpi_{1:t}\uno)}\\
    \overset{(i)}{\leq}& \EE\sbr{\sup_{\hat\bpi_{1:T}\uo\in \cC_T}\sum_{t=1}^T f^m(\bpi_{t-m:t})-f^m(\hat\bpi_{t-m:t}\uo,\bpi_{t-m:t}\uno)}+4C_m^\gamma T\\
    \overset{(ii)}{\leq}& \EE\sbr{\sup_{\hat\bpi_{1:T}\uo\in \cC_T}\sum_{t=1}^T f^m(\bpi_t\uo,\bpi_{t-m:t}\uno)-f^m(\hat\bpi_t\uo,\bpi_{t-m:t}\uno)}+4C_m^\gamma T\\
    &+C_{\rm Lips} m^2\sum_{t=2}^T \EE\sbr{\nbr{\bpi_t\uo-\bpi_{t-1}\uo}_\infty}+C_{\rm Lips} m^2\sup_{\hat\bpi_{1:T}\uo\in \cC_T}\sum_{t=2}^T \nbr{\hat\bpi_t\uo-\hat\bpi_{t-1}\uo}_\infty\\
    \overset{(iii)}{\leq}& \EE\sbr{\sup_{\hat\bpi_{1:T}\uo\in \cC_T}\sum_{t=1}^{T/K}\sum_{s=t\cdot K-K+1}^{t\cdot K} f^m(\bpi_{s}\uo,\bpi_{s-m:s}\uno)- f^m(\hat\bpi_{t\cdot K}\uo,\bpi_{s-m:s}\uno)}+4C_m^\gamma T\\
    &+C_{\rm Lips} m^2\sum_{t=2}^T \EE\sbr{\nbr{\bpi_t\uo-\bpi_{t-1}\uo}_\infty}+C_{\rm Lips} (m^2+(m+1)K^2)\sup_{\hat\bpi_{1:T}\uo\in \cC_T}\sum_{t=2}^T \nbr{\hat\bpi_t\uo-\hat\bpi_{t-1}\uo}_\infty
\end{align*}
where we define $f^m(\bpi\uo,\bpi_{t-m:t}\uno) \coloneqq f^m(((\bpi\uo,\bpi_{t-m}\uno),(\bpi\uo,\bpi_{t-m+1}\uno),...,(\bpi\uo,\bpi_t\uno)))$. $(i)$ is by Lemma \ref{corollary:approximation}, and $(ii),(iii)$ are by Lemma \ref{lemma:Lipschitz} and Lemma \ref{lemma:accumulated-variation-bound}. Then,

\begin{align*}
    \EE[R_T]\leq& \EE\sbr{\sum_{t=1}^{T/K} \underbrace{\sup_{\hat\bpi\uo\in\cX\uo_\gamma}\sum_{s=t\cdot K-K+1}^{t\cdot K} f^m(\bpi_s\uo,\bpi_{s-m:s}\uno)- f^m(\hat\bpi\uo,\bpi_{s-m:s}\uno)}_{\circled{1}}}+4C_m^\gamma T\\
    &+C_{\rm Lips} m^2\sum_{t=2}^T \EE\sbr{\nbr{\bpi_t\uo-\bpi_{t-1}\uo}_\infty}+C_{\rm Lips} (m^2+(m+1)K^2)\sup_{\hat\bpi_{1:T}\uo\in \cC_T}\sum_{t=2}^T \nbr{\hat\bpi_t\uo-\hat\bpi_{t-1}\uo}_\infty.
\end{align*}
Since $\circled{1}$ is exactly the external regret of a non-convex function as in \cite{suggala2020online-ONCO-oracle}, we can directly use \cite[Theorem 1]{suggala2020online-ONCO-oracle}, then
\begin{align*}
    \EE[R_T]\leq& \eta C_{\rm Lips}^2 |\cA|^{2m+2}(1+m)^2 T+2\frac{|\cA|^{m+1}}{\eta}\frac{T}{K}+4C_m^\gamma T+C_{\rm Lips} (m^2+(m+1)K^2)P_T\\
    &+\alpha T + \beta |\cA|^{m+1}\rbr{\frac{1}{\eta} + C_{\rm Lips}(1+m)} T.
\end{align*}

By choosing $\eta=\frac{1}{\sqrt K}$ and $K=\rbr{\frac{T}{P_T}}^{0.4}$, we have
\begin{align*}
    &\EE\sbr{R_T}\\
    \leq& 4C_m^\gamma T+C_{\rm Lips}^2|\cA|^{2m+2}(m+1)^2P_T^{0.2}T^{0.8}+C_{\rm Lips}m^2 P_T\\
    &+C_{\rm Lips}(m+1)T^{0.8} P_T^{0.2}+T^{0.6}P_T^{0.4}+|\cA|^{m+1}T^{0.8} P_T^{0.2}\\
    &+\alpha T + \beta |\cA|^{m+1}\rbr{\rbr{\frac{T}{P_T}}^{0.2} + C_{\rm Lips}(1+m)} T\\
    =&4C_m^\gamma T+\rbr{C_{\rm Lips}^2|\cA|^{2m+2}(1+m)^2+C_{\rm Lips}(m+1)+|\cA|^{m+1}}T^{0.8} P_T^{0.2}+C_{\rm Lips}m^2 P_T+T^{0.6}P_T^{0.4}\\
    &+\alpha T + \beta |\cA|^{m+1}\rbr{\rbr{\frac{T}{P_T}}^{0.2} + C_{\rm Lips}(1+m)} T.
\end{align*}
Therefore, we have
\begin{align*}
    \EE\sbr{R_T}\leq O \rbr{ T^{0.8}P_T^{0.2} + \alpha T + \beta \frac{T^{1.2}}{P_T^{0.2}} + C_m^\gamma T}.
\end{align*}
For any desired accuracy $\epsilon>0$, when the optimization oracle is accurate enough ($\alpha,\beta$ small enough) and by picking $m=O\rbr{\log \frac{1}{\epsilon}}$, we can achieve $\EE\sbr{\frac{R_T}{T}}\leq \epsilon$.
\qed

\section{Minimization of {\tt Local RP-Regret}}

\subsection{Hardness Results}
\label{appendix:hardness-local-RP-Regret}

\begin{lemma}[Lemma \ref{lemma:wo1w2} for {\tt Local RP-Regret}]
\label{lemma:local-wo1w2}
When we only have Condition \ref{assumption:imperfect-memory} on both the comparator and the opponent, we will get $\Omega(T)$ {\tt Local RP-Regret} in the worst case.
\proof
Consider a two-player coin-flipping game shown in Table \ref{table:coin-tossing}. The opponent (the column player) can flip the coin to any side they want and our player (the row player) needs to guess which side the coin is, with loss $0$ for a correct guess and loss $1$ for a wrong guess. In this case, without Condition \ref{assumption:1-variation}, the opponent can choose a deterministic sequence adversarially so that they flip the coin to a side that we guess with probability no larger than $0.5$. Hence, we get linear regret. Because the comparator can guess this fixed sequence correctly every time, we have $J_T(\bpi_{1:T})-J_T(\tilde \bpi_{1:T}^{(1),s},\bpi_{1:T}\uno)\geq 0.5$ for all $s=1,2,...,T$ so that $R_T^{\rm local}\geq 0.5 T$. In this case, neither the opponent nor the comparator needs any memory so that they both satisfy Condition \ref{assumption:imperfect-memory}.\qed
\end{lemma}

\begin{lemma}[Lemma \ref{lemma:w1wo2} for {\tt Local RP-Regret}]
With Condition \ref{assumption:1-variation} on the comparator and Condition \ref{assumption:imperfect-memory} only for the opponent, we will get $\Omega(T)$ {\tt Local RP-Regret} in the worst case.
\proof
We consider the same coin-flipping game in Table \ref{table:coin-tossing}.
Since the comparator is not subject to Condition \ref{assumption:imperfect-memory}, its fixed strategy can behave differently at different timesteps by noticing the different lengths of history. In particular, the comparator can choose the fixed strategy $\pi\uo(g(L(\bh)+1)\given \bh)\equiv 1$ where $g\colon \cbr{1,2,...,T}\to\{{\rm Guess~Up},{\rm Guess~Down}\}$ at all timesteps. Back to the coin-flipping game, the comparator can deterministically guess up or down at every timestep by letting $g(t)$ take different values.

At the same time, the opponent can adversarially flip the coin as in Lemma \ref{lemma:local-wo1w2} so that we will get $J_T(\bpi_{1:T})-J_T(\tilde \bpi_{1:T}^{(1),s},\bpi_{1:T}\uno)\geq 0.5$ for all $s=1,2,...,T$ as in the proof of Lemma \ref{lemma:local-wo1w2}. In this case, the strategy of the comparator is fixed and the opponent is subject to Condition \ref{assumption:imperfect-memory}, but $R_T^{\rm local}\geq 0.5T$. \qed

\end{lemma}

\begin{lemma}[Lemma \ref{lemma:w1wo2-bound} for {\tt Local RP-Regret}]
When only the comparator satisfies both Condition \ref{assumption:1-variation} and Condition \ref{assumption:imperfect-memory}, there is $\Omega(T)$ {\tt Local RP-Regret}.
\proof

Consider the same augmented Prisoner's Dilemma construction as in the proof of Lemma \ref{lemma:w1wo2-bound}. As noted there, the matrix may be affinely rescaled to $[0,1]$ without changing the linear-regret conclusion. At timestep $1$, the opponent chooses $M_1$ if player $1$ plays $C$ with probability less than $1/2$, and chooses $M_2$ otherwise. Then, against the actually played strategy of player $1$, the opponent plays $D$ with probability at least $1/2$ at every timestep $t\geq 2$, and player $1$'s expected loss at each such timestep is at least $-5/2$.

Now choose the comparator after the opponent strategy has been fixed: if the opponent chose $M_1$, the comparator always plays $C$. If the opponent chose $M_2$, the comparator always plays $D$. This comparator is time-invariant and history-independent, so it satisfies Condition \ref{assumption:1-variation} and Condition \ref{assumption:imperfect-memory}. Consider the single local deviation at timestep $s=1$ to this comparator. Under this deviation, the opponent is driven to play $C$ from timestep $2$ onward, so the deviating player receives loss at most $-3$ at every timestep $t\geq2$. Hence
\begin{align*}
    J_T(\bpi_{1:T})-J_T(\tilde \bpi_{1:T}^{(1),1},\bpi_{1:T}\uno)\geq \frac{1}{2}(T-1),
\end{align*}
and therefore $R_T^{\rm local}\geq \Omega(T)$.
\qed

\end{lemma}

\subsection{Proof of Theorem \ref{theorem:local-regret}}
\label{appendix:proof-of-local-regret-theorem}

In this section, we will use $\bpi\uo_{1:T}$ as the strategy of player 1 generated by the regret minimizer and $\bpi\uno_{1:T}$ to denote the strategy generated by the adversary. Also, we will use $\bpi_{1:T}$ to denote the joint strategy.

Therefore, according to \Cref{eq:local-loss-def},
\begin{align*}
    f_t^{t-1,{\rm local}}(\bar \bpi\uo_{1:t})=(T-t) f^{t-1}(\bpi\uo_{1:t},\bpi\uno_{1:t})+\sum_{s=1}^t f^{t-1}((\bpi\uo_{1:s-1}, \bar\bpi\uo_s, \bpi\uo_{s+1:t}),\bpi\uno_{1:t}).
\end{align*}
and
$f_t^{t-1,{\rm local}}(\bpi_{1:t}\uo)=Tf^{t-1}(\bpi_{1:t})$. Then we have
\begin{align}
    R_T^{\rm local}=\sum_{t=1}^T f^{t-1,\rm local}_t (\bpi_{1:t}\uo)-\min_{\hat\bpi\uo_{1:T}\in\cC_T\uo}\sum_{t=1}^T f^{t-1,\rm local}_t (\hat\bpi_{1:t}\uo).
\end{align}

For any $\hat \bpi\uo_{1:t}$, we can still get
\begin{align}
    &f_t^{t-1,{\rm local}}(\bpi\uo_{1:t})-f_t^{t-1,{\rm local}}(\hat \bpi\uo_{1:t})\notag\\
    =& \sum_{s=1}^{t-m-1} f^{t-1}(\bpi\uo_{1:t},\bpi\uno_{1:t})-f^{t-1}((\bpi\uo_{1:s-1}, \hat\bpi\uo_s, \bpi\uo_{s+1:t}),\bpi\uno_{1:t})\label{eq:local-proof-eq1-1-line}\\
    &+\sum_{s=t-m}^t f^{t-1}(\bpi\uo_{1:t},\bpi\uno_{1:t})-f^{t-1}((\bpi\uo_{1:s-1}, \hat\bpi\uo_s, \bpi\uo_{s+1:t}),\bpi\uno_{1:t}).\label{eq:local-proof-eq1-2-line}
\end{align}
For \Cref{eq:local-proof-eq1-1-line}, we have
\begin{align*}
    &\sum_{s=1}^{t-m-1}\abr{f^{t-1}((\bpi\uo_{1:s-1}, \hat\bpi\uo_s, \bpi\uo_{s+1:t}),\bpi\uno_{1:t})-f^{t-1}(\bpi\uo_{1:t},\bpi\uno_{1:t})}\\
    \leq& \sum_{s=1}^{t-m-1}\Big(\abr{f^{t-1}((\bpi\uo_{1:s-1}, \hat\bpi\uo_s, \bpi\uo_{s+1:t}),\bpi\uno_{1:t})-f^{t-s}(\bpi\uo_{s+1:t}, \bpi\uno_{s+1:t})}\\
    &+\abr{f^{t-s}(\bpi\uo_{s+1:t}, \bpi\uno_{s+1:t})-f^{t-1}(\bpi\uo_{1:t},\bpi\uno_{1:t})}\Big)\\
    \overset{(i)}{\leq}& 4\sum_{s=1}^{t-m-1} C_{t-s}^\gamma \overset{(ii)}{\leq} 4N C_m^\gamma,
\end{align*}
where $(i)$ is by Lemma \ref{corollary:approximation} and $(ii)$ is by the definition of $C_m^\gamma$, and by the condition $\gamma\leq \frac{1}{2(N+2)}$, which ensures Lemma \ref{corollary:approximation} holds. For \Cref{eq:local-proof-eq1-2-line}, we have
\begin{align*}
    &\sum_{s=t-m}^t f^{t-1}(\bpi\uo_{1:t},\bpi\uno_{1:t})-f^{t-1}((\bpi\uo_{1:s-1}, \hat\bpi\uo_s, \bpi\uo_{s+1:t}),\bpi\uno_{1:t})\\
    \leq& \sum_{s=t-m}^t f^m(\bpi\uo_{t-m:t},\bpi\uno_{t-m:t}) - f^m((\bpi\uo_{t-m:s-1}, \hat\bpi\uo_s, \bpi\uo_{s+1:t}),\bpi\uno_{t-m:t})\\
    &+ \sum_{s=t-m}^t \abr{f^{t-1}((\bpi\uo_{1:s-1}, \hat\bpi\uo_s, \bpi\uo_{s+1:t}),\bpi\uno_{1:t})-f^m((\bpi\uo_{t-m:s-1}, \hat\bpi\uo_s, \bpi\uo_{s+1:t}),\bpi\uno_{t-m:t})}\\
    &+\sum_{s=t-m}^t\abr{f^{t-1}(\bpi\uo_{1:t},\bpi\uno_{1:t})-f^m(\bpi\uo_{t-m:t},\bpi\uno_{t-m:t})}\\
    \leq& \sum_{s=t-m}^t f^m(\bpi\uo_{t-m:t},\bpi\uno_{t-m:t}) - f^m((\bpi\uo_{t-m:s-1}, \hat\bpi\uo_s, \bpi\uo_{s+1:t}),\bpi\uno_{t-m:t})+4(m+1)C_m^\gamma.
\end{align*}

For notational simplicity, let $\hat\bpi_{1:T}\uo=\argmin_{\bar\bpi\uo_{1:T}\in\cC_T\uo} \sum_{t=1}^T f^{t-1,\rm local}_t (\bar\bpi_{1:t}\uo)$. Then,
\begin{align*}
    R_T^{\rm local}=&\sum_{t=1}^T f^{t-1,\rm local}_t (\bpi_{1:t}\uo)-\sum_{t=1}^T f^{t-1,\rm local}_t (\hat\bpi_{1:t}\uo)\\
    \leq& \sum_{t=m+1}^T\sum_{s=t-m}^t\rbr{f^m(\bpi\uo_{t-m:t},\bpi\uno_{t-m:t}) - f^m((\bpi\uo_{t-m:s-1}, \hat\bpi\uo_s, \bpi\uo_{s+1:t}),\bpi\uno_{t-m:t})}+4(N+m+1)C_m^\gamma T\\
    &+\sum_{t=1}^m \abr{f^{t-1,\rm local}_t (\bpi_{1:t}\uo)-f^{t-1,\rm local}_t (\hat\bpi_{1:t}\uo)}\\
    \overset{(i)}{\leq}& \sum_{t=m+1}^T\sum_{s=t-m}^t\rbr{f^m((\bpi\uo_{t-m:s-1}, \bpi\uo_t, \bpi\uo_{s+1:t}),\bpi\uno_{t-m:t}) - f^m((\bpi\uo_{t-m:s-1}, \hat\bpi\uo_t, \bpi\uo_{s+1:t}),\bpi\uno_{t-m:t}) }\\
    &+4(N+m+1)C_m^\gamma T+m^2+C_{\rm Lips}\sum_{t=m+1}^T \sum_{s=t-m}^{t-1}\rbr{\nbr{\bpi\uo_s-\bpi\uo_t}_\infty+\nbr{\hat\bpi\uo_s-\hat\bpi\uo_t}_\infty}\\
    \overset{(ii)}{\leq}& \sum_{t=m+1}^T\sum_{s=t-m}^t\rbr{f^m((\bpi\uo_{t-m:s-1}, \bpi\uo_t, \bpi\uo_{s+1:t}),\bpi\uno_{t-m:t}) - f^m((\bpi\uo_{t-m:s-1}, \hat\bpi\uo_t, \bpi\uo_{s+1:t}),\bpi\uno_{t-m:t}) }\\
    &+4(N+m+1)C_m^\gamma T+m^2+C_{\rm Lips}m^2\sum_{t=m+2}^T \rbr{\nbr{\bpi\uo_{t-1}-\bpi\uo_t}_\infty+\nbr{\hat\bpi\uo_{t-1}-\hat\bpi\uo_t}_\infty}
\end{align*}
where $(i)$ uses Lemma \ref{lemma:Lipschitz} and $(ii)$ uses Lemma \ref{lemma:accumulated-variation-bound}.

Notice that $\sum_{s=t-m}^t f^m((\bpi\uo_{t-m:s-1}, \bar\bpi\uo_t, \bpi\uo_{s+1:t}),\bpi\uno_{t-m:t})$ is a linear function with respect to $\bar\bpi_t\uo$. So,
\begin{align*}
    &\sum_{t=m+1}^T\sum_{s=t-m}^t\rbr{f^m((\bpi\uo_{t-m:s-1}, \bpi\uo_t, \bpi\uo_{s+1:t}),\bpi\uno_{t-m:t}) - f^m((\bpi\uo_{t-m:s-1}, \hat\bpi\uo_t, \bpi\uo_{s+1:t}),\bpi\uno_{t-m:t}) }\\
    =& \sum_{t=m+1}^T\inner{\bg_t}{\bpi\uo_t-\hat\bpi\uo_t}
\end{align*}
where, for $t\geq m+1$,
\begin{align*}
    \bg_t\coloneqq \nabla_{\bar\bpi\uo}\left.\sum_{s=t-m}^t f^m((\bpi\uo_{t-m:s-1},\bar\bpi\uo,\bpi\uo_{s+1:t}),\bpi\uno_{t-m:t})\right|_{\bar\bpi\uo=\bpi_t\uo}.
\end{align*}
We set $\bg_t=0$ for $t\leq m$.

Therefore, by Lemma \ref{lemma:PGD-upper-bound}, we have
\begin{align*}
    R_T^{\rm local}\leq& \frac{\eta}{2}\sum_{t=m+1}^T\nbr{\bg_t}^2+\frac{1}{2\eta}\nbr{\bpi_{m+1}\uo-\hat\bpi_{m+1}\uo}^2+\frac{2|\cH_{m+1}|}{\eta} \sum_{t=m+2}^T \nbr{\hat\bpi_{t-1}-\hat\bpi_t}_\infty\\
    &+4(N+m+1)C_m^\gamma T+m^2+C_{\rm Lips}m^2\sum_{t=m+2}^T \rbr{\nbr{\bpi\uo_{t-1}-\bpi\uo_t}_\infty+\nbr{\hat\bpi\uo_{t-1}-\hat\bpi\uo_t}_\infty}
\end{align*}
since $D_1= 2|\bigcup_{k=0}^m\cH_k|\leq 2|\cH_{m+1}|$. By
\begin{align*}
    \nbr{\bg_t}\leq& \nbr{\bg_t}_1\\
    =& \sum_{\bh_1\in\bigcup_{k=0}^m \cH_k,a_1\in\cA_1}\abr{ \sum_{\bh_2\in\cH_{m+1},\bh_{2,1:L(\bh_1)}=\bh_1,h_{2,L(\bh_1)+1,1}=a_1}\cL_1(\bh_{2,m+1})\Pr(\bh_2|h_{2,L(\bh_1)+1,1}=a_1;\bpi_{t-m:t})}\\
    \leq& \sum_{\bh_2\in\cH_{m+1}}\sum_{s=0}^m \Pr(\bh_2|h_{2,s+1,1};\bpi_{t-m:t})=(m+1) |\cA_1|.
\end{align*}
Note that we use $h_{2,s,1}\in\cA_1$ to denote the action player 1 played at timestep $s$ and $\Pr(\bh_2|h_{2,s+1,1};\bpi_{t-m:t})$ means the probability that $\bh_2$ occurs when playing $\bpi_{t-m:t}$ conditioned on observing $h_{2,s+1,1}\in\cA_1$ at timestep $s+1$.

Then, for any $\hat\bpi_{1:T}$, we have
\begin{align*}
    R_T^{\rm local} 
    \overset{(i)}{\leq}& \frac{\eta}{2} (m+1)^2 |\cA_1|^2 T+\frac{2|\cH_{m+1}|^2}{\eta}+\frac{2|\cH_{m+1}|}{\eta} \sum_{t=m+2}^T \nbr{\hat\bpi_{t-1}-\hat\bpi_t}_\infty\\
    &+4(N+m+1)C_m^\gamma T+m^2+C_{\rm Lips}m^2\sum_{t=m+2}^T \nbr{\hat\bpi\uo_{t-1}-\hat\bpi\uo_t}_\infty+C_{\rm Lips}m^2(m+1) |\cA_1| \eta T
\end{align*}
where $(i)$ is because $\bpi_t\uo=\Proj{\cX\uo_\gamma}{\bpi_{t-1}\uo-\eta \bg_t}$ and $\nbr{\bg_t}\leq (m+1) |\cA_1|$. 
Note that by definition of $\cC_T\uo$ and $\hat\bpi\uo_{1:T}\in\cC_T\uo$, $\sum_{t=m+2}^T \nbr{\hat\bpi\uo_{t-1}-\hat\bpi\uo_t}_\infty\leq P_T$. By choosing $\eta=\sqrt{\frac{2|\cH_{m+1}|\rbr{|\cH_{m+1}|+P_T}}{\rbr{C_{\rm Lips}m^2(m+1) |\cA_1|+(m+1)^2|\cA_1|^2/2}T}}$, we have
\begin{align*}
    R_T^{\rm local}\leq& 2\sqrt{\rbr{2|\cH_{m+1}|\rbr{|\cH_{m+1}|+P_T}}\rbr{\rbr{C_{\rm Lips}m^2(m+1) |\cA_1|+(m+1)^2|\cA_1|^2/2}T}}\\
    &\qquad+4(N+m+1)C_m^\gamma T+m^2+C_{\rm Lips}m^2 P_T.\qedhere
\end{align*}

\section{Proof of Theorem \ref{theorem:occupancy-measure-RM-informal}}
\label{appendix:proof-of-regret-MDP-theorem}

\begin{algorithm}[h]
\caption{Regret Minimizer by Optimizing Occupancy Measure}
\label{alg:self-play}
\begin{algorithmic}
\State{Initialize $\pi_0\uo(\cdot\given \bh)\in\Delta_{|\cA_1|}$ as the uniform distribution over $a_1\in\cA_1$} %
\State{Initialize the corresponding $\bq_1$ in the MG with strategies $\cbr{\bpi_0\uo,\bpi_0^{(2)},...,\bpi_0^{(N)}}$.}

\State{Initialize the Policy Gradient Descent with Constraints in Algorithm \ref{algorithm:PGD-with-constraint}.}

\State{Initialize the convex set $\cX$ that $\bq$ lies in.}%
\begin{align}
    \forall \bq\in\cX,~~~\begin{cases}
        &\forall i=1,2,...,N, \bh\in\cH_M, a_i\in\cA_i,~~~\frac{\gamma}{|\cA_i|}\sum_{\ba'\in\cA} q(\bh,\ba')-\sum_{\ba_{-i}'\in\cA_{-i}}q(\bh,(a_i,\ba_{-i}'))\leq 0\\
        &\forall \bh\in\cH_M,~~~\frac{\gamma^{NM}}{|\cA|^M}-\sum_{\ba\in\cA} q(\bh,\ba)\leq 0\\
        &\sum_{\bh\in\cH_M,\ba\in\cA} q(\bh,\ba)=1,~~~~\bq\succeq 0\\
        &\forall \bh\in\cH_M,~~~\sum_{\ba\in\cA} q(\bh,\ba)=\sum_{\ba\in\cA} q((\ba,\bh_{1:M-1}), \bh_M)
    \end{cases}\label{eq:def-convex-set-self-play}
\end{align}

\For{$t=1,2,...,T$}
\For{$i=1,2,...,N$}
\State{We define the following approximate player strategy $\bpi_t^{[i]}$ at timestep $t$ as, }

\Comment{$\bpi_t^{[i]}\equiv \bpi_t\uo$ since player 1's strategy is determined by the algorithm}
\begin{align}
    \forall \bh\in\cH_M,a_i\in\cA_i,~~~\pi^{[i]}_t(a_i\given \bh)\coloneqq \frac{\sum_{\ba_{-i}'\in\cA_{-i}}q_t(\bh,(a_i,\ba_{-i}'))}{\sum_{\ba'\in\cA}q_t(\bh,\ba')}\label{eq:def-self-play-strategy-i}
\end{align}

\EndFor

\State{Receive constraint $g_t$ as}
\begin{align}
    &g_t(\bq) \coloneqq \sum_{\bh\in\cH_M,\ba\in\cA} (-1)^{\ind(D_t(\bh,\ba,\bq_t)\leq 0)}D_t(\bh,\ba,\bq)\leq 0\\
    &D_t(\bh,\ba,\bq)\coloneqq q(\bh,\ba)-\pi_t^{(-1)}(\ba_{-1}\given \bh)\sum_{\ba_{-1}'\in\cA_{-1}} q(\bh,(a_1,\ba_{-1}')).\notag
\end{align}

\State{Run update-rule Eq.  \eqref{eq:update-constraint-variable} and \Cref{eq:update-constraint-lambda} in Algorithm \ref{algorithm:PGD-with-constraint} to obtain $\bq_{t+1}$ from $\bq_t$. }

\EndFor
\end{algorithmic}
\end{algorithm}

\subsection{Important Lemmas}\label{sec:important_lemmas}

Here's the performance difference lemma for average-reward MDPs. For convenience, we will use the shorthand $\EE_{\bpi}$ to denote $\EE_{\ba_t\sim\bpi(\cdot\given \bh_t), s_{t+1}\sim \Pr(\cdot\given \bh_t,\ba_t)}$. Also, we define
\begin{align*}
    \rho^{\bpi} \coloneqq \lim_{T\to+\infty}\frac{1}{T}\EE_{\bpi}\left[\sum_{t=0}^{T-1} \cL_1(\bh_t,\ba_t)\right]
\end{align*}
where $\Pr$ is the transition probability and $\cL_1$ is the loss for player $1$. It is easy to see from the proof of Lemma \ref{lemma:MDP-average-iterate}, the initial state $\bh_0$ does not affect the value of $\rho^{\bpi}$ so we omit the initial state here. Also, in this section, we assume that we are controlling player $1$ so we will omit the subscript $1$ in the following.

\begin{lemma}[Performance Difference Lemma \citep{cao1999single-performance-difference-lemma-average-reward}]
\label{lemma:performance-difference-lemma}
    Consider the MG which is aperiodic unichain. For any $\bh_0\in\cS$ and strategies $\bpi_1,\bpi_2$, we have
    \begin{align}
        \rho^{\bpi_2}-\rho^{\bpi_1}=&\sum_{\bh\in\cS}d^{\bpi_2}(\bh)\sum_{\ba\in\cA} \pi_2(\ba\given \bh)(Q^{\bpi_1}(\bh,\ba)-V^{\bpi_1}(\bh))\\
        =&\sum_{\bh\in\cS}d^{\bpi_2}(\bh)\sum_{\ba\in\cA} (\pi_2(\ba\given \bh)-\pi_1(\ba\given \bh))Q^{\bpi_1}(\bh,\ba),\notag%
    \end{align}
    where
    \begin{align}
        &Q^{\bpi}(\bh,\ba)=\EE_{\bh_0=\bh,\ba_0=\ba,\bpi}[\sum_{t=0}^\infty \big(\cL(\bh_t,\ba_t)-\rho^{\bpi}\big)]\\
        &V^{\bpi}(\bh)=\sum_{\ba\in\cA} \pi(\ba\given \bh)Q^{\bpi}(\bh,\ba)\label{eq:def-V-function}\\
        &d^{\bpi}(\bh)=\lim_{T\to+\infty}\frac{1}{T}\EE_{\bpi}\left[\sum_{t=0}^{T-1}\ind(\bh_t=\bh)\right].
    \end{align}
\end{lemma}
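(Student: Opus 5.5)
The plan is to prove the lemma through the average-reward Poisson (Bellman) equation for $\bpi_1$, and then to average that equation against the stationary distribution of $\bpi_2$.

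\textbf{Step 1: the chains are well behaved.} Since the induced chain under $\bpi_1$ (and under $\bpi_2$) is an aperiodic unichain on the finite state space $\cS=\cH_M$, the state distribution converges geometrically fast to a unique stationary distribution, whatever the initial state. From this I will read off three facts:
\begin{itemize}
\item $\rho^{\bpi}$ exists, does not depend on $\bh_0$, and equals $\sum_{\bh}d^{\bpi}(\bh)\sum_{\ba}\pi(\ba\given\bh)\cL(\bh,\ba)$.
\item $d^{\bpi}$ coincides with this stationary distribution.
\item $\EE[\cL(\bh_t,\ba_t)]-\rho^{\bpi}$ decays geometrically in $t$, so the series defining $Q^{\bpi}$ converges absolutely and $Q^{\bpi}$ is well defined and bounded.
\end{itemize}
This is the step I expect to be the main obstacle. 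It needs a standard Markov-chain convergence argument (e.g.\ \citep{norris1998markov}); in our setting it can alternatively be taken from the contraction argument behind Lemma \ref{lemma:finite-f-to-infinite}.

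\textbf{Step 2: Poisson equation for $\bpi_1$.} I will split off the $t=0$ term in the definition of $Q^{\bpi_1}$ and apply the Markov property. Absolute convergence lets me exchange expectation and sum, which gives
\begin{align*}
Q^{\bpi_1}(\bh,\ba)=\cL(\bh,\ba)-\rho^{\bpi_1}+\sum_{\bh'}\Pr(\bh'\given\bh,\ba)V^{\bpi_1}(\bh'),
\end{align*}
with $V^{\bpi_1}$ as in \eqref{eq:def-V-function}.

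\textbf{Step 3: average against $\bpi_2$.} I multiply the Poisson equation by $d^{\bpi_2}(\bh)\pi_2(\ba\given\bh)$ and sum over $(\bh,\ba)$, treating the three resulting terms separately.
\begin{itemize}
\item By Step 1 applied to $\bpi_2$, the loss term equals $\rho^{\bpi_2}$.
\item The constant term equals $-\rho^{\bpi_1}$, because $d^{\bpi_2}$ and $\pi_2(\cdot\given\bh)$ are probability distributions.
\item For the transition term, stationarity of $d^{\bpi_2}$ gives $\sum_{\bh,\ba}d^{\bpi_2}(\bh)\pi_2(\ba\given\bh)\Pr(\bh'\given\bh,\ba)=d^{\bpi_2}(\bh')$. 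Hence this term equals $\sum_{\bh'}d^{\bpi_2}(\bh')V^{\bpi_1}(\bh')$.
\end{itemize}
Moving the last term to the left-hand side and using $\sum_{\ba}\pi_2(\ba\given\bh)=1$, so that $V^{\bpi_1}(\bh)=\sum_{\ba}\pi_2(\ba\given\bh)V^{\bpi_1}(\bh)$, yields the first identity of the lemma.

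\textbf{Step 4: second identity.} By \eqref{eq:def-V-function}, $V^{\bpi_1}(\bh)=\sum_{\ba}\pi_1(\ba\given\bh)Q^{\bpi_1}(\bh,\ba)$. Substituting this into the first identity gives the second form immediately.

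All sums are over finite sets and $Q^{\bpi_1}$ is bounded, so no interchange issues arise apart from the one already handled in Step 2.
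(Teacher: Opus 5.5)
Your proposal is correct and takes the standard route. The paper does not prove this lemma itself; it imports it from \citet{cao1999single-performance-difference-lemma-average-reward}, whose derivation is the same as yours: write the Poisson equation for $\bpi_1$, average it against $d^{\bpi_2}$, and use the stationarity $d^{\bpi_2}P^{\bpi_2}=d^{\bpi_2}$ to turn the transition term into $\sum_{\bh}d^{\bpi_2}(\bh)V^{\bpi_1}(\bh)$. One point is worth stating explicitly: $Q^{\bpi}$ has to be read as the series $\sum_{t}\bigl(\EE[\cL(\bh_t,\ba_t)]-\rho^{\bpi}\bigr)$ of expected centered losses, since the pathwise series generally does not converge; this is exactly what your geometric-mixing Step 1 makes well defined and what your first-step decomposition in Step 2 uses.
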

From Lemma \ref{lemma:always-reach}, the MG induced by $M$-bounded length memory defined in Definition \ref{def:induced-MDP} is aperiodic unichain. Under such a condition, $d^{\bpi}$ is fixed regardless of the initial state $\bh_0$.

\begin{lemma}[Upper Bound of $Q^{\bpi}(\bh,\ba)$]
\label{lemma:upper-bound-Q}
    The corresponding $Q^{\bpi}(\bh,\ba)$ defined in Lemma \ref{lemma:performance-difference-lemma} is bounded by
    \begin{align*}
        |Q^{\bpi}(\bh,\ba)|\leq 1+2\frac{M\useconstant{constant:go-back-root-length}}{\delta}\eqqcolon C_Q
    \end{align*}
    for any policy $\bpi$,
    where
    \begin{align*}
        \delta=(\frac{\gamma^N}{|\cA|})^{M\useconstant{constant:go-back-root-length}}.
    \end{align*}
\end{lemma}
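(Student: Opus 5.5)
We will bound $Q^{\bpi}(\bh,\ba)$ by a mixing-time argument for the induced Markov chain on $\cS=\cH_M$. The key fact we use is the one underlying Lemma \ref{lemma:finite-f-to-infinite}: under Condition \ref{assumption:2-forgetful-prime}, every joint action has probability at least $\gamma^N/|\cA|$ at every state. Each player $i$ plays $a_i$ with probability at least $\gamma/|\cA_i|$ when $\bnu\ui$ is uniform, and the product over $i$ gives at least $\gamma^N/|\cA|$. So any prescribed sequence of $L$ joint actions is realized with probability at least $(\gamma^N/|\cA|)^L$.

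\textbf{Coupling step.} Fix a reference state $\bh^\star$ and a block length $L=M\useconstant{constant:go-back-root-length}$ (any $L\ge M$ would do, since after $M$ steps the state is exactly the last $M$ joint actions). From any state, playing the $M$ joint actions forming $\bh^\star$, preceded by arbitrary actions, reaches $\bh^\star$ at step $L$ with probability at least $\delta=(\gamma^N/|\cA|)^{M\useconstant{constant:go-back-root-length}}$. This gives a Doeblin minorization: two copies of the chain started at any two states can be coupled to coalesce within each block of length $L$ with probability at least $\delta$. Hence the coupling time $\tau$ satisfies $\Pr(\tau>kL)\le(1-\delta)^k$, so $\EE[\tau]\le L/\delta$. (The constant $\useconstant{constant:go-back-root-length}$ is simply inherited from Lemma \ref{lemma:finite-f-to-infinite}; it makes the bound uniform with that lemma.)

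\textbf{Bounding $Q$.} Write $Q^{\bpi}(\bh,\ba)=\cL(\bh,\ba)-\rho^{\bpi}+\EE[V^{\bpi}(\bh_1)]$. The first two terms together have absolute value at most $1$, since losses lie in $[0,1]$ and hence so does $\rho^{\bpi}$. For the third term, we write $V^{\bpi}(\bh')=\sum_{t\ge0}\big(\EE_{\bh'}[\cL_t]-\rho^{\bpi}\big)$, which converges by geometric ergodicity. Since $\rho^{\bpi}=\sum_{\bh''}d^{\bpi}(\bh'')\EE_{\bh''}[\cL_t]$ for every $t$, each summand is bounded by a total variation distance: $|\EE_{\bh'}[\cL_t]-\rho^{\bpi}|\le \Pr(\tau>t)$ under a coupling of the chain from $\bh'$ with the stationary chain. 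Summing over $t$ gives $|V^{\bpi}(\bh')|\le\EE[\tau]\le L/\delta$.

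To get the stated constant $2M\useconstant{constant:go-back-root-length}/\delta$, we proceed more crudely. We sum block by block, bounding each of the $L$ terms in block $k$ by $(1-\delta)^k$, which yields $L\sum_k(1-\delta)^k=L/\delta$. Allowing a factor $2$ (from $|\EE_{\bh'}[\cL_t]-\EE_{\bh''}[\cL_t]|\le 2\,\mathrm{TV}$, or from the initial block) gives $2L/\delta$. Combining the two parts, $|Q^{\bpi}(\bh,\ba)|\le 1+2M\useconstant{constant:go-back-root-length}/\delta=C_Q$.

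\textbf{Main obstacle.} The step needing the most care is the well-definedness of $Q$ as a convergent, not merely Cesàro, sum, together with the uniform minorization. Both follow from the block coupling, because the geometric tail $(1-\delta)^k$ gives absolute summability. We will also check that the per-step probability bound holds uniformly over histories for every player, including player 1's occupancy-derived strategy. This is guaranteed by the first constraint in \Cref{eq:def-convex-set-self-play} together with Condition \ref{assumption:2-forgetful-prime} for the opponents.
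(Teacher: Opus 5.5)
Your proof is correct. At its core it uses the same Doeblin minorization as the paper, but it reaches and exploits that minorization differently in two respects.

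\emph{How the minorization is obtained.} You get $(\cP^{\bpi})^{L}_{\bh,\bh^\star}\ge\delta$ in one line: under Condition \ref{assumption:2-forgetful-prime} with uniform exploration, every joint action has probability at least $\gamma^N/|\cA|$. This uses the product structure of $\bpi$, as does the paper's Lemma \ref{lemma:common-successor-state}. The paper instead builds the minorization from the common-successor property via the rooted-tree and Frobenius-number arguments of Lemmas \ref{lemma:log-path} and \ref{lemma:always-reach}. Those arguments only need the weaker Condition \ref{assumption:L1-forgetful}, and that is the reason the $\log_2^2|\cH_M|$ constant appears at all. So your route covers the setting where the lemma is actually used (Theorem \ref{theorem:occupancy-measure-RM-formal}), but not the broader condition the paper's proof supports. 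In exchange, under your hypothesis a block length of $M$ already suffices, which gives the sharper $|Q^{\bpi}|\le 1+M(|\cA|/\gamma^N)^M$.

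\emph{How the minorization is used.} The paper writes $(\cP^{\bpi})^{L}=\delta\cU+(1-\delta)\hat\cP^{\bpi}$ and expands $\mu(\cP^{\bpi})^t$ and $\rho^{\bpi}$ as series (Lemma \ref{lemma:MDP-last-iterate}). It then bounds the two pieces of each difference separately by $(1-\delta)^{\lfloor t/L\rfloor}$. You instead couple the chain with a stationary copy and bound $|\EE_{\bh'}[\cL_t]-\rho^{\bpi}|\le\Pr(\tau>t)$ directly. This is the probabilistic reading of the same decomposition, but it yields $|Q^{\bpi}|\le 1+L/\delta$. That shows the factor $2$ in $C_Q$ is an artifact of the paper's triangle inequality.

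Consequently, your paragraph ``to get the stated constant we proceed more crudely'' is unnecessary, since $1+L/\delta\le C_Q$ already finishes the proof. Its vague justification of the factor $2$ can simply be dropped.
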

The proof is postponed to Appendix \ref{sec:MDP-contraction-property}.
By Lemma \ref{lemma:performance-difference-lemma}, we can prove that $\rho^{\bpi}$ is Lipschitz continuous.

\begin{lemma}[Lipschitz Continuity of $\rho^{\bpi}$]
\label{lemma:Lipschitz-MDP-value}
For any two strategy profiles $\bpi_1,\bpi_2$, we have
\begin{align*}
    \abr{\rho^{\bpi_2}-\rho^{\bpi_1}}\leq C_Q\max_{\bh\in\cH_M}\nbr{\pi_2(\cdot\given \bh)-\pi_1(\cdot\given \bh)}_1.
\end{align*}
    \proof
    \begin{align*}
        \abr{\rho^{\bpi_2}-\rho^{\bpi_1}}=&\abr{\sum_{\bh\in\cH_M} d^{\bpi_2}(\bh) \sum_{\ba\in\cA} \pi_2(\ba\given \bh) \big(Q^{\bpi_1}(\bh,\ba)-V^{\bpi_1}(\bh)\big)}\\
        =&\abr{\sum_{\bh\in\cH_M} d^{\bpi_2}(\bh) \sum_{\ba\in\cA} \big(\pi_2(\ba\given \bh)-\pi_1(\ba\given \bh)\big)\big(Q^{\bpi_1}(\bh,\ba)-V^{\bpi_1}(\bh)\big)}\\
        =&\abr{\sum_{\bh\in\cH_M} d^{\bpi_2}(\bh) \sum_{\ba\in\cA} \big(\pi_2(\ba\given \bh)-\pi_1(\ba\given \bh)\big)Q^{\bpi_1}(\bh,\ba)}\\
        \leq& \max_{\bh\in\cH_M}\sum_{\ba\in\cA} \abr{\pi_2(\ba\given \bh)-\pi_1(\ba\given \bh)}C_Q\\
        =&C_Q\max_{\bh\in\cH_M}\nbr{\pi_2(\cdot\given \bh)-\pi_1(\cdot\given \bh)}_1.
    \end{align*}
    The second line is by definition of $V^{\bpi}$ in \Cref{eq:def-V-function}. The third line is by the fact that $\forall \bh\in\cH_M, \sum_{\ba\in\cA}\pi_1(\ba\given \bh)=\sum_{\ba\in\cA}\pi_2(\ba\given \bh)=1$.\qed
\end{lemma}

\begin{lemma}[Lower Bound of $\bq^{\pi}$]
    \label{lemma:lower-bound-q}
    When $\pi\ui(a_i\given \bh)\geq\frac{\gamma}{|\cA_i|}$ for all $i=1,2,...,N$, we have
    \begin{align}
        &\forall \bh\in\cH_M,~~~~~~\sum_{\ba\in\cA}q^{\bpi}(\bh,\ba)\geq \frac{\gamma^{NM}}{|\cA|^M}\\
        &\forall \bh\in\cH_M,\ba\in\cA,~~~~~~q^{\bpi}(\bh,\ba)\geq \frac{\gamma^{N(M+1)}}{|\cA|^{M+1}}.
    \end{align}
    \proof
    Firstly, we have $\pi(\ba\given \bh)\geq\frac{\gamma^N}{|\cA|}$ for any $\ba\in\cA,\bh\in\cH_M$. Then, for any $\bh\in\cH_M,\ba\in\cA$,
    \begin{equation*}
        q^{\bpi}(\bh,\ba)=\bpi(\ba\given \bh)\sum_{\ba'\in\cA} q^{\bpi}(\bh,\ba')=\bpi(\ba\given \bh)d^{\bpi}(\bh)\overset{(i)}{\geq} \frac{\gamma^N}{|\cA|}\frac{\gamma^{NM}}{|\cA|^M}=\frac{\gamma^{N(M+1)}}{|\cA|^{M+1}}
    \end{equation*}
    where $(i)$ is because $\rbr{(\cP^{\bpi})^M}_{\bh_1,\bh_2}\geq \rbr{\frac{\gamma^N}{|\cA|}}^M$, $d^{\bpi}(\bh)=\rbr{d^{\bpi}(\cP^{\bpi})^M}_{\bh}\geq \rbr{\frac{\gamma^N}{|\cA|}}^M\sum_{\bh'\in\cH_M} d^{\bpi}(\bh')= \rbr{\frac{\gamma^N}{|\cA|}}^M$. Here, $\cP^{\bpi}$ is the state transition matrix induced from strategy $\bpi$. Specifically, we have $\cP^{\bpi}_{\bh_1,\bh_2}=\sum_{\ba\in\cA}\pi(\ba\given \bh_1)\Pr(\bh_2\given \bh_1,\ba)$. \qed
\end{lemma}

\subsection{Formal Version and Proof of Theorem  \ref{theorem:occupancy-measure-RM-informal}}
\label{appendix:proof-of-occupancy-measure-theorem}

\begin{theorem}[Formal Version of Theorem \ref{theorem:occupancy-measure-RM-informal}]
\label{theorem:occupancy-measure-RM-formal}
Suppose player $1$ runs Algorithm \ref{alg:self-play}. All players (including the comparator) satisfy Condition \ref{assumption:2-forgetful-prime} with $\bnu\ui$ as the uniform strategy over $\Delta_{|\cA_i|}$, respectively. Then, we have
\begin{align}
    R_T\leq& \cU_1+(C_Q+2C_{\rm Lips}K^2)\frac{|\cA|^{M+1}}{\gamma^{N(M+1)}|\cA_{-1}|}\cU_2+2K+C_{\rm Lips} K^2 \Delta_T+4\rbr{1-\delta}^{\floor{\frac{K}{M\useconstant{constant:go-back-root-length}}}}T,
\end{align}
where 
\begin{align}
    &\delta\coloneqq(\frac{\gamma^N}{|\cA|})^{M\useconstant{constant:go-back-root-length}}\\
    &C_Q\coloneqq 1+2\frac{M\useconstant{constant:go-back-root-length}}{\delta}\\
    &C_{\rm Lips}\coloneqq |\cA|^2\\
    &\Delta_T \coloneqq \sum_{i=2}^N \sum_{t=2}^T \nbr{\bpi\ui_t-\bpi\ui_{t-1}}_\infty+ \max_{\hat \bpi_{1:T}\in \cC_T^{(1)}}\sum_{t=2}^T \nbr{\hat\bpi\uo_t-\hat\bpi\uo_{t-1}}_\infty\\
    &\cU_1 \coloneqq \frac{5}{2}\sqrt{\frac{T}{(C_Q|\cH_M|+1) |\cA| \Delta_T}}+\rbr{5+4|\cA|^{M+1}+8C_{\rm Lips}K^2\frac{|\cA|^{\frac{3}{2}(M+1)}}{\gamma^{NM}}}\sqrt{(C_Q|\cH_M|+1) |\cA|}\sqrt{T\Delta_T}\\
    &\cU_2 \coloneqq  \sqrt{2\rbr{(8C_{\rm Lips}K^2\frac{|\cA|^{\frac{3}{2}(M+1)}}{\gamma^{NM}}+8|\cA|^{M+1}+1)\sqrt{(C_Q|\cH_M|+1)|\cA|T\Delta_T}+\sqrt{\frac{T}{(C_Q|\cH_M|+1)|\cA|\Delta_T}}}}\notag\\
    &\times \sqrt{T+\frac{5}{2}\sqrt{\frac{T}{(C_Q|\cH_M|+1)|\cA|\Delta_T}}+\rbr{5+4|\cA|^{M+1}+8C_{\rm Lips}K^2\frac{|\cA|^{\frac{3}{2}(M+1)}}{\gamma^{NM}}}\sqrt{(C_Q|\cH_M|+1)|\cA|T\Delta_T}}.
\end{align}
\end{theorem}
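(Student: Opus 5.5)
We will bound $R_T$ by a chain of reductions that mirrors the shape of the stated bound. The first two reductions handle history and memory, the third uses the occupancy-measure reparameterization, and the last charges constraint violation.

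\textbf{Step 1: replace histories by stationary play.} Fix a window length $K$. For each $t$, write $f^{t-1}(\bpi_{1:t})$ as a function of only the last $K$ strategies. To do this, we apply Lemma \ref{lemma:finite-f-to-infinite} in its conditional form: under Condition \ref{assumption:2-forgetful-prime}, the chain forgets its initial state at rate $(1-\delta)^{\floor{K/(M\useconstant{constant:go-back-root-length})}}$. Applying this to both the realized play and the comparator play costs $4(1-\delta)^{\floor{K/(M\useconstant{constant:go-back-root-length})}}T$, plus $2K$ for the first $K$ rounds.

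\textbf{Step 2: freeze the window.} Within each window, replace $\bpi_{t-K:t}$ by $K+1$ copies of $\bpi_t$, for both player $1$ and the opponents. Lemma \ref{lemma:Lipschitz} together with Lemma \ref{lemma:accumulated-variation-bound} bounds this by $C_{\rm Lips}K^2$ times the variations. The opponents' and the comparator's variations give $C_{\rm Lips}K^2\Delta_T$. Player $1$'s own switching cost $\sum_t\nbr{\bpi_t\uo-\bpi_{t-1}\uo}_\infty$ is kept aside for Step 4. Applying Lemma \ref{lemma:finite-f-to-infinite} once more, each frozen term equals $\rho^{(\bpi_t\uo,\bpi_t\uno)}$ up to the same exponential error.

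\textbf{Step 3: linear losses in occupancy measures.} Compare the realized losses with $\rho$ evaluated at the algorithm's own measures: $\rho^{\bq_t}=\inner{\cL}{\bq_t}$ is linear. The strategy profile $\bpi^{\bq_t}$ recovered from $\bq_t$ by \Cref{eq:def-self-play-strategy-i} differs from the true profile $(\bpi_t\uo,\bpi_t\uno)$ only through the opponents' marginals. That difference is controlled by $\sum_{\bh,\ba}|D_t(\bh,\ba,\bq_t)|=g_t(\bq_t)$ divided by the minimal state mass; Lemma \ref{lemma:lower-bound-q} gives the factor $|\cA|^{M+1}/(\gamma^{N(M+1)}|\cA_{-1}|)$. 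Lemma \ref{lemma:Lipschitz-MDP-value} then converts this policy gap into a value gap with factor $C_Q$.

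On the comparator side, $\hat\bq_t=\bq^{(\hat\bpi_t\uo,\bpi_t\uno)}$ is exactly feasible for $g_t$, because it is built from the true opponents. It also lies in the convex set \eqref{eq:def-convex-set-self-play} by Lemma \ref{lemma:lower-bound-q}. Hence the remaining term $\sum_t\inner{\cL}{\bq_t-\hat\bq_t}$ is the dynamic regret of online convex optimization with time-varying constraints (Algorithm \ref{algorithm:PGD-with-constraint}) against a comparator sequence of path length $O((C_Q|\cH_M|+1)\Delta_T)$. By the performance-difference lemma, the measures move Lipschitzly in the policies, which is where that path-length bound comes from.

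\textbf{Step 4: tuning, and the main obstacle.} The guarantee of the primal--dual method gives the regret term $\cU_1$ and the cumulative violation $\sum_t g_t(\bq_t)\le\cU_2$, after choosing the step size of order $\sqrt{\Delta_T/T}$. Player $1$'s switching cost is likewise bounded through $\nbr{\bq_t-\bq_{t-1}}$ divided by the minimal mass, together with the violation. This explains why $2C_{\rm Lips}K^2$ multiplies $\cU_2$ and why $K^2$-terms appear inside $\cU_1$.

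The hardest part is Step 3's coupling of the feasibility error with the regret. The constraint $g_t$ is only revealed after $\bq_t$ is chosen, and it changes with the opponents. So I would first establish a one-step bound, $g_{t}(\bq_{t})\le g_{t-1}(\bq_{t})+O(\nbr{\bpi_t\uno-\bpi_{t-1}\uno}_\infty)$. This ensures the drift is charged to $\Delta_T$ instead of accumulating. After that, I would invoke the drift-plus-penalty analysis and sum all the pieces to obtain the stated bound.
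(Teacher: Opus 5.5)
Your Steps 1--4 follow the paper's proof essentially step for step:
\begin{itemize}
\item freezing the last $K$ strategies via Lemma~\ref{lemma:Lipschitz} and Lemma~\ref{lemma:accumulated-variation-bound};
\item the $2K$ and exponential mixing errors;
\item passing to the $\bq_t$-induced profile $\bpi_t^{[\cdot]}$ at price $C_Q$ times a $\abr{D_t}$-mismatch, scaled by the lower bound on $\bq$;
\item feasibility $g_t(\hat\bq_t)=0$, with the comparator path length from Lemma~\ref{lemma:sublinear-variation-pi};
\item Lemma~\ref{lemma:PGD-constraint-guarantee}, with the switching cost absorbed through the constant $C$.
\end{itemize}

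What is off is your last paragraph. The one-step bound is not needed. Lemma~\ref{lemma:PGD-constraint-guarantee} is a per-round primal--dual argument that allows $g_t$ to vary arbitrarily and be revealed after $\bq_t$. It uses only bounded values and gradients and $g_t(\hat\bq_t)\le 0$. So the opponents' drift enters only through the comparator's path length, which you already bounded by $(C_Q|\cH_M|+1)|\cA|\Delta_T$.

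The bound is also false as stated. $g_{t-1}$ carries the sign pattern $\sigma_{t-1}$ frozen at $\bq_{t-1}$. So even with static opponents ($D_t\equiv D_{t-1}$) you get $g_t(\bq_t)-g_{t-1}(\bq_t)=2\sum\abr{D_t(\bh,\ba,\bq_t)}$, where the sum runs over the entries whose sign at $\bq_t$ differs from $\sigma_{t-1}(\bh,\ba)$. This is positive as soon as the primal step, which moves $\bq$ along $-\lambda_t\nabla g_t$, pushes some entry across zero. A correct version would need an extra $O(\nbr{\bq_t-\bq_{t-1}}_1)$ term. Simply drop this step.

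Two bookkeeping points matter if you want the stated constants.

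First, freeze before mixing. Keep the prefix $\bpi_{1:t-K-1}$ only as an initial distribution $\mu$, and apply Lemma~\ref{lemma:MDP-average-iterate} once to $\mu(\cP^{\bpi_t})^{K+1}$. Lemma~\ref{lemma:finite-f-to-infinite} concerns a stationary profile, so it does not cover your Step~1 ``forgetting'' over a non-frozen window. Applying the mixing bound twice would also double the exponential term.

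Second, the paper routes the variation of the entire realized joint profile $\bpi_t$ (player~1 and opponents), not just $\bpi_t\uo$, through $\bpi_t^{[\cdot]}$ via
$\nbr{\bpi_t-\bpi_{t-1}}_\infty\le\nbr{\bpi_t^{[\cdot]}-\bpi_{t-1}^{[\cdot]}}_\infty+\nbr{\bpi_t^{[\cdot]}-\bpi_t}_\infty+\nbr{\bpi_{t-1}^{[\cdot]}-\bpi_{t-1}}_\infty$.
This is what puts $2C_{\rm Lips}K^2$ in front of the violation and lets $\Delta_T$ appear only once, from the comparator side. With your split, the opponents' variation is charged twice. Player~1's own switching cost, being a marginal of $\bq_t$, then needs no violation term at all. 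That gives a valid bound of the same order, but not literally the stated one.
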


For any $t\geq K+1$, by the Lipschitz continuity proved in Lemma \ref{lemma:Lipschitz}, we have
\begin{align*}
    \abr{f^{t-1}(\bpi_{1:t})-f^{t-1}(\bpi_{1:t-K-1},\underbrace{\bpi_t,\bpi_t,...,\bpi_t}_{K+1})}\leq C_{\rm Lips}\sum_{s=t-K}^{t-1} \nbr{\bpi_t-\bpi_s}_\infty\leq& C_{\rm Lips}\sum_{i=1}^N\sum_{s=t-K}^{t-1} \nbr{\bpi\ui_t-\bpi\ui_s}_\infty.
\end{align*}
Then, by Corollary \ref{lemma:MDP-average-iterate}, we have
\begin{align*}
    &\abr{f^{t-1}(\bpi_{1:t-K-1},\underbrace{\bpi_t,\bpi_t,...,\bpi_t}_{K+1})-\rho^{\bpi_t}}\\
    =&\abr{\sum_{\bh\in\cH_M} \cL_1(\bh_M) (\mu(\cP^{\bpi_t})^{K+1})_{\bh}-\sum_{\bh\in\cH_M} \cL_1(\bh_M) (\lim_{T\to+\infty} \frac{1}{T}\sum_{s=0}^{T-1} \mu(\cP^{\bpi_t})^s)_{\bh}}\\
    \leq& 2\rbr{1-\delta}^{\floor{\frac{K}{M\useconstant{constant:go-back-root-length}}}}.
\end{align*}

where $\mu\in\Delta_{|\cH_M|}$ is the state distribution induced by $\bpi_{1:t-K-1}$ (which is common for the two terms above).

Therefore, we have 
\begin{align}
    R_T=&\sum_{t=1}^T f^{t-1}(\bpi_{1:t})-\sum_{t=1}^T f^{t-1}(\hat\bpi\uo_{1:t},\bpi\uno_{1:t})\notag\\
    \leq& \sum_{t=1}^T \rho^{\bpi_t}-\sum_{t=1}^T \rho^{(\hat\bpi\uo_t,\bpi\uno_t)}+\sum_{t=1}^{T} \abr{\rho^{\bpi_t}-f^{t-1}(\bpi_{1:t})}+\sum_{t=1}^{T} \abr{\rho^{(\hat\bpi\uo_t,\bpi\uno_t)}-f^{t-1}(\hat\bpi\uo_{1:t},\bpi\uno_{1:t})}\notag\\
    \leq&\sum_{t=1}^T \rho^{\bpi_t}-\sum_{t=1}^T \rho^{(\hat\bpi\uo_t,\bpi\uno_t)}+\sum_{t=1}^{K} \abr{\rho^{\bpi_t}-f^{t-1}(\bpi_{1:t})}+\sum_{t=1}^{K} \abr{\rho^{(\hat\bpi\uo_t,\bpi\uno_t)}-f^{t-1}(\hat\bpi\uo_{1:t},\bpi\uno_{1:t})}\label{eq:MDP-regret-bound-eq1-line1}\\
    &+C_{\rm Lips}\sum_{t=K+1}^T\sum_{s=t-K}^{t-1} \nbr{\bpi_t-\bpi_s}_\infty+C_{\rm Lips}\sum_{t=K+1}^T\sum_{i=2}^N\sum_{s=t-K}^{t-1} \nbr{\bpi\ui_t-\bpi\ui_s}_\infty+C_{\rm Lips}\sum_{t=K+1}^T\sum_{s=t-K}^{t-1} \nbr{\hat\bpi\uo_t-\hat\bpi\uo_s}_\infty\label{eq:MDP-regret-bound-eq1-line2}\\
    &+4\rbr{1-\delta}^{\floor{\frac{K}{M\useconstant{constant:go-back-root-length}}}}T.\label{eq:MDP-regret-bound-eq1-line3}
\end{align}

Since $\max\left\{\abr{\rho^{\bpi_t}-f^{t-1}(\bpi_{1:t})},\abr{\rho^{(\hat\bpi\uo_t,\bpi\uno_t)}-f^{t-1}(\hat\bpi\uo_{1:t},\bpi\uno_{1:t})}\right\}\leq 1$, \Cref{eq:MDP-regret-bound-eq1-line1} can be upper-bounded by
\begin{align*}
    &\sum_{t=1}^T \rho^{\bpi_t}-\sum_{t=1}^T \rho^{(\hat\bpi\uo_t,\bpi\uno_t)}+\sum_{t=1}^{K} \abr{\rho^{\bpi_t}-f^{t-1}(\bpi_{1:t})}+\sum_{t=1}^{K} \abr{\rho^{(\hat\bpi\uo_t,\bpi\uno_t)}-f^{t-1}(\hat\bpi\uo_{1:t},\bpi\uno_{1:t})}\\
    \leq& \sum_{t=1}^T \rho^{\bpi_t}-\sum_{t=1}^T \rho^{(\hat\bpi\uo_t,\bpi\uno_t)}+2K.
\end{align*}
Also, by Lemma \ref{lemma:accumulated-variation-bound}, \Cref{eq:MDP-regret-bound-eq1-line2} can be bounded by
\begin{align*}
    &C_{\rm Lips}\sum_{t=K+1}^T\sum_{s=t-K}^{t-1} \nbr{\bpi_t-\bpi_s}_\infty+C_{\rm Lips}\sum_{t=K+1}^T\sum_{i=2}^N\sum_{s=t-K}^{t-1} \nbr{\bpi\ui_t-\bpi\ui_s}_\infty+C_{\rm Lips}\sum_{t=K+1}^T\sum_{s=t-K}^{t-1} \nbr{\hat\bpi\uo_t-\hat\bpi\uo_s}_\infty\\
    \leq& C_{\rm Lips} K^2\sum_{t=2}^T \nbr{\bpi_t-\bpi_{t-1}}_\infty+C_{\rm Lips} K^2\sum_{t=2}^T\sum_{i=2}^N \nbr{\bpi\ui_t-\bpi\ui_{t-1}}_\infty+C_{\rm Lips} K^2\sum_{t=2}^T \nbr{\hat\bpi\uo_t-\hat\bpi\uo_{t-1}}_\infty\\
    \leq& C_{\rm Lips} K^2\sum_{t=2}^T \nbr{\bpi_t-\bpi_{t-1}}_\infty+C_{\rm Lips} K^2\Delta_T
\end{align*}
where the last line is by the definition of $\Delta_T$. Hence, we have
\begin{align*}
    R_T\leq& \sum_{t=1}^T \rho^{\bpi_t}-\sum_{t=1}^T \rho^{(\hat\bpi\uo_t,\bpi\uno_t)}+2K+C_{\rm Lips} K^2\sum_{t=2}^T \nbr{\bpi_t-\bpi_{t-1}}_\infty+C_{\rm Lips} K^2 \Delta_T+4\rbr{1-\delta}^{\floor{\frac{K}{M\useconstant{constant:go-back-root-length}}}}T.
\end{align*}

Recall that we defined the marginals $\bpi_t^{[i]}$ in \Cref{eq:def-self-play-strategy-i}. Let $\bpi_t^{[\cdot]}$ denote the joint strategy induced by $\bq_t$, \emph{i.e.},
\begin{align*}
    \pi_t^{[\cdot]}(\ba\given\bh)\coloneqq \frac{q_t(\bh,\ba)}{\sum_{\ba'\in\cA}q_t(\bh,\ba')}.
\end{align*}
Its player-$i$ marginal is $\bpi_t^{[i]}$, and in particular $\bpi_t^{[1]}=\bpi_t\uo$. Then, by Lemma \ref{lemma:Lipschitz-MDP-value}, we have
\begin{align*}
    &\sum_{t=1}^T \rho^{\bpi_t}-\sum_{t=1}^T \rho^{(\hat\bpi\uo_t,\bpi\uno_t)}\\
    \leq& \sum_{t=1}^T \rho^{\bpi_t^{[\cdot]}}-\sum_{t=1}^T \rho^{(\hat\bpi\uo_t,\bpi\uno_t)}+C_Q\sum_{t=1}^T\max_{\bh\in\cH_M} \nbr{\bpi_t(\cdot\given \bh)-\bpi_t^{[\cdot]}(\cdot\given \bh)}_1\\
    =& \sum_{t=1}^T \rho^{\bpi_t^{[\cdot]}}-\sum_{t=1}^T \rho^{(\hat\bpi\uo_t,\bpi\uno_t)}+C_Q\sum_{t=1}^T\max_{\bh\in\cH_M} \sum_{\ba\in\cA}\abr{\bpi_t\uo(a_1\given \bh)\bpi_t\uno(\ba_{-1}\given \bh)-\bpi_t\uo(a_1\given \bh)\frac{\bpi_t^{[\cdot]}(\ba\given \bh)}{\bpi_t\uo(a_1\given \bh)}}\\
    =& \sum_{t=1}^T \rho^{\bpi_t^{[\cdot]}}-\sum_{t=1}^T \rho^{(\hat\bpi\uo_t,\bpi\uno_t)}+C_Q\sum_{t=1}^T\max_{\bh\in\cH_M} \sum_{\ba\in\cA}\bpi_t\uo(a_1\given \bh)\abr{\bpi_t\uno(\ba_{-1}\given \bh)-\frac{\bpi_t^{[\cdot]}(\ba\given \bh)}{\bpi_t\uo(a_1\given \bh)}}
\end{align*}
The difference between $\bpi_t^{[i]}$ and $\bpi_t\ui$ is that $\bpi_t^{[i]}$ is the marginal of the occupancy measure $\bq_t$, while $\bpi_t\ui$ is the true strategy of player $i$.

\begin{lemma}
\label{lemma:sublinear-variation-q}
For any $\bpi_1,\bpi_2$, if $~\forall~\bh\in\cH_M,~~\sum_{\ba\in\cA} q^{\bpi_1}(\bh,\ba),~\sum_{\ba\in\cA} q^{\bpi_2}(\bh,\ba)\geq c$ for some constant $c>0$, then we have
\begin{align}
    \forall~\bh\in\cH_M,\ba\in\cA,~~~~|\pi_{2}(\ba\given \bh)-\pi_1(\ba\given \bh)|\leq \frac{2|\cA|}{c}\nbr{\bq^{\pi_2}-\bq^{\pi_1}}_\infty.
\end{align}
\end{lemma}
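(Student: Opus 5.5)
The plan is to use the recovery formula $\pi(\ba\given\bh)=q^{\pi}(\bh,\ba)/\sum_{\ba'\in\cA}q^{\pi}(\bh,\ba')$ and bound the perturbation of a ratio. Fix $\bh\in\cH_M$ and $\ba\in\cA$. Write $x_j\coloneqq q^{\pi_j}(\bh,\ba)$ and $y_j\coloneqq\sum_{\ba'\in\cA}q^{\pi_j}(\bh,\ba')$ for $j=1,2$. By hypothesis $y_1,y_2\geq c>0$, so both ratios are well defined. Also $0\leq x_j\leq y_j$, hence $x_j/y_j\leq 1$.

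The key step is the standard add-and-subtract decomposition:
\begin{align*}
\frac{x_2}{y_2}-\frac{x_1}{y_1}=\frac{x_2-x_1}{y_2}+x_1\Big(\frac{1}{y_2}-\frac{1}{y_1}\Big)=\frac{x_2-x_1}{y_2}+\frac{x_1}{y_1}\cdot\frac{y_1-y_2}{y_2}.
\end{align*}
Taking absolute values, I use $y_2\geq c$ and $x_1/y_1\leq 1$ to get
\begin{align*}
|\pi_2(\ba\given\bh)-\pi_1(\ba\given\bh)|\leq \frac{|x_2-x_1|+|y_2-y_1|}{c}.
\end{align*}
The first numerator term is at most $\nbr{\bq^{\pi_2}-\bq^{\pi_1}}_\infty$. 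The second is at most $\sum_{\ba'\in\cA}|q^{\pi_2}(\bh,\ba')-q^{\pi_1}(\bh,\ba')|\leq|\cA|\,\nbr{\bq^{\pi_2}-\bq^{\pi_1}}_\infty$ by the triangle inequality. Together these give the bound $\frac{(1+|\cA|)}{c}\nbr{\bq^{\pi_2}-\bq^{\pi_1}}_\infty\leq\frac{2|\cA|}{c}\nbr{\bq^{\pi_2}-\bq^{\pi_1}}_\infty$, since $|\cA|\geq1$.

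There is no real obstacle here. The only point to check is that the recovery identity holds for each $\pi_j$. This is exactly the definition of the occupancy measure in \Cref{eq:def-occupancy-measure}: $q^{\pi}(\bh,\ba)=d^{\pi}(\bh)\pi(\ba\given\bh)$, so summing over $\ba$ returns $d^{\pi}(\bh)$. The positivity hypothesis $\sum_{\ba}q^{\pi_j}(\bh,\ba)\geq c$ guarantees that we may divide by it. In the application, $c$ will be supplied by Lemma \ref{lemma:lower-bound-q} or by the constraint set in \Cref{eq:def-convex-set-self-play}.
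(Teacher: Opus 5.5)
Your proof is correct and follows the paper's argument: you add and subtract to bound a difference of ratios, use that both denominators are at least $c$ and that $q/\sum q\leq 1$, and control the denominator gap by $|\cA|\,\nbr{\bq^{\pi_2}-\bq^{\pi_1}}_\infty$. The only cosmetic difference is that the paper puts the factor $\pi_2(\ba\given\bh)$ rather than $\pi_1(\ba\given\bh)$ on the denominator term; both versions give $(1+|\cA|)/c\leq 2|\cA|/c$.
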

The proof is postponed to Appendix \ref{appendix:original-to-markov-value-proof}. Since $\bq_t\in\cX$, the lower-bound constraint in \eqref{eq:def-convex-set-self-play} gives $\sum_{\ba\in\cA} q_t(\bh,\ba)\geq \frac{\gamma^{NM}}{|\cA|^M}$ for any $\bh\in\cH_M$. Therefore, for any $t=2,3,...,T$,
\begin{align*}
    &C_{\rm Lips}K^2\sum_{t=2}^T \nbr{\bpi_t-\bpi_{t-1}}_\infty\\
    \leq& C_{\rm Lips} K^2\sum_{t=2}^T \nbr{\bpi_t^{[\cdot]}-\bpi_{t-1}^{[\cdot]}}_\infty+2C_{\rm Lips} K^2\sum_{t=1}^T \nbr{\bpi_t^{[\cdot]}-\bpi_t}_\infty\\
    \leq& \frac{2 C_{\rm Lips} K^2|\cA|^{M+1}}{\gamma^{NM}}\sum_{t=2}^T\nbr{\bq_t-\bq_{t-1}}_\infty+ 2C_{\rm Lips} K^2\sum_{t=1}^T \max_{\bh\in\cH_M} \sum_{\ba\in\cA}\bpi_t\uo(a_1\given \bh)\abr{\bpi_t\uno(\ba_{-1}\given \bh)-\frac{\bpi_t^{[\cdot]}(\ba\given \bh)}{\bpi_t\uo(a_1\given \bh)}}.
\end{align*}

For each timestep $t$, define
\begin{align*}
    \sigma_t(\bh,\ba)\coloneqq
    \begin{cases}
    1, & D_t(\bh,\ba,\bq_t)\geq 0,\\
    -1, & D_t(\bh,\ba,\bq_t)<0.
    \end{cases}
\end{align*}
The constraint in Algorithm \ref{alg:self-play} is equivalently
\begin{align*}
    g_t(\bq)=\sum_{\bh\in\cH_M,\ba\in\cA}\sigma_t(\bh,\ba)D_t(\bh,\ba,\bq),
\end{align*}
where
\begin{align*}
    D_t(\bh,\ba,\bq)
    =q(\bh,\ba)-\pi_t^{(-1)}(\ba_{-1}\given \bh)\sum_{\ba_{-1}'\in\cA_{-1}}q(\bh,(a_1,\ba_{-1}')).
\end{align*}
In particular,
\begin{align*}
    g_t(\bq_t)=\sum_{\bh\in\cH_M,\ba\in\cA}|D_t(\bh,\ba,\bq_t)|.
\end{align*}
Notice that
\begin{align*}
    \frac{q_t(\bh,\ba)}{\sum_{\ba_{-1}'\in\cA_{-1}} q_t(\bh,(a_1,\ba_{-1}'))}
    =\frac{q_t(\bh,\ba)}{\sum_{\ba'\in\cA} q_t(\bh,\ba')}
    \frac{\sum_{\ba'\in\cA}q_t(\bh,\ba')}{\sum_{\ba_{-1}'\in\cA_{-1}} q_t(\bh,(a_1,\ba_{-1}'))}
    =\pi_t^{[\cdot]}(\ba\given \bh)\cdot \frac{1}{\pi_t\uo(a_1\given \bh)}.
\end{align*}
Here $\pi_t^{[\cdot]}$ denotes the joint distribution induced by $\bq_t$, and $\bpi_t^{[1]}=\bpi_t\uo$ is its player-$1$ marginal. Therefore,
\begin{align*}
    \abr{\bpi_t\uno(\ba_{-1}\given \bh)-\frac{\bpi_t^{[\cdot]}(\ba\given \bh)}{\bpi_t\uo(a_1\given \bh)}}
    &=\frac{|D_t(\bh,\ba,\bq_t)|}{\sum_{\ba_{-1}'\in\cA_{-1}}q_t(\bh,(a_1,\ba_{-1}'))}\\
    &\leq \frac{|\cA|^{M+1}}{\gamma^{N(M+1)}|\cA_{-1}|}|D_t(\bh,\ba,\bq_t)|,
\end{align*}
where the last inequality is by Lemma \ref{lemma:lower-bound-q}.
Then, we have
\begin{align*}
    R_T\leq&\sum_{t=1}^T \rho^{\bpi_t^{[\cdot]}}-\sum_{t=1}^T \rho^{(\hat\bpi\uo_t,\bpi\uno_t)}+2K\\
    &+\frac{2 C_{\rm Lips} K^2|\cA|^{M+1}}{\gamma^{NM}}\sum_{t=2}^T\nbr{\bq_t-\bq_{t-1}}_\infty+C_{\rm Lips} K^2 \Delta_T+4\rbr{1-\delta}^{\floor{\frac{K}{M\useconstant{constant:go-back-root-length}}}}T\\
    &+(C_Q+2C_{\rm Lips} K^2)\frac{|\cA|^{M+1}}{\gamma^{N(M+1)}|\cA_{-1}|}
      \sum_{t=1}^T\max_{\bh\in\cH_M} \sum_{\ba\in\cA}\bpi_t\uo(a_1\given \bh) D_t(\bh,\ba,\bq_t)\\
    \leq&\sum_{t=1}^T \rho^{\bpi_t^{[\cdot]}}-\sum_{t=1}^T \rho^{(\hat\bpi\uo_t,\bpi\uno_t)}+2K\\
    &+\frac{2 C_{\rm Lips} K^2|\cA|^{M+1}}{\gamma^{NM}}\sum_{t=2}^T\nbr{\bq_t-\bq_{t-1}}_\infty+C_{\rm Lips} K^2 \Delta_T+4\rbr{1-\delta}^{\floor{\frac{K}{M\useconstant{constant:go-back-root-length}}}}T\\
    &+(C_Q+2C_{\rm Lips}K^2)\frac{|\cA|^{M+1}}{\gamma^{N(M+1)}}\sum_{t=1}^T\max_{\bh\in\cH_M,\ba\in\cA} \cbr{D_t(\bh,\ba)}.
\end{align*}

For the comparator sequence, define the joint strategy $\hat\pi_t^{[\cdot]}(\ba\given\bh)\coloneqq \hat\pi_t\uo(a_1\given\bh)\pi_t\uno(\ba_{-1}\given\bh)$. By Lemma \ref{lemma:sublinear-variation-pi}, we have
\begin{align*}
    \sum_{t=2}^T \nbr{\hat\bq_t-\hat\bq_{t-1}}\leq& (C_Q|\cH_M|+1) |\cA|\sum_{t=2}^T \nbr{\hat\bpi^{[\cdot]}_{t}-\hat\bpi^{[\cdot]}_{t-1}}_\infty\\
    =& (C_Q|\cH_M|+1) |\cA|\sum_{t=2}^T \max_{\bh\in\cH_M,\ba\in\cA} \abr{\hat \pi_t\uo(a_1\given \bh)\prod_{i=2}^N \pi_t\ui(a_i\given \bh)-\hat \pi_{t-1}\uo(a_1\given \bh)\prod_{i=2}^N \pi_{t-1}\ui(a_i\given \bh)}\\
    \leq&(C_Q|\cH_M|+1) |\cA|\sum_{t=2}^T \max_{\bh\in\cH_M,\ba\in\cA} \cbr{\abr{\hat \pi_t\uo(a_1\given \bh)-\hat \pi_{t-1}\uo(a_1\given \bh)}+\sum_{i=2}^N \abr{ \pi_t\ui(a_i\given \bh)-\pi_{t-1}\ui(a_i\given \bh)}}\\
    \leq& (C_Q|\cH_M|+1)|\cA|\rbr{\sum_{t=2}^T\nbr{\hat\bpi_t\uo-\hat\bpi_{t-1}\uo}_\infty+\sum_{i=2}^N\sum_{t=2}^T\nbr{\bpi_t\ui-\bpi_{t-1}\ui}_\infty}\\
    \leq& (C_Q|\cH_M|+1)|\cA|\Delta_T.
\end{align*}
Then, by the convergence of Algorithm \ref{algorithm:PGD-with-constraint} in Lemma \ref{lemma:PGD-constraint-guarantee} and the variation of $\bq$ bounded by variation of $\bpi$ in Lemma \ref{lemma:sublinear-variation-pi}, we have
\begin{align}
    &\sum_{t=1}^T \rho^{\bpi_t^{[\cdot]}}-\sum_{t=1}^T \rho^{(\hat\bpi\uo_t,\bpi\uno_t)}+\frac{2 C_{\rm Lips} K^2|\cA|^{M+1}}{\gamma^{NM}}\sum_{t=2}^T \nbr{\bq_t-\bq_{t-1}}_\infty\notag\\
    \leq& \frac{5}{2}\sqrt{\frac{T}{(C_Q|\cH_M|+1) |\cA| \Delta_T}}+\rbr{5+4|\cA|^{M+1}+8C_{\rm Lips}K^2\frac{|\cA|^{\frac{3}{2}(M+1)}}{\gamma^{NM}}}\sqrt{(C_Q|\cH_M|+1) |\cA|}\sqrt{T\Delta_T}\label{eq:regret-bound-OCO-with-constraint}\\
    &\sum_{t=1}^T g_t(\bq_t)\leq \sqrt{2\rbr{(8C_{\rm Lips}K^2\frac{|\cA|^{\frac{3}{2}(M+1)}}{\gamma^{NM}}+8|\cA|^{M+1}+1)\sqrt{(C_Q|\cH_M|+1)|\cA|T\Delta_T}+\sqrt{\frac{T}{(C_Q|\cH_M|+1)|\cA|\Delta_T}}}}\notag\\
    &\times \sqrt{T+\frac{5}{2}\sqrt{\frac{T}{(C_Q|\cH_M|+1)|\cA|\Delta_T}}+\rbr{5+4|\cA|^{M+1}+8C_{\rm Lips}K^2\frac{|\cA|^{\frac{3}{2}(M+1)}}{\gamma^{NM}}}\sqrt{(C_Q|\cH_M|+1)|\cA|T\Delta_T}}\label{eq:constraint-bound-OCO-with-constraint}
\end{align}
since 
\begin{align*}
    &R=\max_{\bq} \nbr{\bq}_2=1,\\
    &k=1,\\
    &D=\max_{t,\bq} g_t(\bq)\leq 2,\\
    &G=\max\{\nbr{\cL_1}, \nbr{\nabla g_t}\}\leq 2\sqrt{|\cH_M|\cdot |\cA|}=2|\cA|^{\frac{M+1}{2}},\\
    &F=1,\\
    &C=2C_{\rm Lips}K^2\frac{|\cA|^{M+1}}{\gamma^{NM}}.
\end{align*}
For ease of illustration, let $\cU_1$ be the R.H.S. of Eq.  \eqref{eq:regret-bound-OCO-with-constraint} and $\cU_2$ be the R.H.S. of Eq.  \eqref{eq:constraint-bound-OCO-with-constraint}. This leads to the fact that 
\begin{align*}
    &\sum_{t=1}^T \rho^{\bpi_t^{[\cdot]}}-\sum_{t=1}^T \rho^{(\hat\bpi\uo_t,\bpi\uno_t)}+\frac{2 C_{\rm Lips} K^2|\cA|^{M+1}}{\gamma^{NM}}\sum_{t=2}^T \nbr{\bq_t-\bq_{t-1}}_\infty\leq \cU_1,\\
    &\sum_{t=1}^T g_t(\bq_t)\leq \cU_2.
\end{align*}

Let $\hat\bq_t$ be the occupancy measure induced by $(\hat\bpi_t\uo,\bpi_t\uno)$. When $\hat\bq_t$ lies in $\cX$, it corresponds to a product-form strategy profile and, for any $\bh\in\cH_M,\ba\in\cA$,
\begin{align*}
    \frac{\hat q_t(\bh,\ba)}{\sum_{\ba'\in\cA} \hat q_t(\bh,\ba')}
    =\hat\pi_t\uo(a_1\given \bh)\pi_t\uno(\ba_{-1}\given \bh).
\end{align*}
Equivalently, $D_t(\bh,\ba,\hat\bq_t)=0$ for all $\bh,\ba$, and hence $g_t(\hat\bq_t)=0\leq 0$. Therefore, $\hat\bq_{1:T}$ satisfies the feasibility requirement of Lemma \ref{lemma:PGD-constraint-guarantee}.

Therefore,
\begin{align*}
    R_T\leq& \cU_1+2K+C_{\rm Lips} K^2 \Delta_T+4\rbr{1-\delta}^{\floor{\frac{K}{M\useconstant{constant:go-back-root-length}}}}T+(C_Q+2C_{\rm Lips}K^2)\frac{|\cA|^{M+1}}{\gamma^{N(M+1)}|\cA_{-1}|}\sum_{t=1}^T g_t(\bq_t)\\
    \leq& \cU_1+(C_Q+2C_{\rm Lips}K^2)\frac{|\cA|^{M+1}}{\gamma^{N(M+1)}|\cA_{-1}|}\cU_2+2K+C_{\rm Lips} K^2 \Delta_T+4\rbr{1-\delta}^{\floor{\frac{K}{M\useconstant{constant:go-back-root-length}}}}T.\qedhere
\end{align*}

\begin{lemma}
\label{lemma:sublinear-variation-pi}
For any two strategy profiles $\bpi_1,\bpi_2$, the distance between their corresponding occupancy measure satisfies
\begin{align}
    \nbr{\bq^{\pi_1}-\bq^{\pi_2}}\leq (C_Q|\cH_M|+1) |\cA|\cdot \nbr{\bpi_1-\bpi_2}_\infty.
\end{align}
\proof
By letting the loss $\cL(\bh,\ba)=\ind(\bh=\bh_0)$, $\rho^{\bpi}=d^{\bpi}(\bh_0)$. Therefore, by Lemma \ref{lemma:Lipschitz-MDP-value}, for any $\bh_0\in\cH_M$, we have
\begin{align*}
    |d^{\bpi_1}(\bh_0)-d^{\bpi_2}(\bh_0)|\leq C_Q \max_{\bh\in\cH_M} \nbr{\bpi_2(\cdot\given \bh)-\bpi_1(\cdot\given \bh)}_1.
\end{align*}
Therefore, for any $\bh_0\in\cH_M,\ba\in\cA$, we have
\begin{align*}
    |q^{\bpi_1}(\bh_0,\ba)-q^{\bpi_2}(\bh_0,\ba)|=&|d^{\bpi_1}(\bh_0)\pi_1(\ba\given \bh_0)-d^{\bpi_2}(\bh_0)\pi_2(\ba\given \bh_0)|\\
    \leq& |d^{\bpi_1}(\bh_0)-d^{\bpi_2}(\bh_0)|\pi_1(\ba\given \bh_0)+d^{\bpi_2}(\bh_0)|\pi_1(\ba\given \bh_0)-\pi_2(\ba\given \bh_0)|.
\end{align*}
Therefore, 
\begin{align*}
    \nbr{\bq^{\pi_1}-\bq^{\pi_2}}\leq& \sum_{\bh\in\cH_M,\ba\in\cA} |q^{\bpi_1}(\bh,\ba)-q^{\bpi_2}(\bh,\ba)|\\
    \leq& \sum_{\bh\in\cH_M} |d^{\bpi_1}(\bh)-d^{\bpi_2}(\bh)|+|\cA| \nbr{\bpi_1-\bpi_2}_\infty\\
    \leq& C_Q|\cH_M| \max_{\bh\in\cH_M} \nbr{\bpi_2(\cdot\given \bh)-\bpi_1(\cdot\given \bh)}_1+|\cA| \nbr{\bpi_1-\bpi_2}_\infty\\
    \leq& (C_Q|\cH_M|+1) |\cA|\cdot \nbr{\bpi_1-\bpi_2}_\infty.\qedhere
\end{align*}
\end{lemma}

\section{Regret and Subgame Perfect Equilibrium}
\label{appendix:subgame-perfect-equilibrium-local}

\subsection{{\tt LRP-Regret} and Subgame Perfect Equilibrium}
\label{appendix:LPR-Regret-SPE}

In this section, we will prove that when all players get a sublinear $R_T^{\rm local}$ even when the comparator can vary arbitrarily with  $P_T \coloneqq \sum_{t=2}^T \nbr{\hat\bpi\uo_t-\hat\bpi\uo_{t-1}}=O(T)$ and $\bpi\ui\succeq \frac{\gamma}{|\cA_i|}\one$ for $\gamma\in(0,1]$\footnote{Our regret minimization algorithm guarantees sublinear regret when $P_T=o(T)$ in the adversarial online setting. It is unknown whether we can still achieve sublinear regret when $P_T=O(T)$ in the game setting, where we can control all players.}, the equilibrium is actually an approximate SPNE.

In the following, we will only prove that player 1 cannot decrease their loss much by deviating, by the symmetry of players. For any $\bh\in\cH_M$ and $a_1\in\cA_1$, define
\begin{align*}
    &Q_t(\bh,a_1) \coloneqq \sum_{\ba_{-1}\in\cA_{-1}} \bpi_t^{(-1)}(\ba_{-1}\given \bh) \rbr{V_{t+1}((\bh_{2:M},(a_1,\ba_{-1})))+\cL_1((a_1,\ba_{-1}))},\\
    &V_t(\bh) \coloneqq \sum_{\ba\in\cA} \bpi_t(\ba\given \bh) \rbr{V_{t+1}((\bh_{2:M},\ba))+\cL_1(\ba)}.
\end{align*}
and $V_{T_0+1}(\bh)=0$ for all $\bh\in\cH_M$.

Then we have
\begin{lemma}
    \label{lemma:local-regret-to-difference}
    For a fixed finite $T$, when $\bpi\ui\succeq \frac{\gamma}{|\cA_i|}$ for any player $i$, we have
    \begin{align*}
        \sum_{t=1}^T \max\cbr{\max_{\bh\in\cH_M} \inner{\bpi_t\uo(\cdot\given \bh)-\hat\bpi_t\uo(\cdot\given \bh)}{Q_t(\bh,\cdot)},0}\leq \frac{|\cA|^M}{\gamma^{NM}}R_T^{\rm local}
    \end{align*}
    for any $\hat\bpi_{1:T}\uo$.
\end{lemma}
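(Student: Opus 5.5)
The approach is to write each single-timestep deviation gap in $R_T^{\rm local}$ through the backward-induction quantities $Q_t,V_t$, and then to localize the deviation to a single history using the lower bound on state-visitation probabilities from the exploration floor.

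First, fix $s\in[T]$ and consider the deviation $\tilde\bpi^{(1),s}$ that uses $\hat\bpi_s\uo$ only at timestep $s$. All strategies are $M$-bounded memory, so the state at time $t$ is the last $M$ joint actions. Strategies before $s$ are unchanged, so the distribution $\mu_s$ of the state $\bh$ at time $s$ is the same under both profiles. Strategies after $s$ are also unchanged, so the continuation value from time $s+1$ onward is $V_{s+1}$ in both cases. Hence
\begin{align*}
J_T(\bpi_{1:T})-J_T(\tilde\bpi^{(1),s}_{1:T},\bpi\uno_{1:T})=\sum_{\bh\in\cH_M}\mu_s(\bh)\inner{\bpi_s\uo(\cdot\given\bh)-\hat\bpi_s\uo(\cdot\given\bh)}{Q_s(\bh,\cdot)}.
\end{align*}
This is the standard one-step deviation identity: before $s$ both runs coincide, and at $s$ the player-1 action distribution changes while the opponents' conditional distribution $\bpi_s\uno(\cdot\given\bh)$ is unchanged. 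For $s\le M$ the state is a partial history; I would handle this with the convention that the truncation $\overbar\pi$ extends strategies to short histories.

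Second, I would localize the deviation. The comparator is arbitrary with $P_T=\Theta(T)$ and need not have product structure across histories. So for each $s$ I would replace $\hat\bpi_s\uo$ by a modified comparator that equals $\hat\bpi_s\uo(\cdot\given\bh)$ only at $\bh_s^\star\in\argmax_{\bh}\inner{\bpi_s\uo-\hat\bpi_s\uo}{Q_s(\bh,\cdot)}$, and only when that maximum is positive. Everywhere else it equals $\bpi_s\uo$. This modified comparator still satisfies the exploration floor of Condition \ref{assumption:2-forgetful-prime} because both pieces do, so it is admissible. By the identity above its gap is $\mu_s(\bh_s^\star)\max\{\max_{\bh}\inner{\cdot}{\cdot},0\}$.

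Third, I would lower bound $\mu_s(\bh)$ for $s>M$. Every joint action has probability at least $\prod_i\gamma/|\cA_i|=\gamma^N/|\cA|$, and the state at time $s$ is determined by the $M$ most recent joint actions. So any specific state is reached with probability at least $(\gamma^N/|\cA|)^M=\gamma^{NM}/|\cA|^M$, regardless of the earlier history; this mirrors Lemma \ref{lemma:lower-bound-q}. Summing over $s$, the maximum over comparators in $R_T^{\rm local}$ dominates the sum of these per-$s$ localized gaps. Multiplying by $|\cA|^M/\gamma^{NM}$ then gives the claim.

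The main obstacle is the second step. $R_T^{\rm local}$ uses a single comparator sequence $\hat\bpi_{1:T}\uo$ for all $s$ simultaneously, while the left side takes, for each $t$, a history-wise positive-part maximum. I need the localized comparators to assemble into one admissible sequence in $\cC_T\uo$. Since each $s$-term involves only $\hat\bpi_s\uo$, choosing the modified strategy separately at each timestep does give a single sequence. Its variation stays within $P_T=\Theta(T)$ because $\nbr{\cdot}_\infty\le1$ per step, so I expect this to go through. The early timesteps $s\le M$ need separate care because $\mu_s$ is then supported on padded histories; if needed I would absorb them as an $O(M)$ additive term or rely on the averaging convention for short histories.
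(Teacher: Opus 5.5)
Your proposal is correct and is essentially the paper's proof. The paper likewise reduces each single-timestep gap to the state-weighted quantity $\inner{\bpi_s\uo(\cdot\given\bh)-\underline\bpi_s\uo(\cdot\given\bh)}{Q_s(\bh,\cdot)}$, where only the $t=s$ term survives because the two runs coincide before $s$ and share continuation values after it. It then uses the same proxy comparator $\underline\bpi\uo_{1:T}$, equal to $\hat\bpi\uo_s$ only at a maximizing history when the maximum is positive and to $\bpi\uo_s$ elsewhere; the paper remarks that this proxy's variation can be linear, which is why $P_T=\Theta(T)$ is required. Finally it lower bounds that history's probability by $\gamma^{NM}/|\cA|^M$.

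The paper asserts this visitation bound for every $s$ and never treats $s\le M$, so that loose end is shared with the paper rather than a defect of your route. Your additive $O(M)$ fallback would not give the inequality as stated. If instead the time-$1$ state is drawn uniformly from $\cH_M$ (the induced-Markov-game reading), the state at any time $s\le M$ is hit with probability at least $\gamma^{N(s-1)}/|\cA|^M\ge\gamma^{NM}/|\cA|^M$, so no separate case is needed.
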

The proof is postponed to the end of this section.

Note that $R_{T}^{\rm local}\leq o(T)$ is the regret for player 1. The relation above can be extended to any player with their corresponding local regret on the right-hand side. With the lemma above, we can conclude subgame perfection.

For notational simplicity, we define $f_i^m(\bpi|\mu) \coloneqq \sum_{\bh\in\cH}\mu(\bh)f_i^m(\bpi_{1:m+1}\given \bh)$ as a generalization of $f_i^m(\bpi_{1:m+1}\given \bh)$. For an infinitely repeated matrix game as in Lemma \ref{lemma:LRP-Regret-SPE}, we can divide the $T$ ($T\to\infty$) timesteps into $K$ epochs, with $T_0$ timesteps in each epoch.

\begin{lemma}
\label{lemma:finite-to-infinite}
For any sequence of strategies $\bpi_1,\bpi_2,...,\bpi_K$, which form an approximate equilibrium for a $K$-repeated matrix game satisfying Condition \ref{assumption:L1-forgetful}\footnote{The joint-strategy version of Condition \ref{assumption:L1-forgetful} is $\forall \bar\bh,\tilde\bh\in\cH,~~~~~\frac{1}{2}\nbr{\bpi_k(\cdot\given\bar\bh)-\bpi_k(\cdot|\tilde\bh)}_1\leq 1-\gamma$.} with a bounded memory $M$. Then, it is also an approximate equilibrium for the infinitely repeated game. Formally, for any player $i$, we have
\begin{align}
    &\lim_{T\to\infty} \sup \frac{1}{T}\sum_{B=0}^{T-1} \rbr{\frac{1}{K}\sum_{k=1}^K \inner{\mu_B\prod_{s=1}^k\cP^{\bpi_{s+BK}}}{\cL_i}-\frac{1}{K}\sum_{k=1}^K \inner{\hat\mu_B\prod_{s=1}^k\cP^{(\hat\bpi_{s+BK}\ui,\bpi_{s+BK}\uni)}}{\cL_i}}\notag\\
    \leq& \frac{\sum_{B=0}^{T-1} \epsilon_B}{T} + \frac{4M \useconstant{constant:go-back-root-length}}{K(\gamma^N/|\cA|)^{M\useconstant{constant:go-back-root-length}}}
\end{align}
where
\begin{align}
    \epsilon_B \coloneqq \frac{1}{K}\sum_{k=1}^K \inner{\mu_0\prod_{s=1}^k\cP^{\bpi_s}}{\cL_i}-\frac{1}{K}\sum_{k=1}^K \inner{\mu_0\prod_{s=1}^k\cP^{(\hat\bpi_{s+BK}\ui,\bpi_s\uni)}}{\cL_i}
\end{align}
and $\mu_B$ is the initial distribution over state space $\cH_M$ at the start of epoch $B$ ($\mu_0$ is an arbitrary distribution predetermined by the game and $\mu_B(B>0)$ is determined by $\mu_0$ and $\bpi_{1:BK}$).
\end{lemma}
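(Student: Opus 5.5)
The plan is to compare, epoch by epoch, the actual play started from the true state distribution $\mu_B$ with the same $K$-step strategy sequence started from the fixed reference distribution $\mu_0$. Under the forgetfulness condition (Condition \ref{assumption:L1-forgetful}, in its joint-strategy form), the induced chains on $\cH_M$ mix uniformly, so the dependence on the initial state at the start of an epoch is washed out after a bounded number of steps. The per-epoch gap then equals $\epsilon_B$ plus a boundary term of order $M\useconstant{constant:go-back-root-length}/\delta$, where $\delta=(\gamma^N/|\cA|)^{M\useconstant{constant:go-back-root-length}}$. Dividing by $K$ and averaging over epochs $B$ gives the claimed bound.

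\textbf{Step 1 (a contraction estimate for time-inhomogeneous products).} I would prove that for any distributions $\mu,\mu'$ on $\cH_M$ and any sequence of joint strategies $\bpi'_1,\bpi'_2,\dots$ obeying the forgetfulness condition,
\begin{align*}
\nbr{(\mu-\mu')\prod_{s=1}^k\cP^{\bpi'_s}}_1\leq 2(1-\delta)^{\floor{k/(M\useconstant{constant:go-back-root-length})}}.
\end{align*}
This is the same Doeblin/minorization argument behind Lemma \ref{lemma:finite-f-to-infinite} and Lemma \ref{lemma:upper-bound-Q}. Within any block of $M\useconstant{constant:go-back-root-length}$ consecutive steps, every product of transition matrices has a common component of mass at least $\delta$, which is the ``go back to root'' path whose length defines $\useconstant{constant:go-back-root-length}$. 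Hence the Dobrushin coefficient of each block is at most $1-\delta$. The point to check is that this argument never used time-homogeneity, only a per-step lower bound on the probability of a path; the $\ell_1$ forgetful condition gives exactly such an overlap between rows, uniformly in $s$.

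\textbf{Step 2 (per-epoch decomposition).} Since $\nbr{\cL_i}_\infty\leq 1$, I would write
\begin{align*}
\inner{\mu_B\textstyle\prod_{s\le k}\cP^{\bpi_{s+BK}}}{\cL_i}-\inner{\mu_0\textstyle\prod_{s\le k}\cP^{\bpi_{s}}}{\cL_i}\leq \nbr{(\mu_B-\mu_0)\textstyle\prod_{s\le k}\cP^{\bpi_{s+BK}}}_1,
\end{align*}
using that the players' strategies are periodic, so $\bpi_{s+BK}=\bpi_s$. The deviation term, started from $\hat\mu_B$ versus $\mu_0$ under the product $\prod\cP^{(\hat\bpi_{s+BK}\ui,\bpi_s\uni)}$, is handled the same way. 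The deviator's comparator is assumed forgetful as well, as in the setting of Lemma \ref{lemma:LRP-Regret-SPE}, so Step 1 applies to this product too. Each of the two bracketed averages in the statement therefore differs from its $\mu_0$ version by at most
\begin{align*}
\frac{1}{K}\sum_{k=1}^K 2(1-\delta)^{\floor{k/(M\useconstant{constant:go-back-root-length})}}\leq \frac{2M\useconstant{constant:go-back-root-length}}{K\delta}.
\end{align*}
Summing the two contributions gives the $4M\useconstant{constant:go-back-root-length}/(K\delta)$ term. What remains is exactly $\epsilon_B$, the equilibrium gap of the $K$-repeated game from $\mu_0$ against the comparator segment for epoch $B$.

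\textbf{Step 3 (averaging).} I would sum over $B=0,\dots,T-1$, divide by $T$, and take $\limsup$. The boundary term does not depend on $B$, so it survives unchanged, which yields the stated inequality.

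The main obstacle I expect is Step 1. I need a uniform minorization for products of \emph{different} transition matrices, including the deviator's, and I need it with the constant $\delta$ coming from the $\ell_1$ form of forgetfulness rather than from the pointwise lower bound $\pi\ui\geq\gamma/|\cA_i|$. If the translation from the $\ell_1$ condition to a minorization constant does not work directly, my fallback is to invoke the pointwise bound $\pi\ui\geq \gamma/|\cA_i|$ from Condition \ref{assumption:2-forgetful-prime}. That bound gives $\pi(\ba\given\bh)\ge\gamma^N/|\cA|$ for every step, and hence a path of $M$ steps (which reaches any target state) has probability at least $(\gamma^N/|\cA|)^M\ge\delta$.
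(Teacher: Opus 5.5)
Your proposal is correct and follows the paper's proof essentially step for step: the paper also writes each block of consecutive transition matrices as $\delta\cU+(1-\delta)\hat\cP^{k,s}$ (asserting that this split persists for products of different matrices), so that $(\mu_1-\mu_2)$ contracts by a factor $(1-\delta)$ per block. It then bounds each of the two averaged correction terms (on-path from $\mu_B$, deviation from $\hat\mu_B$) by the same geometric-sum quantity you obtain, which leaves $\epsilon_B$. The Step 1 obstacle you flag needs no fallback: the common-successor argument of Lemma \ref{lemma:common-successor-state} picks an overlapping action one step at a time, so it yields pairwise row overlap (hence the Dobrushin bound) for time-varying strategies directly under Condition \ref{assumption:L1-forgetful}, provided the comparator is forgetful too, as you note and as the paper assumes implicitly.
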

The proof is deferred to Appendix \ref{appendix:fast-mixing}.

Since when $\bpi\ui\succeq\frac{\gamma}{|\cA_i|}$ we have $\useconstant{constant:go-back-root-length}=1$, by Lemma \ref{lemma:finite-to-infinite},
\begin{align*}
    &\frac{1}{T-t_0+1}\sum_{t=t_0}^T \rbr{f_1^{t-t_0}(\bpi_{t_0:t}\given \bh_0)-f_1^{t-t_0}(\hat\bpi\uo_{t_0:t},\bpi\uno_{t_0:t}\given \bh_0)}\\
    =& \frac{1}{T-t_0+1}\sum_{t=t_0}^{\ceil{t_0/T_0}T_0}\rbr{f_1^{t-t_0}(\bpi_{t_0:t}\given \bh_0)-f_1^{t-t_0}(\hat\bpi\uo_{t_0:t},\bpi\uno_{t_0:t}\given \bh_0)}\\
    &+\frac{1}{T-t_0+1}\sum_{t=\ceil{t_0/T_0}T_0+1}^T \rbr{f_1^{t-t_0}(\bpi_{t_0:t}|\mu_0)-f_1^{t-t_0}(\hat\bpi\uo_{t_0:t},\bpi\uno_{t_0:t}|\mu_0)}\\
    \overset{(i)}{=}&\frac{1}{T-t_0+1}\sum_{t=\ceil{t_0/T_0}T_0+1}^T \rbr{f_1^{t-t_0}(\bpi_{t_0:t}|\mu_0)-f_1^{t-t_0}(\hat\bpi\uo_{t_0:t},\bpi\uno_{t_0:t}|\mu_0)}\\
    \overset{(ii)}{\leq}&\frac{\sum_{B=\ceil{t_0/T_0}+1}^{T/T_0} \epsilon_B}{T/T_0}+\frac{4M|\cA|^M}{T_0\gamma^{NM}}\overset{(iii)}{\leq}\frac{|\cA|^M}{\gamma^{NM}}\frac{R_{T_0}^{\rm local}}{T_0}+\frac{4M|\cA|^M}{T_0\gamma^{NM}}.
\end{align*}
where $(i)$ is by $T\to\infty$ and $(ii)$ is by Lemma \ref{lemma:finite-to-infinite} and $T\to\infty$. $\mu_0$ is the history distribution at timestep $\ceil{t_0/T_0}T_0+1$ given $\bh_0$ at timestep $t_0$.
$(iii)$ is because for any $B=0,1,2,...,T/T_0$, by Lemma \ref{lemma:performance-difference-lemma}, we have
\begin{align*}
    \epsilon_B=&\frac{1}{T_0}\sum_{t=1}^{T_0} \rbr{f_1^{t-1}(\bpi_{1:t}|\mu_0)-f_1^{t-1}(\hat\bpi\uo_{1:t},\bpi\uno_{1:t}|\mu_0)}\\
    =&\frac{1}{T_0}\sum_{t=1}^{T_0} \sum_{\bh_0\in\cH_M}\mu_0(\bh_0) \sum_{\bh\in\cH_{t-1}}\Pr(\bh\given \bh_0;\hat\bpi\uo_{1:t-1},\bpi\uno_{1:t-1})\\
    &\cdot\inner{\bpi_t\uo(\cdot\given \bh_{t-M:t-1})-\hat\bpi_t\uo(\cdot\given \bh_{t-M:t-1})}{Q_t(\bh_{t-M:t-1},\cdot)}\\
    \leq& \frac{1}{T_0}\sum_{t=1}^{T_0}\sum_{\bh_0\in\cH_M}\mu_0(\bh_0)\sum_{\bh\in\cH_{t-1}} \Pr(\bh\given \bh_0;\hat\bpi\uo_{1:t-1},\bpi\uno_{1:t-1})\max\cbr{\max_{\bh'\in\cH_M} \inner{\bpi_t\uo(\cdot\given \bh')-\hat\bpi_t\uo(\cdot\given \bh')}{Q_t(\bh',\cdot)},0}\\
    \leq&\frac{|\cA|^M}{\gamma^{NM}}\frac{R_{T_0}^{\rm local}}{T_0}.
\end{align*}
In the last line, we use Lemma \ref{lemma:local-regret-to-difference}.\qed

\begin{proof}[Proof of Lemma \ref{lemma:local-regret-to-difference}]
Firstly, for any $\hat\bpi\uo_{1:T}$, we can pick a proxy strategy as $\underline\bpi\uo_{1:T}$ so that $\underline\pi_t\uo(\cdot\given \bh)=\hat\pi\uo_t(\cdot\given \bh)$ if and only if (when multiple $\bh$ satisfy this, we can arbitrarily pick one)
\begin{align*}
    \inner{\pi\uo_t(\cdot\given \bh)-\hat\pi\uo_t(\cdot\given \bh)}{Q_t(\bh,\cdot)}=\max\cbr{\max_{\bh'\in\cH_M} \inner{\pi\uo_t(\cdot\given \bh')-\hat\pi\uo_t(\cdot\given \bh')}{Q_t(\bh',\cdot)},0}.
\end{align*}
Otherwise, we have $\underline\pi_t\uo(\cdot\given \bh)=\pi\uo_t(\cdot\given \bh)$\footnote{The reason that Lemma \ref{lemma:LRP-Regret-SPE} only holds when $P_T=O(T)$ is that the variation of $\underline\bpi_{1:T}$ may be linear in $T$ by the construction described above.}. Therefore, by Lemma \ref{lemma:performance-difference-lemma}, we have
\begin{align*}
    &J_T(\bpi_{1:T}\uo, \bpi_{1:T}\uno)-J_T(\tilde\bpi_{1:T}^{s,(1)}, \bpi_{1:T}\uno)\\
    =&\sum_{t=1}^T \sum_{\bh\in\cH_{t-1}} \Pr(\bh;\tilde\bpi^{s,(1)}_{1:t-1},\bpi\uno_{1:t-1})\inner{\bpi_t\uo(\cdot\given \bh_{t-M:t-1})-\tilde\bpi_t^{s,(1)}(\cdot\given \bh_{t-M:t-1})}{Q_t(\bh_{t-M:t-1},\cdot)}\\
    =&\sum_{\bh\in\cH_{s-1}} \Pr(\bh;\bpi\uo_{1:s-1},\bpi\uno_{1:s-1})\inner{\bpi_s\uo(\cdot\given \bh_{s-M:s-1})-\underline\bpi_s\uo(\cdot\given \bh_{s-M:s-1})}{Q_s(\bh_{s-M:s-1},\cdot)}\\
    \geq& \frac{\gamma^{NM}}{|\cA|^M} \max\cbr{\max_{\bh'\in\cH_M} \inner{\pi\uo_s(\cdot\given \bh')-\hat\pi\uo_s(\cdot\given \bh')}{Q_s(\bh',\cdot)},0}.
\end{align*}
where $\tilde\bpi_t^{s,(1)}=\underline\bpi_t\uo$ only when $t=s$ and $\tilde\bpi_t^{s,(1)}=\bpi_t\uo$ otherwise.
The last line is because the probability of each $\bh\in\cH_M$ occurs with probability no less than $\frac{\gamma^{NM}}{|\cA|^M}$. Therefore, we prove that
\begin{align*}
    \sum_{t=1}^T \max\cbr{\max_{\bh'\in\cH_M} \inner{\pi\uo_t(\cdot\given \bh')-\hat\pi\uo_t(\cdot\given \bh')}{Q_t(\bh',\cdot)},0}\leq& \frac{|\cA|^M}{\gamma^{NM}}\sum_{s=1}^T\rbr{J_T(\bpi_{1:T}\uo, \bpi_{1:T}\uno)-J_T(\tilde\bpi_{1:T}^{s,(1)}, \bpi_{1:T}\uno)}\\
    \leq& \frac{|\cA|^M}{\gamma^{NM}}R_T^{\rm local}.\qedhere
\end{align*}

\end{proof}

\subsection{{\tt RP-Regret} and SPNE}

\label{appendix:RP-Regret-to-SPE}

From the discussion above, since $\bpi\ui\succeq\frac{\gamma}{|\cA_i|}$\footnote{In fact, here we do not necessarily require this. Satisfying Condition \ref{assumption:L1-forgetful} is enough.}, by Lemma \ref{lemma:finite-to-infinite}, we have
\begin{align*}
    &\frac{1}{T-t_0+1}\sum_{t=t_0}^T \rbr{f_1^{t-t_0}(\bpi_{t_0:t}\given \bh_0)-f_1^{t-t_0}(\hat\bpi\uo_{t_0:t},\bpi\uno_{t_0:t}\given \bh_0)}\\
    \leq&\frac{\sum_{B=\ceil{t_0/T_0}+1}^{T/T_0} \epsilon_B}{T/T_0}+\frac{4M|\cA|^M}{T_0\gamma^{NM}}
\end{align*}
by the discussion in Appendix \ref{appendix:LPR-Regret-SPE}. Then,
\begin{align*}
    \epsilon_B \coloneqq &\frac{1}{T_0}\sum_{t=1}^{T_0} \rbr{f_1^{t-1}(\bpi_{1:t}|\mu_0)-f_1^{t-1}(\hat\bpi\uo_{1:t},\bpi\uno_{1:t}|\mu_0)}\\
    \overset{(i)}{\leq}&\frac{R_{T_0}}{T_0}\overset{(ii)}{\leq}\frac{O(T_0^p (P^B_{T_0})^q)}{T_0}~~~~~~~~~~(p+q=1,p<1).
\end{align*}
where $(i)$ is by definition of {\tt RP-Regret} and $(ii)$ is the condition of Lemma \ref{lemma:RP-Regret-SPE} (since {\tt RP-Regret} is sublinear such $p,q$ must exist). Note that we define $P_{T_0}^B=\sum_{t=BT_0+2}^{(B+1)T_0} \nbr{\hat\bpi\uo_t-\hat\bpi\uo_{t-1}}_\infty$. So,
\begin{align*}
    &\frac{1}{T-t_0+1}\sum_{t=t_0}^T \rbr{f_1^{t-t_0}(\bpi_{t_0:t}\given \bh_0)-f_1^{t-t_0}(\hat\bpi\uo_{t_0:t},\bpi\uno_{t_0:t}\given \bh_0)}\\
    \leq&\frac{\sum_{B=\ceil{t_0/T_0}+1}^{T/T_0} O(T_0^p (P_{T_0}^B)^q)}{T}+\frac{4M|\cA|^M}{T_0\gamma^{NM}}\\
    \leq& O\rbr{\rbr{\frac{P_T}{T}}^q}+\frac{4M|\cA|^M}{T_0\gamma^{NM}}.
\end{align*}
The last line is due to the concavity of function $x^q(q<1)$. So, $\frac{1}{N}\sum_{i=1}^N x_i^q\leq \rbr{\frac{\sum_{i=1}^N x_i}{N}}^q$.\qed

\section{Finding Subgame Perfect Coarse Correlated Equilibrium in Repeated Games}
\label{appendix:SPCCE}

\subsection{Computation of SPCCE}
\label{appendix:computation-SPCCE}

\begin{lemma}
\label{lemma:convergence-CCE-finite-MDP}
Algorithm \ref{alg:self-play-finite-MDP} will guarantee the following upper-bound on regret ($e$ is the Euler's number) for any player $i$ and comparator $\hat\bpi\ui$ satisfying Condition \ref{assumption:2-forgetful-prime} with $\bnu\ui$ as the uniform strategy over $\Delta_{|\cA_i|}$,
\begin{align*}
    \max_{i=1,2,...,N}\max_{\hat\bpi\ui}\sum_{t=1}^T \rbr{V^{\bar\bpi_t}(\bh^1)-V^{\hat\bpi\ui,\bar\bpi\uni_t}(\bh^1)}\leq& 8eK^2 \sqrt{\max_{i=1,2,...,N}|\cA_i|K T},
\end{align*}
where $K$ is the horizon of the Markov game.
\end{lemma}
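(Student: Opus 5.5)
The plan is to treat \Cref{alg:self-play-finite-MDP} as a decentralized, V-learning-style scheme on the $K$-horizon Markov game obtained from Definition \ref{def:induced-MDP}. In that game every player runs, at each (step, state) pair $(k,\bh)$, an adversarial no-regret learner whose loss vector is built from the current estimate of the next-step value. We fix a player $i$ and a comparator $\hat\bpi\ui$ satisfying Condition \ref{assumption:2-forgetful-prime}. The regret $\sum_{t}(V^{\bar\bpi_t}(\bh^1)-V^{\hat\bpi\ui,\bar\bpi\uni_t}(\bh^1))$ is then bounded by backward induction over the steps $k=K,K-1,\dots,1$, with a per-state regret term added at each level. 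Because the opponents' strategies $\bar\bpi\uni_t$ are fixed once player $i$'s deviation is fixed, the deviation problem is a single-agent MDP. This lets us use the performance-difference identity (the finite-horizon analogue of Lemma \ref{lemma:performance-difference-lemma}) step by step.

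The key steps are as follows.

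\emph{Step 1: per-step decomposition.} For each step $k$ and state $\bh$, write $\delta_k(\bh)$ for the gap between the algorithm's value estimate at $(k,\bh)$ and the deviator's value at $(k,\bh)$. Unrolling one step gives
\begin{align*}
\delta_k(\bh)\leq \mathrm{Reg}_k(\bh)+\EE\big[\delta_{k+1}(\bh')\big],
\end{align*}
where $\bh'$ is distributed according to the deviator's transition under $(\hat\bpi\ui,\bar\bpi\uni_t)$ and $\mathrm{Reg}_k(\bh)$ is the regret of the local learner at $(k,\bh)$ against the fixed action distribution $\hat\pi\ui(\cdot\given\bh)$, with losses $\cL_i+\hat V_{k+1}$.

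\emph{Step 2: per-state regret.} With averaging weights $\alpha_n=\frac{K+1}{K+n}$ (as in \citet{jin2021v}), the weighted Hedge/FTRL regret at each state is $O\big(K\sqrt{|\cA_i| K/n}\big)$ after $n$ visits. Losses are bounded by $K$, which produces one factor of $K$. Since every state is reachable under Condition \ref{assumption:2-forgetful-prime}, only a bounded number of distinct states occur at each step, and summing over visits $n\le T$ gives $O\big(K\sqrt{|\cA_i|KT}\big)$ per step.

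\emph{Step 3: propagation.} The weighted averaging of $\hat V_{k+1}$ introduces a multiplicative factor $\sum_{n\ge j}\alpha_n\prod_{l}(1-\alpha_l)\le 1+\frac1K$ each time the error is propagated one step. Unrolling over $K$ steps therefore costs at most $(1+\frac1K)^K\le e$. Summing the per-step bound of Step 2 over the $K$ steps supplies the second factor of $K$, and together these give $8eK^2\sqrt{\max_i|\cA_i|KT}$. Finally, we take the maximum over $i$ and over $\hat\bpi\ui$. The bound is uniform in $\hat\bpi\ui$, since Step 2 only uses that $\hat\pi\ui(\cdot\given\bh)$ is a fixed point of the simplex (restricted by the exploration floor).

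The main obstacle is Step 2 combined with Step 3. The loss fed to each local learner depends on the non-stationary estimate $\hat V_{k+1}$, which is itself produced by the learners at the next step. So the analysis must control a weighted regret with time-varying step sizes $\alpha_n$ and then use the weight identities ($\sum_n\alpha_n^j=1+\frac1K$ and $\sum_j\alpha_n^j=1$) to pass the error down without blow-up. I would first try to reuse the weighted adversarial bandit lemma and the $\alpha$-lemma of \citet{jin2018q-alpha-lemma} directly, in their full-information version. A secondary check is that restricting the learners to the exploration-floor set $\{\pi\ui\ge \gamma/|\cA_i|\}$, a convex subset of the simplex, does not change the FTRL regret constants. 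It should not, because the comparator lies in that same set.
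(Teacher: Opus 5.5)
Your Steps 1--2 bound (value estimate) $-$ (deviator's value), but the lemma is about $V^{\bar\bpi_t}(\bh^1)$. That is the true value of the output policy $\bar\bpi_t$, the $\alpha_t^s$-weighted mixture of $\bpi_1,\dots,\bpi_t$. Nothing in your proposal relates the estimate to $V^{\bar\bpi_t}$.

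In \Cref{alg:self-play-finite-MDP} the estimate is $\underline V_t$, formed by subtracting the bonus $\beta_t$ and clipping at $0$, so it is pessimistic. The paper runs essentially your Steps 1--2 (per-state weighted OMD regret at most $4\sqrt{|\cA_1|K^3/t}$, backward induction over $k$) to prove Lemma \ref{lemma:underline-V-bound}: $\underline V_t(\bh^k)\le V^{\hat\bpi\uo,\bar\bpi\uno_t}(\bh^k)$ for every comparator, because $\beta_t$ absorbs the per-state regret. So your $\delta_k$ is simply nonpositive and says nothing about the regret.

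All of the content sits in the other term, $V^{\bar\bpi_t}-\underline V_t$, which you never bound. The paper handles it with a second recursion along the \emph{played} joint policies $\bpi_s$, not the deviator's transitions. Set $\delta_t(\bh^k)\coloneqq V^{\bar\bpi_t}(\bh^k)-\underline V_t(\bh^k)$ and $\delta_t^k\coloneqq\max_{\bh\in\cH_M}\delta_t(\bh^k)$. Then $\delta_t^k\le\sum_{s\le t}\alpha_t^s\delta_s^{k+1}+\sum_{s\le t}\alpha_t^s\beta_s$, and the bonus sum is $O(\sqrt{|\cA_1|K^3/t})$ by $\sum_{s}\alpha_t^s/\sqrt s\le 2/\sqrt t$. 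Summing over $t$ with $\sum_{t\ge s}\alpha_t^s=1+\frac1K$ gives $\sum_t\delta_t^k\le(1+\frac1K)\sum_t\delta_t^{k+1}+O(\sqrt{|\cA_1|K^3T})$. This is where the $(1+\frac1K)^K\le e$ propagation and the $K^2$ really enter, and the lemma then follows from regret $\le\sum_t\delta_t^1$.

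Your single-recursion plan can be rescued, but only for a modified algorithm and only if you state the missing identity. Remove the bonus and clipping. In this full-information setting, the estimate $\sum_{s\le t}\alpha_t^s\langle\bpi_s\uo(\cdot\given\bh^k),\bg_s(\bh^k,\cdot)\rangle$ then equals $V^{\bar\bpi_t}(\bh^k)$ exactly by backward induction. This requires reading $\bar\bpi_t$ as the certified policy: draw $s$ with probability $\alpha_t^s$ at each level and continue with $\bar\bpi_s$. Your Step 1 then applies directly to $V^{\bar\bpi_t}-V^{\hat\bpi\uo,\bar\bpi\uno_t}$, with the continuation $\sum_s\alpha_t^s\EE[\cdot]$ evaluated at index $s$ rather than $t$, and no pessimism is needed.

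Separately, Step 2's appeal to visit counts, reachability under Condition \ref{assumption:2-forgetful-prime}, and a ``bounded number of distinct states'' does not match the algorithm. Every state at every level is updated in every iteration, so the per-state bound holds uniformly with $n=t$. You should take the maximum over states at each level; summing per-state regrets over states would introduce an $|\cH_M|$ factor that is absent from the claimed bound.
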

We defer the proof to the latter part of this section. %

By Lemma \ref{lemma:convergence-CCE-finite-MDP}, $\max_{i=1,2,...,N}\frac{1}{T_0}\sum_{t=1}^{T_0} \rbr{V^{\bar\bpi_t}(\bh^1)-V^{\hat\bpi\ui,\bar\bpi\uni_t}(\bh^1)} \leq 8eK^2 \sqrt{\frac{\max_{i=1,2,...,N}|\cA_i|K}{T_0}}$  ($e$ is the Euler's number) in $T_0$ iterations when all players apply Algorithm \ref{alg:self-play-finite-MDP}. Then, we will first sample a strategy $\bpi_{t_0}$ uniformly from $1,2,...,T_0$, for any horizon $k$ (possibly larger than $K$), we choose $\bpi_{t_0}(\given \bh^{(k-1)\% K+1})$ as the strategy for any $\bh\in\cH_M$. Therefore, by Lemma \ref{lemma:finite-to-infinite} and following a similar proof as Theorem \ref{lemma:RP-Regret-SPE}, we will get an $O\rbr{\frac{1}{T_0^{2/7}}}$-approximate $O(T)$-robust SPCCE. \qed

\begin{algorithm}
\caption{Full-information version of \cite[Algorithm 1]{mao2022provably-CCE-MDP} (see also \cite{jin2021v,songcan})}
\label{alg:self-play-finite-MDP}
\begin{algorithmic}[1]
\State{Let $\psi(\bx)=\frac{1}{2}\nbr{\bx}^2$ for $\bx\in \Delta_{|\cA_1|}$, $D_\psi(\bx,\by)=\psi(\bx)-\psi(\by)-\inner{\nabla\psi(\by)}{\bx-\by}$ for $\bx,\by\in \Delta_{|\cA_1|}$.}
\State{Initialize $\forall k\in\{1,2,...,K\}, \bh\in\cH_M, a\in\cA_1,~~~\underline{V}_0(\bh^k)\leftarrow 0$.}
\State{Initialize $\bpi\uo(\cdot\given \bh^k)\leftarrow \frac{\one}{|\cA_1|}$ as the uniform distribution over $\Delta_{|\cA_1|}$}
\State{Initialize $\eta_1\leftarrow \frac{1}{\sqrt T}, \alpha_0\leftarrow 1$}
\For{$t=1,2,...,T$}
\State{$\alpha_t\leftarrow \frac{K+1}{K+t},\beta_t\leftarrow 4\sqrt{\frac{K^3|\cA_1|}{t}},\eta_t\leftarrow \sqrt{\frac{1}{|\cA_1|K t}}$}
\State{$\forall \bh\in\cH_M, \underline{V}_t(\bh^{K+1})\leftarrow 0$}
\For{$k=K,K-1,...,1$}
\For{$\bh^k\in\cH_M$}
\For{$a_1\in\cA_1$}
\State{$g_t(\bh^k,a_1)\leftarrow \sum_{\ba_{-1}\in\cA_{-1}}\pi_t\uni(\ba_{-1}\given \bh^k)\rbr{\cL_1(a_1,\ba_{-1})+\underline{V}_t((\bh_{2:M},(a_1,\ba_{-1}))^{k+1})}$}
\EndFor
\State{$\tilde V_{t}(\bh^k)\leftarrow (1-\alpha_t) \underline{V}_{t-1}(\bh^k)+\alpha_t\rbr{\inner{\bpi_t\uo(\cdot\given\bh^k)}{\bg_t(\bh^k,\cdot)}-\beta_t}$}
\State{$\underline{V}_{t}(\bh^k)\leftarrow \max\cbr{\tilde V_{t}(\bh^k),0}$}
\State{$\theta'\leftarrow \argmin_{\theta\in\Delta_{|\cA_1|}} \cbr{\eta_t\inner{\theta}{\bg_t(\bh^k)}+D_{\psi}(\theta,\bpi_t\uo(\cdot\given \bh^k))}$}
\State{$\bpi_{t+1}\uo(\cdot\given \bh^k)\leftarrow \lambda_t \theta'+(1-\lambda_t)\frac{\one}{|\cA_1|}$ where $\lambda_t=\frac{\eta_{t+1}\alpha_t(1-\alpha_{t+1})}{\eta_t\alpha_{t+1}}$}
\EndFor
\EndFor
\EndFor
\end{algorithmic}
\end{algorithm}

Firstly, define $\alpha_t^s=\alpha_s\prod_{j=s+1}^t (1-\alpha_j)$. Notice that $\sum_{s=1}^t \alpha_t^s=1$. Then, we can define $\bar\bpi_t\ui$ as the strategy sampled from $\bpi_1\ui,\bpi_2\ui,...,\bpi_t\ui$ with probability $\alpha_t^1,\alpha_t^2,...,\alpha_t^t$. We will first prove that $\underline{V}_t(\bh^k)\leq V^{\hat\bpi\uo,\bar\bpi_t\uno}(\bh^k)$ for any $\hat\bpi\uo$ in Lemma \ref{lemma:underline-V-bound}. Then, the regret is upper-bounded by $\sum_{t=1}^T \rbr{V^{\bar\bpi_t}(\bh^k)-\underline{V}_t(\bh^k)}$, which can be proved to be upper-bounded by $\sqrt T$.%

\begin{lemma}
\label{lemma:underline-V-bound}
    At any timestep $t$, Algorithm \ref{alg:self-play-finite-MDP} guarantees that $\underline{V}_t(\bh^k)\leq V^{\hat\bpi\uo,\bar\bpi_t\uno}(\bh^k)$ for any $\hat\bpi\uo$.
    \proof

    Firstly, by definition, we have $0=\underline{V}_t(\bh^{K+1})\leq V^{\hat\bpi\uo,\bar\bpi_t\uno}(\bh^{K+1})$. Then, suppose it holds for $k+1$, then we have
    \begin{align*}
        V^{\hat\bpi\uo,\bar\bpi_t\uno}(\bh^k) \coloneqq &\sum_{s=1}^t \alpha_{t}^s \sum_{\ba\in\cA} \pi_s\uno(\ba_{-1}\given \bh^k)\hat\pi\uo(a_1\given \bh^k)\rbr{\cL_1(\ba)+V^{\hat\bpi\uo,\bar\bpi_t\uno}((\bh_{2:M},\ba)^{k+1})}\\
        \geq& \min_{\tilde\bpi\uo}\sum_{s=1}^t \alpha_{t}^s \sum_{\ba\in\cA} \pi_s\uno(\ba_{-1}\given \bh^k)\tilde\pi\uo(a_1\given \bh^k)\rbr{\cL_1(\ba)+V^{\tilde\bpi\uo,\bar\bpi_t\uno}((\bh_{2:M},\ba)^{k+1})}\\
        \geq& \min_{\tilde\bpi\uo}\sum_{s=1}^t \alpha_{t}^s \sum_{\ba\in\cA} \pi_s\uno(\ba_{-1}\given \bh^k)\tilde\pi\uo(a_1\given \bh^k)\rbr{\cL_1(\ba)+\underline{V}_s((\bh_{2:M},\ba)^{k+1})}\\
        =&\min_{\tilde\bpi\uo}\sum_{s=1}^t \alpha_{t}^s \inner{\tilde\bpi\uo(\cdot\given \bh^k)}{\bg_s(\bh^k,\cdot)}.
    \end{align*}
    Note that $(\bh_{2:M},\ba)\in\cH_M$ is a state and $(\bh_{2:M},\ba)^k$ denotes that this state is in the $k^{\rm th}$ level.

    \begin{lemma}
    \label{lemma:OMD-upper-bound-finite-horizon}
    Consider the following update-rule.
    \begin{align*}
        &\theta_{t+1}'=\argmin_{\theta\in\cX}\cbr{\eta_t\inner{\bg_t}{\theta}+\frac{1}{2}\nbr{\theta-\theta_t}^2}\\
        &\theta_{t+1}=\lambda_t \theta_{t+1}'+(1-\lambda_t)\theta_1\\
        &\lambda_t=\frac{\eta_{t+1} w_{t+1}^t}{\eta_t w_{t+1}^{t+1}}.
    \end{align*}
    The weight satisfies that $\frac{w_{T_1}^{t+1}}{w_{T_1}^t}=\frac{w_{T_2}^{t+1}}{w_{T_2}^t}$ for all $T_1,T_2\geq t+1$. We have
    \begin{align*}
        \sum_{t=1}^T w_T^t\inner{\theta_t-\hat\theta}{\bg_t}\leq D\frac{w_T^Tw_{T+1}^{T+1}}{2w_{T+1}^T\eta_T}+\sum_{t=1}^T \eta_t w_T^t\nbr{\bg_t}^2
    \end{align*}
    where $D \coloneqq \max_{\theta,\theta'\in\cX} \nbr{\theta-\theta'}^2$.

\end{lemma}

    Further, by Lemma \ref{lemma:OMD-upper-bound-finite-horizon} (notice that $\lambda_t\leq \frac{\alpha_t (1-\alpha_{t+1})}{\alpha_{t+1}}=\frac{t}{K+t}\leq 1$ so Lemma \ref{lemma:OMD-upper-bound-finite-horizon} holds here), we have
    \begin{align*}
        \sum_{s=1}^t \alpha_{t}^s \inner{\bpi_s\uo(\cdot\given \bh^k)}{\bg_s(\bh^k,\cdot)}-\min_{\hat\bpi\uo}\sum_{s=1}^t \alpha_{t}^s \inner{\hat\bpi\uo(\cdot\given \bh^k)}{\bg_s(\bh^k,\cdot)}\leq& \frac{D\alpha_{t+1}}{(1-\alpha_{t+1})\eta_t}+\sum_{s=1}^t \eta_s \alpha_t^s \nbr{\bg_s(\bh^k,\cdot)}^2\\
        \leq& \frac{K+1}{t}\sqrt {|\cA_1|K t}+\sqrt{|\cA_1|K^3}\sum_{s=1}^t \frac{\alpha_t^s}{\sqrt s}\\
        \leq& \frac{4\sqrt{|\cA_1|K^3}}{\sqrt t}
    \end{align*}
    where the first inequality is by $D\leq 2$. The second line is by $\nbr{\bg_s(\bh^k,\cdot)}^2\leq K^2 |\cA_1|$ and the last line is by \cite[Lemma 4.1]{jin2018q-alpha-lemma} that $\sum_{s=1}^t \frac{\alpha_t^s}{\sqrt s}\leq \frac{2}{\sqrt t}$. Therefore
    \begin{align*}
        V^{\hat\bpi\uo,\bar\bpi_t\uno}(\bh^k)\geq& \min_{\tilde\bpi\uo}\sum_{s=1}^t \alpha_{t}^s \inner{\tilde\bpi\uo(\cdot\given \bh^k)}{\bg_s(\bh^k,\cdot)}\\
        \geq& \sum_{s=1}^t \alpha_{t}^s \inner{\bpi_s\uo(\cdot\given \bh^k)}{\bg_s(\bh^k,\cdot)}-\frac{4\sqrt{|\cA_1|K^3}}{\sqrt t}\\
        =&\underline{V}_t(\bh^k).\qedhere
    \end{align*}
\end{lemma}

\begin{proof}[Proof of Lemma \ref{lemma:convergence-CCE-finite-MDP}]

Define $\delta_t(\bh^k)=V^{\bar\bpi_t}(\bh^k)-\underline{V}_t(\bh^k)$. Then, we have
\begin{align*}
    \delta_t(\bh^k)\leq&\sum_{s=1}^t \alpha_t^s \sum_{\ba\in\cA}\pi_s(\ba\given \bh^k)\rbr{V^{\bar\bpi_s}((\bh_{2:M},\ba)^{k+1})-\underline{V}_s((\bh_{2:M},\ba)^{k+1})+\beta_s}\\
    =&\sum_{s=1}^t \alpha_t^s \sum_{\ba\in\cA}\pi_s(\ba\given \bh^k)\rbr{\delta_s((\bh_{2:M},\ba)^{k+1})+\beta_s}\\
    \leq& \sum_{s=1}^t \alpha_t^s \delta_s^{k+1}+\frac{4\sqrt{|\cA_1|K^3}}{\sqrt t}
\end{align*}
where the last line is by our definition that $\delta_t^k \coloneqq \max_{\bh\in\cH_M} \delta_t(\bh^k)$. Then, taking the sum of $t$, we have
\begin{align*}
    \sum_{t=1}^T \delta_t^k\leq&\sum_{t=1}^T \sum_{s=1}^t \alpha_t^s \delta_s^{k+1}+\sum_{t=1}^T \frac{4\sqrt{|\cA_1|K^3}}{\sqrt t}\\
    =&\sum_{s=1}^T \delta_s^{k+1} \sum_{t=s}^T \alpha_t^s +\sum_{t=1}^T \frac{4\sqrt{|\cA_1|K^3}}{\sqrt t}\\
    \overset{(i)}{\leq}&\sum_{s=1}^T \delta_s^{k+1} \sum_{t=s}^\infty \alpha_t^s + 8\sqrt{|\cA_1|K^3 T}\\
    \overset{(ii)}{=}&(1+\frac{1}{K})\sum_{s=1}^T \delta_s^{k+1}+8\sqrt{|\cA_1|K^3 T}
\end{align*}
where $(i)$ is because $\sum_{i=1}^n \frac{1}{\sqrt i}\leq \int_0^n \frac{1}{\sqrt x}dx=2\sqrt n$. $(ii)$ is by \cite[Lemma 4.1]{jin2018q-alpha-lemma} that $\sum_{t=s}^\infty \alpha_t^s=1+\frac{1}{K}$. Therefore, using the inequalities above recursively, we have
\begin{align*}
    \sum_{t=1}^T \delta_t^1\leq& 8e\sum_{k=1}^K \sqrt{|\cA_1|K^3 T}= 8eK^2 \sqrt{|\cA_1| K T}
\end{align*}
where the first inequality is by $(1+\frac{1}{K})^K\leq e$. Therefore, for any $\bh\in\cH_M$, we have
\begin{align*}
    \sum_{t=1}^T \rbr{V^{\bar\bpi_t}(\bh^1)-V^{\hat\bpi\uo,\bar\bpi\uno_t}(\bh^1)}\overset{(i)}{\leq}& \sum_{t=1}^T \rbr{V^{\bar\bpi_t}(\bh^1)-\underline{V}_t(\bh^1)}= \sum_{t=1}^T \delta_t^1\leq 8eK^2 \sqrt{|\cA_1|K T}
\end{align*}
where $(i)$ is by Lemma \ref{lemma:underline-V-bound}. This proves the claim for player $1$. Applying the same argument to each player $i$ replaces $|\cA_1|$ by $|\cA_i|$, and taking the maximum over $i$ gives the stated bound.
\end{proof}

\begin{lemma}
    \label{lemma:OMD-bound-fundamental}
    For any fixed $\hat\theta\in\cX$ where $\cX$ is a convex set, the weighted regret of Algorithm \ref{alg:self-play-finite-MDP} with respect to $\theta$ can be bounded by
    \begin{align*}
        \sum_{t=1}^T w_T^t\inner{\theta_t-\hat\theta}{\bg_t}\leq \frac{w_T^Tw_{T+1}^{T+1}}{w_{T+1}^T\eta_T} D_{\psi}(\hat\theta,\theta_1)+\sum_{t=1}^T w_T^t\nbr{\bg_t}\cdot \nbr{\theta_t-\theta_{t+1}}
    \end{align*}
    where the update rule is
    \begin{align*}
        &\theta_{t+1}'=\argmin_{\theta\in\cX}\cbr{\eta_t\inner{\bg_t}{\theta}+D_{\psi}(\theta,\theta_t)}\\
        &\theta_{t+1}=\lambda_t \theta_{t+1}'+(1-\lambda_t)\theta_1\\
        &\lambda_t=\frac{\eta_{t+1} w_{t+1}^t}{\eta_t w_{t+1}^{t+1}}.
    \end{align*}
    The weight satisfies that $\frac{w_{T_1}^{t+1}}{w_{T_1}^t}=\frac{w_{T_2}^{t+1}}{w_{T_2}^t}$ for all $T_1,T_2\geq t+1$.
    \proof
    Since $\theta_{t+1}'=\argmin_{\theta\in\cX}\cbr{\inner{\eta_t \bg_t}{\theta}+D_{\psi}(\theta,\theta_t)}$ and $D_{\psi}(\theta,\theta_t)=\psi(\theta)-\psi(\theta_t)-\inner{\nabla \psi(\theta_t)}{\theta-\theta_t}$, by first-order optimality and convexity of $\cX$, we have
    \begin{align*}
        \inner{\eta_t\bg_t+\nabla \psi(\theta_{t+1}')-\nabla\psi(\theta_t)}{\hat\theta-\theta_{t+1}'}\geq 0.
    \end{align*}
    By some algebraic manipulation, we have
    \begin{align}
        \inner{\bg_t}{\theta_t-\hat\theta}\leq& \frac{1}{\eta_t}\inner{\nabla \psi(\theta_{t+1}')-\nabla\psi(\theta_t)}{\hat\theta-\theta_{t+1}'}+\inner{\bg_t}{\theta_t-\theta_{t+1}'}\notag\\
        =&\frac{1}{\eta_t}\rbr{D_{\psi}(\hat\theta,\theta_{t})-D_{\psi}(\hat\theta,\theta_{t+1}')-D_{\psi}(\theta_{t+1}',\theta_t)}+\inner{\bg_t}{\theta_t-\theta_{t+1}'}\notag\\
        \leq&\frac{1}{\eta_t}\rbr{D_{\psi}(\hat\theta,\theta_{t})-D_{\psi}(\hat\theta,\theta_{t+1}')}+\inner{\bg_t}{\theta_t-\theta_{t+1}'}\label{eq:OMD-bound-eq1}
    \end{align}
    where the last line is by the non-negativity of Bregman divergence.

    Note that by convexity of Bregman divergence and $\theta_{t+1}=\lambda_t \theta_{t+1}'+(1-\lambda_t)\theta_1$,
    \begin{align*}
        \lambda_t D_\psi(\hat\theta,\theta_{t+1}')+(1-\lambda_t) D_{\psi}(\hat\theta,\theta_1)\geq D_\psi(\hat\theta, \theta_{t+1}).
    \end{align*}
    By some algebraic manipulation and substituting it back to \Cref{eq:OMD-bound-eq1}, we have
    \begin{align*}
        &w_T^t\inner{\bg_t}{\theta_t-\hat\theta}\\
        \leq& \frac{w_T^t}{\eta_t}\rbr{D_{\psi}(\hat\theta,\theta_{t})-D_{\psi}(\hat\theta,\theta_{t+1}')}+w_T^t\inner{\bg_t}{\theta_t-\theta_{t+1}'}\\
        \leq& \frac{w_T^t}{\eta_t}\rbr{D_{\psi}(\hat\theta,\theta_{t})-\frac{1}{\lambda_t} D_\psi(\hat\theta, \theta_{t+1})+\frac{1-\lambda_t}{\lambda_t} D_\psi(\hat\theta, \theta_1)}+w_T^t\inner{\bg_t}{\theta_t-\theta_{t+1}'}\\
        =&\frac{w_T^t}{\eta_t} D_{\psi}(\hat\theta,\theta_{t})-\frac{w_T^t}{w_{t+1}^t}\frac{w_{t+1}^{t+1}}{\eta_{t+1}} D_\psi(\hat\theta, \theta_{t+1})-\rbr{\frac{w_T^t}{\eta_t}-\frac{w_T^t w_{t+1}^{t+1}}{\eta_{t+1}w_{t+1}^t}} D_\psi(\hat\theta, \theta_1)+w_T^t\inner{\bg_t}{\theta_t-\theta_{t+1}'}\\
        =&\frac{w_T^t}{\eta_t} D_{\psi}(\hat\theta,\theta_{t})-\frac{w_T^{t+1}}{\eta_{t+1}} D_\psi(\hat\theta, \theta_{t+1})+\rbr{\frac{w_T^{t+1}}{\eta_{t+1}}-\frac{w_T^t}{\eta_t}} D_\psi(\hat\theta, \theta_1)+w_T^t\inner{\bg_t}{\theta_t-\theta_{t+1}'}
    \end{align*}
    where the last line is because $\frac{w_{t+1}^{t+1}}{w_{t+1}^t}=\frac{w_T^{t+1}}{w_T^t}$. For consistency here, we define $w_T^{T+1} \coloneqq \frac{w_T^Tw_{T+1}^{T+1}}{w_{T+1}^T}$.

    Summing over $t$ and telescoping, we have
    \begin{align*}
        \sum_{t=1}^T w_T^t\inner{\bg_t}{\theta_t-\hat\theta}\leq& \frac{w_T^1}{\eta_1}D_{\psi}(\hat\theta,\theta_1)+\sum_{t=1}^T \rbr{\frac{w_T^{t+1}}{\eta_{t+1}}-\frac{w_T^t}{\eta_t}} D_{\psi}(\hat\theta,\theta_1)+\sum_{t=1}^T w_T^t\inner{\bg_t}{\theta_t-\theta_{t+1}'}\\
        \leq& \frac{w_T^{T+1}}{\eta_{T+1}}D_{\psi}(\hat\theta,\theta_1)+\sum_{t=1}^T w_T^t\nbr{\bg_t}\cdot \nbr{\theta_t-\theta_{t+1}'}\\
        =&\frac{w_T^Tw_{T+1}^{T+1}}{w_{T+1}^T\eta_T}D_{\psi}(\hat\theta,\theta_1)+\sum_{t=1}^T w_T^t\nbr{\bg_t}\cdot \nbr{\theta_t-\theta_{t+1}'}
    \end{align*}
    where we define $w_T^{T+1} \coloneqq \frac{w_T^Tw_{T+1}^{T+1}}{w_{T+1}^T}$.\qed
\end{lemma}

\begin{proof}[Proof of Lemma \ref{lemma:OMD-upper-bound-finite-horizon}]
    Since the update-rule in Lemma \ref{lemma:convergence-CCE-finite-MDP} is a special case of Lemma \ref{lemma:OMD-bound-fundamental} with $\psi(\bx)=\frac{1}{2}\nbr{\bx}^2$, we have
    \begin{align*}
        \sum_{t=1}^T w_T^t\inner{\theta_t-\hat\theta}{\bg_t}\leq& \frac{w_T^Tw_{T+1}^{T+1}}{w_{T+1}^T\eta_T}D_{\psi}(\hat\theta,\theta_1)+\sum_{t=1}^T w_T^t\nbr{\bg_t}\cdot \nbr{\theta_t-\theta_{t+1}'}\\
        =&\frac{w_T^Tw_{T+1}^{T+1}}{2 w_{T+1}^T\eta_T}\nbr{\hat\theta-\theta_1}^2+\sum_{t=1}^T w_T^t\nbr{\bg_t}\cdot \nbr{\theta_t-\theta_{t+1}'}.
    \end{align*}
    Also, notice that when $\psi(\bx)=\frac{1}{2}\nbr{\bx}^2$, we have
    \begin{align*}
        \theta_{t+1}'=&\argmin_{\theta\in\cX} \cbr{\eta_t\inner{\bg_t}{\theta}+\frac{1}{2}\nbr{\theta-\theta_t}^2}=\Proj{\cX}{\theta_t-\eta_t \bg_t}.
    \end{align*}
    Therefore, $\nbr{\theta_t-\theta_{t+1}'}\leq \nbr{\theta_t-(\theta_t-\eta_t \bg_t)}=\eta_t\nbr{\bg_t}$. So,
    \begin{align*}
        \sum_{t=1}^T w_T^t\inner{\theta_t-\hat\theta}{\bg_t}\leq& \frac{w_T^Tw_{T+1}^{T+1}}{2 w_{T+1}^T\eta_T}\nbr{\hat\theta-\theta_1}^2+\sum_{t=1}^T \eta_t w_T^t\nbr{\bg_t}^2\\
        \leq&D\frac{w_T^Tw_{T+1}^{T+1}}{2w_{T+1}^T\eta_T}+\sum_{t=1}^T \eta_t w_T^t\nbr{\bg_t}^2.
    \end{align*}
    The last line is by the definition that $\max_{\theta,\theta'\in\cX}\nbr{\theta-\theta'}^2\leq D$.
\end{proof}

\section{Auxiliary Lemmas}
\label{appendix:auxiliary-lemmas}

\begin{lemma}
\label{lemma:Euler-func-bound}
For any $x_1,x_2,...,x_n\in [0,1]$, we have
\begin{align}
    \prod_{i=1}^n (1-x_i)\geq 1-\sum_{i=1}^n x_i.
\end{align}
\proof
\begin{equation*}
    \prod_{i=1}^n (1-x_i)=1-x_1-x_2(1-x_1)-x_3(1-x_1)(1-x_2)-...-x_n\prod_{i=1}^{n-1} (1-x_i)\geq 1-\sum_{i=1}^n x_i.\qedhere
\end{equation*}
\end{lemma}

In the following, we will prove Lemma \ref{lemma:approximation}. We prove that with the forgetful condition (Condition \ref{assumption:2-forgetful}) on all agents, the past history will not affect much of the current action distribution, because all of the agents ``forget" what the past history is.

\begin{lemma}[{Finite-Memory Approximation Errors}]
\label{lemma:approximation}
    Suppose Condition \ref{assumption:2-forgetful} is satisfied for every player $i\in\cN$. At any timestep $t$, for any $m\leq t-1$, any initial history $\bh'\in\cH_{t-m-1}$ and $\ba\in\cA$, when $\gamma\leq \frac{1}{2(N+2)}$, we have
    \small
    \begin{align}
        &\bigg|\sum_{\bh\in\cH_m}\Pr((\bh,\ba);\bpi_{t-m:t})-\sum_{\bh\in\cH_m}\Pr((\bh,\ba)\given\bh';\bpi_{t-m:t})\bigg|\notag\\
        \leq& C_m^\gamma \cdot \Big(\sum_{\bh\in\cH_m}\Pr((\bh,\ba);\bpi_{t-m:t})+\sum_{\bh\in\cH_m}\Pr((\bh,\ba)\given\bh';\bpi_{t-m:t})\Big)\notag
    \end{align}
    \normalsize
    where
    \begin{align}
        C_m^\gamma \coloneqq (2N+1)^{m+1}\gamma^{m+1}.
    \end{align}

\proof

Notice that for any $\bh_1\in\cH_1$ and $m\geq 0$, we have
    \begin{align}
        &\Big|\sum_{\bh_2\in\cH_m} \Pr((\bh_2,\bh_1);\bpi_{t-m:t})-\sum_{\bh_2\in\cH_m} \Pr((\bh_2,\bh_1)\given \bh';\bpi_{t-m:t})\Big|\notag\\
        \leq& \Big|\sum_{\bh_2\in\cH_m} \Pr(\bh_1\given \bh_2;\bpi_{t-m:t})\Big(\Pr(\bh_2;\bpi_{t-m:t})-\Pr(\bh_2\given \bh';\bpi_{t-m:t})\Big)\Big|\notag\\
        &+\Big|\sum_{\bh_2\in\cH_m} \Big(\Pr(\bh_1\given \bh_2;\bpi_{t-m:t})-\Pr(\bh_1|(\bh',\bh_2);\bpi_{t-m:t})\Big)\Pr(\bh_2\given \bh';\bpi_{t-m:t})\Big|\notag\\
        \leq&\Big|\sum_{\bh_2\in\cH_m} \sum_{k=0}^{m-1}\Big( \Pr(\bh_1\given \bh_{2,m-k:m};\bpi_{t-m:t})- \Pr(\bh_1\given \bh_{2,m-k+1:m};\bpi_{t-m:t})\Big)\notag\\
        &\cdot\Big(\Pr(\bh_2;\bpi_{t-m:t})- \Pr(\bh_2\given \bh';\bpi_{t-m:t})\Big)\Big|\tag*{\raisebox{.5pt}{\textcircled{\raisebox{-.9pt} {1}}}
}\label{eq:approximation-eq2-l1}\\
        &+\Big|\sum_{\bh_2\in\cH_m} \Pr(\bh_1\given \bh_{2,m+1:m};\bpi_{t-m:t}) \Big(\Pr(\bh_2;\bpi_{t-m:t})- \Pr(\bh_2\given \bh';\bpi_{t-m:t})\Big)\Big|\tag*{\raisebox{.5pt}{\textcircled{\raisebox{-.9pt} {2}}}
}\label{eq:approximation-eq2-l2}\\
        &+\Big|\sum_{\bh_2\in\cH_m} \Big(\Pr(\bh_1\given \bh_2;\bpi_{t-m:t})-\Pr(\bh_1|(\bh',\bh_2);\bpi_{t-m:t})\Big)\Pr(\bh_2\given \bh';\bpi_{t-m:t})\Big|.\tag*{\raisebox{.5pt}{\textcircled{\raisebox{-.9pt} {3}}}
}\label{eq:approximation-eq2-l3}
    \end{align}

 \paragraph{Bounding \ref{eq:approximation-eq2-l2}} \ref{eq:approximation-eq2-l2} The second term is equal to $0$ since
     \begin{align*}
         &\Big|\sum_{\bh_2\in\cH_m} \Pr(\bh_1\given \bh_{2,m+1:m};\bpi_{t-m:t}) \Big(\Pr(\bh_2;\bpi_{t-m:t})- \Pr(\bh_2\given \bh';\bpi_{t-m:t})\Big)\Big|\\
         =&\Big| \Pr(\bh_1;\bpi_{t-m:t}) \sum_{\bh_2\in\cH_m}\Big(\Pr(\bh_2;\bpi_{t-m:t})- \Pr(\bh_2\given \bh';\bpi_{t-m:t})\Big)\Big|\\
         =&\Big| \Pr(\bh_1;\bpi_{t-m:t}) (1-1)\Big|=0.
     \end{align*}

    \paragraph{Bounding \ref{eq:approximation-eq2-l3}} Firstly, \ref{eq:approximation-eq2-l3} can be bounded by,
    \begin{align*}
        &\sum_{\bh_2\in\cH_m} \Big|\Pr(\bh_1\given \bh_2;\bpi_{t-m:t})-\Pr(\bh_1|(\bh',\bh_2);\bpi_{t-m:t})\Big|\Pr(\bh_2\given \bh';\bpi_{t-m:t})\\
        \leq& \sum_{\bh_2\in\cH_m} 2N\gamma^{m+1} \Pr(\bh_1|(\bh',\bh_2);\bpi_{t-m:t})\Pr(\bh_2\given \bh';\bpi_{t-m:t})\\
        =&2N\gamma^{m+1} \sum_{\bh_2\in\cH_m} \Pr((\bh_2,\bh_1)\given \bh';\bpi_{t-m:t})
    \end{align*}
    where the second line is by Lemma \ref{lemma:prob-approx-exp-error}.
    \begin{lemma}
    \label{lemma:prob-approx-exp-error}
    Consider when all players satisfy Condition \ref{assumption:2-forgetful} and $\gamma\leq \frac{1}{2(N+2)}$. For any $\bh_1,\bh_2,\bh_3\in\cH$, we have
    \begin{align}
        &|\Pr(\bh_1\given \bh_2;\bpi)-\Pr(\bh_1|(\bh_3,\bh_2);\bpi)|\notag
        \leq 2N\gamma^{L(\bh_2)+1}\min\cbr{\Pr(\bh_1\given \bh_2;\bpi),\Pr(\bh_1|(\bh_3,\bh_2);\bpi)}\\
        &|\Pr(\bh_1\given \bh_2;\bpi)-\Pr(\bh_1|(\bh_3,\bh_2);\bpi)|\notag
        \leq N\frac{\gamma^{L(\bh_2)+1}}{1-\gamma}\max\cbr{\Pr(\bh_1\given \bh_2;\bpi),\Pr(\bh_1|(\bh_3,\bh_2);\bpi)}.
    \end{align}
    \end{lemma}
    The proof is deferred to the end of this section.

    \paragraph{Bounding \ref{eq:approximation-eq2-l1}} Then, \ref{eq:approximation-eq2-l1} is bounded by
    \begin{align*}
        &\Big|\sum_{\bh_2\in\cH_m} \sum_{k=0}^{m-1}\Big( \Pr(\bh_1\given \bh_{2,m-k:m};\bpi_{t-m:t})- \Pr(\bh_1\given \bh_{2,m-k+1:m};\bpi_{t-m:t})\Big) \Big(\Pr(\bh_2;\bpi_{t-m:t})- \Pr(\bh_2\given \bh';\bpi_{t-m:t})\Big)\Big|\\
        =&\Big|\sum_{k=0}^{m-1}\sum_{\bh_2\in\cH_{k+1}} \Big( \Pr(\bh_1\given \bh_2;\bpi_{t-m:t})- \Pr(\bh_1\given \bh_{2,2:k+1};\bpi_{t-m:t})\Big)\\
        &\cdot\sum_{\bh_3\in\cH_{m-k-1}}\Big(\Pr((\bh_3,\bh_2);\bpi_{t-m:t})- \Pr((\bh_3,\bh_2)\given \bh';\bpi_{t-m:t})\Big)\Big|\\
        \leq& \sum_{k=0}^{m-1}\sum_{\bh_2\in\cH_{k+1}}\Big|\Pr(\bh_1\given \bh_2;\bpi_{t-m:t})- \Pr(\bh_1\given \bh_{2,2:k+1};\bpi_{t-m:t}) \Big|\\
        &\cdot \Big|\sum_{\bh_3\in\cH_{m-k-1}}\Big(\Pr((\bh_3,\bh_2);\bpi_{t-m:t})- \Pr((\bh_3,\bh_2)\given \bh';\bpi_{t-m:t})\Big)\Big|\\
        \leq& \sum_{k=0}^{m-1} \sum_{\bh_2\in \cH_{k+1}}\frac{N}{1-\gamma} \gamma^{k+1}\max\{\Pr(\bh_1\given \bh_2),\Pr(\bh_1\given \bh_{2,2:k+1})\} \\
        &\cdot C^\gamma_{m-k-1}\rbr{\sum_{\bh_3\in\cH_{m-k-1}}\Pr((\bh_3,\bh_2);\bpi_{t-m:t})+\sum_{\bh_3\in\cH_{m-k-1}}\Pr((\bh_3,\bh_2)\given \bh';\bpi_{t-m:t})}
    \end{align*}
    where the last line uses Lemma \ref{lemma:prob-approx-exp-error} and the recursively applying the argument to $m-k-1$, namely
    \begin{align*}
        &\Big|\sum_{\bh_3\in\cH_{m-k-1}}\Big(\Pr((\bh_3,\bh_2);\bpi_{t-m:t})- \Pr((\bh_3,\bh_2)\given \bh';\bpi_{t-m:t})\Big)\Big|\\
        \leq& C^\gamma_{m-k-1} \rbr{\sum_{\bh_3\in\cH_{m-k-1}}\Pr((\bh_3,\bh_2);\bpi_{t-m:t})+\sum_{\bh_3\in\cH_{m-k-1}}\Pr((\bh_3,\bh_2)\given \bh';\bpi_{t-m:t})}.
    \end{align*}
    The base case when $m=0$ follows directly from Lemma \ref{lemma:prob-approx-exp-error}.

    Notice that
    \begin{align*}
        & \max\{\Pr(\bh_1\given \bh_2),\Pr(\bh_1\given \bh_{2,2:k+1})\}\sum_{\bh_3\in\cH_{m-k-1}}\Pr((\bh_3,\bh_2);\bpi_{t-m:t})\\
        =&\sum_{\bh_3\in\cH_{m-k-1}}\Pr((\bh_3,\bh_2);\bpi_{t-m:t})(\max\{\Pr(\bh_1\given \bh_2),\Pr(\bh_1\given \bh_{2,2:k+1})\}-\Pr(\bh_1|(\bh_3,\bh_2)))\\
        &+\sum_{\bh_3\in\cH_{m-k-1}}\Pr((\bh_3,\bh_2);\bpi_{t-m:t})\Pr(\bh_1|(\bh_3,\bh_2))\\
        \leq&\sum_{\bh_3\in\cH_{m-k-1}}\Pr((\bh_3,\bh_2);\bpi_{t-m:t})|\max\{\Pr(\bh_1\given \bh_2),\Pr(\bh_1\given \bh_{2,2:k+1})\}-\Pr(\bh_1|(\bh_3,\bh_2))|\\
        &+\sum_{\bh_3\in\cH_{m-k-1}}\Pr((\bh_3,\bh_2,\bh_1);\bpi_{t-m:t})\\
        \leq& 2N\gamma^{k+1}\sum_{\bh_3\in\cH_{m-k-1}} \Pr((\bh_3,\bh_2);\bpi_{t-m:t})\Pr(\bh_1|(\bh_3,\bh_2))+\sum_{\bh_3\in\cH_{m-k-1}}\Pr((\bh_3,\bh_2,\bh_1);\bpi_{t-m:t})\\
        =&(2N\gamma^{k+1}+1)\sum_{\bh_3\in\cH_{m-k-1}}\Pr((\bh_3,\bh_2,\bh_1);\bpi_{t-m:t}).
    \end{align*}
    Similarly,
    \begin{align*}
         &\max\{\Pr(\bh_1\given \bh_2),\Pr(\bh_1\given \bh_{2,2:k+1})\}\sum_{\bh_3\in\cH_{m-k-1}}\Pr((\bh_3,\bh_2)\given \bh';\bpi_{t-m:t})\\
         \leq&(2N\gamma^{k+1}+1)\sum_{\bh_3\in\cH_{m-k-1}}\Pr((\bh_3,\bh_2,\bh_1)\given \bh';\bpi_{t-m:t}).
    \end{align*}

    Then, \ref{eq:approximation-eq2-l1} can be bounded by
    \begin{align*}
        &\Big|\sum_{\bh_2\in\cH_m} \sum_{k=0}^{m-1}\Big( \Pr(\bh_1\given \bh_{2,m-k:m};\bpi_{t-m:t})- \Pr(\bh_1\given \bh_{2,m-k+1:m};\bpi_{t-m:t})\Big) \Big(\Pr(\bh_2;\bpi_{t-m:t})- \Pr(\bh_2\given \bh';\bpi_{t-m:t})\Big)\Big|\\
        \leq& \sum_{k=0}^{m-1}(\frac{2N\gamma^{k+1}+1}{1-\gamma}N)\gamma^{k+1} C^\gamma_{m-k-1}\\
        &\cdot \sum_{\bh_2\in\cH_{k+1}}\rbr{\sum_{\bh_3\in\cH_{m-k-1}} \Pr((\bh_3,\bh_2,\bh_1);\bpi_{t-m:t})+ \sum_{\bh_3\in\cH_{m-k-1}} \Pr((\bh_3,\bh_2,\bh_1)\given \bh';\bpi_{t-m:t})}\\
        =&\sum_{k=0}^{m-1}(\frac{2N\gamma^{k+1}+1}{1-\gamma}N)\gamma^{k+1} C^\gamma_{m-k-1}\rbr{\sum_{\bh_2\in\cH_m} \Pr((\bh_2,\bh_1);\bpi_{t-m:t})+ \sum_{\bh_2\in\cH_m} \Pr((\bh_2,\bh_1)\given \bh';\bpi_{t-m:t})}
    \end{align*}
    Given that $\gamma\leq \frac{1}{2(N+2)}$, we have $\frac{2N\gamma^{k+1}+1}{1-\gamma}\leq 2$. Therefore, combining \ref{eq:approximation-eq2-l1}, \ref{eq:approximation-eq2-l2}, \ref{eq:approximation-eq2-l3} together, for any $m\geq 1$, we have
    \begin{align*}
        C_m^\gamma=2N\gamma^{m+1}+\sum_{k=0}^{m-1} 2N\gamma^{k+1}C^\gamma_{m-k-1}.
    \end{align*}
    Note that
    \begin{align*}
        |\Pr(\bh_2)-\Pr(\bh_2\given \bh')|\leq \frac{N\gamma}{1-\gamma}\Pr(\bh_2),
    \end{align*}
    which implies taking $C_0^\gamma\geq\frac{N\gamma}{1-\gamma}$ is enough. So, we have $C_m^\gamma\leq (2N+1)^{m+1}\gamma^{m+1}$ for $m\geq 1$. Since $C_m^\gamma$ is an upper-bound, we just take $C_m^\gamma=(2N+1)^{m+1}\gamma^{m+1}$. \qed
\end{lemma}

\begin{proof}[Proof of Lemma \ref{corollary:approximation}]
When $t-1\leq m$, we have $f^{\min\{t-1,m\}}(\bpi_{t-m:t})=f^{t-1}(\bpi_{1:t})$. When $t-1>m$, we have
\begin{align*}
    &|f^m(\bpi_{t-m:t})-f^{t-1}(\bpi_{1:t})|\\
    =&\abr{\sum_{\ba\in\cA}\cL_1(\ba)\Big(\sum_{\bh\in\cH_m} \Pr((\bh,\ba);\bpi_{t-m:t})-\sum_{\bh\in\cH_{t-1}} \Pr((\bh,\ba);\bpi_{1:t})\Big)}\\
    \overset{(i)}{\leq}& \sum_{\ba\in\cA} \Big|\sum_{\bh\in\cH_m} \Pr((\bh,\ba);\bpi_{t-m:t})-\sum_{\bh\in\cH_{t-1}} \Pr((\bh,\ba);\bpi_{1:t})\Big|\\
    =&\sum_{\ba\in\cA}\Big|\sum_{\bh\in\cH_{t-1}} \big(\Pr((\bh_{t-m:t-1},\ba);\bpi_{t-m:t})-\Pr((\bh_{t-m:t-1},\ba)\given \bh_{1:t-m-1};\bpi_{t-m:t})\big)\Pr(\bh_{1:t-m-1};\bpi_{1:t-m-1})\Big|\\
    \leq& \sum_{\ba\in\cA}\sum_{\bh_1\in\cH_{t-m-1}}\Pr(\bh_1;\bpi_{1:t-m-1})\Big|\sum_{\bh_2\in\cH_m} \Big(\Pr((\bh_2,\ba);\bpi_{t-m:t})-\Pr((\bh_2,\ba)\given \bh_1;\bpi_{t-m:t})\Big)\Big|\\
    \overset{(ii)}{\leq}&C_m^\gamma\sum_{\ba\in\cA}\sum_{\bh_1\in\cH_{t-m-1}}\Pr(\bh_1;\bpi_{1:t-m-1})\rbr{\sum_{\bh_2\in\cH_m} \Pr((\bh_2,\ba);\bpi_{t-m:t})+\sum_{\bh_2\in\cH_m}\Pr((\bh_2,\ba)\given \bh_1;\bpi_{t-m:t})}\\
    =& 2C_m^\gamma
\end{align*}
where $(i)$ is because $\forall \ba\in\cA, \cL_1(\ba)\in[0,1]$ and $(ii)$ is by Lemma \ref{lemma:approximation}.
\end{proof}

\begin{proof}[Proof of Lemma \ref{lemma:prob-approx-exp-error}]
Let $\ell=L(\bh_1)$ and $r=L(\bh_2)$. If $\ell=0$, both conditional probabilities are equal to $1$, so the claim is trivial. Otherwise, write
\begin{align*}
    P&\coloneqq \Pr(\bh_1\given \bh_2;\bpi),\\
    Q&\coloneqq \Pr(\bh_1\given (\bh_3,\bh_2);\bpi).
\end{align*}
Multiplying the one-step probability ratios along the $\ell$ action profiles in $\bh_1$, Condition \ref{assumption:2-forgetful} gives
\begin{align*}
    \prod_{u=1}^{\ell}(1-\gamma^{r+u})^N
    \leq \frac{P}{Q}
    \leq \prod_{u=1}^{\ell}(1-\gamma^{r+u})^{-N},
\end{align*}
with the convention that the inequalities are trivial when $P=Q=0$. By Lemma \ref{lemma:Euler-func-bound},
\begin{align*}
    \prod_{u=1}^{\ell}(1-\gamma^{r+u})^N
    \geq 1- N\sum_{u=1}^{\ell}\gamma^{r+u}
    \geq 1- \frac{N\gamma^{r+1}}{1-\gamma}.
\end{align*}
Set $S\coloneqq N\gamma^{r+1}/(1-\gamma)$. Since $\gamma\leq 1/(2(N+2))$, we have $S<1$. The preceding bounds imply
\begin{align*}
    (1-S)Q\leq P\leq \frac{Q}{1-S}.
\end{align*}
Therefore,
\begin{align*}
    |P-Q|\leq S\max\{P,Q\}
    =N\frac{\gamma^{L(\bh_2)+1}}{1-\gamma}\max\{P,Q\},
\end{align*}
which proves the second inequality.

For the first inequality, the same two-sided bound gives
\begin{align*}
    |P-Q|\leq \frac{S}{1-S}\min\{P,Q\}.
\end{align*}
Under $\gamma\leq 1/(2(N+2))$, $S/(1-S)\leq 2N\gamma^{r+1}$. Hence
\begin{align*}
    |P-Q|\leq 2N\gamma^{L(\bh_2)+1}\min\{P,Q\},
\end{align*}
which completes the proof.
\end{proof}

\subsection{Proof of Lemma \ref{lemma:Lipschitz}}
\label{appendix:lemma-Lipschitz}

    Firstly,
    \begin{align}
        &|f^m(\bpi)-f^m(\tilde\bpi)|\notag\\
        =&\Big|\sum_{\ba\in \cA} \cL_1(\ba)\sum_{\bh\in\cH_m} \Big(\Pr((\bh,\ba);\bpi)-\Pr((\bh,\ba);\tilde\bpi)\Big) \Big|\notag\\
        \leq& \sum_{\ba\in \cA}\Big|\sum_{\bh\in\cH_m} (\pi_{m+1}(\ba\given \bh)\Pr(\bh;\bpi)-\tilde \pi_{m+1}(\ba\given \bh)\Pr(\bh;\tilde\bpi)) \Big|\notag\\
        \leq&\sum_{\ba\in \cA}\Big|\sum_{\bh\in\cH_m} (\pi_{m+1}(\ba\given \bh)-\tilde \pi_{m+1}(\ba\given \bh))\Pr(\bh;\bpi) \Big|\label{eq:eq213-first}\\
        &+\sum_{\ba\in \cA}\Big|\sum_{\bh\in\cH_m} (\Pr(\bh;\bpi)-\Pr(\bh;\tilde\bpi))\tilde \pi_{m+1}(\ba\given \bh) \Big|.\label{eq:eq213-second}
    \end{align}
    Term \eqref{eq:eq213-first}  is bounded by
    \begin{align*}
        &\sum_{\ba\in \cA}\Big|\sum_{\bh\in\cH_m} (\pi_{m+1}(\ba\given \bh)-\tilde \pi_{m+1}(\ba\given \bh))\Pr(\bh;\bpi)\Big|\\
        =&\sum_{\ba\in \cA}\Big|\inner{\pi_{m+1}(\ba\given \bh)-\tilde \pi_{m+1}(\ba\given \bh)}{\Pr(\bh;\bpi)}_{\bh\in\cH_m}\Big|\\
        \leq& \sum_{\ba\in \cA}\nbr{(\pi_{m+1}(\ba\given \bh)-\tilde \pi_{m+1}(\ba\given \bh))_{\bh\in\cH_m}}_\infty\cdot \nbr{(\Pr(\bh;\bpi))_{\bh\in\cH_m}}_1\\
        \leq&  \sum_{\ba\in \cA}\nbr{(\pi_{m+1}(\ba\given \bh)-\tilde \pi_{m+1}(\ba\given \bh))_{\bh\in\cH_m}}_\infty\\
        \leq& |\cA|\cdot \nbr{(\pi_{m+1}(\ba\given \bh)-\tilde \pi_{m+1}(\ba\given \bh))_{\ba\in\cA,\bh\in\cH_m}}_\infty.
    \end{align*}

    For the term \eqref{eq:eq213-second}, for a fixed $\ba\in\cA$, we have
    \begin{align*}
        &\Big|\sum_{\bh\in\cH_m}\tilde \pi_{m+1}(\ba\given \bh) \Big(\Pr(\bh;\bpi)-\Pr(\bh;\tilde\bpi)\Big)\Big|\\
        =&\Big|\sum_{\bh\in\cH_m}\tilde \pi_{m+1}(\ba\given \bh) \Big(\prod_{s=1}^{m} \pi_s(\bh_s\given \bh_{1:s-1})-\prod_{s=1}^{m} \tilde \pi_s(\bh_s\given \bh_{1:s-1})\Big)\Big|\\
        =& \Big|\sum_{\bh\in\cH_m}\tilde \pi_{m+1}(\ba\given \bh) \sum_{k=1}^{m}\Big(\prod_{s=1}^{m} \bar \pi_s^{k-1}(\bh_s\given \bh_{1:s-1})-\prod_{s=1}^{m} \bar \pi_s^{k}(\bh_s\given \bh_{1:s-1})\Big)\Big|\\
        \leq& \sum_{k=1}^{m}\Big|\sum_{\bh\in\cH_m}\tilde \pi_{m+1}(\ba\given \bh) \Big(\prod_{s=1}^{m} \bar \pi_s^{k-1}(\bh_s\given \bh_{1:s-1})-\prod_{s=1}^{m} \bar \pi_s^{k}(\bh_s\given \bh_{1:s-1})\Big)\Big|
    \end{align*}
    where
    \begin{align}
        \bar \bpi^{k}_s=
        \begin{cases}
            \bpi_s&s>k\\
            \tilde \bpi_s&s\leq k.
        \end{cases}
    \end{align}

For any $k\in\{1,2,...,m\}$, we have
\begin{align*}
    &\Big|\sum_{\bh\in\cH_m}\tilde \pi_{m+1}(\ba\given \bh) \Big(\prod_{s=1}^{m} \bar \pi_s^{k-1}(\bh_s\given \bh_{1:s-1})-\prod_{s=1}^{m} \bar \pi_s^{k}(\bh_s\given \bh_{1:s-1})\Big)\Big|\\
    =& \Big|\sum_{\bh\in\cH_m}\tilde \pi_{m+1}(\ba\given \bh) \prod_{s=1,s\not=k}^m \bar \pi_s^{k}(\bh_s\given \bh_{1:s-1})\Big(\pi_k(\bh_k\given \bh_{1:k-1})- \tilde \pi_k(\bh_k\given \bh_{1:k-1})\Big)\Big|\\
    =&\Big|\inner{\prod_{s=1,s\not=k}^m \bar \pi_s^{k}(\bh_s\given \bh_{1:s-1})}{\tilde \pi_{m+1}(\ba\given \bh)(\pi_k(\bh_k\given \bh_{1:k-1})- \tilde \pi_k(\bh_k\given \bh_{1:k-1}))}_{\bh\in\cH_m}\Big|\\
    \leq&\nbr{\rbr{\prod_{s=1,s\not=k}^m \bar \pi_s^{k}(\bh_s\given \bh_{1:s-1})}_{\bh\in\cH_m}}_1 \cdot \nbr{\rbr{\tilde \pi_{m+1}(\ba\given \bh)(\pi_k(\bh_k\given \bh_{1:k-1})- \tilde \pi_k(\bh_k\given \bh_{1:k-1}))}_{\bh\in\cH_m}}_\infty\\
    =& |\cA|\cdot \nbr{\rbr{\pi_k(\bh_k\given \bh_{1:k-1})- \tilde \pi_k(\bh_k\given \bh_{1:k-1})}_{\bh\in\cH_m}}_\infty.
\end{align*}
Therefore, term \eqref{eq:eq213-second} is bounded by
\begin{align*}
   \sum_{\ba\in \cA}\Big|\sum_{\bh\in\cH_m} (\Pr(\bh;\bpi)-\Pr(\bh;\tilde\bpi))\tilde \pi_{m+1}(\ba\given \bh) \Big|\leq& |\cA|^2\cdot\sum_{k=1}^m \nbr{\rbr{\pi_k(\bh_k\given \bh_{1:k-1})- \tilde \pi_k(\bh_k\given \bh_{1:k-1})}_{\bh\in\cH_m}}_\infty.
\end{align*}

Finally,
\begin{align*}
    &|f^m(\bpi)-f^m(\tilde\bpi)|\\
    \leq& |\cA|\cdot \nbr{(\pi_{m+1}(\ba\given \bh)-\tilde \pi_{m+1}(\ba\given \bh))_{\ba\in\cA,\bh\in\cH_m}}_\infty+|\cA|^2\cdot\sum_{k=1}^m \nbr{\rbr{\pi_k(\bh_k\given \bh_{1:k-1})- \tilde \pi_k(\bh_k\given \bh_{1:k-1})}_{\bh\in\cH_m}}_\infty\\
    \leq&|\cA|^2\cdot\sum_{k=1}^{m+1} \nbr{\bpi_k-\tilde \bpi_k}_\infty.
\end{align*}
For the second part, since for any $\bh\in\cH,\ba\in\cA,k=1,2,...,m+1$, we have
\begin{align*}
    |\pi_k(\ba\given \bh)-\tilde\pi_k(\ba\given \bh)|=&\abr{\prod_{i=1}^N \pi_k\ui(a_i\given \bh)-\prod_{i=1}^N \tilde\pi_k\ui(a_i\given \bh)}\\
    =&\abr{\sum_{i=1}^N \prod_{j=1}^{i-1} \tilde \pi_k^{(j)}(a_j\given \bh) \prod_{j=i+1}^N \pi_k^{(j)}(a_j\given \bh) \rbr{\pi_k\ui(a_i\given \bh)-\tilde\pi_k\ui(a_i\given \bh)}}\\
    \leq& \sum_{i=1}^N \prod_{j=1}^{i-1} \tilde \pi_k^{(j)}(a_j\given \bh) \prod_{j=i+1}^N \pi_k^{(j)}(a_j\given \bh) \abr{\pi_k\ui(a_i\given \bh)-\tilde\pi_k\ui(a_i\given \bh)}\\
    \leq& \sum_{i=1}^N \abr{\pi_k\ui(a_i\given \bh)-\tilde\pi_k\ui(a_i\given \bh)}.
\end{align*}
As a consequence,
\begin{equation*}
    |f^m(\bpi)-f^m(\tilde\bpi)|\leq C_{\rm Lips}\sum_{t=1}^{m+1}\nbr{\bpi_t-\tilde \bpi_t}_\infty\leq C_{\rm Lips}\sum_{i=1}^N\sum_{t=1}^{m+1}\nbr{\bpi_t\ui-\tilde \bpi_t\ui}_\infty.
\end{equation*}
where $C_{\rm Lips} \coloneqq  |\cA|^2$. \qed

\subsection{Lemma for Markov Game}
\label{appendix:original-to-markov-value-proof}

\begin{lemma}
\label{lemma:markov-connect}
    Suppose Condition \ref{assumption:2-forgetful} is satisfied for all players. The expected time-average loss of the induced Markov game defined in Definition \ref{def:induced-MDP} always exists and does not depend on the initial distribution.

    \proof

    Fix any player $i\in\cN$. Without loss of generality, we may restrict attention to policies $\pi^{(i)}$ that have full support conditioned on every history. Indeed, by \Cref{assumption:2-forgetful}, if $\pi^{(i)}\rbr{a_i\given \bh}=0$ for some $a_i\in\cA_i$ and $\bh\in\cH$, then necessarily $\pi^{(i)}\rbr{a_i\given \bh'}=0$ for all $\bh'\in\cH$. In this case, action $a_i$ is never taken (from any history), so we can remove $a_i$ from $\cA_i$ without changing the limiting time-average loss, because any state that involves $a_i$ is transient and has zero probability of being visited on average in the limit.

    Moreover, if every player uses a policy with full support conditional on any history, then every joint action is selected with positive probability whenever it is available. Consequently, the induced Markov chain is irreducible. Therefore, the time-average loss exists and is independent of the initial distribution \citep{norris1998markov}. \qed
\end{lemma}

\begin{lemma}

\label{lemma:original-to-markov-value}
Suppose Condition \ref{assumption:2-forgetful} is satisfied for all players. For any $K,M\in\NN$ and any strategy profile vector $\bpi=(\bpi_1,\bpi_2,...,\bpi_{K+1})$ of length $K+1$, when $\gamma\leq \frac{1}{2(N+2)}$, we have
\small
    \begin{align*}
        &|f^K(\bpi)- f^{K,M}(\bpi)|
        \leq  2C_K^\gamma+2C_{\rm Lips} N (K+1)\gamma^{M+1}
    \end{align*}
    \normalsize
    where $C_K^\gamma=(2N+1)^{K+1}\gamma^{K+1}$, $C_{\rm Lips}=\abr{\cA}^2$, and,
    \begin{align*}
        &f^{K,M}(\bpi) \coloneqq \frac{1} {|\cH_M|}\sum_{\bh\in\cH_M} f^K(\bpi^M_{1:K+1}\given\bh),\quad
        &\pi_k^M(\ba\given\bh) \coloneqq \pi_k(\ba\given\bh_{L(\bh)-M+1:L(\bh)}).
    \end{align*}
    If $L(\bh)<M$, the suffix $\bh_{L(\bh)-M+1:L(\bh)}$ is understood as the whole available history $\bh$.
\proof

Define $\bar f^{K,M}(\bpi) \coloneqq \frac{1}{|\cH_M|}\sum_{\bh_0\in\cH_M} f^K(\bpi\given \bh_0)$ for convenience. By Lemma \ref{lemma:approximation}, applied with the arbitrary initial history $\bh_0$, we have
\begin{align*}
    &\abr{f^K(\bpi)-\bar f^{K,M}(\bpi)}\\
    \leq&\frac{1}{|\cH_M|}\sum_{\bh_0\in\cH_M}\sum_{\ba\in\cA}\abr{\sum_{\bh\in\cH_K}\Pr((\bh,\ba);\bpi)-\sum_{\bh\in\cH_K}\Pr((\bh,\ba)\given\bh_0;\bpi)}\\
    \leq&2C_K^\gamma .
\end{align*}

Lemma \ref{lemma:Lipschitz} extends to $f^K(\bpi\given \bh_0)$ by defining $\tilde \bpi$ with $\tilde\pi_s(\ba\given \bh)=\pi_s(\ba\given(\bh_0,\bh))$ for all $\ba\in\cA$, $s=1,2,...,K+1$, and $\bh\in\cH_{s-1}$. Therefore,
\begin{align*}
    &\abr{\bar f^{K,M}(\bpi)-f^{K,M}(\bpi)}\\
    \leq&\frac{1}{|\cH_M|}\sum_{\bh_0\in\cH_M}\abr{f^K(\bpi\given \bh_0)-f^K(\bpi^M\given \bh_0)}\\
    \leq& C_{\rm Lips}\max_{\bh_0\in\cH_M}\sum_{s=1}^{K+1}\sum_{i=1}^N
    \max_{\bh\in\cH_{s-1},a_i\in\cA_i}
    \abr{\pi\ui_s(a_i\given(\bh_0,\bh))-\pi\ui_s(a_i\given(\bh_0,\bh)_{L((\bh_0,\bh))-M+1:L((\bh_0,\bh))})}\\
    \leq& C_{\rm Lips}N(K+1)\gamma^{M+1}.
\end{align*}
In the last line, the two histories compared inside each probability share their most recent suffix of length $M$. Hence, Condition \ref{assumption:2-forgetful} gives a multiplicative ratio in $[1-\gamma^{M+1},(1-\gamma^{M+1})^{-1}]$. Putting all pieces together finishes the proof. \qed
\end{lemma}

\begin{proof}[Proof of Lemma \ref{lemma:sublinear-variation-q}]
An observation is that for $a_1,b_1\geq 0,a_2,b_2\geq c$ and $c>0$, we have
\begin{align*}
    |\frac{a_1}{a_2}-\frac{b_1}{b_2}|=|\frac{a_1b_2-b_1a_2}{a_2b_2}|\leq \frac{a_1|b_2-a_2|}{a_2b_2}+\frac{a_2|a_1-b_1|}{a_2b_2}\leq \frac{a_1}{a_2 c}|b_2-a_2|+\frac{|a_1-b_1|}{c}.
\end{align*}

So, for any $\bh\in\cH_M,\ba\in\cA$, we have
\begin{align*}
    &\abr{\pi_{2}(\ba\given \bh)-\pi_1(\ba\given \bh)}\\
    =&\abr{\frac{q^{\bpi_2}(\bh,\ba)}{\sum_{\ba'\in\cA} q^{\bpi_2}(\bh,\ba')}-\frac{q^{\bpi_1}(\bh,\ba)}{\sum_{\ba'\in\cA} q^{\bpi_1}(\bh,\ba')}}\\
    \leq& \frac{q^{\bpi_2}(\bh,\ba)}{c\sum_{\ba'\in\cA} q^{\bpi_2}(\bh,\ba')}\abr{\sum_{\ba'\in\cA} q^{\bpi_2}(\bh,\ba')-\sum_{\ba'\in\cA} q^{\bpi_1}(\bh,\ba')}+\frac{1}{c}|q^{\bpi_2}(\bh,\ba)-q^{\bpi_1}(\bh,\ba)|\\
    =&\frac{\pi_2(\ba\given \bh)}{c}\abr{\sum_{\ba'\in\cA} q^{\bpi_2}(\bh,\ba')-\sum_{\ba'\in\cA} q^{\bpi_1}(\bh,\ba')}+\frac{1}{c}|q^{\bpi_2}(\bh,\ba)-q^{\bpi_1}(\bh,\ba)|\\
    \leq&\frac{2|\cA|}{c}\nbr{\bq^{\pi_2}-\bq^{\pi_1}}_\infty.\qedhere
\end{align*}
\end{proof}

\subsection{Other Lemmas}
\label{appendix:other-lemma}

\begin{proof}[Proof of Lemma \ref{lemma:accumulated-variation-bound}]

\begin{align*}
    \sum_{t=1}^T \sum_{s=\max\cbr{t-K, 1}}^{t-1} \nbr{\bx_t-\bx_s}_p\leq& \sum_{t=1}^T \sum_{s=\max\cbr{t-K, 1}}^{t-1} \sum_{s'=s}^{t-1} \nbr{\bx_{s'}-\bx_{s'+1}}_p\\
    \leq&\sum_{t=2}^T (1+2+...+K)\nbr{\bx_t-\bx_{t-1}}_p\\
    \leq& K^2 \sum_{t=2}^T \nbr{\bx_t-\bx_{t-1}}_p.\qedhere
\end{align*} 
\end{proof}

\subsection{Projected Gradient Descent ({\tt PGD})}

\begin{algorithm}
\caption{Projected Gradient Descent}
\begin{algorithmic}[1]
\State{$\eta$ is the learning rate.}
\State{Initialize $\bx_1\in\cX$ where $\cX$ is the convex set that $\bx$ lies in}
\For{$t=1,2,...,T$}
\State{Propose $\bx_t$.}
\State{Receive the loss $\inner{\cL_t}{\bx_t}.$}
\begin{align}
    &\bx_{t+1}\leftarrow\Proj{\cX}{\bx_t-\eta\cL_t}\label{eq:PGD-update-rule}
\end{align}
\EndFor
\end{algorithmic}
\end{algorithm}

Here we provide the upper bound on the regret $\sum_{t=1}^T \inner{\cL_t}{\bx_t}-\min_{\hat\bx_{1:T},\hat\bx_t\in\cX}\sum_{t=1}^T \inner{\cL_t}{\hat\bx_t}$. We adapted \cite[Theorem 10]{zhao2022non-dynamic-policy-regret} here. %

\begin{lemma}[Adapted from Theorem 10 in \cite{zhao2022non-dynamic-policy-regret}]
\label{lemma:PGD-upper-bound}
Consider the update-rule \Cref{eq:PGD-update-rule}. For any sequence $\hat\bx_1,\hat\bx_2,...,\hat\bx_T$ satisfying $\hat\bx_t\in\cX$, we have
\begin{align}
    \sum_{t=1}^T \inner{\cL_t}{\bx_t}-\sum_{t=1}^T \inner{\cL_t}{\hat\bx_t}\leq \frac{\eta}{2}\sum_{t=1}^T\nbr{\cL_t}^2+\frac{1}{2\eta}\nbr{\bx_1-\hat\bx_1}^2+\frac{D_1}{\eta} \sum_{t=2}^T \nbr{\hat\bx_{t-1}-\hat\bx_t}_\infty
\end{align}
where $D_1 \coloneqq \max_{\bx,\bx'\in\cX} \nbr{\bx-\bx'}_1$.

\proof
Notice that
\begin{align*}
    \nbr{\bx_{t+1}-\hat\bx_t}^2=&\nbr{{\rm Proj}_{\cX}(\bx_t-\eta\cL_t)-\hat\bx_t}^2\\
    \leq& \nbr{\bx_t-\eta\cL_t-\hat\bx_t}^2\\
    =&\eta^2\nbr{\cL_t}^2-2\eta\inner{\cL_t}{\bx_t-\hat\bx_t}+\nbr{\bx_t-\hat\bx_t}^2
\end{align*}
where ${\rm Proj}$ denotes the projection with respect to the $L2$ norm. The second line is because $\cX$ is convex.

Therefore, by rearranging the terms, we have
\begin{align*}
    \inner{\cL_t}{\bx_t-\hat\bx_t}\leq \frac{\eta}{2}\nbr{\cL_t}^2+\frac{1}{2\eta}(\nbr{\bx_t-\hat\bx_t}^2-\nbr{\bx_{t+1}-\hat\bx_t}^2).
\end{align*}
The summation of the second term here can be bounded by
\begin{align*}
    \sum_{t=1}^T(\nbr{\bx_t-\hat\bx_t}^2-\nbr{\bx_{t+1}-\hat\bx_t}^2)\leq& \sum_{t=1}^T \nbr{\bx_t-\hat\bx_t}^2- \sum_{t=2}^T \nbr{\bx_t-\hat\bx_{t-1}}^2\\
    =&\nbr{\bx_1-\hat\bx_1}^2+\sum_{t=2}^T (\nbr{\bx_t-\hat\bx_t}^2-\nbr{\bx_t-\hat\bx_{t-1}}^2)\\
    =&\nbr{\bx_1-\hat\bx_1}^2+\sum_{t=2}^T \inner{\hat\bx_{t-1}-\hat\bx_t}{2\bx_t-\hat\bx_t-\hat\bx_{t-1}}\\
    \leq& \nbr{\bx_1-\hat\bx_1}^2+\sum_{t=2}^T \nbr{\hat\bx_{t-1}-\hat\bx_t}_\infty\cdot\nbr{2\bx_t-\hat\bx_t-\hat\bx_{t-1}}_1\\
    \leq& \nbr{\bx_1-\hat\bx_1}^2+2D_1\sum_{t=2}^T \nbr{\hat\bx_{t-1}-\hat\bx_t}_\infty
\end{align*}
where $D_1 \coloneqq \max_{\bx,\bx'\in\cX} \nbr{\bx-\bx'}_1$.

Therefore,
\begin{equation*}
    \sum_{t=1}^T \inner{\bx_t}{\cL_t}-\sum_{t=1}^T \inner{\hat\bx_t}{\cL_t}\leq \frac{\eta}{2}\sum_{t=1}^T\nbr{\cL_t}^2+\frac{1}{2\eta}\nbr{\bx_1-\hat\bx_1}^2+\frac{D_1}{\eta} \sum_{t=2}^T \nbr{\hat\bx_{t-1}-\hat\bx_t}_\infty.\qedhere
\end{equation*}

\end{lemma}

\subsection{Projected Gradient Descent with Time-varying Constraints}

\begin{algorithm}
\caption{Projected Gradient Descent with Time-varying Constraints \citep{cao2018online-constrained-time-varying}}
\label{algorithm:PGD-with-constraint}
\begin{algorithmic}[1]
\State{$\eta$ is the learning rate, $\delta$ is a non-negative constant hyper-parameter.}
\State{Initialize $\bx_1\in\cX, \blambda_1=\zero$.}
\For{$t=1,2,...,T$}
\State{Propose $\bx_t$.}
\State{Receive the loss $\inner{\cL_t}{\bx_t}$ and constraints $\bg_t(\bx_t) \coloneqq \cbr{g_t^1(\bx_t),g_t^2(\bx_t),...,g_t^k(\bx_t)}\preceq \zero$}
\begin{align}
    &\bx_{t+1}\leftarrow\Proj{\cX}{\bx_t-\eta\rbr{\cL_t+\sum_{i=1}^k \lambda_t^i \nabla g_t^i(\bx_t)}}\label{eq:update-constraint-variable}\\
    &\blambda_{t+1}\leftarrow \Proj{\RR_+^k}{\blambda_t + \eta \rbr{\bg_t(\bx_t)-\delta\eta\blambda_t}}.\label{eq:update-constraint-lambda}
\end{align}
\State{\Comment{$\RR_+^k \coloneqq \cbr{\bx\in\RR^k\given\bx\succeq \zero}$.}}
\EndFor
\end{algorithmic}
\end{algorithm}

\begin{lemma}[Adapted from Theorem 1 in \cite{cao2018online-constrained-time-varying}]
    \label{lemma:PGD-constraint-guarantee}
    Consider Algorithm \ref{algorithm:PGD-with-constraint}. For any sequence of $\hat\bx_1,\hat\bx_2,...,\hat\bx_T$ satisfying that $\forall t\in\cbr{1,2,...,T}, \bg_t(\hat\bx_t)\preceq \zero$, when $\eta=\sqrt{\frac{P_T}{T}}$ and $\delta=2CG+(1+k)G^2+1$, we have 
    \begin{align}
        &\sum_{t=1}^T \inner{\cL_t}{\bx_t}-\sum_{t=1}^T \inner{\cL_t}{\hat\bx_t}+C\sum_{t=2}^T \nbr{\bx_t-\bx_{t-1}}_\infty\leq \frac{5R^2}{2}\sqrt{\frac{T}{P_T}}+\rbr{R+\frac{k+1}{2}G^2+D^2+CG(k+1)}\sqrt{TP_T}\\
        &\forall i\in\{1,2,...,k\},~~~~~ \sum_{t=1}^T g_t^i(\bx_t)\leq \sqrt{2\rbr{(2CG+(k+1)G^2+1)\sqrt{TP_T}+\sqrt{\frac{T}{P_T}}}}\notag\\
        &~~~~~~~~~~~~~~~~~~~~~~\times \sqrt{FT+\frac{5R^2}{2}\sqrt{\frac{T}{P_T}}+\rbr{R+\frac{k+1}{2}G^2+D^2+CG(k+1)}\sqrt{TP_T}}
    \end{align}
    where
    \begin{align}
        &P_T=\sum_{t=1}^{T-1} \nbr{\hat\bx_t-\hat\bx_{t+1}}\\
        &R=\max_{\bx\in\cX} \nbr{\bx}\\
        &G=\max_{t=1,2,...,T}\cbr{\nbr{\cL_t}, \max_{\bx\in\cX,i=1,2,...,k} \nbr{\nabla g_t^i(\bx)}}\\
        &D=\max_{t=1,2,...,T,\bx\in\cX} \nbr{\bg_t(\bx)}\\
        &F=\max_{\bx,\bx'\in\cX} |\inner{\cL_t}{\bx-\bx'}|
    \end{align}
    and $C>0$ is some arbitrary constant.
    \proof
    Different from \cite[Theorem 1]{cao2018online-constrained-time-varying}, we have an additional $C\sum_{t=2}^T \nbr{\bx_t-\bx_{t-1}}_\infty$ to bound. Firstly, by the update rule Eq. \eqref{eq:update-constraint-variable} and the non-expansiveness of the Euclidean projection, for every $t\geq 2$ we have
    \begin{align*}
        \nbr{\bx_t-\bx_{t-1}}_\infty
        &\leq \nbr{\bx_t-\bx_{t-1}}\\
        &=\nbr{\Proj{\cX}{\bx_{t-1}-\eta\rbr{\cL_{t-1}+\sum_{i=1}^k \lambda_{t-1}^i \nabla g_{t-1}^i(\bx_{t-1})}}-\Proj{\cX}{\bx_{t-1}}}\\
        &\leq\eta\nbr{\cL_{t-1}+\sum_{i=1}^k \lambda_{t-1}^i \nabla g_{t-1}^i(\bx_{t-1})}\\
        &\leq \eta G+\eta G\sum_{i=1}^k \lambda_{t-1}^i.
    \end{align*}
    Note that
    \begin{align*}
        \lambda_{t-1}^i\leq (\lambda_{t-1}^i)^2+1.
    \end{align*}
    So, $\nbr{\bx_t-\bx_{t-1}}_\infty\leq \eta G(k+1)+\eta G\nbr{\blambda_{t-1}}^2$.

    By abusing notation and writing $\cL_t(\bx,\blambda) \coloneqq \inner{\cL_t}{\bx}+\sum_{i=1}^k \lambda^i g_t^i(\bx)-\frac{\delta\eta}{2}\nbr{\blambda}^2$, we have the following lemma.
    \begin{lemma}[Lemma 3 in \cite{cao2018online-constrained-time-varying}]
        \label{lemma:lemma-3-in-cao-constraint}
         For any $\blambda\succeq \zero$ and $\hat\bx_{1:T}$, we have
        \begin{align}
            &\sum_{t=1}^T \rbr{\cL_t(\bx_t,\blambda)-\cL_t(\hat\bx_t,\blambda_t)}\\
            \leq& \frac{1}{2\eta}\rbr{5 R^2+2R P_T+\nbr{\blambda}^2}+\frac{\eta T}{2}\rbr{(k+1)G^2+2D^2}+\frac{\eta}{2}\sbr{(1+k)G^2+2\delta^2\eta^2}\sum_{t=1}^T \nbr{\blambda_t}^2
        \end{align}
        where
        \begin{align}
            P_T=\sum_{t=2}^T\nbr{\hat\bx_t-\hat\bx_{t-1}}.
        \end{align}
    \end{lemma}
    So, by Lemma \ref{lemma:lemma-3-in-cao-constraint}, we have
    \begin{align*}
        &\sum_{t=1}^T \inner{\cL_t}{\bx_t-\hat\bx_t}+\sum_{i=1}^k \sum_{t=1}^T \rbr{\lambda^i g_t^i(\bx_t)-\lambda_t^i g_t^i(\hat\bx_t)}-\frac{\delta\eta T}{2}\nbr{\blambda}^2+C \sum_{t=2}^T \nbr{\bx_t-\bx_{t-1}}_\infty\\
        \leq& \frac{\eta}{2}\rbr{2CG+(1+k)G^2+2\delta^2\eta^2-\delta}\sum_{t=1}^T \nbr{\blambda_t}^2+\frac{1}{2\eta}\rbr{5R^2+2R P_T+\nbr{\blambda}^2}+\frac{\eta T}{2}\rbr{(k+1)G^2+2D^2}\\
        \leq& \frac{1}{2\eta}\rbr{5R^2+2R P_T+\nbr{\blambda}^2}+\frac{\eta T}{2}\rbr{(k+1)G^2+2D^2+2CG(k+1)}
    \end{align*}
    where the last line is by choosing $\delta=2CG+(1+k)G^2+1$ and $T$ is large enough so that $\eta^2=\frac{P_T}{T}\leq \frac{1}{2\delta^2}$. Therefore, by rearranging the terms, we have
    \begin{align*}
        &\sum_{t=1}^T \inner{\cL_t}{\bx_t-\hat\bx_t}+\sum_{i=1}^k \rbr{ \lambda^i\sum_{t=1}^T g_t^i(\bx_t)-\rbr{\frac{\delta\eta T}{2}+\frac{1}{2\eta} }(\lambda^i)^2}+C \sum_{t=2}^T \nbr{\bx_t-\bx_{t-1}}_\infty\\
        \leq& \sum_{i=1}^k\sum_{t=1}^T \lambda_t^i g_t^i(\hat\bx_t) + \frac{1}{2\eta}\rbr{5R^2+2R P_T}+\frac{\eta T}{2}\rbr{(k+1)G^2+2D^2+2CG(k+1)}\\
        \leq&\frac{1}{2\eta}\rbr{5R^2+2R P_T}+\frac{\eta T}{2}\rbr{(k+1)G^2+2D^2+2CG(k+1)}
    \end{align*}
    where the last line is by definition that $\hat\bx_t$ satisfies $g_t^i(\hat\bx_t)\leq 0$ for any $i=1,2,...,k$ and $\lambda_t^i\geq 0$ for any $i=1,2,...,k$ and $t=1,2,...,T$. By choosing $\lambda^i=\frac{[\sum_{t=1}^T g_t^i(\bx_t)]^+}{\delta\eta T+\frac{1}{\eta}}$ where $[x]^+ \coloneqq \max\cbr{x,0}$, we have
    \begin{align*}
        &\sum_{t=1}^T \inner{\cL_t}{\bx_t-\hat\bx_t}+\sum_{i=1}^k \frac{\rbr{[\sum_{t=1}^T g_t^i(\bx_t)]^+}^2}{2(\delta\eta T+\frac{1}{\eta})}+C \sum_{t=2}^T \nbr{\bx_t-\bx_{t-1}}_\infty\\
        \leq& \frac{1}{2\eta}\rbr{5R^2+2R P_T}+\frac{\eta T}{2}\rbr{(k+1)G^2+2D^2+2CG(k+1)}.
    \end{align*}
    So,
    \begin{align*}
        &\sum_{t=1}^T \inner{\cL_t}{\bx_t-\hat\bx_t}+C \sum_{t=2}^T \nbr{\bx_t-\bx_{t-1}}_\infty\\
        \leq& \frac{1}{2\eta}\rbr{5R^2+2R P_T}+\frac{\eta T}{2}\rbr{(k+1)G^2+2D^2+2CG(k+1)}\\
        =&\frac{5R^2}{2}\sqrt{\frac{T}{P_T}}+\frac{2R+(k+1)G^2+2D^2+2CG(k+1)}{2}\sqrt{T P_T}.
    \end{align*}
    By definition, we have $\sum_{t=1}^T \inner{\cL_t}{\bx_t-\hat\bx_t}\geq -FT$. Then,
    \begin{align*}
        -FT+\sum_{i=1}^k \frac{\rbr{[\sum_{t=1}^T g_t^i(\bx_t)]^+}^2}{2(\delta\eta T+\frac{1}{\eta})}\leq& \frac{5R^2}{2}\sqrt{\frac{T}{P_T}}+\frac{2R+(k+1)G^2+2D^2+2CG(k+1)}{2}\sqrt{T P_T}.
    \end{align*}
    Lastly, for any $i=1,2,...,k$, we have
    \begin{align*}
        \sum_{t=1}^T g_t^i(\bx_t)\leq& \sqrt{2FT+5R^2 \sqrt{\frac{T}{P_T}}+\rbr{2R+(k+1)G^2+2D^2+2CG(k+1)} \sqrt{T P_T}}\\
        &\times \sqrt{(2CG+(1+k)G^2+1)\sqrt{T P_T}+\sqrt{\frac{T}{P_T}}}.\qedhere
    \end{align*}

\end{lemma}

\section{Auxiliary Lemmas for the Induced Markov Game}
\label{appendix:game-setting}

In this section, we assume all agents have a bounded memory $M$ and we are considering the Induced Markov Game defined in Definition \ref{def:induced-MDP}.

\subsection{Milder Constraint}
\label{subsection:milder-constraint}

It is easy to see that Condition \ref{assumption:2-forgetful-prime} implies the following condition, Condition \ref{assumption:L1-forgetful}. Therefore, in this section, we will show that when all players satisfy Condition \ref{assumption:L1-forgetful}, we can use the expected average loss of an infinite-horizon Markov game to approximate the expected loss of the repeated game at each timestep.

\begin{condition}
\label{assumption:L1-forgetful}
The strategy $\bpi_{1:T}\ui$ is not too different when observing different histories. That is, for any timestep $t$, we have
    \begin{align}
    &\forall \bar\bh,\tilde\bh\in\cH,~~~~~\frac{1}{2}\nbr{\bpi_t\ui(\cdot\given\bar\bh)-\bpi_t\ui(\cdot|\tilde\bh)}_1\leq 1-\gamma
    \end{align}
    for some constant $\gamma\in(0,1]$.
\end{condition}

\begin{proposition}
\label{proposition:weaker-assumption-on-memory}
    Condition \ref{assumption:L1-forgetful} with parameter $1-\gamma$ can be inferred from Condition \ref{assumption:2-forgetful} with parameter $\gamma$.
    \proof
    \begin{align*}
        &\nbr{\bpi\ui(\cdot\given \bh_1)-\bpi\ui(\cdot\given \bh_2)}_1\\
        =&\sum_{a_i\in\cA_i}|\pi\ui(a_i\given \bh_1)-\pi\ui(a_i\given \bh_2)|\\
        =& \sum_{a_i\in\cA_i}\max\{0, \pi\ui(a_i\given \bh_1)-\pi\ui(a_i\given \bh_2)\}+\sum_{a_i\in\cA_i}\max\{0, \pi\ui(a_i\given \bh_2)-\pi\ui(a_i\given \bh_1)\}\\
        \leq& \gamma \sum_{a_i\in\cA_i} \pi\ui(a_i\given \bh_1)+\gamma \sum_{a_i\in\cA_i} \pi\ui(a_i\given \bh_2)\\
        =&2\gamma.\qedhere
    \end{align*}
\end{proposition}

A direct corollary of Condition \ref{assumption:L1-forgetful} is that
\begin{corollary}
\label{corollary:L1-forgetful}
When Condition \ref{assumption:L1-forgetful} is satisfied, we have
    \begin{align}
    \forall \bh,\bh'\in \cH_M, \exists a_i\in\cA_i,~~~\pi\ui(a_i\given \bh),\pi\ui(a_i\given \bh')\geq \frac{\gamma}{|\cA_i|}.
\end{align}
\end{corollary}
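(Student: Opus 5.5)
The plan is a pigeonhole argument on the total-variation bound in Condition \ref{assumption:L1-forgetful}, applied to each pair of histories $\bh,\bh'\in\cH_M$ separately. Write $p(a_i)=\pi\ui(a_i\given\bh)$ and $p'(a_i)=\pi\ui(a_i\given\bh')$. Both are probability vectors on $\cA_i$, and the condition gives $\frac12\nbr{p-p'}_1\leq 1-\gamma$.

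The first step rewrites this in terms of the overlap. Since $\sum_{a_i}p=\sum_{a_i}p'=1$, we have
\begin{align*}
\sum_{a_i\in\cA_i}\min\{p(a_i),p'(a_i)\}=1-\tfrac12\nbr{p-p'}_1\geq\gamma .
\end{align*}
The second step is averaging. The sum has $|\cA_i|$ nonnegative terms, so some $a_i$ satisfies $\min\{p(a_i),p'(a_i)\}\geq \gamma/|\cA_i|$. That single action has $\pi\ui(a_i\given\bh)\geq\gamma/|\cA_i|$ and $\pi\ui(a_i\given\bh')\geq\gamma/|\cA_i|$ at the same time, which is the claim.

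The only step that needs care is the identity $\sum\min\{p,p'\}=1-\frac12\nbr{p-p'}_1$. It follows from $\min\{x,y\}=\frac{x+y-|x-y|}{2}$ summed over $a_i$ together with $\sum_{a_i}(p+p')=2$.

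If the condition is applied to strategies with $M$-bounded memory, the same argument goes through verbatim for histories in $\cH_M$ at any fixed timestep $t$.
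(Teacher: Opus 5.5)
Your argument is correct. The overlap identity $\sum_{a_i\in\cA_i}\min\{p(a_i),p'(a_i)\}=1-\frac12\nbr{p-p'}_1\geq\gamma$, followed by averaging over the $|\cA_i|$ actions, is exactly the reasoning the paper leaves implicit when it calls this a direct corollary of Condition \ref{assumption:L1-forgetful}; the paper supplies no written proof beyond that remark.
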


For the Markov chain\footnote{By taking $\bpi$ as constant and merging it into the transition probability, we get a Markov chain from the induced Markov game defined in Definition \ref{def:induced-MDP}.} induced by $\bpi$, let $\cP^{\bpi}$ be the transition matrix. Then we have the following lemma.
\begin{lemma}
When all players satisfy Condition \ref{assumption:L1-forgetful}, for the transition matrix $\cP^{\bpi}$ induced by the repeated matrix game, we have
\label{lemma:common-successor-state}
    \begin{align*}
    \forall \bh_1,\bh_2\in \cH_M, \exists \bh_3\in\cH_M,~~~(\cP^{\bpi})^M_{\bh_1,\bh_3}, (\cP^{\bpi})^M_{\bh_2,\bh_3}\geq (\frac{\gamma^N}{|\cA|})^M.
\end{align*}
\proof
For any $\bh_1,\bh_2\in\cH_M$, by Corollary \ref{corollary:L1-forgetful}, for every $i\in[N]$ there exists $a_i\in\cA_i$, so that $\pi\ui(a_i\given \bh_1),\pi\ui(a_i\given \bh_2)\geq \frac{\gamma}{|\cA_i|}$. Therefore, for $\ba=(a_1,a_2,...,a_N)$, we have
\begin{align*}
    \pi(\ba\given \bh_1)=\prod_{i=1}^N \pi\ui(a_i\given \bh_1)\geq \frac{\gamma^N}{|\cA|}.
\end{align*}
Similarly, we have $\pi(\ba\given \bh_2)\geq \frac{\gamma^N}{|\cA|}$. Similarly, there is some $\ba'\in\cA$ so that $\pi(\ba'|(\bh_{1,2:M}, \ba)),\pi(\ba'|(\bh_{2,2:M}, \ba))\geq \frac{\gamma^N}{|\cA|}$. Finally, there is some $\bh_3\in\cH_M$ so that
\begin{align*}
    (\cP^{\bpi})^M_{\bh_1,\bh_3}=\prod_{t=1}^M \pi(\bh_{3,t}|(\bh_{1,t:M}, \bh_{3,1:t-1}))\geq (\frac{\gamma^N}{|\cA|})^M
\end{align*}
and $(\cP^{\bpi})^M_{\bh_2,\bh_3}\geq (\frac{\gamma^N}{|\cA|})^M$ similarly. \qed
\end{lemma}

\subsection{Fast Mixing}
\label{appendix:fast-mixing}

Firstly, we will prove a set of results in the Markov chain with transition matrix $(\cP^{\bpi})^M$. For simplicity, we model it as a directed graph, with vertices as $\cH_M$. There exists a directed edge $\bh_1\to\bh_2$ if and only if $(\cP^{\bpi})^M_{\bh_1,\bh_2}\geq (\frac{\gamma^N}{|\cA|})^M$.

Note that by Lemma \ref{lemma:common-successor-state}, for every two vertices $\bh_1,\bh_2$ in the graph, there exists a vertex $\bh_3$ (might be equal to $\bh_1$ or $\bh_2$), which is a common successor of $\bh_1$ and $\bh_2$. That is, there exist edges $\bh_1\to\bh_3$ and $\bh_2\to\bh_3$.

Firstly, we would like to prove that there exists a vertex $\bh_0\in\cH_M$, which satisfies that every other vertex could reach it within $O(\log_2 |\cH_M|)$ steps.

\begin{lemma}[Short Connectivity]
\label{lemma:log-path}
    There exists a vertex $\bh_0\in\cH_M$, so that every other vertex can reach it within $\log_2|\cH_M|+1$ steps.
    \proof
    We will merge the vertices into rooted trees where all vertices in the tree can reach the root. Initially, we have $\cH_M$ rooted trees and each one is a single vertex. Then, in each episode, we will merge them as follows.

    Let $\cT$ be the set of rooted trees. We will first divide it into $\floor{|\cT|/2}$ pairs arbitrarily. Then, for each pair of rooted trees, say rooted trees with root $\bh_1,\bh_2$, we will try to merge them into one.%
    Then, by the condition of the graph, we know that there exists $\bh_3$ satisfying that $\bh_1,\bh_2$ are both connected to $\bh_3$. Then, there are two cases.
    \begin{itemize}
        \item[(i)] $\bh_3\in\{\bh_1, \bh_2\}$. Without loss of generality, we assume $\bh_3=\bh_1$. Then, we can connect $\bh_2$ to $\bh_1$ and thus merge two rooted trees into one. The depth of all nodes in the new tree is no more than $\max\cbr{{\rm Depth}(\bh_1),{\rm Depth}(\bh_2)}+1$ where we use ${\rm Depth}(\bh)$ to denote the depth of the rooted tree $\bh$ belongs to before merging.
        \item[(ii)] $\bh_3\not\in\{\bh_1, \bh_2\}$. Firstly, we can split $\bh_3$ and its subtree from the rooted tree $\bh_3$ belonging to. Then, we can link $\bh_1,\bh_2$ to $\bh_3$ to form a new rooted tree. The depth of all nodes in the tree rooted at $\bh_1,\bh_2$ is no more than $\max\cbr{{\rm Depth}(\bh_1),{\rm Depth}(\bh_2)}+1$.
    \end{itemize}
    We also provide an illustrative proof in Figure \ref{fig:proof-bound-path}.
    \begin{figure}[t]
        \centering
        \includegraphics[width=0.9\linewidth]{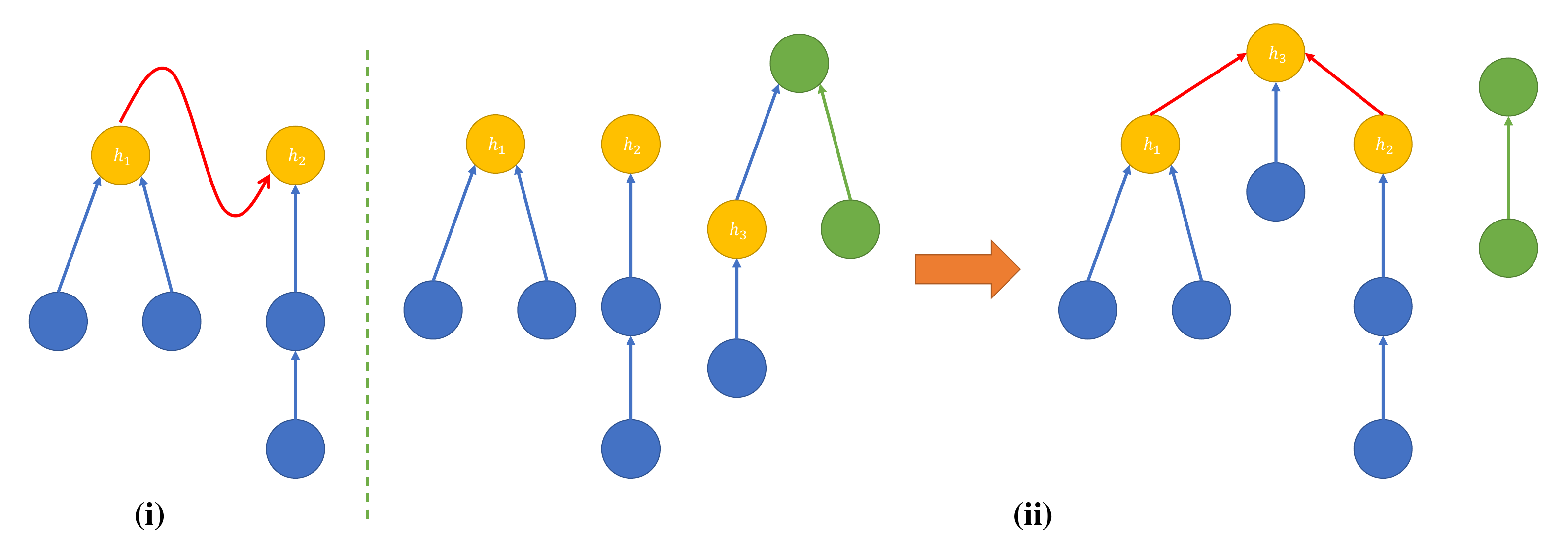}
        \caption{An illustrative proof of Lemma \ref{lemma:log-path}. In (i), the maximum tree depth remains $3$. In $(ii)$, the maximum tree depth increases from $3$ to $4$.}
        \label{fig:proof-bound-path}
    \end{figure}

    Therefore, in each episode, the maximum depth will increase by at most one while $|\cT|$ will decrease to one-half of it (we can assume $|\cH_M|=2^K$ for some $K$ without loss of generality since we can add nodes to the graph). Therefore, the depth of the last rooted tree in $\cT$ is at most $\log_2 |\cH_M|+1$.\qed
\end{lemma}

By Lemma \ref{lemma:log-path}, we can obtain a rooted tree, say rooted at $\bh_0$, with a bounded depth $O(\log_2 |\cH_M|)$.

\begin{lemma}
\label{lemma:always-reach}
    There exists $\useconstant{constant:go-back-root-length}=\log_2^2|\cH_M|+4\log_2|\cH_M|+3$ so that for any $C'\geq \useconstant{constant:go-back-root-length}$, every vertex can reach $\bh_0$ in $C'$ steps.
\proof
Firstly, we will prove that there exists $C_0$ so that for any $C_0'\geq C_0$, $\bh_0$ can reach itself in $C_0'$ steps. When $\bh_0$ has a self-loop, then $C_0=1$ satisfies the requirement.

Otherwise, $\bh_0$ has a successor $\bh_1\not=\bh_0$ according to Lemma \ref{lemma:common-successor-state}. By Lemma \ref{lemma:common-successor-state}, we know that $\bh_0$ and $\bh_1$ have a common successor. Let the common successor be $\bh_2$. Then we have cycles $\bh_0\to\bh_1\to\bh_2\to\bh_0$ and $\bh_0\to\bh_2\to\bh_0$ with length $depth(\bh_2)+2$ and $depth(\bh_2)+1$ where $depth(\bh)$ is the depth of $\bh$ in the rooted tree.
By Frobenius number \citep{sylvester1882subvariants-frobenius-number}, $C_0=depth(\bh_2)\cdot (depth(\bh_2)+1)\leq (\log_2|\cH_M|+1)(\log_2|\cH_M|+2)=\log_2^2|\cH_M|+3\log_2|\cH_M|+2$ by Lemma \ref{lemma:log-path}.

Therefore, for any vertex $\bh_1$, when it can reach $\bh_0$ with $K$ steps, it can also reach $\bh_0$ with $K+C_0,K+C_0+1,K+C_0+2,...$ steps by rotating in the cycles starting from $\bh_0$. So, we have $\useconstant{constant:go-back-root-length}=\log_2|\cH_M|+1+C_0\leq \log_2^2|\cH_M|+4\log_2|\cH_M|+3$.\qed
\end{lemma}

Back to the Markov chain, by Lemma \ref{lemma:always-reach}, for any $\bh_1\in\cH_M$, we have $(\cP^{\bpi})^{MK}_{\bh_1,\bh_0}\geq (\frac{\gamma^N}{|\cA|})^{MK}$ for any $K\geq \useconstant{constant:go-back-root-length}$.

\subsection{Contraction property with bounded memory of length $M$}
\label{sec:MDP-contraction-property}
In this section, we will directly adopt all conditions and notations in Appendix \ref{appendix:fast-mixing} without specifying them.

By Lemma \ref{lemma:always-reach}, there exists some constant $\useconstant{constant:go-back-root-length}$ so that $(\cP^{\bpi})^{M\useconstant{constant:go-back-root-length}}_{\bh_1,\bh_0}\geq (\frac{\gamma^N}{|\cA|})^{M\useconstant{constant:go-back-root-length}}$. For any $\bpi\uo$, the transition matrix of the induced Markov chain can be written as
\begin{align}
    &(\cP^{\bpi})^{M\useconstant{constant:go-back-root-length}}=\delta\underbrace{\begin{bmatrix}
    1 & 0&...&0\\
    1 & 0&...&0\\
    ...&...&...&...\\
    1 & 0&...&0\\
    \end{bmatrix}}_{\cU}+(1-\delta)\hat\cP^{\bpi}\label{eq:matrix-factor-U-P}\\
    &\delta \coloneqq (\frac{\gamma^N}{|\cA|})^{M\useconstant{constant:go-back-root-length}}\label{equ:def_delta}
\end{align}
where $\hat\cP^{\bpi}$ is also a Markov matrix and we assume the index of $\bh_0$ is 1 without loss of generality ($\bh_0$ is defined in Lemma \ref{lemma:always-reach} and we will use $\bh_0$ to denote it in this section by default). So, the distribution $(1,0,...,0)$ places probability one on $\bh_0$.

Notice that for any initial distribution $\mu\in\Delta_{|\cH_M|}$, we have $\mu\cU=(1,0,...,0)=:\bo$. Therefore, we have the following lemma.

\begin{lemma}
\label{lemma:MDP-last-iterate}
For any $\mu\in\Delta_{|\cH_M|}$, we have
\begin{align}
\label{eq:MDP-approximation}
    \mu(\cP^{\bpi})^{KM\useconstant{constant:go-back-root-length}}=(1-\delta)^K \mu(\hat\cP^{\bpi})^K +\delta\sum_{i=0}^{K-1}  (1-\delta)^{i} \bo(\hat\cP^{\bpi})^{i}.
\end{align}
\proof
This can be proved by induction. When $K=0$, it is satisfied. When $K=K_0+1$ and $K_0$ is satisfied, then,
\begin{align*}
    \mu(\cP^{\bpi})^{(K_0+1)M\useconstant{constant:go-back-root-length}}=&\Big((1-\delta)^{K_0} \mu(\hat\cP^{\bpi})^{K_0} +\delta\sum_{i=0}^{K_0-1}  (1-\delta)^{i} \bo(\hat\cP^{\bpi})^{i} \Big)(\cP^{\bpi})^{M\useconstant{constant:go-back-root-length}}\\
    =&\delta\Big((1-\delta)^{K_0}+\delta\sum_{i=0}^{K_0-1}  (1-\delta)^{i}\Big)\bo\\
    &+(1-\delta)^{K_0+1} \mu(\hat\cP^{\bpi})^{K_0+1} +\delta\sum_{i=0}^{K_0-1} (1-\delta)^{i+1} \bo(\hat\cP^{\bpi})^{i+1} \\
    =&\delta\bo+(1-\delta)^{K_0+1} \mu(\hat\cP^{\bpi})^{K_0+1} +\delta\sum_{i=1}^{K_0}  (1-\delta)^i \bo(\hat\cP^{\bpi})^i \\
    =&(1-\delta)^{K_0+1} \mu(\hat\cP^{\bpi})^{K_0+1} +\delta\sum_{i=0}^{K_0}  (1-\delta)^i \bo(\hat\cP^{\bpi})^i .\qedhere
\end{align*}

\end{lemma}

A direct consequence of Lemma \ref{lemma:MDP-last-iterate} is that the average probability distribution of the state will also converge.
\begin{lemma}
\label{lemma:MDP-average-iterate}
For any $\mu\in\Delta_{|\cH_M|}$, we have
\begin{align}
    \nbr{\lim_{T\to+\infty} \frac{1}{T}\sum_{t=0}^{T-1} \mu(\cP^{\bpi})^t-\mu(\cP^{\bpi})^K}_1\leq 2(1-\delta)^{\lfloor\frac{K}{M\useconstant{constant:go-back-root-length}}\rfloor}.
\end{align}
\proof

Firstly, for any $\mu\in\Delta_{|\cH_M|},K=K_0M\useconstant{constant:go-back-root-length}+m$ and $m<M\useconstant{constant:go-back-root-length}$, we have
\begin{align}
    \mu(\cP^{\bpi})^{K}=\mu(\cP^{\bpi})^m(\cP^{\bpi})^{K_0M\useconstant{constant:go-back-root-length}}=\Big(\mu(\cP^{\bpi})^m \Big)(\cP^{\bpi})^{K_0M\useconstant{constant:go-back-root-length}}=\mu'(\cP^{\bpi})^{K_0M\useconstant{constant:go-back-root-length}}
\end{align}
where $\mu'=\mu(\cP^{\bpi})^m$. Then,
\begin{align}
    \mu(\cP^{\bpi})^{K}=(1-\delta)^{K_0} \mu'(\hat\cP^{\bpi})^{K_0} +\delta\sum_{i=0}^{K_0-1}  (1-\delta)^{i} \bo(\hat\cP^{\bpi})^{i}.
\end{align}
The average distribution is
\begin{align*}
    &\lim_{T\to+\infty}\frac{1}{T}\sum_{t=0}^{T-1}  \mu(\cP^{\bpi})^t\\
    =&\lim_{T\to+\infty} \frac{1}{TM\useconstant{constant:go-back-root-length}}\sum_{t=0}^{T-1} (1-\delta)^t\sum_{m=0}^{M\useconstant{constant:go-back-root-length}-1} \mu(\cP^{\bpi})^m (\hat\cP^{\bpi})^t\\
    &+\lim_{T\to+\infty}\delta\sum_{i=0}^{K_0-1}\frac{T-i\cdot M\useconstant{constant:go-back-root-length}}{T}(1-\delta)^i \bo(\hat\cP^{\bpi})^{i} +\delta\sum_{i=K_0}^{\infty}\frac{T-i\cdot M\useconstant{constant:go-back-root-length}}{T}(1-\delta)^i \bo(\hat\cP^{\bpi})^{i}\\
    =&\delta\sum_{i=0}^{K_0-1}(1-\delta)^i \bo(\hat\cP^{\bpi})^{i} +\lim_{T\to+\infty}\Big(\delta\sum_{i=K_0}^{\infty}\frac{T-i\cdot M\useconstant{constant:go-back-root-length}}{T}(1-\delta)^i \bo(\hat\cP^{\bpi})^{i} \Big).
\end{align*}
Therefore, we have
\begin{align*}
    &\nbr{\lim_{T\to+\infty}\frac{1}{T}\sum_{t=0}^{T-1}  \mu(\cP^{\bpi})^t-\mu(\cP^{\bpi})^{K}}_1\\
    \leq& \nbr{(1-\delta)^{K_0} \mu(\cP^{\bpi})^{K-K_0M\useconstant{constant:go-back-root-length}}(\hat\cP^{\bpi})^{K_0} }_1+\lim_{T\to+\infty}\nbr{\delta\sum_{i=K_0}^{\infty}\frac{T-i\cdot M\useconstant{constant:go-back-root-length}}{T}(1-\delta)^i \bo(\hat\cP^{\bpi})^{i}}_1\\
    \leq& (1-\delta)^{K_0}+\delta\lim_{T\to+\infty}\sum_{i=K_0}^{\infty}\frac{T-i\cdot M\useconstant{constant:go-back-root-length}}{T}(1-\delta)^i \nbr{\bo(\hat\cP^{\bpi})^{i}}_1\\
    \leq& (1-\delta)^{K_0}+\delta\lim_{T\to+\infty}\sum_{i=K_0}^{\infty}(1-\delta)^i\\
    =&2(1-\delta)^{K_0}
\end{align*}
where $K_0=\lfloor \frac{K}{M\useconstant{constant:go-back-root-length}}\rfloor$.\qed

\end{lemma}

\begin{proof}[Proof of Lemma \ref{lemma:finite-f-to-infinite}]
Firstly, the existence of $f^{\infty}$ follows from Lemma \ref{lemma:MDP-average-iterate}.

Then, it is easy to verify that
\begin{align*}
    &f^K(\underbrace{\bpi,...,\bpi}_{K+1})=\sum_{\bh\in\cH_M} \cL_1(\bh_{M}) \cdot \rbr{\mu(\cP^{\bpi})^{K+1}}_{\bh}\\
    &f^{\infty}(\bpi)=\sum_{\bh\in\cH_M} \cL_1(\bh_{M}) \cdot \rbr{\lim_{T\to+\infty} \frac{1}{T}\sum_{t=0}^{T-1} \rbr{\mu(\cP^{\bpi})^t}_{\bh}}.
\end{align*}
So,
\begin{align*}
    &\abr{f^K(\underbrace{\bpi,...,\bpi}_{K+1})-f^{\infty}(\bpi)}\\
    \leq& \sum_{\bh\in\cH_M} \cL_1(\bh_{M})\cdot \abr{\rbr{\mu(\cP^{\bpi})^{K+1}}_{\bh}-\lim_{T\to+\infty} \frac{1}{T}\sum_{t=0}^{T-1} \rbr{\mu(\cP^{\bpi})^t}_{\bh}}\\
    \leq& \nbr{\mu(\cP^{\bpi})^{K+1}-\lim_{T\to+\infty} \frac{1}{T}\sum_{t=0}^{T-1} \mu(\cP^{\bpi})^t}_1\\
    \leq& 2(1-\delta)^{\floor{\frac{K}{M\useconstant{constant:go-back-root-length}}}}
\end{align*}
where the last line is by Lemma \ref{lemma:MDP-average-iterate}.
\end{proof}

\begin{proof}[Proof of Lemma \ref{lemma:finite-to-infinite}]

By Lemma \ref{lemma:MDP-last-iterate}, we have
\begin{align*}
    &\frac{1}{K}\sum_{k=1}^K \mu\prod_{s=1}^k\cP^{\bpi_s}\\
    =&\frac{1}{K}\sum_{k=1}^K\rbr{(1-\delta)^{\floor{k/(M\useconstant{constant:go-back-root-length})}} \mu \prod_{s=1}^{k\%(M\useconstant{constant:go-back-root-length})}\cP^{\bpi_s}\prod_{s=1}^{\floor{k/(M\useconstant{constant:go-back-root-length})}} \hat\cP^{k,s}+\delta\sum_{s=1}^{\floor{k/(M\useconstant{constant:go-back-root-length})}} (1-\delta)^{\floor{k/(M\useconstant{constant:go-back-root-length})}-s} \bo\prod_{s'=s+1}^{\floor{k/(M\useconstant{constant:go-back-root-length})}} \hat\cP^{\bpi_{s'}}}
\end{align*}
where $\delta=(\frac{\gamma^N}{|\cA|})^{M\useconstant{constant:go-back-root-length}}$ and $k\%(M\useconstant{constant:go-back-root-length})$ is the remainder of $k$ divided by $M\useconstant{constant:go-back-root-length}$. Notice that \Cref{eq:matrix-factor-U-P} also holds when different matrices are multiplied together. So,
\begin{align*}
    \prod_{s'=k\%(M\useconstant{constant:go-back-root-length})+(s-1)\cdot M\useconstant{constant:go-back-root-length}+1}^{k\%(M\useconstant{constant:go-back-root-length})+s\cdot M\useconstant{constant:go-back-root-length}} \cP^{\bpi_{s'}}=\delta \cU+(1-\delta)\hat\cP^{k,s}.
\end{align*}
We define $\hat\cP^{k,s}= \frac{1}{1-\delta}\rbr{\prod_{s'=k\%(M\useconstant{constant:go-back-root-length})+(s-1)\cdot M\useconstant{constant:go-back-root-length}+1}^{k\%(M\useconstant{constant:go-back-root-length})+s\cdot M\useconstant{constant:go-back-root-length}} \cP^{\bpi_{s'}} - \delta \cU} $ here for ease of notation.

Therefore, for different initial distributions $\mu_1,\mu_2$, we have
\begin{align*}
    \nbr{\frac{1}{K}\sum_{k=1}^K \mu_1\prod_{s=1}^k\cP^{\bpi_s}-\frac{1}{K}\sum_{k=1}^K \mu_2\prod_{s=1}^k\cP^{\bpi_s}}_1=& \nbr{\frac{1}{K}\sum_{k=1}^K (1-\delta)^{\floor{k/(M\useconstant{constant:go-back-root-length})}} (\mu_1-\mu_2) \prod_{s=1}^{k\%(M\useconstant{constant:go-back-root-length})}\cP^{\bpi_s}\prod_{s=1}^{\floor{k/(M\useconstant{constant:go-back-root-length})}} \hat\cP^{k,s}}_1\\
    \leq& \frac{M\useconstant{constant:go-back-root-length}}{K}\sum_{k=0}^{\floor{K/(M\useconstant{constant:go-back-root-length})}} (1-\delta)^k\nbr{\mu_1-\mu_2}_1\leq \frac{2M\useconstant{constant:go-back-root-length}}{K\delta}.
\end{align*}

Since $\bpi_1,\bpi_2,...,\bpi_K$ is an $\epsilon$-approximate CCE, for any $\hat\bpi_1\ui,\hat\bpi_2\ui,...,\hat\bpi_K\ui$,
\begin{align*}
    \frac{1}{K}\sum_{k=1}^K \inner{\mu_0\prod_{s=1}^k\cP^{\bpi_s}}{\cL_i}-\frac{1}{K}\sum_{k=1}^K \inner{\mu_0\prod_{s=1}^k\cP^{(\hat\bpi_s\ui,\bpi_s\uni)}}{\cL_i}\leq \epsilon
\end{align*}
where $\mu_0$ is the initial distribution.

Therefore, when we pick $K$ large enough, for an infinitely repeated game, at timestep $t>K$, we can pick strategy $\bpi_{(t-1)\% K+1}$ as the strategy at this timestep. Then, for any strategy $\hat\bpi\uo_{1},\hat\bpi\uo_{2},...$, we have
\begin{align*}
    &\lim_{T\to\infty} \sup \frac{1}{T}\sum_{B=0}^{T-1} \rbr{\frac{1}{K}\sum_{k=1}^K \inner{\mu_B\prod_{s=1}^k\cP^{\bpi_{s+BK}}}{\cL_i}-\frac{1}{K}\sum_{k=1}^K \inner{\hat\mu_B\prod_{s=1}^k\cP^{(\hat\bpi_{s+BK}\ui,\bpi_{s+BK}\uni)}}{\cL_i}}\\
    =& \lim_{T\to\infty} \sup \frac{1}{T}\sum_{B=0}^{T-1} \Bigg(\frac{1}{K}\sum_{k=1}^K \inner{\mu_0\prod_{s=1}^k\cP^{\bpi_{s+BK}}}{\cL_i}-\frac{1}{K}\sum_{k=1}^K \inner{\mu_0\prod_{s=1}^k\cP^{(\hat\bpi_{s+BK}\ui,\bpi_{s+BK}\uni)}}{\cL_i}\\
    &+\inner{\frac{1}{K}\sum_{k=1}^K (\mu_B-\mu_0)\prod_{s=1}^k\cP^{\bpi_{s+BK}}}{\cL_i}- \inner{\frac{1}{K}\sum_{k=1}^K (\hat\mu_B-\mu_0)\prod_{s=1}^k\cP^{(\hat\bpi_{s+BK}\ui,\bpi_{s+BK}\uni)}}{\cL_i}\Bigg)
\end{align*}
where we use $\mu_B$ ($\hat\mu_B$) to indicate the state distribution at the start of period $B+1$ when playing $\bpi_{1:BK}$ ($(\hat\bpi_{1:BK}\ui,\bpi_{1:BK}\uni)$) starting from initial distribution $\mu_0$. Notice that
\begin{align*}
    \abr{\inner{\frac{1}{K}\sum_{k=1}^K (\mu_B-\mu_0)\prod_{s=1}^k\cP^{\bpi_{s+BK}}}{\cL_i}}\leq& \nbr{\frac{1}{K}\sum_{k=1}^K (\mu_B-\mu_0)\prod_{s=1}^k\cP^{\bpi_{s+BK}}}_1\cdot\nbr{\cL_i}_\infty\\
    \leq& \nbr{\frac{1}{K}\sum_{k=1}^K (\mu_B-\mu_0)\prod_{s=1}^k\cP^{\bpi_{s+BK}}}_1\\
    \leq& \frac{2M\useconstant{constant:go-back-root-length}}{K(\frac{\gamma^N}{|\cA|})^{M\useconstant{constant:go-back-root-length}}}.
\end{align*}
Therefore, we have
\begin{align*}
    &\lim_{T\to\infty} \sup \frac{1}{T}\sum_{B=0}^{T-1} \rbr{\frac{1}{K}\sum_{k=1}^K \inner{\mu_B\prod_{s=1}^k\cP^{\bpi_{s+BK}}}{\cL_i}-\frac{1}{K}\sum_{k=1}^K \inner{\hat\mu_B\prod_{s=1}^k\cP^{(\hat\bpi_{s+BK}\ui,\bpi_{s+BK}\uni)}}{\cL_i}}\\
    \leq& \lim_{T\to\infty} \sup \frac{1}{T}\sum_{B=0}^{T-1} \rbr{\frac{1}{K}\sum_{k=1}^K \inner{\mu_0\prod_{s=1}^k\cP^{\bpi_{s+BK}}}{\cL_i}-\frac{1}{K}\sum_{k=1}^K \inner{\mu_0\prod_{s=1}^k\cP^{(\hat\bpi_{s+BK}\ui,\bpi_{s+BK}\uni)}}{\cL_i}+\frac{4M\useconstant{constant:go-back-root-length}}{K(\frac{\gamma^N}{|\cA|})^{M\useconstant{constant:go-back-root-length}}}}\\
    =&\lim_{T\to\infty} \sup \frac{1}{T}\sum_{B=0}^{T-1} \rbr{\frac{1}{K}\sum_{k=1}^K \inner{\mu_0\prod_{s=1}^k\cP^{\bpi_{s+BK}}}{\cL_i}-\frac{1}{K}\sum_{k=1}^K \inner{\mu_0\prod_{s=1}^k\cP^{(\hat\bpi_{s+BK}\ui,\bpi_{s+BK}\uni)}}{\cL_i}}+\frac{4M\useconstant{constant:go-back-root-length}}{K(\frac{\gamma^N}{|\cA|})^{M\useconstant{constant:go-back-root-length}}}\\
    \leq& \epsilon + \frac{4M\useconstant{constant:go-back-root-length}}{K(\frac{\gamma^N}{|\cA|})^{M\useconstant{constant:go-back-root-length}}}.\qedhere
\end{align*}

\end{proof}

\begin{proof}[Proof of Lemma \ref{lemma:upper-bound-Q}]
Firstly, let $\mu\in\Delta_{|\cH_M|}$ be the initial distribution. By Lemma \ref{lemma:MDP-average-iterate}, the time-average loss is independent of $\mu$. Thus,
\begin{align*}
    \rho^{\bpi}
    &=\lim_{T\to\infty}\inner{\bl}{\frac{1}{T}\sum_{t=0}^{T-1}\mu(\cP^{\bpi})^t}
      \overset{(i)}{=}\delta\sum_{i=0}^\infty (1-\delta)^i\inner{\bl}{\bo (\hat\cP^{\bpi})^i},
\end{align*}
where
\begin{align*}
    l_{\bh} \coloneqq \sum_{\ba\in\cA} \pi(\ba\given \bh)\cL_1(\bh,\ba).
\end{align*}
$(i)$ follows from Lemma \ref{lemma:MDP-last-iterate}.

Therefore, let $K_t \coloneqq \floor{\frac{t}{M\useconstant{constant:go-back-root-length}}}$ for notational simplicity. Then,
\begin{align*}
    \abr{Q^{\bpi}(\bh,\ba)}=&\abr{\cL_1(\bh,\ba)-\rho^{\bpi}+\sum_{t=0}^\infty\Big((1-\delta)^{K_t}\inner{\bl}{\mu (\cP^{\bpi})^{t-M\useconstant{constant:go-back-root-length}K_t}(\hat\cP^{\bpi})^{K_t}}+\delta\sum_{i=0}^{{K_t}-1} (1-\delta)^i\inner{\bl}{\bo (\hat\cP^{\bpi})^i}-\rho^{\bpi}\Big)}\\
    =&\abr{\cL_1(\bh,\ba)-\rho^{\bpi}+\sum_{t=0}^\infty\Big((1-\delta)^{K_t}\inner{\bl}{\mu (\cP^{\bpi})^{t-M\useconstant{constant:go-back-root-length}K_t}(\hat\cP^{\bpi})^{K_t}}-\delta\sum_{i={K_t}}^{\infty} (1-\delta)^i\inner{\bl}{\bo (\hat\cP^{\bpi})^i}\Big)}\\
    \leq&\abr{\cL_1(\bh,\ba)-\rho^{\bpi}}+\sum_{t=0}^\infty\Big((1-\delta)^{K_t}\abr{\inner{\bl}{\mu (\cP^{\bpi})^{t-M\useconstant{constant:go-back-root-length}K_t}(\hat\cP^{\bpi})^{K_t}}}+\delta\sum_{i={K_t}}^{\infty} (1-\delta)^i\abr{\inner{\bl}{\bo (\hat\cP^{\bpi})^i}}\Big)\\
    \leq& 1+\sum_{t=0}^\infty\Big((1-\delta)^{K_t}+\delta\sum_{i={K_t}}^{\infty} (1-\delta)^i\Big)=1+M\useconstant{constant:go-back-root-length}\sum_{K=0}^\infty \Big((1-\delta)^{K}+\delta\sum_{i=K}^{\infty} (1-\delta)^i\Big)\\
    =&1+2M\useconstant{constant:go-back-root-length}\sum_{K=0}^\infty (1-\delta)^{K}=1+2\frac{M\useconstant{constant:go-back-root-length}}{\delta},
\end{align*}
completing the proof.
\end{proof}

\vfill

\end{document}